\crefname{hypothesis}{Hypothesis}{Hypotheses}
\title{A model-constrained tangent slope learning approach for dynamical systems}
\author{Hai V. Nguyen\thanks{Department of Aerospace Engineering and
  Engineering Mechanics, the University of Texas at Austin, Texas (\email{hainguyen@utexas.edu}, \email{https://orcid.org/0000-0002-0720-7237})} \and Tan Bui-Thanh\thanks{Department of Aerospace Engineering and
  Engineering Mechanics, The Oden Institute for Computational
  Engineering and Sciences,  the University of Texas at Austin, Texas
  (\email{tanbui@oden.utexas.edu},
  \url{https://users.oden.utexas.edu/\string~tanbui/}).}
}
\newcommand*{\addFileDependency}[1]{
  \typeout{(#1)}
  \@addtofilelist{#1}
  \IfFileExists{#1}{}{\typeout{No file #1.}}
}
\newcommand{\Grad} {\ensuremath{\nabla}}  
\newcommand{\nor}[1]{\left\| #1 \right\|} 
\newcommand{\LRp}[1]{\left( #1 \right)} 
\newcommand{\LRs}[1]{\left[ #1 \right]} 
\newcommand{\LRc}[1]{\left\{ #1 \right\}} 
\newcommand{\pp}[2]{\frac{\partial #1}{\partial #2}} 
\newcommand{\mc}[1]{\mathcal{#1}} 
\newcommand{\mb}[1]{\mathbf{#1}} 
\newcommand{\half}{\frac{1}{2}}
\newcommand{\F}{\bs{G}}
\newcommand{\W}{\bs{W}}
\newcommand{\bs}[1]{\boldsymbol{#1}}
\renewcommand{\P}{U}
\newcommand{\Pbar}{\overline{\P}}
\newcommand{\bb}{{\bf b}}
\newcommand{\yb}{\bs{y}}
\newcommand{\One}{\mathds{1}}
\newcommand{\R}{{\bs{\mathbb{R}}}}
\newcommand{\G}{G}
\newcommand{\pb}{\bs{u}}
\newcommand{\ub}{\pb}
\newcommand{\pbbar}{\overline{\pb}}
\newcommand{\ubbar}{\pbbar}
\newcommand{\figlab}[1]{\label{fig:#1}}
\newcommand{\eqnlab}[1]{\label{eq:#1}}
\newcommand{\theolab}[1]{\label{theo:#1}}
\newcommand{\lemlab}[1]{\label{lem:#1}}
\newcommand{\tablab}[1]{\label{tab:#1}}
\newcommand{\theoref}[1]{\ref{theo:#1}}
\newcommand{\lemref}[1]{\ref{lem:#1}}
\newcommand{\eqnref}[1]{\eqref{eq:#1}}
\newcommand{\seclab}[1]{\label{sect:#1}}
\newcommand{\secref}[1]{\ref{sect:#1}}
\newcommand{\dt}{\Delta t}
\newcommand{\Ib}{ \textbf{I}}
\newcommand{\MSE}[1]{ \nor{#1}_2^2}
\newcommand{\MSEsqrt}[1]{ \nor{#1}_2}
\newcommand{\NN}[1]{ \Psi \LRp{#1}}
\newcommand{\ui}[1]{ {\ub}^{{#1}}}
\newcommand{\ut}[1]{ \tilde{\ub}^{{#1}}}
\newcommand{\ubar}[1]{ \bar{\ub}^{{#1}}}
\newcommand{\eval}[2][\right]{\relax \ifx#1\right\relax \left.\fi#2#1\rvert}
\newcommand{\xb}{\bs{x}}
\renewcommand{\d}{d}
\renewcommand{\u}{u}
\newcommand{\eg}[1]{ {\varepsilon}^{{#1}}}
\newcommand{\ev}[1]{ {\bs{e}_{\text{ML}}^{{#1}}}}
\renewcommand{\epsilon}{\varepsilon}
\newcommand{\vb}{\bs{v}}
\newcommand{\ubr}{\vb}
\newcommand{\epsb}{\bs{\epsilon}}
\begin{document}

\maketitle

\begin{abstract}
Real-time accurate solutions of large-scale complex dynamical systems are in critical need for control, optimization, uncertainty quantification, and decision-making in practical engineering and science applications, especially digital twin applications. This paper contributes in this direction
a model-constrained tangent slope learning (\texttt{mcTangent}) approach. At the heart of \texttt{mcTangent} is the synergy of several desirable strategies: i) a tangent slope learning to take advantage of the neural network speed and the time-accurate nature of the method of lines;
ii) a model-constrained approach to encode the neural network tangent slope with the underlying governing equations; iii) sequential learning strategies to promote long-time stability and accuracy; and iv) data randomization approach to implicitly enforce the smoothness of the neural network tangent slope and its likeliness to the truth tangent slope up second order derivatives in order to further enhance the stability  and accuracy of \texttt{mcTangent} solutions. Rigorous results are provided to analyze and justify the proposed approach. Several numerical results for transport equation, viscous Burgers equation, and Navier-Stokes equation are presented to study and demonstrate the robustness and long-time accuracy of the proposed \texttt{mcTangent} learning approach.
\end{abstract}

\begin{keywords}
dynamical systems; model-constrained learning; sequential learning; the method of lines; data randomization; tangent slope; accuracy and stability; regularization;
\end{keywords}

\section{Introduction}
Dynamical systems are pervasive in engineering and science applications. They are typically  time-dependent systems of ordinary differential equations (ODEs) or partial differential equations (PDEs). The latter is not different from the former from the method of lines viewpoint in which a PDE reduces to a system of ODEs after a spatial discretization. For practical settings, simulating a dynamical system could be challenging due to a large number of degrees of freedom, and hence the number of ODEs, interdependent on each other in a highly nonlinear manner. For multi-scale or stiff systems of ODEs, explicit time discretization schemes, though straightforward, are not efficient to due time stepsize limitation to ensure stability. Implicit schemes, on the other hand, are stable but computationally expensive as a large linear system of equations needs to be solved at each time step. Though currently infeasible, real-time accurate approximate solutions for the practical complex dynamical system are highly desirable for control, optimization, uncertainty quantification, and decision-making. 

Towards achieving real-time solutions for dynamical systems, various pure data-driven deep learning attempts have been made.
Autoencoder architecture has been explored to simulate fluid flows \cite{kim2019deep}. Autoencoder  with physics-informed regularization to improve accuracy has been proposed to predict the future sea surface temperature given past series of measurements \cite{de2019deep}. 
{In \cite{sanchez2018graph}, a graph network-based model is trained to approximate the forward map and inference model, and then used to speed up control algorithms.}
As an effort to combine traditional and machine learning approaches, {the authors in} \cite{morton2018deep} introduce a deep Koopman model\textemdash an auto-encoder architecture of convolution neural network\textemdash to predict the dynamics of airflow over a cylinder. Comprehensive overviews of  machine learning methods for forecasting dynamical systems can be found in \cite{lim2021time} and \cite{benidis2018deep}. The work in  \cite{duraisamy2021perspectives} presents a review and aspects of using machine learning techniques to simulate turbulent flows. 


Instead of replacing traditional computational approaches with pure data-driven machine learning models, which is debatable and an active research direction, one can use machine learning methods to speed up only computationally demanding modules. This could maintain desirable physics constraints as in traditional approaches while gaining computational time. Indeed, a convolution neural network (CNN) can be trained to learn the numerical error between high-resolution and low-resolution simulations \cite{pathak2020using}. Combining the CNN prediction with low-resolution simulations can then achieve similar high-resolution accuracy while being faster and at that the same time not compromising the physics. {In a different effort, neural networks are trained to replace components/terms severely affected by a low-resolution grid \cite{kochkov2021machine}}. The predictions from  neural networks are then unrolled over multiple time steps to improve long-time inference performance.  A recurrent neural network can also be used to enhance the effectiveness of geometric multigrid methods for simulating  Navier-Stokes equations \cite{margenberg2022neural}.

Completely replacing traditional methods while respecting governing equations, we argue, is highly desirable for machine learning methods because fast but nonphysical solutions are undesirable. A popular deep-learning approach aiming to accomplish this goal is  the physics-informed neural
network (PINN) \cite{raissi2017physics1}. Similar to least squares finite element
methods, PINN trains deep learning solution
constrained by the PDE residual through a regularization
\cite{raissi2017physics1, RaissiEtAl2019, RaissiKarniadakis2018,
  RaissiEtAl2017, YangPerdikaris2019, TripathyBilionis2018}).
PINN can learn solutions that attempt to make the
PDE residual small. However, the PINN approach directly approximates
the PDE solution in infinite dimensional spaces. While universal
approximation results (see, e.g.,
\cite{Cybenko1989,hornik1989multilayer,Zhou17,johnson2018deep}) could
ensure any desired accuracy with a sufficiently large number of neurons,
practical network architectures are moderate in both depth and width,
and hence the number of weights and biases,
the
accuracy of PINN can be limited. Moreover, PINN requires a retrain for new scenarios such as new boundary conditions, or new initial conditions, or new values of the underlying parameters. A physics-informed recurrent neural network has also been studied in \cite{jia2019physics}. In order to produce physically consistent and better prediction results, energy flow and density-depth constraint laws are integrated into the loss function.

Instead of learning the infinite-dimensional solution as in PINN, learning discretized solutions of dynamical systems is equally popular. 
{The work in}
\cite{zhuang2021model} uses a neural network to approximate the derivative of the system state in reduced projected subspace. The neural network is then combined with  forward Euler and Runge-Kutta time discretization schemes to achieve high-accuracy solutions. Alternatively, {a feed forward neural network can be used to directly learn the map from the  solution at the current time step to the solution in the next time step \cite{pan2018long}}. The stability and accuracy of long-time prediction are reinforced by introducing a Jacobian regularization into the loss function. Realizing several drawbacks of the direct learning approach,
{the authors in \cite{Wang1998RungeKuttaNN} propose to learn the tangent slope with Runge-Kutta schemes}. Once trained, the learned tangent slope can be used with any time discretization schemes and any time step size. 
In
\cite{um2020solver}, the authors propose to learn a correction neural network that lifts low-resolution solutions to high-resolution accuracy, and the training procedure includes low-resolution differentiable codes. Similarly, differential molecular dynamics simulations \cite{jaxmd2020} have been implemented in \texttt{Jax}
\cite{jax2018github}. Alternatively,
{the authors in }\cite{hu2019difftaichi} develop a differentiable simulations package that wraps  a numerical simulator as a gradient kernel for end-to-end back-propagation used in optimization algorithms. Similar to \cite{jaxmd2020}, {a differentiable physic simulations package equipped with the adjoint method for backpropagation is developed in \cite{holl2020phiflow}, which enables the embedding  of physical forward model into the training process.}

In this paper, aiming at simulating dynamical systems in real-time, we propose a model-constrained tangent slope deep learning (\texttt{mcTangent}) approach  that has several appealing features over existing methods. First, it operates on finite dimensional systems and is thus in principle easier to train. However, it is spatial discretization-dependent for systems governed by PDEs. Second, it learns the underlying tangent slope and thus is semi-discrete in nature. Once trained, it can be deployed with any time discretization schemes with any time step size. The next three features are the main advances
beyond the work in \cite{Wang1998RungeKuttaNN}.
Third, it aims to fulfill the governing equations by  constraining a fully discrete system in the loss function during training. Fourth, it is equipped with sequential  learning strategies and thus promotes stability and accuracy in simulating the underlying dynamical systems far beyond the training time horizon. Fifth, our approach imposes regularizations on the smoothness of the neural network tangent and its derivatives implicitly via data randomization. This provides extra stability and accuracy for \texttt{mcTangent} solutions.

The paper is organized as follows. Section \secref{mcTangent} introduces an abstract dynamical system and a model-constrained tangent slope learning (\texttt{mcTangent}) approach. Both sequential machine learning and sequential model-constrained strategies will be discussed in detail in \cref{sect:mc} and \cref{sect:sMC}. Data randomization approach then follows with an in-depth semi-heuristic argument to reveal its implicit regularization nature in \cref{sect:noise_data_sec}. In particular, data randomization induces smoothness regularization for the underlying neural network via the standard machine learning loss. The beauty of the model-constrained loss term is that it not only enforces the likeliness of the neural network and the truth tangent {slopes} but also implicitly constrains their likeliness up to second-order derivatives via data randomization.  \cref{sect:error} provides a rigorous estimation for  prediction error using \texttt{mcTangent} approach. Several numerical results using the proposed \texttt{mcTangent} approach for  transport equation, viscous Burger's equation, and Navier-Stokes equation are presented in \cref{sect:numerics}.
{We also provide detailed information on parameter tuning, randomness setting, and the cost for both training and testing.}
Section \secref{conclusions} concludes the paper with future work.

\section{Model-constrained tangent {slope} deep learning solutions for dynamical systems}
\seclab{mcTangent}
\subsection{Motivation}
For the concreteness and simplicity of the exposition, let us consider an abstract  dynamical system governed by the following time-dependent scalar PDE equation of the form
\begin{equation}
    \eqnlab{eq_base}
    \pp{u}{t} = \mc{G}\LRp{u, \Grad u, \hdots} \quad \text{ in } \Omega \subset \R^\d,
\end{equation}
where $t \in \LRs{0,T}$, $u\LRp{\xb} \in \R$ for any $\xb \in \R^d$, and $\d \in \LRc{1,2,3}$. We also assume \eqnref{eq_base} is equipped with appropriate initial conditions and boundary conditions to ensure its well-posedness.

In this paper, we are interested in {\em parametrized PDEs}. For downstream tasks such as design, control, optimization, inference, and
uncertainty quantification, these PDEs  need to be solved many times. As such,
we wish to approximate solutions of \eqnref{eq_base} in real time for
any parameters (e.g. initial conditions or boundary conditions, or some parameter). 
Training a PINN
together with parameters (either by themselves or their neural
networks weights and biases as another set of optimization variables)
\cite{Chen20, RAISSI2019686, DeepXDE21,lu2021physicsinformed} may not
be efficient as a new solution (corresponding to new 
parameters) requires a retrain.  We note that attempts using pure
data-driven deep learning to learn the parameter-to-solution map have
been explored (see, e.g., \cite{Kojima17, WHITE20191118,
  Pestourie2020, Tahersima2019, Peurifoyeaar4206, Kojima17,
  DNNInverseNanoPhotonnics20, Jiang2020}).  On the other hand,
standard numerical methods such as finite difference, finite volume,
and finite elements \cite{smith1985numerical, leveque2002finite, johnson2012numerical} discretize \eqnref{eq_base} both in time
and space. One of the most popular approaches is perhaps the method of lines (see, e.g., \cite{schiesser2012numerical})
in which one performs spatial discretization first to obtain a system
of (possibly nonlinear) ordinary differential equations of the form
\begin{equation}
  \eqnlab{MoLines}
  \pp{\ub}{t} =  \F \LRp{\ub},
\end{equation}
where $\ub$ and $\F$ are vector representations of finite dimensional
approximations of $\u$ and $\mc{G}$, respectively. Now, either an
explicit or implicit (or their combination) can be deployed to
discretize the temporal derivative. For the former, the most expensive
operation is the evaluation of tangent {slope\footnote{We call the right hand side $\F \LRp{\ub}$ as the tangent slope as it is a generalization of the tangent slope field in scalar ordinary differential equation.}} $\F \LRp{\ub}$. For
the latter, evaluating both $\F \LRp{\ub}$ and its (possibly
approximate) Jacobian for each time step play a vital
role. Implementing the Jacobian, even with the adjoint method \cite{tromp2005seismic}, is
a significant part of the programming effort. Automatic
differentiation can mitigate this programming burden at the expense
of more memory bandwidth. In summary, computing $\F\LRp{\ub}$ and its
Jacobian is a major part, both in implementation and computational time, of existing numerical methods.

To overcome the time burden of estimating the tangent slope and its
Jacobian, we present a model-constrained tangent slope deep learning approach (\texttt{mcTangent}) inspired by  the semi-discrete nature of the method of
lines. In particular, we first learn the tangent slope $\F
\LRp{\ub}$ using neural network and then use a time discretization to
solve for approximations of $\ub$. Our approach thus aims to approximate only the spatial discretization and leaves the temporal discretization for traditional time integrators. At the heart of our approach is the incorporation of the governing equations into the
neural network tangent by constraining the learning task to respect a temporal
discretization of \eqnref{MoLines}.  Again, unlike PINN and its siblings which learn the
infinite-dimensional solution $u$, our approach learns the tangent
slope of the finite-dimensional approximation $\ub$. Furthermore,
we constrain the physics on the discrete level. Clearly, our approach
is discretization-dependent while PINN requires neither spatial discretization nor temporal discretization. 

\subsection{Model-constrained neural network approach with sequential data learning}
\seclab{mc}

In this section, we construct a model-constrained neural network
$\NN{\ub}$ to learn $\F\LRp{\ub}$. This is done in tandem with a
time discretization of \eqnref{MoLines}. For clarity, we limit our presentation to forward Euler method 
\begin{equation}
    \eqnlab{ubFE}
     \ui{k+1} = \ui{k} + \dt \, \F \LRp{\ui{k}},
\end{equation}
as it is straightforward
to extend the approach to any time discretization scheme, and we provide a brief discussion at the end of the section. The task at hand is to train $\NN{\ub}$ on a certain spatial mesh $\mc{T}$ corresponding to a spatial discretization. To begin, let us
denote the numerical solutions of \eqnref{ubFE} at $N_t + 1$ time
steps on a finer mesh $\mc{T}^f$ as
\[
\LRc{\ui{0}, \ui{1}, \hdots, \ui{N_t}}.
\]
which are then down-sampled on $\mc{T}$ for training $\NN{\ub}$. Doing so
has proved to yield more accurate predictions than training directly
on the solutions on $\mc{T}$ \cite{pathak2020using, kochkov2021machine,
  zhuang2021learned}. This is not surprising as the down-sampled
training data on $\mc{T}$ is more accurate than the solution on
$\mc{T}$. 

The next idea that we like to incorporate into our approach is sequential training. The key is to feed the predictions back to the neural network model to enable a better long-time predictive capability. Using this idea \cite{wu2022learning} deploys a mixture of graph neural network and 3D-U-Net neural network  to model fluid flows. Similarly, in 
\cite{zhuang2021learned} 
sequential learning is used to train a network to obtain the optimal finite difference coefficients from the high-resolution training data. 
In the context of atmosphere modeling, \cite{brenowitz2018prognostic} introduces a stable and highly accurate long-time prediction loss function with sequential training.
Following \cite{pathak2020using, kochkov2021machine,wu2022learning,
  zhuang2021learned}, we partition the training data in $N_t -S$ overlapping subsets 
\[
\mc{U} := \LRc{\LRp{\ui{0}, \ui{1}, \hdots, \ui{S+1}}, \LRp{\ui{1}, \ui{2}, \hdots, \ui{S+2}}, \hdots, \LRp{\ui{{N_t-S-1}}, \ui{{N_t-S}}, \hdots, \ui{N_t}}}.
\]
For convenience in the exposition, we enumerate these $N_t -S$ subsets as
\begin{multline*}
      \mc{U} := \big \{ \LRp{\ui{{0,0}}, \ui{{0,1}}, \hdots, \ui{{0,S+1}}}, \LRp{\ui{{1,0}}, \ui{{1,1}}, \hdots, \ui{{1, S+1}}},  \\ 
     \hdots, \LRp{\ui{{N_t-S-1, 0}}, \ui{{N_t-S-1, 1}}, \hdots, \ui{{N_t-S-1, S+1}}} \big \},
\end{multline*}
where the second superscript denotes the local index in each subset. To distinguish from the sequential model-constrained learning in \cref{sect:sMC}, let us call the machine learning approach based on these overlapping subsets as {\em sequential data learning}. 

We next discuss how we use each subset in our model-constrained
approach. Consider the $k$th subset $\LRp{\ui{{k,0}}, \ui{{k,1}},
  \hdots, \ui{{k,S+1}}}$, for $k = 0, \hdots, N_t-S-1$.  Starting from
$\ut{k,0} = \ui{k,0}$, we can write the sequence of approximate
solutions $\LRc{\ut{k,i}}_{i=1}^{S+1}$ for \eqnref{MoLines} using
forward Euler time discretization with the neural network tangent
$\NN{\ub}$ as
\begin{equation}
    \eqnlab{ml_pred}
    \ut{k,i+1} = \ut{k,i} + \dt \NN{\ut{k,i}}, \quad i = 0, \hdots, S. 
\end{equation}
On the other hand, if we feed $\ut{k,i}$ through the forward Euler discretization \eqnref{ubFE}  we obtain
\begin{equation}
    \eqnlab{u_mc_pred}
    \ubar{k,i+1} = \ut{k,i} + \dt \, \F \LRp{\ut{k,i}}, \quad i = 0, \hdots, S.
\end{equation}
As can be seen $\ut{k,i+1} \ne \ubar{k,i+1}$, though we wish they are the same.
{\em If they were, the approximate solutions using neural network tangent would respect the governing discretized equation exactly}.  
  Obviously, this is not feasible in general.
  Thus, we resort to requiring $\ut{k,i+1} $ as close as possible to $ \ubar{k,i+1}$.
One way to accomplish this is to consider the following loss function
for the $k$th batch:
\begin{equation}
\eqnlab{loss_1}
    \mc{J}_k := \quad \frac{1}{S+1} \sum_{i=1}^{S+1} \LRp{ \MSE{\ui{k,i} - \ut{k,i}} + \alpha \MSE{ \ubar{k,i} - \ut{k,i}}},
\end{equation}
where $\alpha$ is a model-constrained penalty (or regularization) parameter{, which controls the magnitude of the model-constrained loss (relative to the machine learning loss and). Parameter tuning in \cref{sect:Comp_cost} shows that a single value $\alpha = 10^5$ works well for all numerical examples in \cref{sect:numerics}}. The first term of the loss \eqnref{loss_1}\textemdash the ML Loss in \cref{fig:NN_architecture}\textemdash ensures the data
consistency, while the second term\textemdash the MC Loss in \cref{fig:NN_architecture}\textemdash is to force approximate solutions
of \eqnref{MoLines} using neural network tangent $\NN{\ub}$ to best fit the underlying space-time discretization \eqnref{ubFE}. The schematic of the \texttt{mcTangent} architecture with sequential data learning for the $k$th data subset and $S = 1$ is illustrated in \cref{fig:NN_architecture}. {We note that, unlike SINDy \cite{brunton2016discovering}, which discovers the dynamic systems from a dictionary of common differential operators, \texttt{mcTangent} aims to approximate high dimensional complex nonlinear tangent slope $\F$ operator using neural network.} 

\begin{remark}
 Note that it is not essential that
$\ubar{i}$ must be obtained by the  forward Euler scheme \eqnref{ubFE}. In fact, our approach is flexible in the sense that any one-step explicit scheme, denoted as $\mc{F}$, (including explicit Runge-Kutta) is admissible. In such a case, our neural network can be considered as learning the forward Euler approximation of the ground-truth scheme.
\end{remark}


\def\layersep{1.7cm}
\def\nodeinlayersep{0.8cm}
\usetikzlibrary{calc}

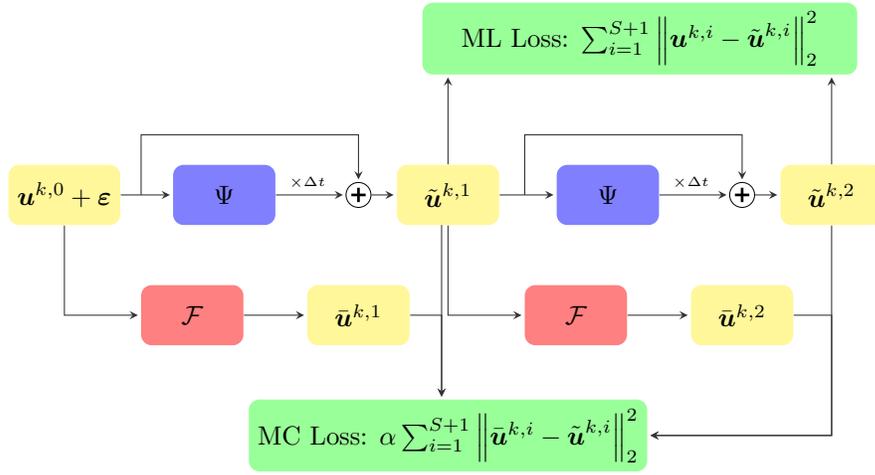
\begin{figure}[htb!]
\centering
\begin{tikzpicture}[
    node distance=\layersep,
    edge/.style={-stealth,shorten >=1pt, draw=black!80,thin},
    neuron/.style={circle,fill=black!25,minimum size=10pt,inner sep=0pt},
    operator/.style={rectangle,fill=green!,minimum height= \nodeinlayersep, minimum width= 0.8 * \layersep, inner sep=0pt, rounded corners},
    input neuron/.style={neuron, fill=green!50,minimum size=12pt},
    output neuron/.style={neuron, fill=green!50,minimum size=12pt},
    hidden neuron/.style={neuron, fill=blue!50},
    Forward map/.style={operator, fill=red!50},
    annot/.style={text width=4em, text centered},
    every node/.style={scale=1.0},
    node1/.style={scale=2.0},
    cross/.style={path picture={ \draw[black, shorten <=2pt, shorten >=2pt, line width=1pt] (path picture bounding box.south) -- (path picture bounding box.north); \draw[black, shorten <=2pt, shorten >=2pt, line width=1pt] (path picture bounding box.west) -- (path picture bounding box.east);}}
]

\node[rectangle,fill=yellow!50,minimum height= \nodeinlayersep, minimum width= 0.8 * \layersep, rounded corners] (u_in) at (0*\layersep, -2*\nodeinlayersep) {$\ui{{k,0}} + \epsb$};
\node[rectangle,fill=yellow!50,minimum height= \nodeinlayersep, minimum width= 0.8 * \layersep, rounded corners] (u_ml_1) at (3.0*\layersep, -2*\nodeinlayersep) {$\ut{{k,1}}$};
\node[rectangle,fill=yellow!50,minimum height= \nodeinlayersep, minimum width= 0.8 * \layersep, rounded corners] (u_ml_2) at (6.0*\layersep, -2*\nodeinlayersep) {$\ut{{k,2}}$};

\node[rectangle,fill=blue!50,minimum height= \nodeinlayersep, minimum width= 0.8 * \layersep, rounded corners] (NN1) at (1.25*\layersep, -2*\nodeinlayersep) {$\Psi$};
\node[rectangle,fill=blue!50,minimum height= \nodeinlayersep, minimum width= 0.8 * \layersep, rounded corners] (NN2) at (4.25*\layersep, -2*\nodeinlayersep) {$\Psi$};

\node[rectangle,minimum height= \nodeinlayersep, minimum width= 0.8 * \layersep, rounded corners] (dt) at (1.9*\layersep, -1.8*\nodeinlayersep) {\tiny $ \times \dt$};

\node[rectangle,minimum height= \nodeinlayersep, minimum width= 0.8 * \layersep, rounded corners] (dt) at (4.9*\layersep, -1.8*\nodeinlayersep) {\tiny $ \times \dt$};

\node [draw,circle,cross,minimum width=.02*\layersep,line width=.3pt](cross_1) at (2.3*\layersep, -2*\nodeinlayersep){}; 
\node [draw,circle,cross,minimum width=.1*\layersep,line width=.3pt](cross_2) at (5.3*\layersep, -2*\nodeinlayersep){};

\node[rectangle,fill=red!50,minimum height= \nodeinlayersep, minimum width= 0.8 * \layersep, rounded corners] (forward_1) at (1.0*\layersep, -4*\nodeinlayersep) {$\mc{F}$};
\node[rectangle,fill=yellow!50,minimum height= \nodeinlayersep, minimum width= 0.8 * \layersep, rounded corners] (u_mc_1) at (2.3*\layersep, -4*\nodeinlayersep) {$\ubar{{k,1}}$};

\node[rectangle,fill=red!50,minimum height= \nodeinlayersep, minimum width= 0.8 * \layersep, rounded corners] (forward) at (4.0*\layersep, -4*\nodeinlayersep) {$\mc{F}$};
\node[rectangle,fill=yellow!50,minimum height= \nodeinlayersep, minimum width= 0.8 * \layersep, rounded corners] (u_mc) at (5.3*\layersep, -4*\nodeinlayersep) {$\ubar{{k,2}}$};


\draw[edge,thin] (u_in) -- (NN1);
\draw[edge,thin] (NN1) -- (cross_1);
\draw[edge,thin] (cross_1) -- (u_ml_1);
\draw[edge,thin] (u_ml_1) -- (NN2);
\draw[edge,thin] (NN2) -- (cross_2);
\draw[edge,thin] (cross_2) -- (u_ml_2);

\draw[edge,thin] (.6*\layersep, -2*\nodeinlayersep) -- (.6*\layersep, -1.*\nodeinlayersep) -- (2.3*\layersep, -1.*\nodeinlayersep) -- (cross_1);

\draw[edge,thin] (3.6*\layersep, -2*\nodeinlayersep) -- (3.6*\layersep, -1.*\nodeinlayersep) -- (5.3*\layersep, -1.*\nodeinlayersep) -- (cross_2);

\draw[edge,thin] (u_in) -- (0.0*\layersep, -4.*\nodeinlayersep) -- (forward_1);
\draw[edge,thin] (forward_1) -- (u_mc_1);

\draw[edge,thin] (u_ml_1) -- (3.0*\layersep, -4.*\nodeinlayersep) -- (forward);
\draw[edge,thin] (forward) -- (u_mc);

\node[rectangle,fill=green!40,minimum height= \nodeinlayersep, minimum width= 3.4 * \layersep, rounded corners] (loss_ml) at (4.5*\layersep, .6*\nodeinlayersep) {ML Loss: $\sum_{i = 1}^{S+1} \MSE{\ui{{k,i}} - \ut{{k,i}}}$};

\draw[edge,thin] (u_ml_1) -- (3.0*\layersep, .5*\nodeinlayersep - 0.5*\nodeinlayersep);
\draw[edge,thin] (u_ml_2) -- (6.0*\layersep, .5*\nodeinlayersep - 0.5*\nodeinlayersep);

\node[rectangle,fill=green!40,minimum height= \nodeinlayersep, minimum width= 1.6 * \layersep, rounded corners] (loss_mc) at (3.0*\layersep, -6.0*\nodeinlayersep) {MC Loss: $\alpha \sum_{i = 1}^{S+1} \MSE{ \ubar{{k,i}} - \ut{{k,i}}}$};

\draw[edge,thin] (2.95*\layersep, -2.5*\nodeinlayersep) -- (2.95*\layersep, -5.4*\nodeinlayersep);
\draw[edge,thin] (u_mc_1) -- (2.95*\layersep, -4.0*\nodeinlayersep) -- (2.95*\layersep, -5.4*\nodeinlayersep);

\draw[edge,thin] (u_ml_2) -- (6.0*\layersep, -6.0*\nodeinlayersep) -- (loss_mc);
\draw[edge,thin] (u_mc) -- (6.0*\layersep, -4*\nodeinlayersep) -- (6.0*\layersep, -6.0*\nodeinlayersep) -- (loss_mc);

\end{tikzpicture}
\caption{The schematic of the \texttt{mcTangent} approach with sequential data learning with $S = 1$. For the data randomization approach in Section \secref{noise_data_sec}, the random noise vector, $\epsb$, is added to the first input of the neural network.}
\figlab{NN_architecture}
\end{figure}

Taking all the batches into account  yields the total loss function
\begin{equation}
    \eqnlab{loss_sum}
    \mc{J} :=  \frac{1}{\LRp{N_t - S}\LRp{S+1}} \sum_{k=0}^{N_t - S-1} \sum_{i=1}^{S+1} \LRp{  \MSE{\ui{k,i} - \ut{k,i}} + \alpha \MSE{ \ubar{k,i} - \ut{k,i}}}.
\end{equation}

To gain insight into our \texttt{mcTangent} approach, we consider a linear problem in which $\F\LRp{\ub} = \F \ub$, and a one-layer linear neural network $\NN{\ui{k,0}} =  \W \ui{k,0} + \bb$. Under a mild condition, our approach should exactly recover the underlying tangent slope, i.e. $\NN{\ub} = \F\ub$. 
Indeed, let $S = 0$ so that the loss function \eqnref{loss_sum} becomes
\begin{equation}
    \eqnlab{Linear_1step}
    \begin{aligned}
         \mc{J} = & \frac{1+\alpha}{N_t} \sum_{k=0}^{N_t - 1} \MSE{\ui{k,1} - \ut{k,1}} = \frac{1+\alpha}{N_t} \nor{{U}^{1} - \Tilde{{U}}^{1}}_{F}^2 \\
        = & \frac{\LRp{1+\alpha}\dt^2}{N_t} \nor{\F {U}^{0} - \LRp{\W {U}^{0} + \bb \One^T}}_{F}^2
    \end{aligned}
\end{equation}
where ${U}^{t_i}$ and $\Tilde{{U}}^{t_i}$ are matrices with true and predictive solutions as columns, respectively, and $\One$ is the unit column vector.

\begin{lemma}
\lemlab{linearOptimality}
The optimal solution $\LRp{\W^*, \bb^*}$ for the training problem
\[
\min_{\W,\bb}\mc{J}
\]
 is given by
\begin{equation}
    \begin{aligned}
        \eqnlab{optimal_linear_NN}
        \W^* =  \F \Pbar \Pbar^{\dagger},\quad 
        \bb^* =  \F \LRp{\Ib - \Pbar \Pbar^{\dagger}} \ubbar, 
    \end{aligned}
\end{equation}
where $\pbbar := \frac{1}{N_t} U^{t_0} \One$ is the column-average of matrix ${U}^{t_0}$, $\Pbar := U^{t_0} - \pbbar\One^T$, and $^\dagger$ denotes the pseudo-inverse. Consequently, the optimal network reads
\[
\NN{\ub} = \F \Pbar \Pbar^{\dagger} \ub +  \F \LRp{\Ib - \Pbar \Pbar^{\dagger}} \ubbar.
\]
\end{lemma}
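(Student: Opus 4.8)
The plan is to exploit that $\mc{J}$ in \eqnref{Linear_1step} is, up to the positive constant $\LRp{1+\alpha}\dt^2/N_t$, the squared Frobenius norm of an affine function of $\LRp{\W,\bb}$, hence a jointly convex quadratic. Consequently any stationary point is a global minimizer, so it suffices to solve the first-order optimality conditions. I would organize the computation by eliminating $\bb$ first and then solving the reduced problem for $\W$.

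First I would differentiate with respect to $\bb$ with $\W$ held fixed. Writing the residual as $R := \F U^{t_0} - \W U^{t_0} - \bb \One^T = \LRp{\F - \W} U^{t_0} - \bb \One^T$, the stationarity condition $\partial_{\bb} \nor{R}_F^2 = -2 R \One = 0$ together with $\One^T \One = N_t$ gives $\bb = \frac{1}{N_t}\LRp{\F - \W} U^{t_0} \One = \LRp{\F - \W}\pbbar$, recalling $\pbbar = \frac{1}{N_t} U^{t_0} \One$. Substituting back and using $\Pbar = U^{t_0} - \pbbar \One^T$, the residual collapses to $R = \LRp{\F - \W}\Pbar$, so the problem reduces to the matrix least-squares problem $\min_{\W} \nor{\F \Pbar - \W \Pbar}_F^2$.

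Next I would solve this reduced problem row by row: the $i$-th row $w_i^T$ of $\W$ minimizes $\nor{\Pbar^T w_i - \LRp{\F \Pbar}_{i,:}^T}_2^2$, a standard linear least-squares problem whose minimum-norm solution is $w_i = \LRp{\Pbar^T}^{\dagger}\LRp{\F \Pbar}_{i,:}^T$; stacking the rows yields $\W^* = \F \Pbar \Pbar^{\dagger}$. Back-substituting into $\bb = \LRp{\F - \W}\pbbar$ then gives $\bb^* = \F \pbbar - \F \Pbar \Pbar^{\dagger}\pbbar = \F\LRp{\Ib - \Pbar \Pbar^{\dagger}}\pbbar$, which is the claimed formula since $\ubbar = \pbbar$. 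The expression for $\NN{\ub} = \W^* \ub + \bb^*$ follows by direct substitution.

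The main obstacle to address is uniqueness. When $\Pbar$ fails to have full row rank\textemdash the typical regime, since centering forces $\Pbar \One = 0$ and the number of spatial degrees of freedom usually exceeds $N_t - 1$\textemdash the minimizer $\W$ of the reduced problem is not unique; only the fitted product $\W \Pbar$ is determined. The formula $\W^* = \F \Pbar \Pbar^{\dagger}$ must therefore be justified as selecting the \emph{minimum-norm} minimizer, which is precisely what the pseudo-inverse returns, and I would also verify the consistency identities $\Pbar \One = 0$ and $\Pbar \Pbar^{\dagger}\Pbar = \Pbar$ used implicitly above. This simultaneously clarifies the ``mild condition'' alluded to in the text: when $\Pbar$ has full row rank so that $\Pbar \Pbar^{\dagger} = \Ib$ (the centered data spans the state space), one recovers $\W^* = \F$ and $\bb^* = 0$, i.e. $\NN{\ub} = \F \ub$, exact recovery of the tangent slope.
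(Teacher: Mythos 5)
Your proposal is correct, and there is nothing in the paper to compare it against: the paper states Lemma \lemref{linearOptimality} with no proof at all (it is followed immediately by a remark), so your argument fills a genuine gap rather than retracing the authors' steps. Your route\textemdash joint convexity of the quadratic \eqnref{Linear_1step}, elimination of $\bb$ via the stationarity condition $R\One = 0$ (using $\One^T\One = N_t$), reduction to the matrix least-squares problem $\min_{\W}\nor{\LRp{\F-\W}\Pbar}_F^2$, and row-wise application of the pseudo-inverse\textemdash is the natural one, and all the individual steps check out. Your uniqueness discussion is in fact a substantive improvement on the lemma's wording: since the system $\W\Pbar = \F\Pbar$ is always consistent (take $\W = \F$), the reduced problem has optimal value zero, so $\LRp{\W,\bb} = \LRp{\F,\bs{0}}$ is itself a global minimizer of $\mc{J}$, and the pair in \eqnref{optimal_linear_NN} is properly ``the'' solution only in the minimum-Frobenius-norm sense; the two coincide precisely when $\Pbar$ has full row rank, which is the paper's exact-recovery remark. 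One small polish worth adding to your write-up: the claimed pair can be checked directly by computing the residual
\begin{equation*}
\F U^{0} - \W^* U^{0} - \bb^*\One^T
= \F\LRs{\LRp{U^{0} - \ubbar\One^T} - \Pbar\Pbar^{\dagger}\LRp{U^{0} - \ubbar\One^T}}
= \F\LRs{\Pbar - \Pbar\Pbar^{\dagger}\Pbar} = 0,
\end{equation*}
which confirms both that the loss attains its minimum value zero there and that your stationarity computations are consistent.
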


\begin{remark}
 Lemma \lemref{linearOptimality} tells us that the optimal network exactly recovers the true forward map $\F$ if $\Pbar$ is a full row rank matrix. (In that case, $ \Pbar \Pbar^{\dagger} = \Ib$.) This holds, for example, when the number of independent data samples is equal to the discretized dimension. We would like to point out that the MC loss term  is the same as the ML loss term (up to a constant), and thus does not provide any extra information in this simple case.
When $S > 0$, at the time of writing this paper, we are not able to find a closed form solution as in Lemma \lemref{linearOptimality}. We leave it as future work.
\end{remark}

\begin{remark}
Although we learn the tangent slope using the Forward Euler scheme, it is straightforward to use any explicit scheme, such as Adams–Bashforth and Runge-Kutta methods to accomplish our goal. For example (ignoring extra subscripts for sequential data learning for simplicity), using the two-step Adams-Bashforth scheme, \texttt{mcTangent} solutions read
\[
    \ut{i+1} = \ut{i} + \frac{3}{2}\, \dt \, \NN{\ut{i}} - \frac{1}{2}\, \dt \, \NN{\ut{i-1}},
\]
as oppose to the solutions using the truth tangent slope
\[
 \ui{i+1} = \ui{i} + \frac{3}{2}\, \dt \, \F\LRp{\ui{i}} - \frac{1}{2}\, \dt \, \F\LRp{\ui{i-1}}.
\]
Similarly,  \texttt{mcTangent} solutions based on the second-order Runge-Kutta scheme reads
\[
    \begin{aligned}
        \tilde{k}_1 & = \dt \, \NN{\ut{i}} \\
        \tilde{k}_2 & = \dt \, \NN{\ut{i} + k_1} \\
        \ut{i+1} & = \ut{i} + \frac{\tilde{k}_1 + \tilde{k}_2}{2},
    \end{aligned}
\]
as oppose to the solutions using the truth tangent slope
\[
 \begin{aligned}
        k_1 & = \dt \, \F\LRp{\ui{i}} \\
        k_2 & = \dt \, \F\LRp{\ui{i} + k_1} \\
        \ui{i+1} & = \ui{i} + \frac{k_1 + k_2}{2}.
    \end{aligned}
\]
Clearly, we have to modify the lost function accordingly, but the idea is the same as forward Euler approach that we have presented above.
\end{remark}

\begin{remark}
Note that we have used forward Euler time discretization for both \cref{eq:ml_pred} and \cref{eq:u_mc_pred} for simplicity, but this is not necessary. We recommend to use time discretizations with the same order of accuracy for both as  accuracy gain in incompatible discretizations may not be well paid-off by additional computational demand. For example, if we use low-order accuracy for \cref{eq:ml_pred} but higher-order accuracy for \cref{eq:u_mc_pred}, \texttt{mcTangent} solution could be more accurate with smaller constant in the order of accuracy (still low-order) since  it tries to match more accurate solution from \cref{eq:u_mc_pred}. However, the training cost could increase significantly due to several evaluations (and hence differentiations for back-propagation) of the truth tangent slope $
\F$ in \cref{eq:u_mc_pred}. Clearly high-order accurate approaches could tax the training time significantly. 
\end{remark}

\subsection{Model-constrained neural network approach both sequential data and sequential model learnings}
\seclab{sMC}

In \cref{sect:mc}, we present a sequential data learning approach for the proposed model-constrained neural network $\NN{\ub}$ to learn the tangent slope while being constrained to provide the best possible approximate solutions for \eqnref{ubFE} for each time step. In order to improve the long-time predictive capability and accuracy,
this section constructs, in addition to {\em sequential data learning}, a {\em sequential model learning} strategy for training the neural network $\NN{\ub}$ is proposed. Sequential model learning is designed to  promote the neural network solutions to respect the underlying discretization scheme for multiple time steps concurrently. In particular, starting from $\ut{k,i}$ we can carry out $R$ steps forward in time using the underlying discretization \eqnref{ubFE} as
\[
    \ubar{k,i, r} = \ubar{k,i, r-1} + \dt \, \F \LRp{\ubar{k,i,r-1}}, \quad r = 1, \hdots, R,
\]
and using the neural network approximation \eqnref{ml_pred} as
\[
    \ut{k,i, r} = \ut{k,i, r-1} + \dt \, \NN{\ut{k,i,r-1}}, \quad r = 1, \hdots, R,
\]
where $\ubar{k,i, 0} = \ut{k,i, 0}= \ut{k,i}$. Here the third superscript $r$ has been introduced to keep track of $R$ sequential forward steps starting from $\ut{k,i}$ for both exact and neural network tangent slopes. In order to ensure that these corresponding $R$ sequential predictions closely match each other, we consider the following loss function
\begin{equation}
    \eqnlab{loss_sum_seq_mc}
    \mc{J} :=  \frac{1}{\LRp{N_t - S} \LRp{S+1}} \sum_{k=0}^{N_t - S-1} \sum_{i=1}^{S+1}  \LRp{\MSE{\ui{k,i} - \ut{k,i}} +  \frac{\alpha}{R} \sum_{r = 1}^{R} \MSE{ \ubar{k,i,r} - \ut{k,i,r}}}.
\end{equation}

\def\layersep{1.3cm}
\def\nodeinlayersep{.5cm}
\usetikzlibrary{calc}
{\tiny
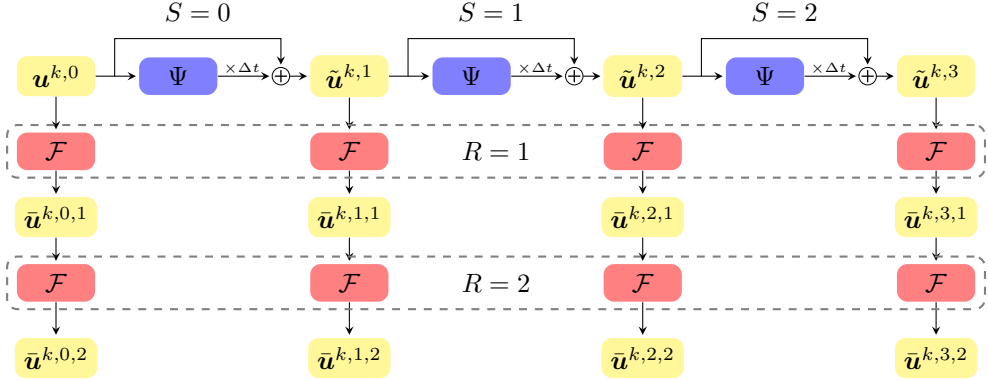
\begin{figure}[htb!]
\centering
\begin{tikzpicture}[
    node distance=\layersep,
    edge/.style={-stealth,shorten >=1pt, draw=black!100,thin},
    neuron/.style={circle,fill=black!25,minimum size=10pt,inner sep=0pt},
    operator/.style={rectangle,fill=green!,minimum height= \nodeinlayersep, minimum width= 0.8 * \layersep, inner sep=0pt, rounded corners},
    input neuron/.style={neuron, fill=green!50,minimum size=12pt},
    output neuron/.style={neuron, fill=green!50,minimum size=12pt},
    hidden neuron/.style={neuron, fill=blue!50},
    Forward map/.style={operator, fill=red!50},
    annot/.style={text width=4em, text centered},
    every node/.style={scale=1},
    node1/.style={scale=2.0},
    cross/.style={path picture={ \draw[black, shorten <=1pt, shorten >=1pt, line width=.5pt] (path picture bounding box.south) -- (path picture bounding box.north); \draw[black, shorten <=1pt, shorten >=1pt, line width=.5pt] (path picture bounding box.west) -- (path picture bounding box.east);}}
]

\node[rectangle,fill=yellow!50,minimum height= \nodeinlayersep, minimum width= 0.8 * \layersep, rounded corners] (u_in) at (0*\layersep, -2*\nodeinlayersep) {$\ui{{k,0}}$};
\node[rectangle,fill=yellow!50,minimum height= \nodeinlayersep, minimum width= 0.8 * \layersep, rounded corners] (u_ml_1) at (3.0*\layersep, -2*\nodeinlayersep) {$\ut{{k,1}}$};
\node[rectangle,fill=yellow!50,minimum height= \nodeinlayersep, minimum width= 0.8 * \layersep, rounded corners] (u_ml_2) at (6.0*\layersep, -2*\nodeinlayersep) {$\ut{{k,2}}$};
\node[rectangle,fill=yellow!50,minimum height= \nodeinlayersep, minimum width= 0.8 * \layersep, rounded corners] (u_ml_3) at (9.0*\layersep, -2*\nodeinlayersep) {$\ut{{k,3}}$};

\node[rectangle,fill=blue!50,minimum height= \nodeinlayersep, minimum width= 0.8 * \layersep, rounded corners] (NN1) at (1.25*\layersep, -2*\nodeinlayersep) {$\Psi$};
\node[rectangle,fill=blue!50,minimum height= \nodeinlayersep, minimum width= 0.8 * \layersep, rounded corners] (NN2) at (4.25*\layersep, -2*\nodeinlayersep) {$\Psi$};
\node[rectangle,fill=blue!50,minimum height= \nodeinlayersep, minimum width= 0.8 * \layersep, rounded corners] (NN3) at (7.25*\layersep, -2*\nodeinlayersep) {$\Psi$};

\node[rectangle,fill=blue!0,minimum height= \nodeinlayersep, minimum width= 0.8 * \layersep, rounded corners]  at (1.45*\layersep, -0.3*\nodeinlayersep) {$S=0$};
\node[rectangle,fill=blue!0,minimum height= \nodeinlayersep, minimum width= 0.8 * \layersep, rounded corners]  at (4.45*\layersep, -0.3*\nodeinlayersep) {$S = 1$};
\node[rectangle,fill=blue!0,minimum height= \nodeinlayersep, minimum width= 0.8 * \layersep, rounded corners]  at (7.45*\layersep, -0.3*\nodeinlayersep) {$S=2$};

\node[rectangle,minimum height= \nodeinlayersep, minimum width= 0.8 * \layersep, rounded corners] (dt) at (1.9*\layersep, -1.8*\nodeinlayersep) {\tiny $ \times \dt$};
\node[rectangle,minimum height= \nodeinlayersep, minimum width= 0.8 * \layersep, rounded corners] (dt) at (4.9*\layersep, -1.8*\nodeinlayersep) {\tiny $ \times \dt$};
\node[rectangle,minimum height= \nodeinlayersep, minimum width= 0.8 * \layersep, rounded corners] (dt) at (7.9*\layersep, -1.8*\nodeinlayersep) {\tiny $ \times \dt$};

\node[draw,circle,cross,minimum width=.01*\layersep,line width=.2pt, scale=0.7](cross_1) at (2.3*\layersep, -2*\nodeinlayersep){}; 
\node [draw,circle,cross,minimum width=.01*\layersep,line width=.2pt, scale=0.7](cross_2) at (5.3*\layersep, -2*\nodeinlayersep){};
\node [draw,circle,cross,minimum width=.01*\layersep,line width=.2pt, scale=0.7](cross_3) at (8.3*\layersep, -2*\nodeinlayersep){};

\draw[edge,thin] (u_in) -- (NN1);
\draw[edge,thin] (NN1) -- (cross_1);
\draw[edge,thin] (cross_1) -- (u_ml_1);
\draw[edge,thin] (u_ml_1) -- (NN2);
\draw[edge,thin] (NN2) -- (cross_2);
\draw[edge,thin] (cross_2) -- (u_ml_2);
\draw[edge,thin] (u_ml_2) -- (NN3);
\draw[edge,thin] (NN3) -- (cross_3);
\draw[edge,thin] (cross_3) -- (u_ml_3);

\draw[edge,thin] (.6*\layersep, -2*\nodeinlayersep) -- (.6*\layersep, -1.*\nodeinlayersep) -- (2.3*\layersep, -1.*\nodeinlayersep) -- (cross_1);
\draw[edge,thin] (3.6*\layersep, -2*\nodeinlayersep) -- (3.6*\layersep, -1.*\nodeinlayersep) -- (5.3*\layersep, -1.*\nodeinlayersep) -- (cross_2);
\draw[edge,thin] (6.6*\layersep, -2*\nodeinlayersep) -- (6.6*\layersep, -1.*\nodeinlayersep) -- (8.3*\layersep, -1.*\nodeinlayersep) -- (cross_3);

\node[rectangle,fill=red!50,minimum height= \nodeinlayersep, minimum width= 0.8 * \layersep, rounded corners] (forward_11) at (0*\layersep, -4.0*\nodeinlayersep) {$\mc{F}$};
\node[rectangle,fill=yellow!50,minimum height= \nodeinlayersep, minimum width= 0.8 * \layersep, rounded corners] (u_mc_11) at (0*\layersep, -5.75*\nodeinlayersep) {$\ubar{{k,0,1}}$};
\node[rectangle,fill=red!50,minimum height= \nodeinlayersep, minimum width= 0.8 * \layersep, rounded corners] (forward_12) at (0*\layersep, -7.5*\nodeinlayersep) {$\mc{F}$};
\node[rectangle,fill=yellow!50,minimum height= \nodeinlayersep, minimum width= 0.8 * \layersep, rounded corners] (u_mc_12) at (0*\layersep, -9.5*\nodeinlayersep) {$\ubar{{k,0,2}}$};

\node[rectangle,fill=red!50,minimum height= \nodeinlayersep, minimum width= 0.8 * \layersep, rounded corners] (forward_21) at (3*\layersep, -4.0*\nodeinlayersep) {$\mc{F}$};
\node[rectangle,fill=yellow!50,minimum height= \nodeinlayersep, minimum width= 0.8 * \layersep, rounded corners] (u_mc_21) at (3*\layersep, -5.75*\nodeinlayersep) {$\ubar{{k,1,1}}$};
\node[rectangle,fill=red!50,minimum height= \nodeinlayersep, minimum width= 0.8 * \layersep, rounded corners] (forward_22) at (3*\layersep, -7.5*\nodeinlayersep) {$\mc{F}$};
\node[rectangle,fill=yellow!50,minimum height= \nodeinlayersep, minimum width= 0.8 * \layersep, rounded corners] (u_mc_22) at (3*\layersep, -9.5*\nodeinlayersep) {$\ubar{{k,1,2}}$};

\node[rectangle,fill=red!50,minimum height= \nodeinlayersep, minimum width= 0.8 * \layersep, rounded corners] (forward_31) at (6*\layersep, -4.0*\nodeinlayersep) {$\mc{F}$};
\node[rectangle,fill=yellow!50,minimum height= \nodeinlayersep, minimum width= 0.8 * \layersep, rounded corners] (u_mc_31) at (6*\layersep, -5.75*\nodeinlayersep) {$\ubar{{k,2,1}}$};
\node[rectangle,fill=red!50,minimum height= \nodeinlayersep, minimum width= 0.8 * \layersep, rounded corners] (forward_32) at (6*\layersep, -7.5*\nodeinlayersep) {$\mc{F}$};
\node[rectangle,fill=yellow!50,minimum height= \nodeinlayersep, minimum width= 0.8 * \layersep, rounded corners] (u_mc_32) at (6*\layersep, -9.5*\nodeinlayersep) {$\ubar{{k,2,2}}$};

\node[rectangle,fill=red!50,minimum height= \nodeinlayersep, minimum width= 0.8 * \layersep, rounded corners] (forward_41) at (9*\layersep, -4.0*\nodeinlayersep) {$\mc{F}$};
\node[rectangle,fill=yellow!50,minimum height= \nodeinlayersep, minimum width= 0.8 * \layersep, rounded corners] (u_mc_41) at (9*\layersep, -5.75*\nodeinlayersep) {$\ubar{{k,3,1}}$};
\node[rectangle,fill=red!50,minimum height= \nodeinlayersep, minimum width= 0.8 * \layersep, rounded corners] (forward_42) at (9*\layersep, -7.5*\nodeinlayersep) {$\mc{F}$};
\node[rectangle,fill=yellow!50,minimum height= \nodeinlayersep, minimum width= 0.8 * \layersep, rounded corners] (u_mc_42) at (9*\layersep, -9.5*\nodeinlayersep) {$\ubar{{k,3,2}}$};

\draw[edge,thin] (u_in) -- (forward_11);
\draw[edge,thin] (forward_11) -- (u_mc_11);
\draw[edge,thin] (u_mc_11) -- (forward_12);
\draw[edge,thin] (forward_12) -- (u_mc_12);

\draw[edge,thin] (u_ml_1) -- (forward_21);
\draw[edge,thin] (forward_21) -- (u_mc_21);
\draw[edge,thin] (u_mc_21) -- (forward_22);
\draw[edge,thin] (forward_22) -- (u_mc_22);

\draw[edge,thin] (u_ml_2) -- (forward_31);
\draw[edge,thin] (forward_31) -- (u_mc_31);
\draw[edge,thin] (u_mc_31) -- (forward_32);
\draw[edge,thin] (forward_32) -- (u_mc_32);

\draw[edge,thin] (u_ml_3) -- (forward_41);
\draw[edge,thin] (forward_41) -- (u_mc_41);
\draw[edge,thin] (u_mc_41) -- (forward_42);
\draw[edge,thin] (forward_42) -- (u_mc_42);


\node[rectangle, draw = black!50, dashed, thick = 1pt ,minimum height= 1.4 * \nodeinlayersep, minimum width= 10 * \layersep, rounded corners] (block1) at (4.5*\layersep, -4.0*\nodeinlayersep) {$R = 1$};

\node[rectangle, draw = black!50, dashed, thick = 1pt ,minimum height= 1.4 * \nodeinlayersep, minimum width= 10 * \layersep, rounded corners] (block2) at (4.5*\layersep, -7.5*\nodeinlayersep) {$R = 2$};
    
\end{tikzpicture}
\caption{The schematic of the \texttt{mcTangent} approach with both sequential data and model learnings with  $S = 2, R = 2$.
}
\figlab{NN_mc_architecture}
\end{figure}
}

The schematic of the \texttt{mcTangent} architecture with both sequential data and model learnings for the $k$th data subset and $S = 2, R = 2$ is depicted in \cref{fig:NN_mc_architecture}. Clearly, when $R = 1$ we recover \eqnref{loss_sum} from \eqnref{loss_sum_seq_mc}. In other words, \eqnref{loss_sum_seq_mc} is a generalization of \eqnref{loss_sum}. 
{Intuitively, larger values for $R, S$ increase the predictive capacity of \texttt{mcTangent} solutions, and as an example this will be demonstrated for the Burgers equation in \secref{Burger_eq}.}
However, it is computationally expensive to use large values  for both $S$ and $R$. In the numerical results in \cref{sect:numerics}, we study two combinations: $S \geq 1, R = 1$ and $S = 1, R \geq 1$. In order to have a deeper understanding of the role of the loss function \eqnref{loss_sum_seq_mc} in training the  neural network tangent and its predictive capability, we shall provide an in-depth heuristic argument in \cref{sect:noise_data_sec} and a rigorous error estimation for \texttt{mcTangent} predictions in \cref{sect:error}.

\subsection{Data randomization}
\seclab{noise_data_sec}

 It has been observed \cite{sanchez2020learning} that adding a small amount of noise to training data not only increases the generalization on unseen data but also reduces accumulated errors in predictions. In fact,  clean noise-free data does not represent the accumulated error in the predictive state that is fed back to the network to produce subsequent predictions. Moreover, noisy data encourages neural network predictions to be more robust to noise-corrupted inputs and errors.
In order to investigate the significance of different noise additions (adding noise to the training inputs, weights of the neural network, and output labels) on the model generalization, \cite{an1996effects} demonstrates that the reasonable noise level in the outputs does not influence the trained network. Randomizing training data, on the one hand, prevents the neural network from overfitting data, and on the other hand, can make the network insensitive to noise in data in the validation phase. 

It is well-known that randomization induces  a regularization of the gradient of the loss function with respect to the inputs \cite{reed1992regularization}. 
Consequently, the neural network, if a proper noise level is used, is regularized to be a smooth function of the input data. The smoothness reduces the sensitivity to the variation in the input \cite{matsuoka1992noise} and can enhance the stability of long-time predictions \cite{poggio1990networks}.
The work in \cite{bishop1995training} showed that adding noise to data is equivalent to introducing a Tikhonov regularization to the loss function (where the regularization parameter is the noise variance) and thus improving the model generalization. However, the analysis is only valid in the context of infinite training data set, as pointed out in \cite{an1996effects}.

Inspired by the aforementioned work, we randomize the input data for the model-constrained network. We shall  show that randomization induces regularizations not only to  promote the smoothness of the network but also to enhance the similarity of the derivatives of the network $\NN{\ub}$ and the true tangent slope $\F\LRp{\ub}$. As shall be seen, the numerical results in \cref{sect:numerics} reveal that 
randomization improves significantly the long-term stability and accuracy. 

In this paper, we randomize the input $\ub$ of the network as
\begin{equation}
    \eqnlab{add_noise}
    \ubr = \ub + \epsb,
\end{equation}
where $\epsb$ is a normal random vector $\epsb \sim \mc{N}\LRp{\bs{0}, \delta^2 \Ib}$. Note that the following heuristic arguments also hold for any random vector with independent components, each of which is a random variable with zero mean and variances $\delta^2$. Let $\mathbb{E}\LRs{\cdot}$ denote the expectation with respect to $\epsb$. Following \cite{an1996effects}, for a generic loss function $\mc{L}\LRp{\ub}$ we have


\begin{equation}
\eqnlab{Expect_form}
\begin{aligned}
    \mathbb{E}\LRs{\mc{L}\LRp{\ubr}} = & \mc{L}\LRp{\ub} + \mathbb{E}\LRs{\eval{\pp{\mc{L}}{\ub}}_{\ub} \epsb} +  \half \mathbb{E}\LRs{ \epsb^T \eval{\pp{^2 \mc{L}}{\ub^2}}_{\ub} \epsb} 
    + o\LRp{\nor{\epsb}^2}
    \\
    \approx & \mc{L}\LRp{\ub} +  \half \mathbb{E}\LRs{ \epsb^T \eval{\pp{^2 \mc{L}}{\ub^2}}_{\ub} \epsb},
    \end{aligned}
\end{equation}
where we have used sufficient small noise $\epsb$ (relatively to $\ub$) so that the high-order term $o\LRp{\nor{\epsb}^2}$, using the standard ``small o" notation, is assumed to be negligible.
We consider $S = 0$ and $R = 1$. (For $S > 0$ and/or $R > 1$ , the sequential inputs to the network contain the error which may not satisfy the condition for \eqnref{Expect_form} to hold.) In this case, the loss function \eqnref{loss_sum} becomes
\begin{equation}
    \eqnlab{loss_noise}
    \begin{aligned}
         \mc{J} = & \frac{1}{N_t} \sum_{k=0}^{N_t - 1} {\underbrace{\MSE{\ui{k,1} - \ut{k,1}}}_{\mc{L}_{\text{ML}} \LRp{\ui{k,0} + \epsb}} + \alpha \underbrace{\MSE{\ubar{k,1} - \ut{k,1}}}_{\mc{L}_{\text{MC}}\LRp{\ui{k,0} + \epsb} } } 
    \end{aligned}
\end{equation}
We now study the randomized ML loss term $\mc{L}_{\text{ML}} \LRp{\ui{k,0} + \epsb}$
and the randomized MC loss term
$\mc{L}_{\text{MC}}\LRp{\ui{k,0} + \epsb}$
to gain insights into the role of randomization. 

The machine learning loss term reads
\[
    {\mc{L}_{\text{ML}}\LRp{\ui{k,0} + \epsb}} =  \MSE{\ui{k,1} - \LRp{\ui{k,0} + \epsb + \dt \NN{\ui{k,0} + \epsb}}}
\]
which is a function of true input $\ui{k,0}$ plus a random noise vector $\epsb$.  {\em It is important to note that we do not randomize the true data $\ui{k,1}$ against which we compare the machine prediction $\ut{k,1}$}.
Replacing $\mc{L}$ by $\mc{L}_{\text{ML}}$ in \eqnref{Expect_form} yields
\begin{equation}
\eqnlab{MLlossEx}
    \begin{aligned}
        \mathbb{E}\LRs{\mc{L}_{\text{ML}}\LRp{\ui{k,0} + \epsb}} \approx & \underbrace{\MSE{\ui{k,1} - \LRp{\ui{k,0} + \dt \NN{\ui{k,0}}}}}_{\mc{L}_{\text{ML}}\LRp{\ui{k,0}}} + 
\delta^2 \LRs{\mc{P}_1 \LRp{\ui{k,0}} + \mc{P}_2 \LRp{\ui{k,0}}},
    \end{aligned}
\end{equation}
where
\begin{equation}
    \eqnlab{P1}
    \mc{P}_1\LRp{\ui{k,0}} = \mb{Tr}\LRs{ \LRp{\Ib + \dt \eval{ \pp{\Psi}{\ub}}_{\ui{k,0}}}^T \LRp{\Ib + \dt \eval{ \pp{\Psi}{\ub}}_{\ui{k,0}}} },
\end{equation}
with $\mb{Tr}\LRs{\cdot}$ as the trace operator, and
\begin{equation}
    \begin{aligned}
        \mc{P}_2 \LRp{\ui{k,0}} = & \mb{Tr} \LRs{\dt {\eval{\frac{\partial^2 \Psi}{\partial \ub^2}}_{\ui{k,0}}}\odot \LRs{\LRp{\ui{k,0} + \dt \NN{\ui{k,0}}} - \ui{k,1}}},
    \end{aligned}
\end{equation}
where $\odot$ denotes the dot product of the third order tensor $\dt \eval{\frac{\partial^2 \Psi}{\partial \ub^2}}_{\ui{k,0}}$ and the vector $ \LRs{\LRp{\ui{k,0} + \dt \NN{\ui{k,0}}} - \ui{k,1}}$.

From \eqnref{MLlossEx}, three observations are in order. First, on average, the randomized ML loss term is approximately the original ML loss term plus two additional terms $\mc{P}_1$ and $\mc{P}_2$ scaled by the variance $\delta^2$ of the noise. Second, the first term $\mc{P}_1$ is positive and thus is a regularization. It enforces the boundedness of the gradient (and hence the smoothness) of the neural network. Third, the second term $\mc{P}_2$ can be either positive or negative. However, when the time step $\dt$ is small and/or the  ML misfit term $\LRs{\LRp{\ui{k,0} + \dt \NN{\ui{k,0}}} - \ui{k,1}}$ is  small (e.g. with sufficient training), the contribution of the second term is expected to be dominated by the first and thus is negligible. When neither of these two conditions is satisfied, if the training enforces small ``curvature" of the neural network (i.e. small $\eval{\frac{\partial^2 \Psi}{\partial \ub^2}}_{\ui{k,0}}$) then the second term is also negligible. When this happens, training with randomization provides extra smoothness to the network. 




Next, from \eqnref{ml_pred} and \eqnref{u_mc_pred}, { the randomized MC loss term} can be written as
\begin{equation*}
    \begin{aligned}
        \mc{L}_{\text{MC}}\LRp{\ui{k,0} + \epsb} = \MSE{\ubar{k,1} - \ut{k,1}} 
        = \dt^2 \MSE{ \F\LRp{\ui{k,0}+ \epsb} - \NN{\ui{k,0}+ \epsb}}.
    \end{aligned}
\end{equation*}
Applying \eqnref{Expect_form} with $\mc{L}_{\text{MC}}$ in place of $\mc{L}$ gives
\begin{equation}
\eqnlab{MClossEx}
    \begin{aligned}
        \mathbb{E}\LRs{\mc{L}_{\text{MC}}\LRp{\ui{k,0} + \epsb}} \approx \underbrace{\dt^2 \MSE{ \F\LRp{\ui{k,0}} - \NN{\ui{k,0}}}}_{\mc{L}_{\text{MC}}\LRp{\ui{k,0}}} + \delta^2 \LRs{\mc{Q}_1\LRp{\ui{k,0}} + \mc{Q}_2\LRp{\ui{k,0}}},
    \end{aligned}
\end{equation}
where
\begin{equation}
    \mc{Q}_1 \LRp{\ui{k,0}} = \dt^2 \mb{Tr}\LRs{\LRp{\eval{\pp{\F}{\ub}}_{\ui{k,0}} - \eval{\pp{\Psi}{\ub}}_{\ui{k,0}}}^T \LRp{\eval{\pp{\F}{\ub}}_{\ui{k,0}} - \eval{\pp{\Psi}{\ub}}_{\ui{k,0}}} },
\end{equation}
and
\begin{equation}
    \begin{aligned}
        \mc{Q}_2\LRp{\ui{k,0}} = & \mb{Tr} \LRs{ \dt \LRp{\eval{\frac{\partial^2 \F}{\partial \ub^2}}_{\ui{k,0}} - \eval{\frac{\partial^2 \Psi}{\partial \ub^2}}_{\ui{k,0}}}\odot \dt \LRp{\NN{\ui{k,0}} - \F\LRp{\ui{k,0}}} }.
    \end{aligned}
\end{equation}
As can be seen, the randomized MC loss term is approximately a sum of the original ML loss term and two additional terms. The first term $\mc{Q}_1$ is non-negative and behaves like a regularization to enforce the likeliness of the derivatives with respect to $\ub$ of the neural network tangent $\NN{\ub}$ and the true tangent $\F\LRp{\ub}$. The second term, though could be either negative or positive, can be negligible with sufficient training so that the MC misfit $\dt\LRp{\NN{\ui{k,0}} - \F\LRp{\ui{k,0}}}$ is relatively small.  Another possibility for the insignificance of the second term is when the difference in the ``curvature" of the neural network tangent  and the true tangent is sufficiently small. In that case, training with randomization promotes the closeness of not only  $\NN{\ub}$ and $\F\LRp{\ub}$ but their first and second derivatives with respect to $\ub$: {\em confirming the significant advantages obtained from data randomization}.
{Next, combining \eqnref{loss_noise}, \eqnref{MLlossEx}, and \eqnref{MClossEx} yields the following result.
\begin{theorem}
Let the input of the neural network be randomized as in \cref{eq:add_noise}. Then
    \begin{align}
         &\mathbb{E}\LRs{\mc{J}} =  \frac{1}{N_t} \sum_{k=0}^{N_t - 1} \LRp{\mc{L}_{\text{ML}} \LRp{\ui{k,0} } + \alpha \mc{L}_{\text{MC}}\LRp{\ui{k,0}}} \eqnlab{loss_final_noise} \\
         & + \frac{\delta^2}{N_t} \sum_{k=0}^{N_t - 1}  \LRs{ \mc{P}_1 \LRp{\ui{k,0}} + \mc{P}_2\LRp{\ui{k,0}} + \alpha \LRp{\mc{Q}_1 \LRp{\ui{k,0}} + \mc{Q}_2\LRp{\ui{k,0}}} } +  o\LRp{\nor{\epsb}^2}. 
         \nonumber
    \end{align}
\end{theorem}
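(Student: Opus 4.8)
The plan is to obtain the stated identity as a direct assembly of three ingredients already established in the excerpt: the explicit form of the randomized loss in \eqnref{loss_noise}, together with the per-sample expansions \eqnref{MLlossEx} and \eqnref{MClossEx}. First I would start from \eqnref{loss_noise}, which expresses $\mc{J}$ (in the case $S = 0$, $R = 1$) as the empirical average over $k$ of the randomized machine-learning term $\mc{L}_{\text{ML}}\LRp{\ui{k,0} + \epsb}$ and the randomized model-constrained term $\alpha\,\mc{L}_{\text{MC}}\LRp{\ui{k,0} + \epsb}$. Because the outer index $k$ ranges over the fixed (deterministic) data samples and the only randomness is the noise vector $\epsb$, linearity of expectation lets me push $\mathbb{E}\LRs{\cdot}$ through the finite sum and past the scalar factors $1/N_t$ and $\alpha$, reducing the task to evaluating $\mathbb{E}\LRs{\mc{L}_{\text{ML}}\LRp{\ui{k,0} + \epsb}}$ and $\mathbb{E}\LRs{\mc{L}_{\text{MC}}\LRp{\ui{k,0} + \epsb}}$ one term at a time.

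For each of these two expectations I would invoke the second-order Taylor expansion \eqnref{Expect_form}. The two mechanical facts driving it are that $\epsb$ has zero mean, so the first-order contribution $\mathbb{E}\LRs{\partial_{\ub}\mc{L}\,\epsb}$ vanishes, and that $\mathbb{E}\LRs{\epsb \epsb^T} = \delta^2 \Ib$, so the quadratic form collapses to a trace, $\half\,\mathbb{E}\LRs{\epsb^T H \epsb} = \frac{\delta^2}{2}\,\mb{Tr}\LRs{H}$, where $H$ is the input-Hessian of the relevant loss evaluated at $\ui{k,0}$. Performing this computation for $\mc{L}_{\text{ML}}$ produces precisely the pair $\mc{P}_1 + \mc{P}_2$ of \eqnref{MLlossEx}, and for $\mc{L}_{\text{MC}}$ the pair $\mc{Q}_1 + \mc{Q}_2$ of \eqnref{MClossEx}, each multiplied by $\delta^2$: the $\mc{P}_1$ and $\mc{Q}_1$ pieces arise from squaring the Jacobian $\LRp{\Ib + \dt\,\partial_{\ub}\Psi}$ and $\LRp{\partial_{\ub}\F - \partial_{\ub}\Psi}$ respectively, while the $\mc{P}_2$ and $\mc{Q}_2$ pieces arise from the second-derivative tensor contracted against the corresponding residual. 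Since these two expansions are already derived in the excerpt, I would simply quote them rather than recompute the Hessians.

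Finally I would substitute the two expansions back into the averaged expression, grouping the leading-order contributions into $\frac{1}{N_t}\sum_k \LRp{\mc{L}_{\text{ML}}\LRp{\ui{k,0}} + \alpha\,\mc{L}_{\text{MC}}\LRp{\ui{k,0}}}$ and the $\delta^2$-order contributions into $\frac{\delta^2}{N_t}\sum_k \LRs{\mc{P}_1 + \mc{P}_2 + \alpha\LRp{\mc{Q}_1 + \mc{Q}_2}}$, which yields \eqnref{loss_final_noise}. The genuine content of the result lives in the earlier Hessian computations behind \eqnref{MLlossEx} and \eqnref{MClossEx}; granting those, the theorem is pure bookkeeping, so I do not expect a substantive obstacle. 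The one point I would handle with care is the status of the remainder: \eqnref{Expect_form} is only asymptotic for small noise, so the equality must be read up to the explicit $o\LRp{\nor{\epsb}^2}$ term, and I would note that aggregating the $N_t$ individual remainders into a single $o\LRp{\nor{\epsb}^2}$ is legitimate precisely because $N_t$ is fixed and a finite sum of such terms is again $o\LRp{\nor{\epsb}^2}$.
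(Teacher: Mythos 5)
Your proposal is correct and matches the paper's own proof, which is exactly the one-line assembly you describe: combining \eqref{eq:loss_noise} with the expansions \eqref{eq:MLlossEx} and \eqref{eq:MClossEx} via linearity of expectation. Your added care about the remainder terms (zero-mean noise killing the first-order term, $\mathbb{E}\LRs{\epsb\epsb^T} = \delta^2\Ib$ collapsing the quadratic form to a trace, and a finite sum of $o\LRp{\nor{\epsb}^2}$ remainders being again $o\LRp{\nor{\epsb}^2}$) is sound and, if anything, slightly more explicit than the paper.
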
}
The first sum in \cref{eq:loss_final_noise} is the original loss (without randomization) and the second sum consists of additional terms induced by data randomization. These additional terms play a vital role in stimulating the stability and accuracy of the neural network. Indeed, as discussed above, randomizing the machine learning loss term encourages the smoothness of the neural network tangent by penalizing its first and second derivatives implicitly. Note that explicitly penalizing the first derivative of a neural network as in \cite{pan2018long} is possible, but this could be computationally expensive and challenging. Doing so for both the first and second derivatives is not recommended. 
The above heuristic analysis of data randomization also reveals the {\em power of the model-constrained term} in training neural network: it promotes the agreement of the neural network tangent  and the true tangent up to second order that is otherwise not realizable using the standard data-driven approach with only machine learning loss term.



\subsection{Estimation of prediction errors}
\seclab{error}
In this section, we show how data randomization helps improve the stability and accuracy of long-time predictions. We are interested in predicting solutions  of the system \eqnref{MoLines} starting from an initial condition $\ui{0}$ that is not in the training set. To that end, it is natural to compare the \texttt{mcTangent} solutions $\ut{i}$ in \eqnref{ml_pred} with the solutions $\ui{i}$ obtained from the discretized system \eqnref{ubFE}. 
Let us define the neural prediction error as
\begin{equation}
    \eqnlab{err_v}
    \ev{}\LRp{\ut{i}} = \ui{i+1} - \LRs{\ut{i} + \dt \NN{\ut{i}}}, \quad  \eg{i+1} = \nor{\ev{}\LRp{\ut{i}}}_2.
\end{equation}

From \eqnref{ubFE}, \eqnref{ml_pred}, and \eqnref{err_v} we have
\begin{equation}
    \eqnlab{bound_err}
    \begin{aligned}
        \ev{}\LRp{\ut{i}} = & \LRp{\ui{i} + \dt \F \LRp{\ui{i}}} - \LRp{\ut{i} + \dt \NN{\ut{i}}} \\
        = & \dt \F \LRp{\ut{i} +  \ev{}\LRp{\ut{i-1}}} - \dt \NN{\ut{i}} + \ev{}\LRp{\ut{i-1}}
    \end{aligned}
\end{equation}
Applying the Taylor expansion for the first term gives
\begin{equation}
    \dt \F \LRp{\ut{i} +  \ev{}\LRp{\ut{i-1}}} = \dt \F \LRp{\ut{i}} + \dt \eval{\pp{\F}{\ub}}_{\ut{i}} \ev{}\LRp{\ut{i-1}} + o\LRp{\eg{i}}
\end{equation}
Substituting back to \eqnref{bound_err}, we have
\begin{equation}
    \eqnlab{bound_err_2}
    \begin{aligned}
        &\ev{}\LRp{\ut{i}} =  \dt \LRs{\F \LRp{\ut{i}} - \NN{\ut{i}}} + \dt \eval{\pp{\F}{\ub}}_{\ut{i}} \ev{}\LRp{\ut{i-1}} + \ev{}\LRp{\ut{i-1}} \\
        &    + o\LRp{\eg{i}} = \dt \LRs{\F \LRp{\ut{i}} - \NN{\ut{i}}} + \dt \LRs{\eval{\pp{\F}{\ub}}_{\ut{i}} -  \eval{\pp{\Psi}{\ub}}_{\ut{i}}} \ev{}\LRp{\ut{i-1}} \\
        &  + \LRs{ \Ib + \dt \eval{\pp{\Psi}{\ub}}_{\ut{i}}} \ev{}\LRp{\ut{i-1}} +  o\LRp{\eg{i}}
    \end{aligned}
\end{equation}
Applying triangle inequality and Cauchy–Schwarz inequality for \eqnref{bound_err_2} and using \eqnref{err_v} yields 
\begin{equation}
    \eqnlab{final_err}
    \begin{aligned}
         \eg{i+1} \leq &  
         \dt\MSEsqrt{\F\LRp{\ut{i}} - \NN{\ut{i}}}\\
         & + \dt \MSEsqrt{\eval{\pp{\F}{\ub}}_{\ut{i}} -  \eval{\pp{\Psi}{\ub}}_{\ut{i}}} \eg{i} + \MSEsqrt{1 + \dt \eval{\pp{\Psi}{\ub}}_{\ut{i}}} \eg{i} + o\LRp{\eg{i}}, \quad i \ge 0.
    \end{aligned}
\end{equation}
We observe in \eqnref{final_err} that the first term on the right-hand side is  the model-constrained loss term 
being as small as possible at the training data. On the other hand, $\dt \MSEsqrt{\eval{\pp{\F}{\ub}}_{\ut{i}} -  \eval{\pp{\Psi}{\ub}}_{\ut{i}}}$ and $\MSEsqrt{1 + \dt \eval{\pp{\Psi}{\ub}}_{\ut{i}}}$ are regularized to be bounded and/or small by data randomization (see \cref{sect:noise_data_sec}). A heuristic argument reveals that the prediction error is under control at all times. Indeed, suppose $ \dt\MSEsqrt{\F\LRp{\ut{i}} - \NN{\ut{i}}}$, $\dt \MSEsqrt{\eval{\pp{\F}{\ub}}_{\ut{i}} -  \eval{\pp{\Psi}{\ub}}_{\ut{i}}}$, and $\MSEsqrt{1 + \dt \eval{\pp{\Psi}{\ub}}_{\ut{i}}}$ are bounded. Since $\eg{0} = 0$, $\eg{1}$ is bounded, and by induction $\eg{i}$ is also bounded for $i \ge 0$. A rigorous version of this argument is given in Theorem \theoref{mainTheo}.
\begin{theorem}
\theolab{mainTheo}
Assume that the second derivative of $\F\LRp{\ub}$ with respect to $\ub$ is uniformly bounded.
Let
\[
f^{i+1} := \dt\MSEsqrt{\F\LRp{\ut{i}} - \NN{\ut{i}}},
\]
and
\[
g^{i+1} :=  \dt \MSEsqrt{\eval{\pp{\F}{\ub}}_{\ut{i}} -  \eval{\pp{\Psi}{\ub}}_{\ut{i}}} + \MSEsqrt{1 + \dt \eval{\pp{\Psi}{\ub}}_{\ut{i}}} + c^i,
\]
where $c^i = \mc{O}\LRp{\eg{i}}$. Then,
the prediction error $\eg{n}$ at time $t_n$ satisfies
\[
\eg{n} \le \sum_{k = 1}^{n}\LRp{\Pi_{i = k+1}^{n}g^i} f^k.
\]
\end{theorem}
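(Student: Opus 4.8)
The plan is to treat \eqnref{final_err} as a one-step recursive inequality and reduce the theorem to a scalar discrete Gr\"onwall argument; almost all of the genuine analysis is already contained in the derivation of \eqnref{final_err}, so the remaining work is to clean up the remainder term and then unroll the recursion by induction.

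First I would rewrite \eqnref{final_err} in the compact form $\eg{i+1} \le f^{i+1} + g^{i+1}\eg{i}$, valid for all $i \ge 0$. To do this I must absorb the $o\LRp{\eg{i}}$ term into the coefficient multiplying $\eg{i}$. This is exactly where the hypothesis that the second derivative of $\F\LRp{\ub}$ is uniformly bounded enters: the remainder in the Taylor expansion of $\dt \F \LRp{\ut{i} + \ev{}\LRp{\ut{i-1}}}$ about $\ut{i}$ is governed by the Hessian of $\F$ along the segment joining $\ut{i}$ and $\ut{i} + \ev{}\LRp{\ut{i-1}}$, and since $\nor{\ev{}\LRp{\ut{i-1}}}_2 = \eg{i}$ the remainder is $\mc{O}\LRp{\LRp{\eg{i}}^2}$. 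Writing this remainder as $c^i \eg{i}$ then forces $c^i = \mc{O}\LRp{\eg{i}}$, precisely the quantity appearing in the definition of $g^{i+1}$. With this substitution the surviving terms of \eqnref{final_err} collect into $f^{i+1}$ (the model-constrained misfit) and $g^{i+1}\eg{i}$, giving the advertised linear recursion.

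Next I would solve the recursion by induction on $n$, using the initial condition $\eg{0} = 0$, which holds because the prediction is initialized at the exact initial state $\ut{0} = \ui{0}$. The base case $n = 1$ gives $\eg{1} \le f^1 + g^1 \eg{0} = f^1$, matching the claimed formula with the empty product $\Pi_{i=2}^{1} g^i = 1$. For the inductive step I would substitute the bound for $\eg{n}$ into $\eg{n+1} \le f^{n+1} + g^{n+1}\eg{n}$, distribute $g^{n+1}$ across the sum so that each product $\Pi_{i=k+1}^{n}g^i$ becomes $\Pi_{i=k+1}^{n+1}g^i$, and then reabsorb the isolated term $f^{n+1}$ as the $k = n+1$ summand (again an empty product), yielding $\eg{n} \le \sum_{k=1}^{n}\LRp{\Pi_{i=k+1}^{n}g^i} f^k$.

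The main obstacle I anticipate is the rigorous justification that the Taylor remainder is genuinely quadratic and hence absorbable, rather than merely $o\LRp{\eg{i}}$; this is what the bounded-Hessian assumption buys, and it must be applied uniformly along the trajectory $\LRc{\ut{i}}$ so that the constant hidden in $c^i = \mc{O}\LRp{\eg{i}}$ does not blow up with $i$. A secondary technical point is that the induction manipulates an inequality, so I must ensure every coefficient $g^{i}$ is non-negative (it is, being a sum of norms plus $c^i$, at least once the errors $\eg{i}$ are small enough that $c^i$ does not dominate), which is needed to preserve the direction of the inequality when multiplying through by $g^{n+1}$.
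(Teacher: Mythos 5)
Your proposal is correct and matches the paper's proof, which is stated in one line as ``a simple application of a discrete Gronwall lemma on \eqnref{final_err}'': your absorption of the Taylor remainder into $c^i\eg{i}$ via the bounded-Hessian hypothesis, followed by unrolling the recursion $\eg{i+1} \le f^{i+1} + g^{i+1}\eg{i}$ by induction from $\eg{0}=0$, is exactly that Gronwall argument spelled out. You have simply made explicit the details (including the non-negativity of $g^i$ needed to propagate the inequality) that the paper leaves to the reader.
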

\begin{proof}
The proof is a simple application of a discrete Gronwall lemma on \eqnref{final_err}.
\end{proof}
\begin{remark}
Note that the boundedness of the second derivative of $\F\LRp{\ub}$ with respect to $\ub$ is valid for problem \eqnref{MoLines} with a smooth tangent slope. The boundedness of $f^i$ and $g^i$ is not too restricted if the prediction scenarios are close to the training data. Indeed, as argued in \cref{sect:noise_data_sec}, data randomization enforces the small values for  $f^i$ and $g^i$ at the training points. Now, due to the smoothness of $\NN{\ub}$ and $\F\LRp{\ub}$ and their closeness in both values and derivatives (again by randomization), the continuity guarantees the small values for $f^i$ and $g^i$ during the prediction.
\end{remark}

\begin{remark}
Theorem \theoref{mainTheo} allows us to bound the error between the neural network prediction with the exact solution of the original PDEs \eqnref{eq_base} provided that an error estimation of the solution of the discretized equation \eqnref{ubFE} is given. Indeed, suppose the error in the discretized solution $\ub^n$ and the exact solution $\ub\LRp{t_n}$ at time $t_n$ is bounded by $\mc{O}\LRp{\dt + h^p}$, where $h$ is the mesh size and $p$ is the order of accuracy of the underlying spatial discretization. Then by a simple application of triangle inequality we have
\[
\ut{n} - \ub\LRp{t_n} = \mc{O}\LRp{\dt + h^p + \sum_{k = 1}^{n}\LRp{\Pi_{i = k+1}^{n}g^i}f^k},
\]
which shows that in order to get the optimal accuracy and computational effort we ideally need to balance not only the temporal and spatial discretization errors but also the error in the neural network. Clearly, balancing the former two is not that difficult from a numerical analysis point of view, but balancing also the network error is challenging as it depends on the actual training process and randomization. 
\end{remark}

\section{Numerical results}
\seclab{numerics}
In this section, we present several numerical results using the proposed model-constrained tangent slope neural network (\texttt{mcTangent}) approach for transport equation (\cref{sect:1D_wave}), viscous Burger's equation (\cref{sect:Burger_eq}), and Navier-Stokes equation (\cref{sect:2D_NS}). As shall be shown, \texttt{mcTangent} solutions are\textemdash thanks to the model-constrained term and data randomization\textemdash stable and capable of producing  accurate approximations  far beyond the training time horizons. In \cref{sect:Comp_cost} we provide detailed information on parameter tuning, randomness setting, and the cost for both training and testing.



Five hyperparameters of interest are the number of training samples,  noise level $\delta$, sequential machine learning steps $S$, sequential model-constrained learning steps $R$, and regularization parameter $\alpha$. For convenience, we shall conventionally write them in a group.
For example the $\LRp{d600, 2\%, 1,1,0}$ setting means we consider 600 training data samples, 2\% noise, S = 1, R = 1, and  $\alpha = 0$. In order to ensure the fairness between simulations and the comparison among approaches, we use fixed random keys 
for training and testing data generation, for adding noise, and for neural network parameter initialization
We implement our approach and perform all computations in \texttt{JAX} \cite{jax2018github}. 
We would like to point out that all computations (training, testing, and predicting) are done with single precision accuracy. 

\subsection{One-dimensional (1D) wave/transport equation}
\seclab{1D_wave}
The 1D wave equation considered in this section is given by
\[
    \pp{u}{t} + c \pp{u}{x} = 0,
\]
with the wave speed $c = 1$, the spatial domain $x \in \LRp{0,1}$, and time horizon $t \in \LRp{0, T}$. The equation is equipped with an initial condition $u(x,0) = u_0(x)$ and periodic boundary condition. We are interested in real-time approximate solutions of the wave equation for any initial condition $u_0(x)$. 

{\bf Data generation.}
In this problem, the initial condition samples  are drawn from
\[
u_0(x) = \sum_{i=1}^{5} a_i \sin \LRp{2\pi x \, i} + \sum_{i=1}^{5}  b_i \cos \LRp{2\pi x \, i},
\]
where $a_i,b_i$ are distributed by the standard normal distribution with zero mean and unit variance, i.e., $a_i,b_i \sim \mc{N}\LRp{0,1}$. We solve the wave equation with the forward Euler scheme for the temporal derivative
and the first order upwind  finite difference scheme for spatial derivative. The time horizon is chosen as $T = 5 \times 10^{-2}$. A fine space-time mesh with $n_x = 10000$ points in space and $n_t = 2000$ points in time is deployed to achieve highly accurate solutions. The training data samples are obtained by extracting the high resolution solutions on a  coarser uniform space-time mesh with $n_x = 200$ and $n_t = 100$. In this simple problem, we generate a fixed training data set of 200 initial conditions. 
Note that we aim to predict long-time solutions, $t \in \LRp{0, 3}$, from the short-time training data in the interval $t \in \LRp{0, 5 \times 10^{-2}}$. 

{\bf Neural network architecture.} Because of the linear nature of the problem and the first order upwind  finite difference scheme, a linear neural work is sufficient to approximate the resulting tangent slope. The linear neural network is defined as 
\[\NN{\ui{i}} = \W \ui{i} + \bb, \]
where the weights $\W \in \mc{R}^{n_x \times n_x}$, and the bias $\bb \in \mc{R}^{n_x}$. To train, we use \texttt{ADAM} \cite{kingma2014adam} optimizer with default parameters and the learning rate of $10^{-3}$.
We determine the best combination of weights and biases (and hence the final trained network) as the one that provides the lowest accumulated mean square error for $500$ time steps for the test sample. Specifically, during the training process, at each epoch, we solve for the predictions from the test initial condition with the current-epoch learned network. The accumulated mean-square error between predictive solutions and ground truth solutions is calculated at the $500$th time step to determine the ``optimal" network.

{\bf Long-time predictions.}
Shown in \cref{fig:1D_compare_all} is the mean-square error between true (high resolution) solutions and predicted ones obtained by various neural networks, each of which is trained with both randomized and noise-free training data. 

 \begin{figure}[htb!]
    \centering
    \includegraphics[width=\textwidth]{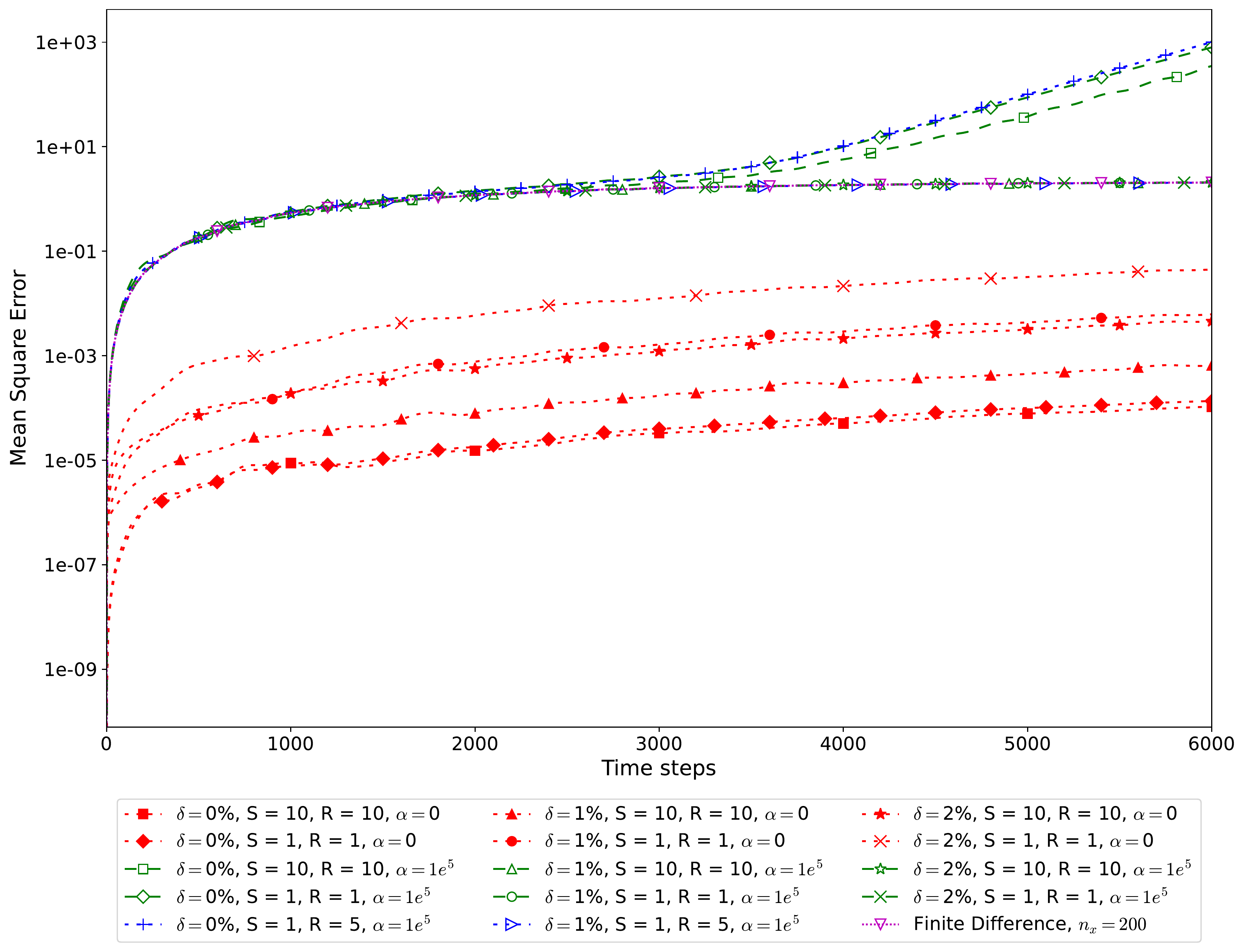}
    \caption{\textbf{Wave/transport equation}. Comparison between different neural network approaches with/without randomization.} 
    \figlab{1D_compare_all}
\end{figure}

 For pure data-driven machine learning networks ($\alpha = 0$, and thus no model-constrained term), we observe that noise-free data trained networks outperform those trained with noisy data ones. 
 This is not surprising as for this linear problem, as predicted by Lemma \lemref{linearOptimality}, one can obtain linear networks that accurately learn the tangent slope with sufficiently rich data.
  Therefore, the predictions by the learned linear networks are almost the same as the ground truth solutions. On the contrary, training with noisy data causes the neural network to predict solutions with a small amount of error such that it adapts to (possibly overfits) the amount of noise in the ground-truth solutions. 
  \cref{fig:1D_wave_NN_params} presents the weight matrix and bias vector for two cases $\LRp{d200, 0\%, 10, 1, 0}$ and $\LRp{d200,0\%, 1, 1, 0}$ with noise-free data. It can be seen that both networks are almost identical and both have only an upper diagonal with a large magnitude. We also note that
  the bias vector is relatively small
and thus we ignore this bias vector in the subsequent comparisons. We present the test predicted solutions for the setting $\LRp{d200,0\%, 1, 1, 0}$  in \cref{fig:1D_preds_different_periods}. As the network fits the tangent slope for high-resolution data, accurate results are preserved far beyond
 the training time horizon, while finite difference results on the same coarse grid show a severe diffusion/dissipation effect. Furthermore, settings with a large number of sequential steps  such as  $\LRp{d200,0\%,10,1,0}$, $\LRp{d200,1\%,10,1,0}$ and  $\LRp{d200,2\%,10,1,0}$ yield more accurate neural networks than their counterparts with $S = 1$. The reason is that long sequential training reduces the prediction error.

 For model-constrained neural networks, we use  $\alpha = 10^5$ as  the regularization parameter   
 for all cases. We tested with different values for $\alpha$ and  almost the same results are obtained for larger values, while smaller values make neural networks perform similarly to the pure data-driven machine learning networks. It can be seen in \cref{fig:1D_compare_all} that training with randomized data returns neural networks, regardless of $S, R$ values, as good as the coarse finite difference approximation with $n_x = 200$. This is expected as we constrain the training with a coarse finite difference model. The trained weight matrices and bias vectors for these neural networks corresponding to three settings $\LRp{d200,1\%, 10,1,10^5}$, $\LRp{d200,1\%, 1,1,10^5}$ and $\LRp{d200,1\%, 1,5,10^5}$ are shown in \cref{fig:1D_wave_FD_NN_params}. Again, the bias vectors do not have a significant role in the predictions. Note that, unlike those from purely data-driven in \cref{fig:1D_wave_NN_params} which have arbitrary structure, the model-constrained weight matrices, after ignoring small elements, have the same structure as the first-order upwind scheme matrix. Among these neural networks, the long sequential model-constrained network with $R = 5$ is closest to the first-order upwind scheme matrix. It is not surprising as the neural network is constrained to satisfy the first-order upwind scheme in multiple time steps. On the other hand, the neural network trained with noise-free data shows instability starting from the $2000$th time step in long-term predictions. This instability is due to the lack of regularizations
 as compared to the randomized cases for which regularizations are explicit via the model-constrained term and implicit via randomization  (see \cref{sect:noise_data_sec}).

        
{\tiny 
\begin{figure}[htb!]
    \centering
    \begin{tabular*}{\textwidth}{c c c}
        \centering
        \rotatebox[origin=c]{90}{\small $S = 10$} &
        \raisebox{-0.\height}{
            \begin{minipage}{.4\textwidth}
            \begin{tikzpicture}
                \tikzstyle{every node}=[font=\tiny, node distance=7.5mm]
                \node (img)  {\includegraphics[width = .95\textwidth]{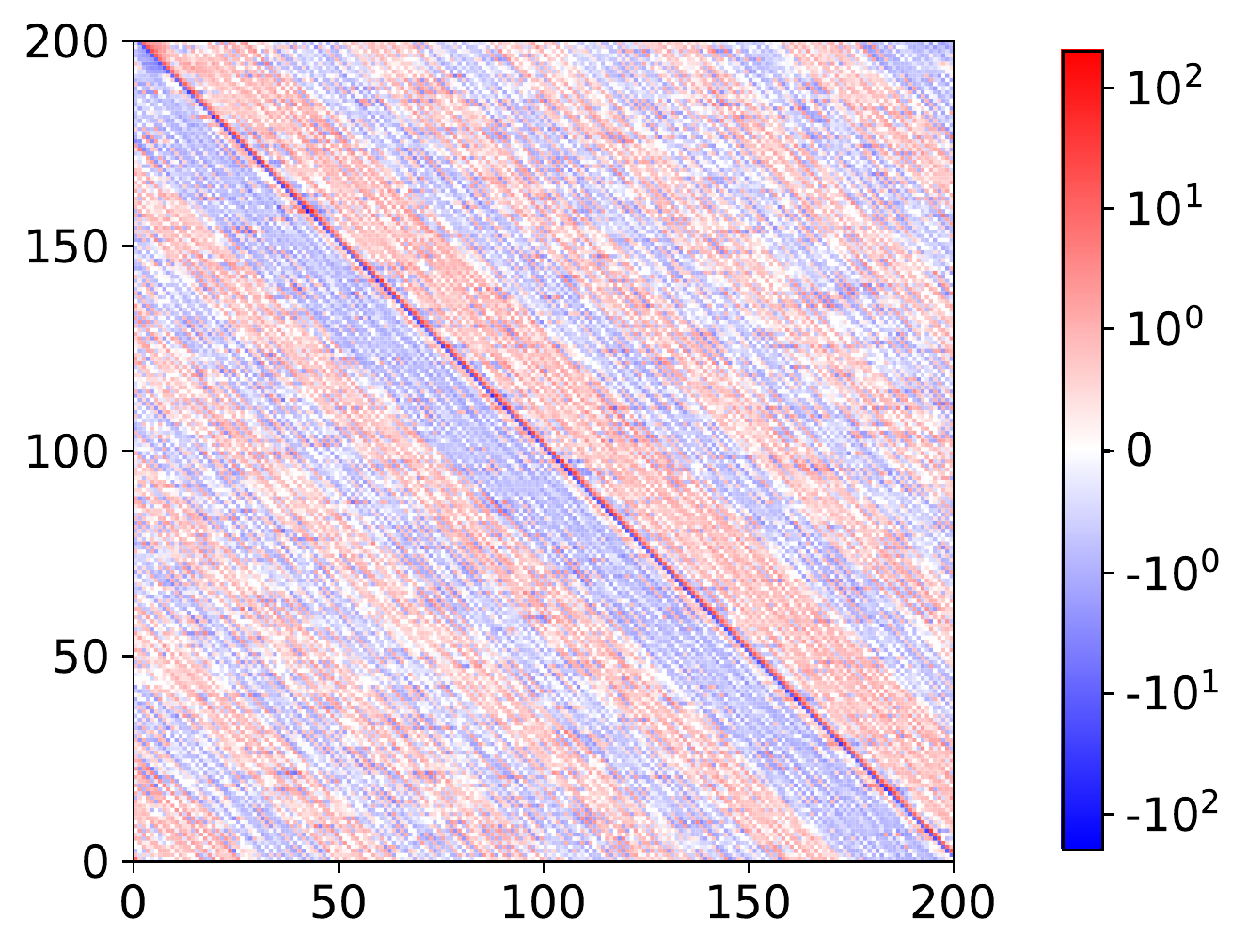}};
                \node[below=of img, node distance=0cm, xshift = -3mm, yshift=1cm] 
                {Column index of $\W$};
                \node[left=of img, node distance=0cm, rotate=90, anchor=center,yshift=-0.7cm] 
                {Row index of $\W$};
            \end{tikzpicture}
            \end{minipage}
        } &
        \raisebox{-0.\height}{
        \begin{minipage}{.35\textwidth}
            \begin{tikzpicture}
                \tikzstyle{every node}=[font=\tiny, node distance=7.5mm]
                \node (img)  {\includegraphics[width = 0.95\textwidth]{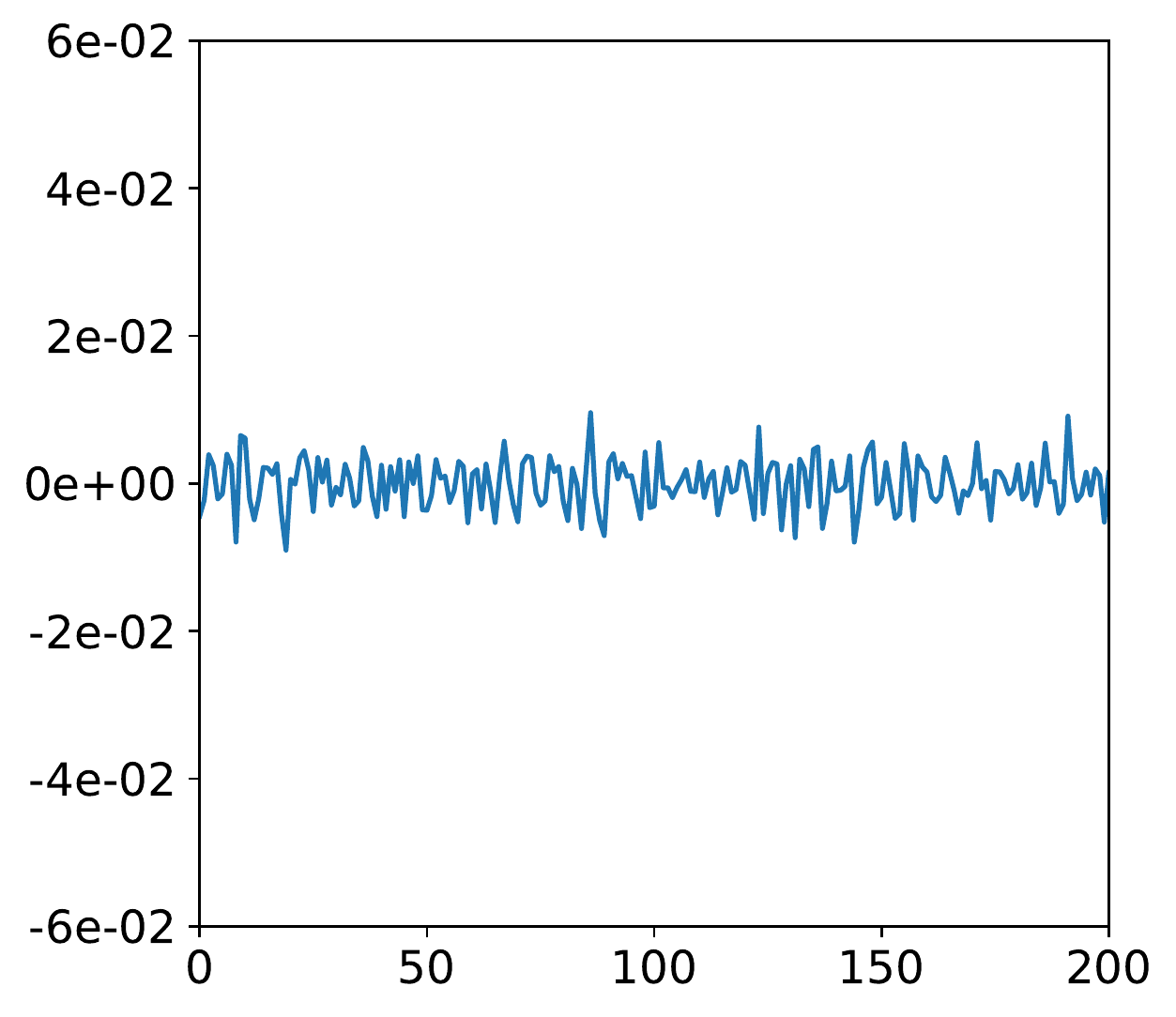}};
                \node[below=of img, node distance=0cm, yshift=1cm, xshift = 2mm] 
                {Index of bias vector $\bb$};
                \node[left=of img, node distance=0cm, rotate=90, anchor=center,yshift=-0.7cm] 
                {Magnitude $b_i$};
            \end{tikzpicture}
        \end{minipage}
        } 
        \\
        \centering
        \rotatebox[origin=c]{90}{\small $S = 1$} &
        \raisebox{-0.\height}{
            \begin{minipage}{.4\textwidth}
            \begin{tikzpicture}
                \tikzstyle{every node}=[font=\tiny, node distance=7.5mm]
                \node (img)  {\includegraphics[width = 0.95\textwidth]{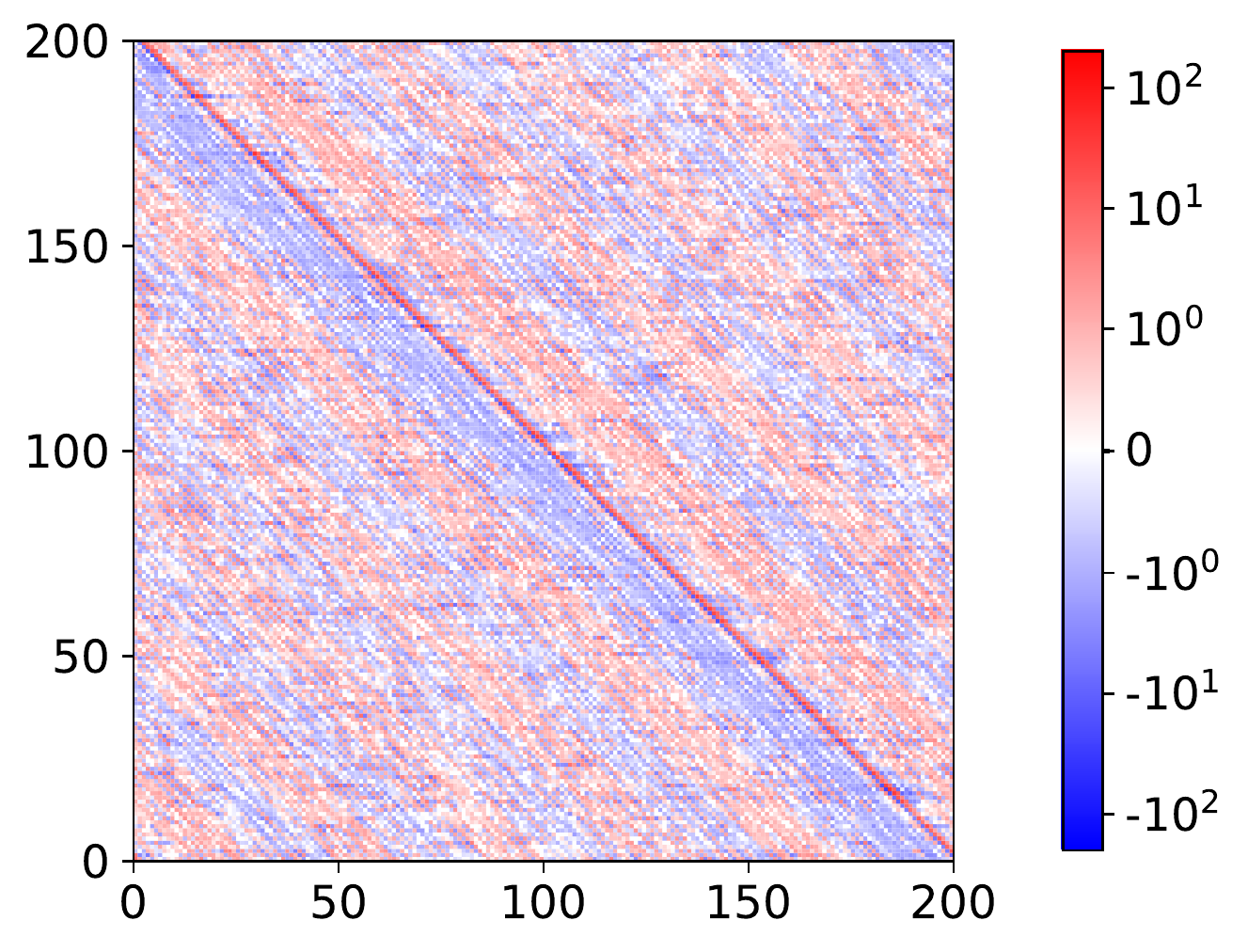}};
                \node[below=of img, node distance=0cm, xshift = -3mm, yshift=1cm] 
                {Column index of $\W$};
                \node[left=of img, node distance=0cm, rotate=90, anchor=center,yshift=-0.7cm] 
                {Row index of $\W$};
            \end{tikzpicture}
            \end{minipage}
            } &
        \raisebox{-0.\height}{
        \begin{minipage}{.35\textwidth}
            \begin{tikzpicture}
                \tikzstyle{every node}=[font=\tiny, node distance=7.5mm]
                \node (img)  {\includegraphics[width = 0.95\textwidth]{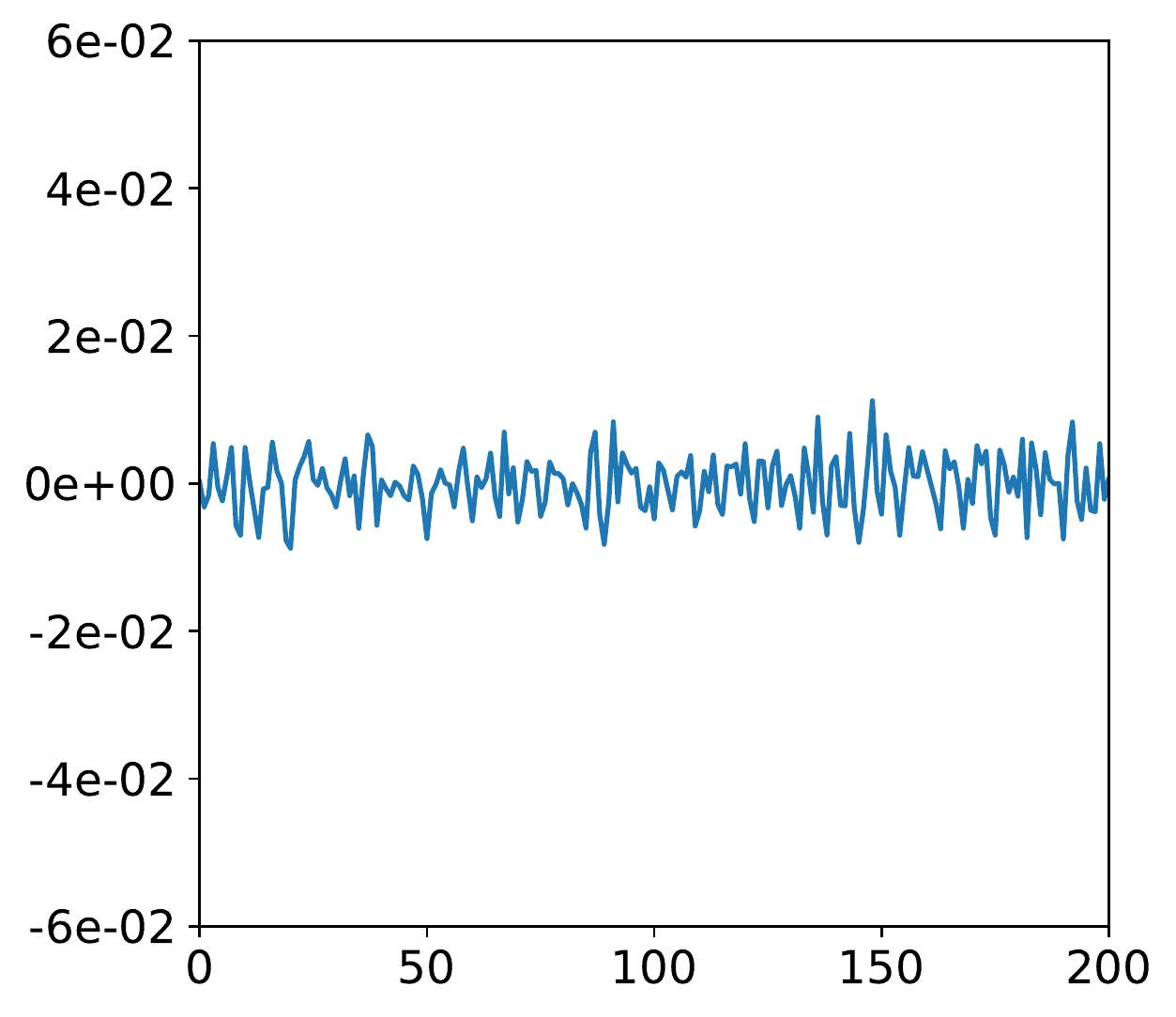}};
                \node[below=of img, node distance=0cm, yshift=1cm, xshift = 2mm] 
                {Index of bias vector $\bb$};
                \node[left=of img, node distance=0cm, rotate=90, anchor=center,yshift=-0.7cm] 
                {Magnitude $b_i$};
            \end{tikzpicture}
        \end{minipage}
        }
    \end{tabular*}
    \caption{\textbf{Wave/transport equation}. Pure data-driven  trained linear neural network parameters: weight matrix heat maps (\textit{left column}) and bias vector magnitudes (\textit{right column}) with $\alpha = 0, \delta = 0\%$.} 
    \figlab{1D_wave_NN_params}
\end{figure}
}

\begin{figure}[htb!]
    \centering
    \includegraphics[width=\textwidth]{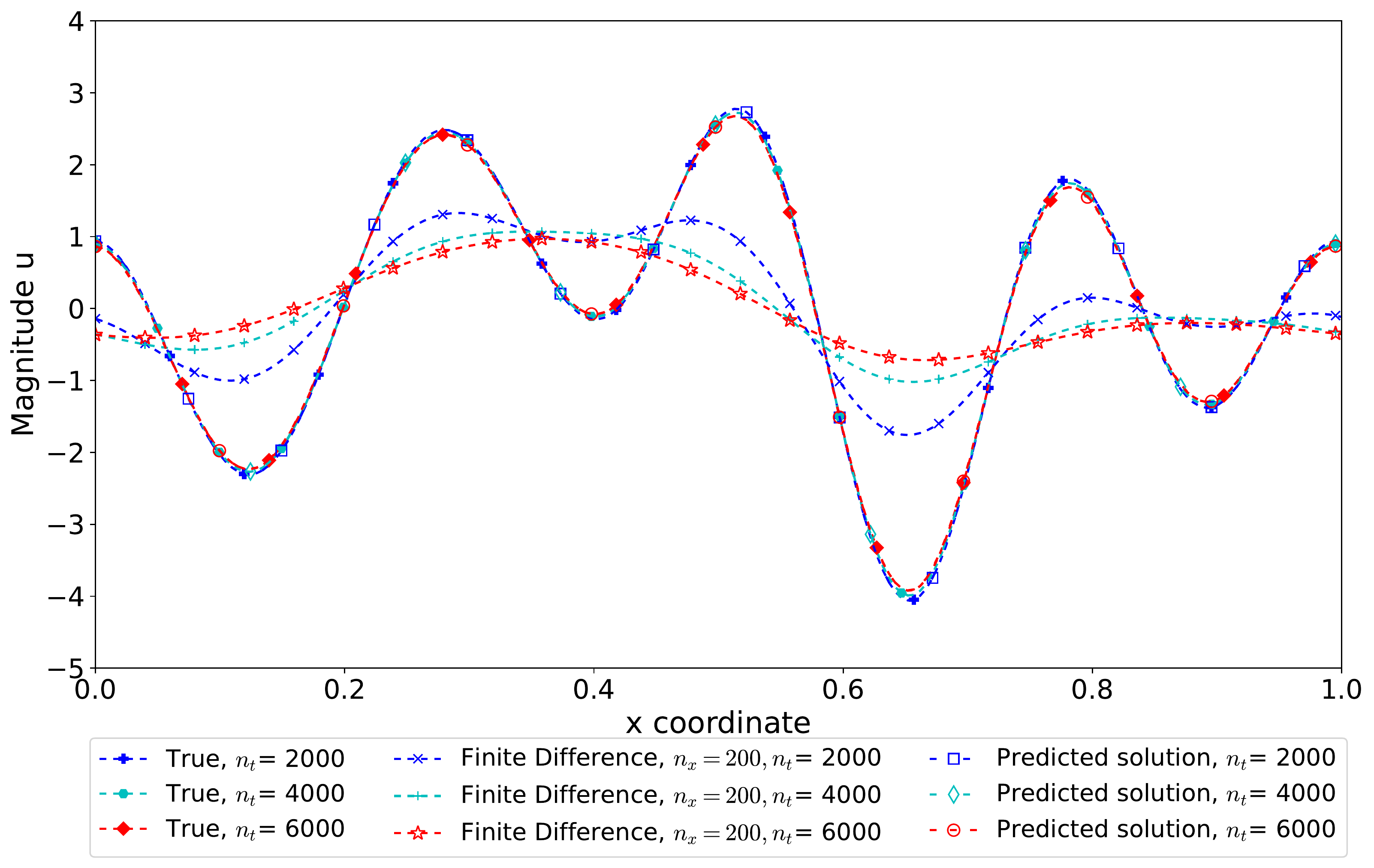}
    \caption{\textbf{Wave/transport equation}. The predicted solutions at time steps $n_t = 2000, 4000, 6000$ by learned neural network corresponding to ($d200, S = 1, R = 1, \alpha =0, \delta = 0\%$), finite difference  solutions on  coarse grid with $n_x = 200$, and the high resolution solutions (True).} 
    \figlab{1D_preds_different_periods}
\end{figure}
 
\begin{figure}[htb!]
    \centering
    \begin{tabular*}{\textwidth}{c c c}
        \centering
        \raisebox{-0.5\height}{\small $S = 10, R = 1$} &
        \raisebox{-0.5\height}{\small $S = 1, R = 1$} & 
        \raisebox{-0.5\height}{\small $S = 1, R = 5$} 
        \\
        \centering
        \raisebox{-0.\height}{
        \begin{minipage}{.28\textwidth}
            \begin{tikzpicture}
                \tikzstyle{every node}=[font=\tiny, node distance=7.5mm]
                \node (img)  {\includegraphics[width = 0.95\textwidth]{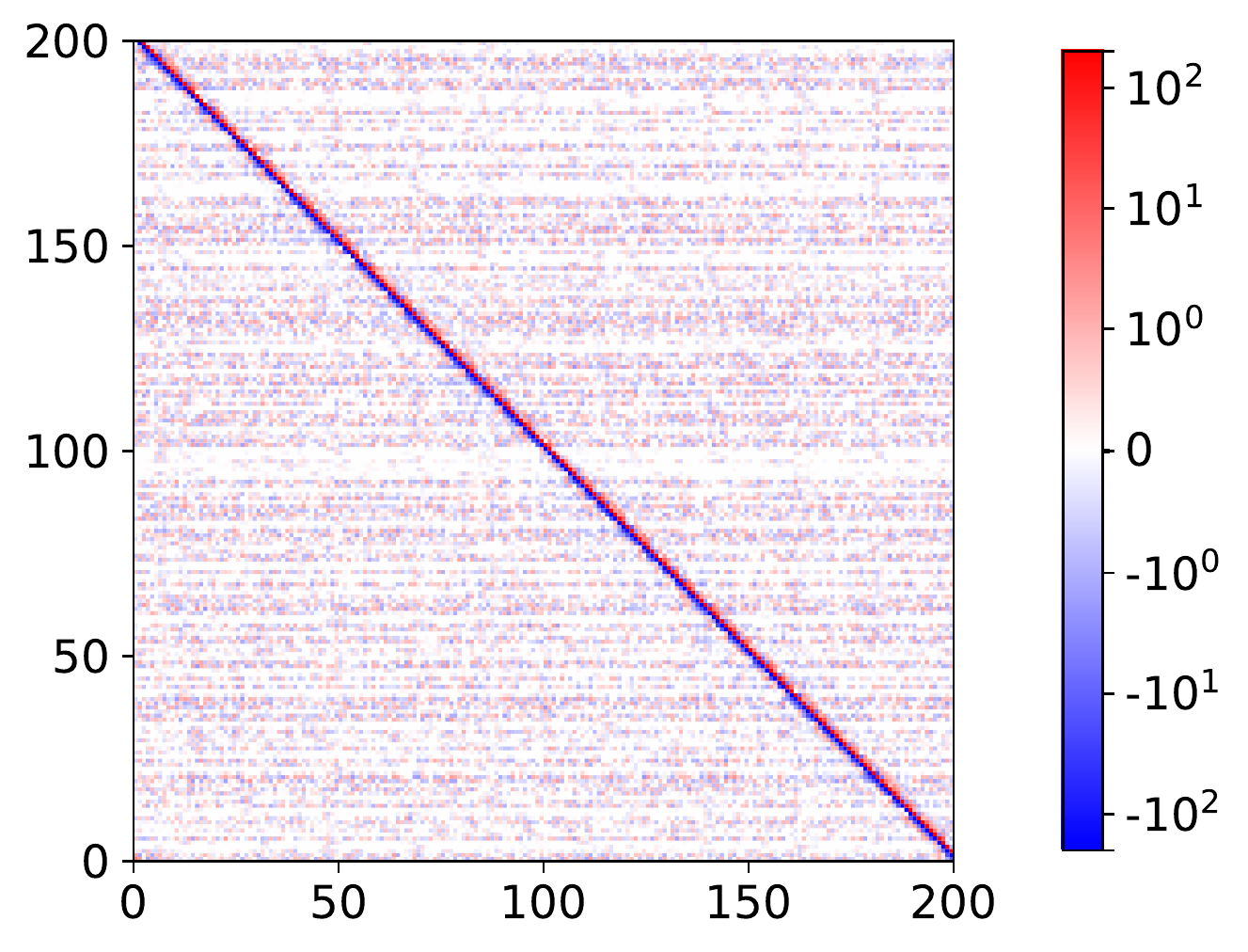}};
                \node[below=of img, node distance=0cm, xshift = -2.5mm, yshift=1cm] 
                {Column index of $\W$};
                \node[left=of img, node distance=0cm, rotate=90, anchor=center,yshift=-0.7cm] 
                {Row index of $\W$};
            \end{tikzpicture}
            \end{minipage}
        } &
        \raisebox{-0.\height}{
        \begin{minipage}{.28\textwidth}
            \begin{tikzpicture}
                \tikzstyle{every node}=[font=\tiny, node distance=7.5mm]
                \node (img)  {\includegraphics[width = 0.95\textwidth]{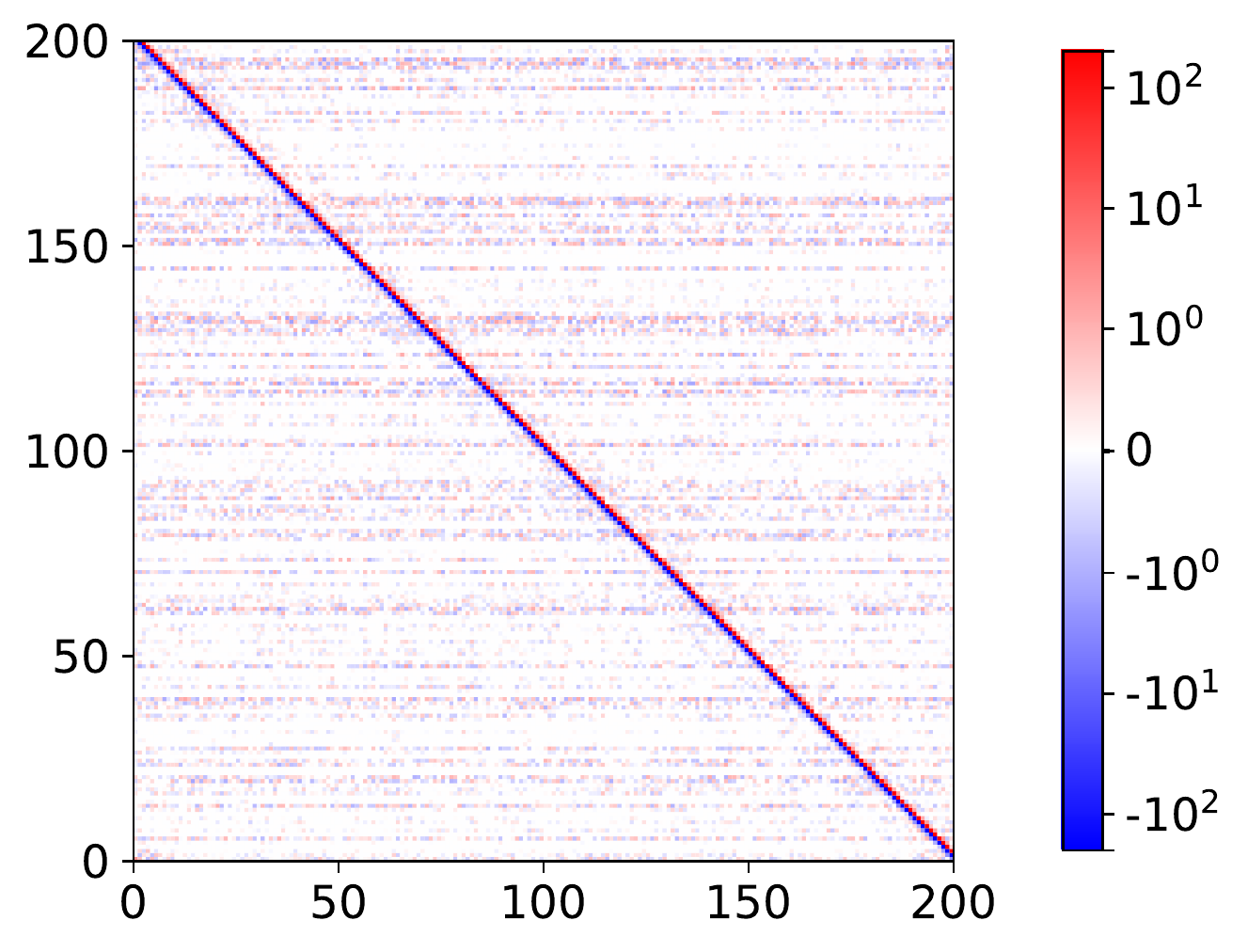}};
                \node[below=of img, node distance=0cm, xshift = -2.5mm, yshift=1cm] 
                {Column index of $\W$};
                \node[left=of img, node distance=0cm, rotate=90, anchor=center,yshift=-0.7cm] 
                {Row index of $\W$};
            \end{tikzpicture}
            \end{minipage}
        } & 
        \raisebox{-0.\height}{
        \begin{minipage}{.28\textwidth}
            \begin{tikzpicture}
                \tikzstyle{every node}=[font=\tiny, node distance=7.5mm]
                \node (img)  {\includegraphics[width = 0.95\textwidth]{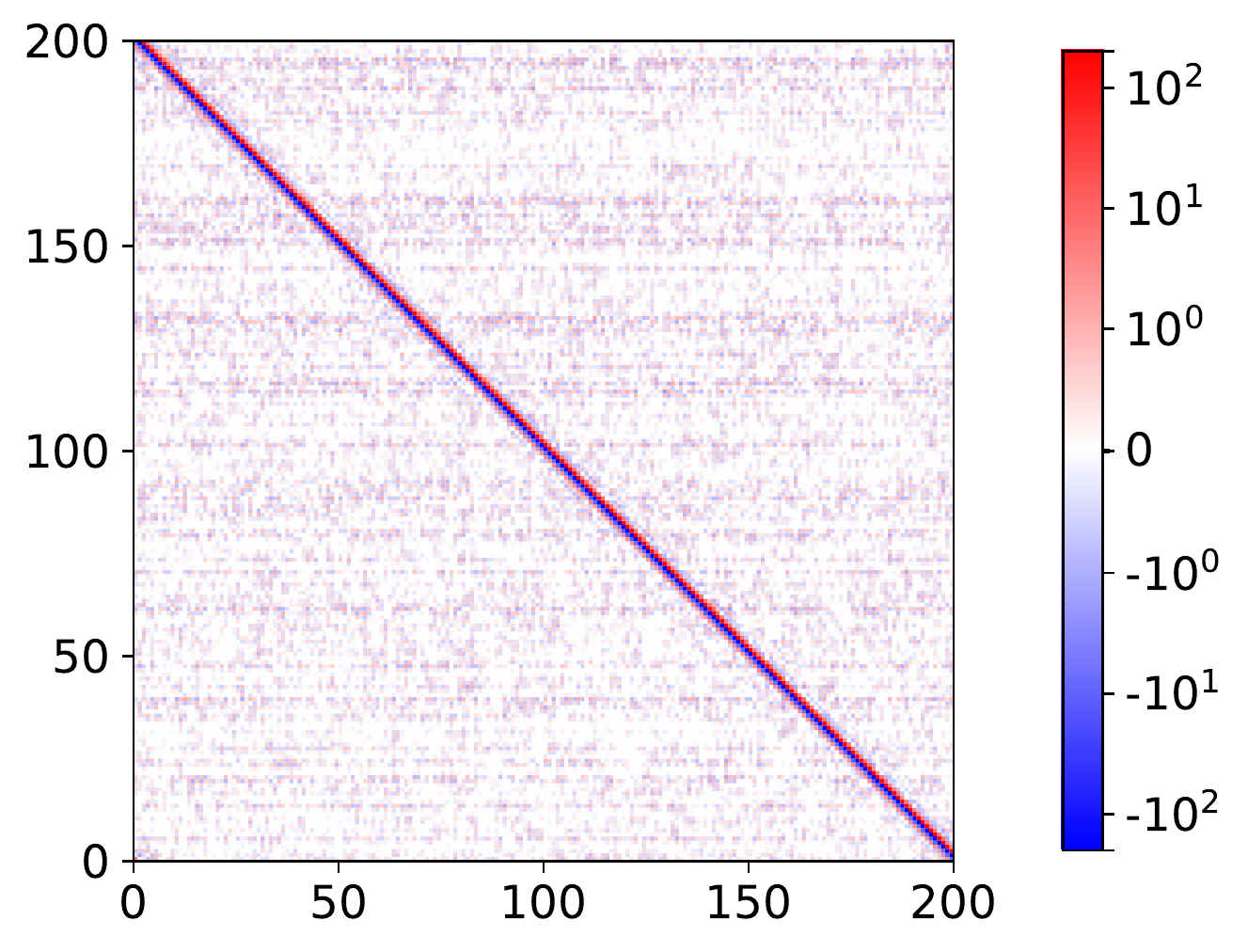}};
                \node[below=of img, node distance=0cm, xshift = -2.5mm, yshift=1cm] 
                {Column index of $\W$};
                \node[left=of img, node distance=0cm, rotate=90, anchor=center,yshift=-0.7cm] 
                {Row index of $\W$};
            \end{tikzpicture}
            \end{minipage}
        } 
        \\ ~ \\
        \centering
        \raisebox{-0.5\height}{
        \begin{minipage}{.28\textwidth}
            \begin{tikzpicture}
                \tikzstyle{every node}=[font=\tiny, node distance=7.5mm]
                \node (img)  {\includegraphics[width = 0.95\textwidth]{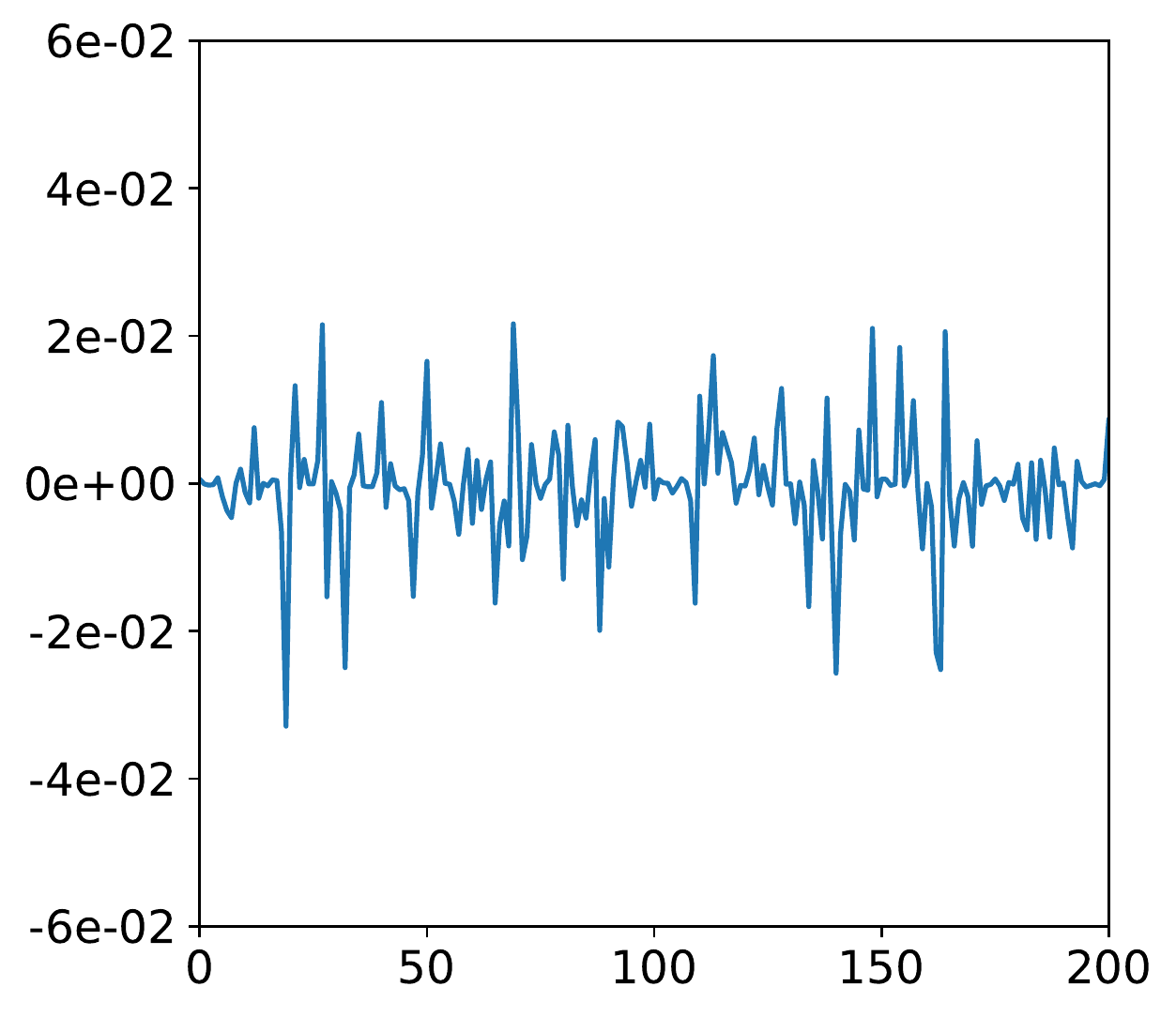}};
                \node[below=of img, node distance=0cm, yshift=1cm, xshift = 2mm] 
                {Index of bias vector $\bb$};
                \node[left=of img, node distance=0cm, rotate=90, anchor=center,yshift=-0.7cm] 
                {Magnitude $b_i$};
            \end{tikzpicture}
        \end{minipage}
        } &
        \raisebox{-0.5\height}{
        \begin{minipage}{.28\textwidth}
            \begin{tikzpicture}
                \tikzstyle{every node}=[font=\tiny, node distance=7.5mm]
                \node (img)  {\includegraphics[width = 0.95\textwidth]{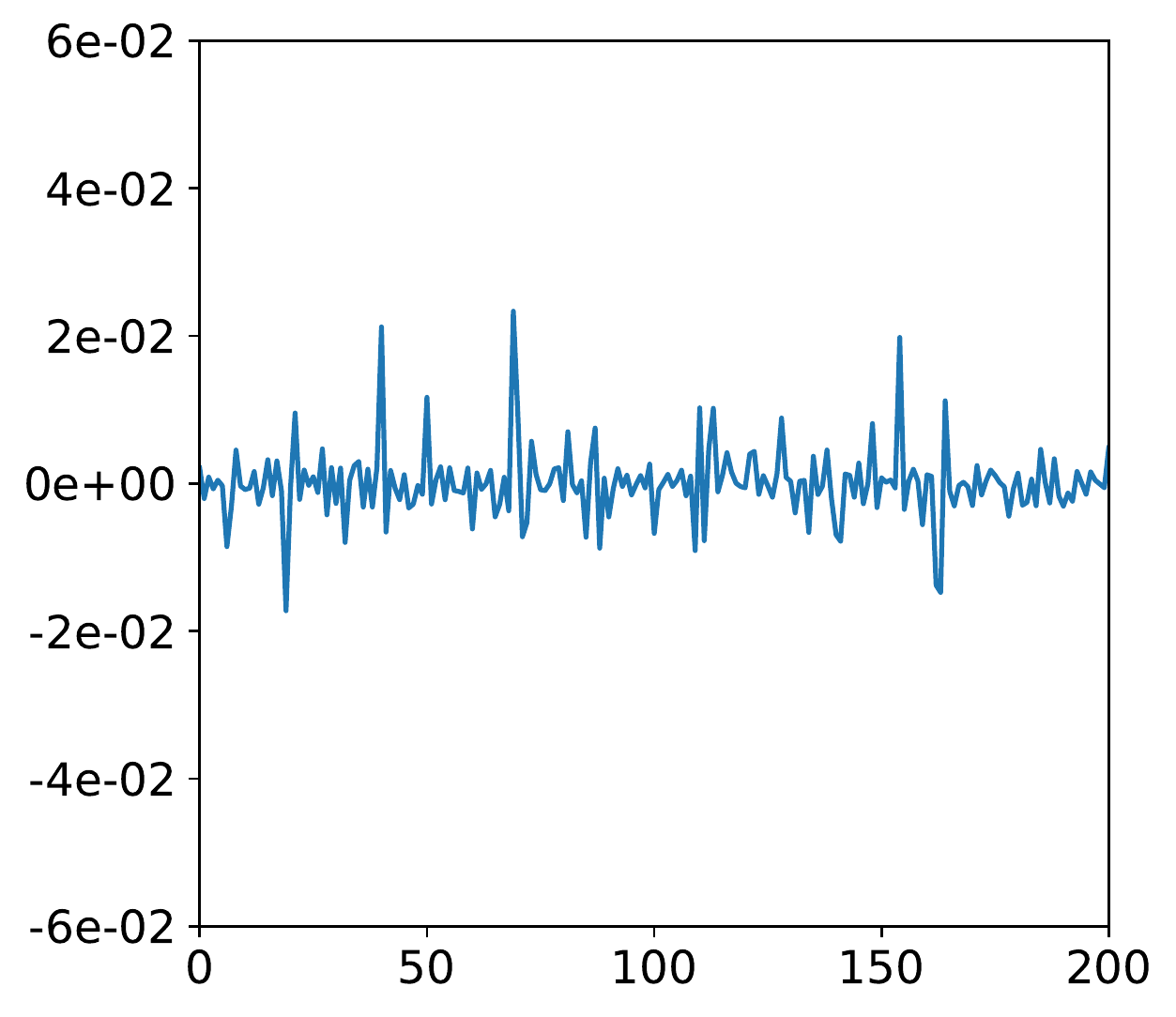}};
                \node[below=of img, node distance=0cm, yshift=1cm, xshift = 2mm] 
                {Index of bias vector $\bb$};
                \node[left=of img, node distance=0cm, rotate=90, anchor=center,yshift=-0.7cm] 
                {Magnitude $b_i$};
            \end{tikzpicture}
        \end{minipage}
        } & 
        \raisebox{-0.5\height}{
        \begin{minipage}{.28\textwidth}
            \begin{tikzpicture}
                \tikzstyle{every node}=[font=\tiny, node distance=7.5mm]
                \node (img)  {\includegraphics[width = 0.95\textwidth]{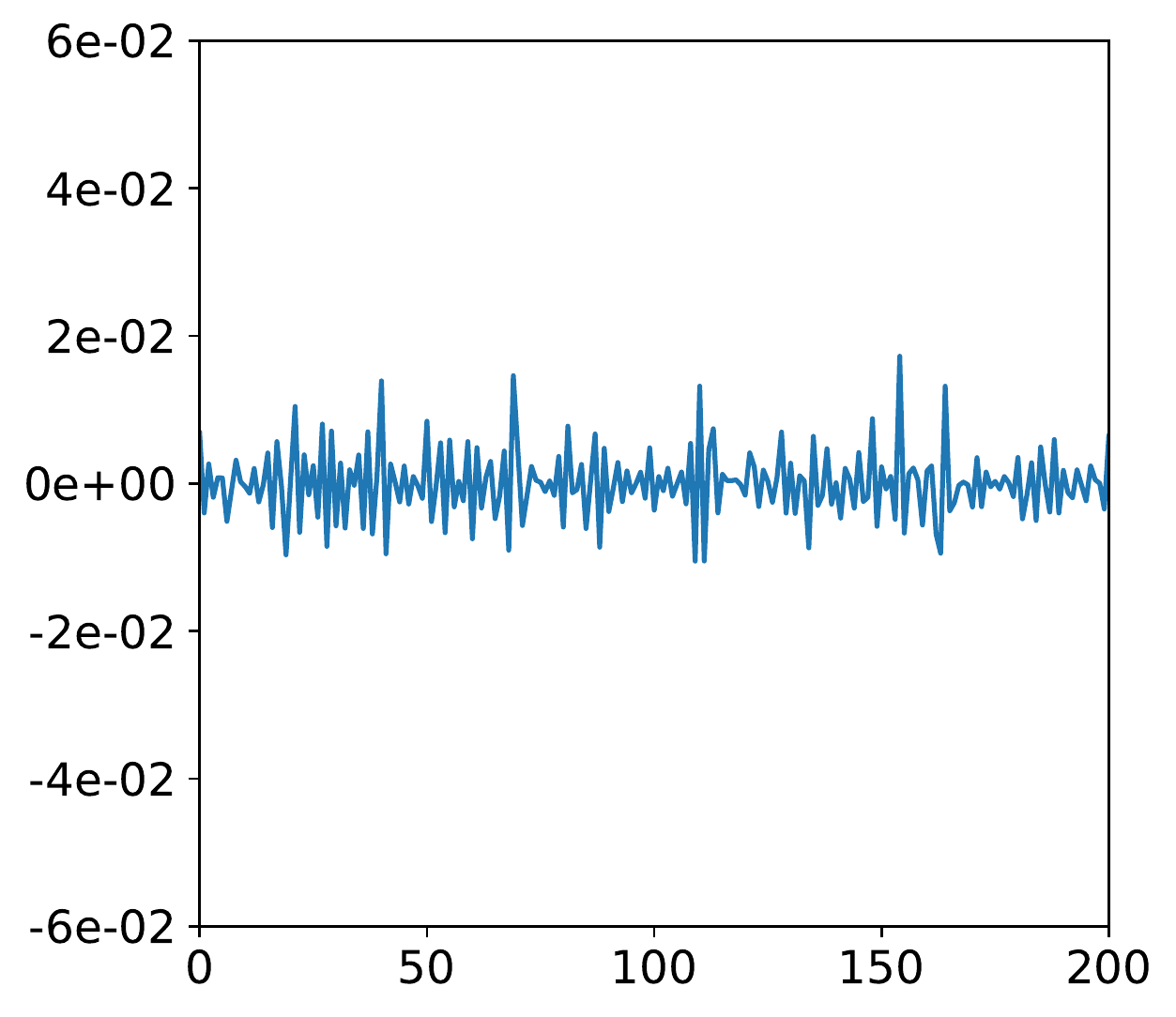}};
                \node[below=of img, node distance=0cm, yshift=1cm, xshift = 2mm] 
                {Index of bias vector $\bb$};
                \node[left=of img, node distance=0cm, rotate=90, anchor=center,yshift=-0.7cm] 
                {Magnitude $b_i$};
            \end{tikzpicture}
        \end{minipage}
        }
    \end{tabular*}
    \caption{\textbf{Wave/transport equation}. Trained model-constrained linear neural network parameters: weight matrix heat map (\textit{top row}) and bias vector magnitude (\textit{bottom row}) with  $\alpha = 1e^5, \delta = 1\%$.} 
    \figlab{1D_wave_FD_NN_params}
\end{figure}

{\bf Implicit time integration with learned network.}
\begin{figure}[htb!]
    \centering
    \includegraphics[width=\textwidth]{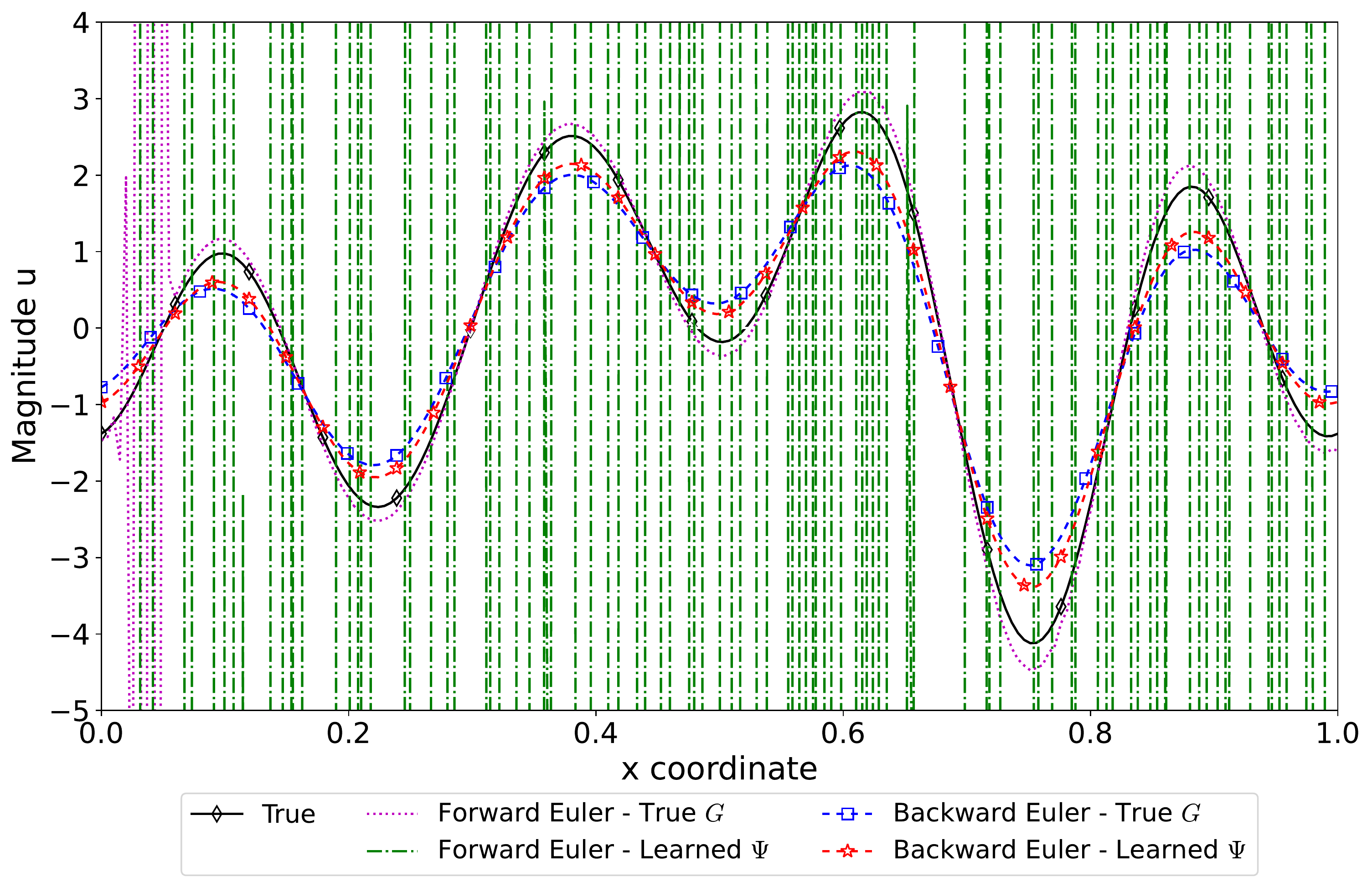}
    \caption{\textbf{Wave/transport equation}. Predicted solutions at  $t = 0.1$ obtained by forward Euler scheme and backward Euler scheme with the true tangent slope $\G$ and learned neural network counterparts with  time stepsize $\dt' = \frac{50}{3} \dt = \frac{25}{3} \times  10^{-3}$.} 
    \figlab{1D_preds_imp}
\end{figure}
One of the advantages of our proposed tangent slope learning approach is that once trained the learned tangent slope can be used with any time discretization method.
To demonstrate this, we use the learned neural network tangent for the setting $\LRp{d200,0\%,1,1,0}$
with both backward and forward  Euler schemes using a  time stepsize $\dt' = \frac{50}{3} \dt$ which is much larger than the training stepsize.  It can be seen in  \cref{fig:1D_preds_imp}  that the forward Euler solutions blow up for both learned and true tangent slopes, which is obvious as the time stepsize is much larger than the stable time stepsize. Both approaches are stable with implicit integration and the results are comparable (though the learned tangent slope was trained with a smaller time step size).


{\bf Direct learning versus \texttt{mcTangent} slope learning.}
\begin{figure}[htb!]
    \centering
    \includegraphics[width=\textwidth]{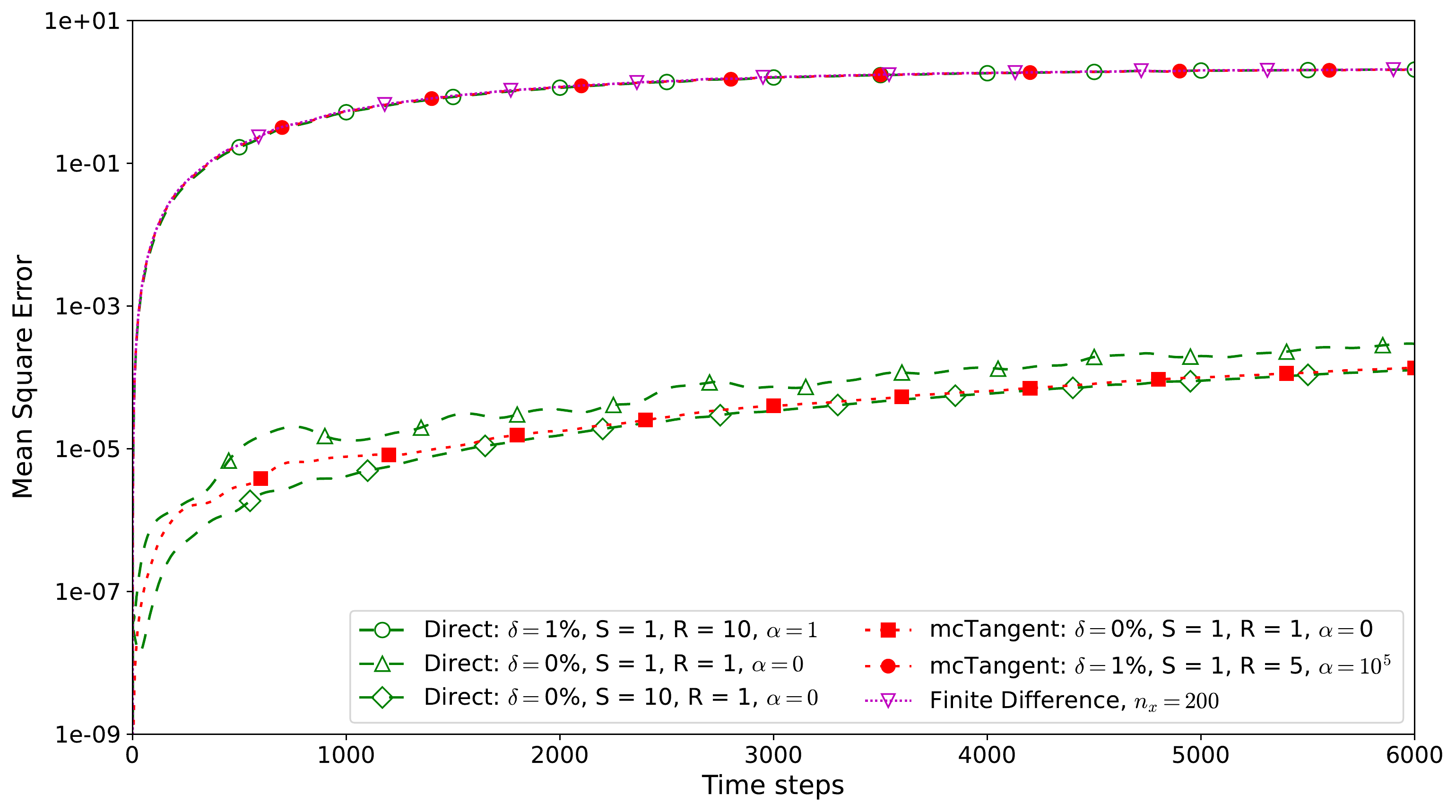}
    \caption{\textbf{Wave/transport equation}. Comparison between direct neural networks (Direct) and tangent slope neural networks (\texttt{mcTangent}).} 
    \figlab{1D_compare_Direct}
\end{figure}
\begin{figure}[htb!]
    \centering
    \begin{tabular*}{\textwidth}{c c c}
        \centering
        \raisebox{-0.5\height}{\small $S = 10, R = 1, \alpha = 0 $} &
        \raisebox{-0.5\height}{\small $S = 1, R = 1, \alpha = 0$} & 
        \raisebox{-0.5\height}{\small $S = 1, R = 5, \alpha = 1$} 
        \\
        \centering
        \raisebox{-0.5\height}{
        \begin{minipage}{.28\textwidth}
            \begin{tikzpicture}
                \tikzstyle{every node}=[font=\tiny, node distance=7.5mm]
                \node (img)  {\includegraphics[width = 0.95\textwidth]{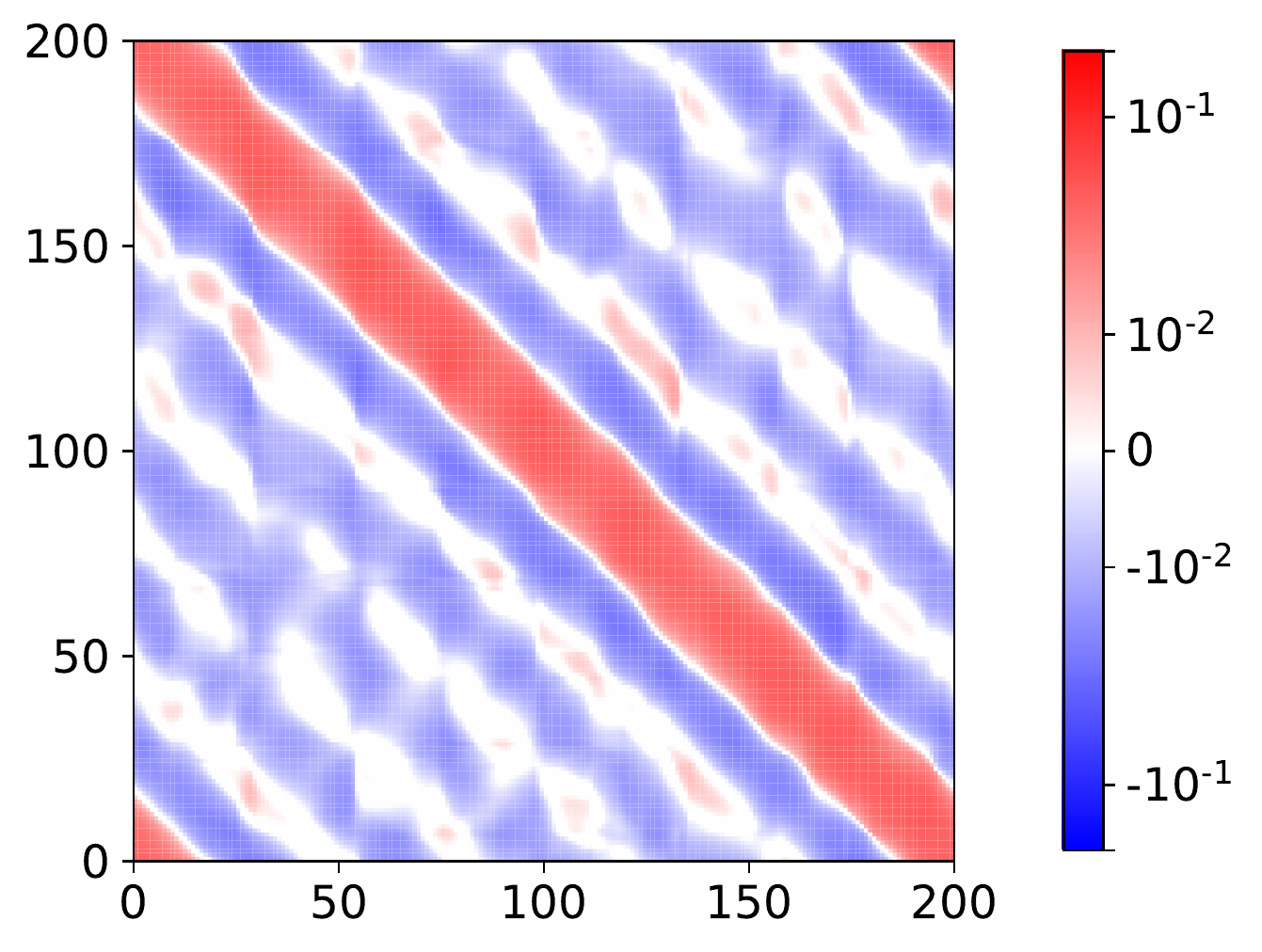}};
                \node[below=of img, node distance=0cm, xshift = -2.5mm, yshift=1cm] 
                {Column index of $\W$};
                \node[left=of img, node distance=0cm, rotate=90, anchor=center,yshift=-0.7cm] 
                {Row index of $\W$};
            \end{tikzpicture}
            \end{minipage}
        } &
        \raisebox{-0.5\height}{
        \begin{minipage}{.28\textwidth}
            \begin{tikzpicture}
                \tikzstyle{every node}=[font=\tiny, node distance=7.5mm]
                \node (img)  {\includegraphics[width = 0.95\textwidth]{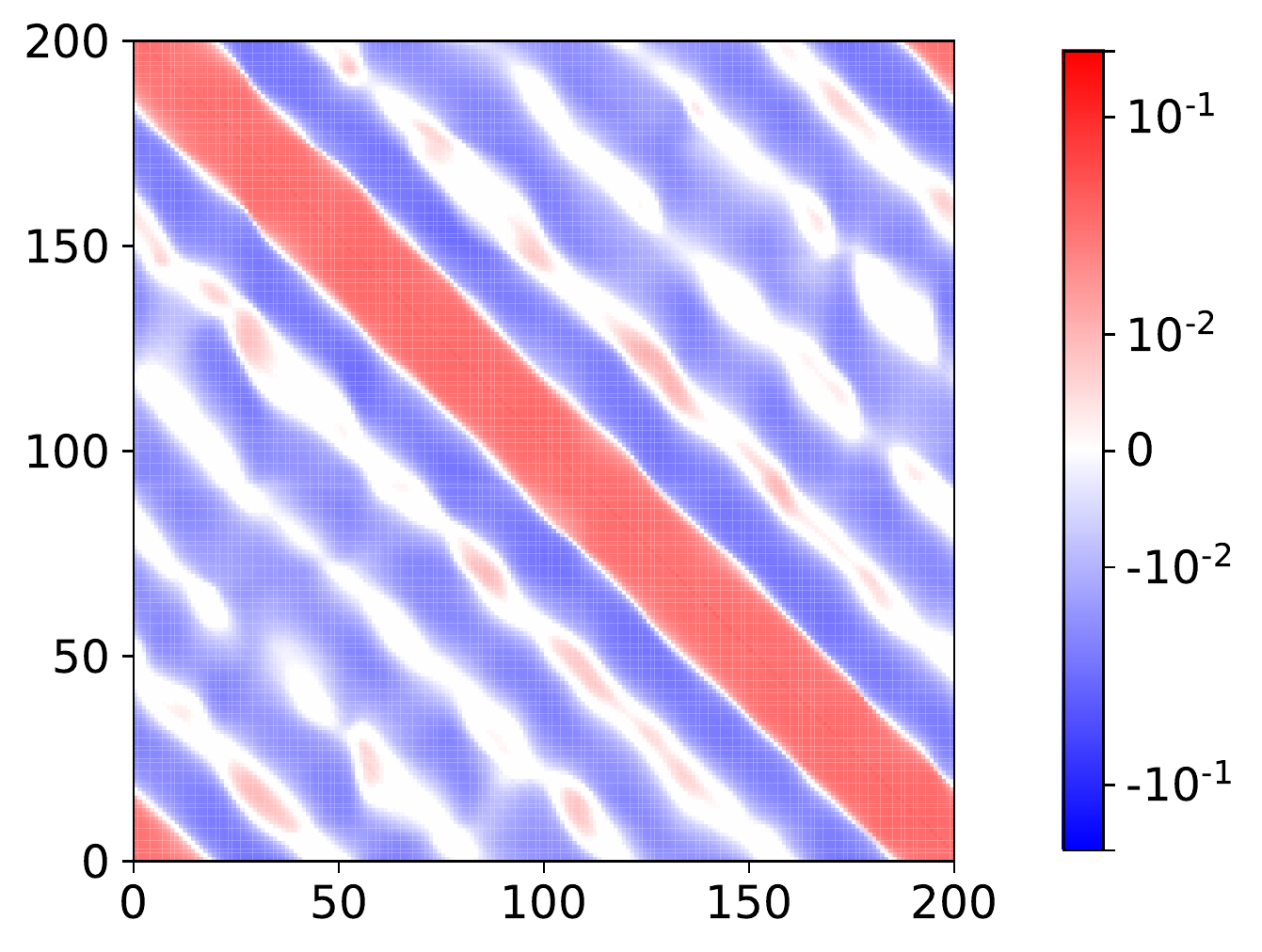}};
                \node[below=of img, node distance=0cm, xshift = -2.5mm, yshift=1cm] 
                {Column index of $\W$};
                \node[left=of img, node distance=0cm, rotate=90, anchor=center,yshift=-0.7cm] 
                {Row index of $\W$};
            \end{tikzpicture}
            \end{minipage}
        } & 
        \raisebox{-0.5\height}{
        \begin{minipage}{.28\textwidth}
            \begin{tikzpicture}
                \tikzstyle{every node}=[font=\tiny, node distance=7.5mm]
                \node (img)  {\includegraphics[width = 0.95\textwidth]{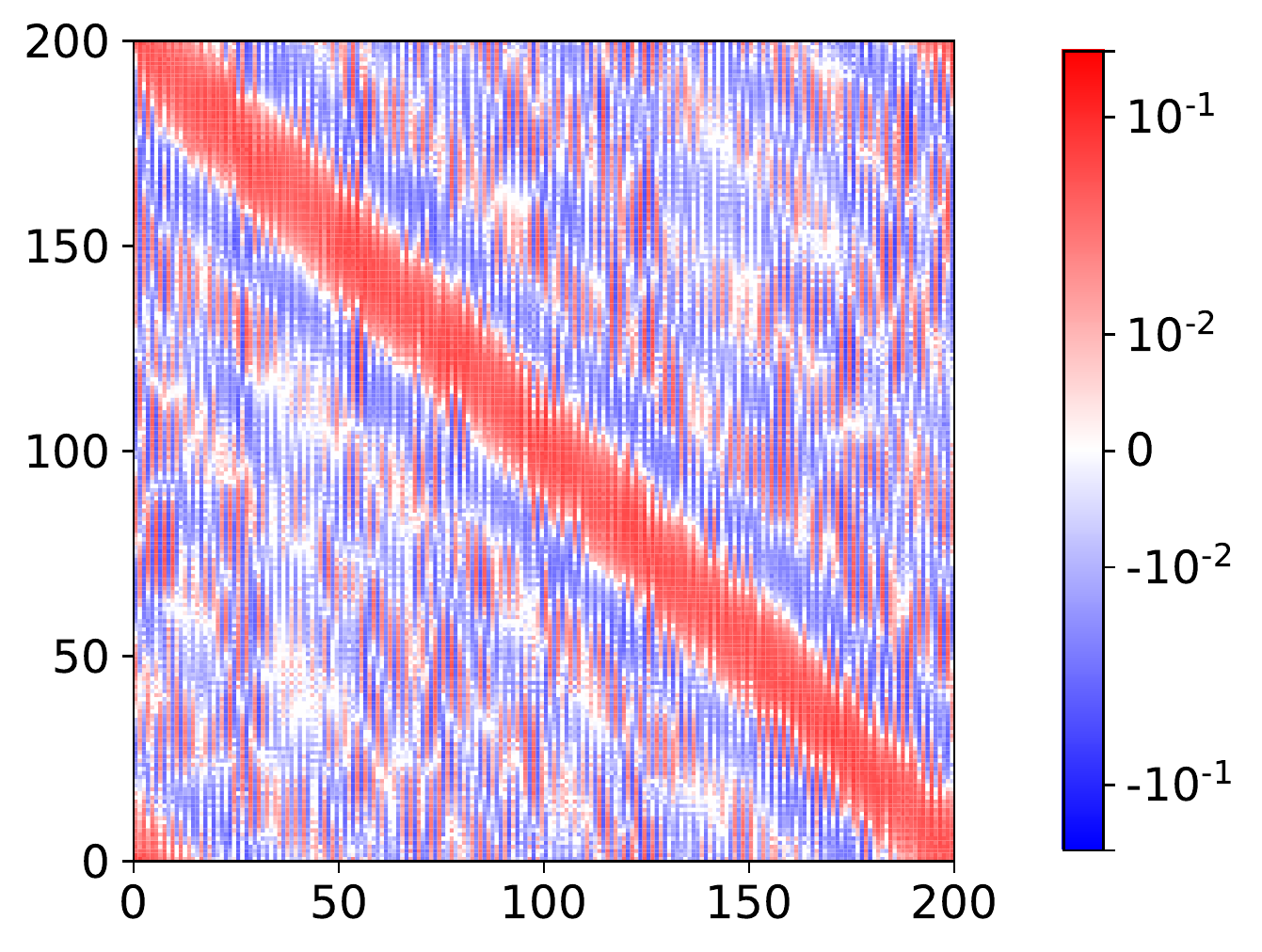}};
                \node[below=of img, node distance=0cm, xshift = -2.5mm, yshift=1cm] 
                {Column index of $\W$};
                \node[left=of img, node distance=0cm, rotate=90, anchor=center,yshift=-0.7cm] 
                {Row index of $\W$};
            \end{tikzpicture}
            \end{minipage}
        } 
    \end{tabular*}
    \caption{\textbf{Wave/transport equation}. Heat map of weight matrix  of direct linear neural networks.} 
    \figlab{1D_wave_FD_Direct_NN_params}
\end{figure}
We now compare our tangent slope learning and direct learning. Here, by direct learning we mean learning the map from $\ui{i}$ to $\ui{i+1}$ for two consecutive time steps. Clearly, unlike the former, the latter  is tailored, and thus limited, to a particular space-time discretization.
To be fair, we also use the linear network with zero bias for the direct learning approach. \cref{fig:1D_compare_Direct} presents the mean-square error of predictions obtained by direct neural networks and tangent slope networks, both with and without model-constrained terms. As can be seen, both direct and tangent slope neural networks are comparable in terms of accuracy. However, the learned weight matrices of direct neural networks 
do not have the pattern of the underlying space-time discretization matrices, and this can be observed from \cref{fig:1D_wave_FD_Direct_NN_params}. That is, while our tangent slope approach preserves the structure of spatial discretization, the direct approach, which seems to be natural, does not.


\subsection{2D Burger's equation}
\seclab{Burger_eq}
We consider the following viscous 2D Burger's equations
\begin{align*}
    & \pp{u}{t} + u \pp{u}{x} + v\pp{u}{y} = \nu \LRp{\pp{^2u}{x^2} + \pp{^2u}{y^2} } \\
    & \pp{v}{t} + u \pp{v}{x} + v\pp{v}{y} = \nu \LRp{\pp{^2v}{x^2} + \pp{^2v}{y^2} },
\end{align*}
where $x,y \in \LRs{0,1}$ and $ t \in (0,T] $. The boundary condition is periodic and the initial velocity is given by $v(x,y,0) = v_0\LRp{x,y} = 1$ and $u(x,y,0) = u_0\LRp{x,y}$. We take the viscosity coefficient to be $\nu = 10^{-2}$. We aim to predict $x$-velocity $u$ in the time interval $t \in \LRp{0, 1.5}$ given an initial velocity  $u_0(x,y)$ at $t = 0$.

{\bf Data generation.}
We draw periodic samples of $\ub$ using the truncated Karhunen-Loève expansion
\[u_0(x,y) = \exp \LRp{\sum_{i=1}^{15} \sqrt{\lambda_i} \, \omega_i(x,y) \, z_i},\]
where $\textbf{z} = \LRc{z_i}_{i=1}^{15} \sim \mathcal{N} \LRp{0, \textbf{I}}$, and $\LRp{\lambda, \omega}$ are eigenpairs obtained by the eigendecomposition of the covariance operator $7^{\frac{3}{2}} \LRp{-\Delta + 49 \textbf{I}}^{-2.5}$, where $\Delta$ is the Laplacian operator, with periodic boundary conditions. Training data corresponding to each initial velocity is generated from a $128 \times 128$ high-resolution spatial mesh
and $1000$ time steps for the time horizon $T = 0.1$
using finite difference method. These high resolution solutions are down-sampled on a coarser mesh of 100 time steps ($\dt = 10^{-3}$) and $32 \times 32$ spatial mesh. These down-sampled  solutions are treated as true solutions for the training process. Meanwhile, we draw 10 test initial velocity samples independently, and  the corresponding test data set of 10 samples is created in the same manner. However, the time horizon $T = 1.5$ for test samples is chosen\textemdash much larger than
 the trained time horizon\textemdash  with time stepsize $\dt = 10^{-3}$. This helps us test the accuracy and stability of neural network solutions beyond the training regime. 

{\bf Neural network architecture.}
We use a shallow network of one layer with 5000 neurons for  all cases to approximate the tangent slope of Burger's equations.  
Note that we have compared the one-layer network with two- and three-layer networks with different numbers of neurons ranging from 100 to 5000. These deeper networks perform poorly with small data sets and are improved with  large data sets in which the shallow one has comparable performance. Note that one-layer neural network approximation capabilities are rigorously justified by past universal approximation theories (see, e.g. \cite{Cybenko1989,hornik1989multilayer,Zhou17,johnson2018deep}) and our current work \cite{BuiUniversal21}.
Thus we shall use a one-layer neural network for all numerical results.
In addition, ReLU \cite{nair2010rectified} is used as the activation function.
\texttt{ADAM} optimizer is used with the learning rate of $10^{-4}$ and the training batch size is 40 samples. 
For this example,  reasonably optimal weights/biases are  the ones giving the lowest accumulated mean square error after $1500$ time steps for 10 test data. 
We take $\alpha = 10^5$ for the regularization parameter as this gives the best results from our numerical experiments (not shown here).

\begin{figure}[htb!]
    \centering
    \includegraphics[width=\textwidth]{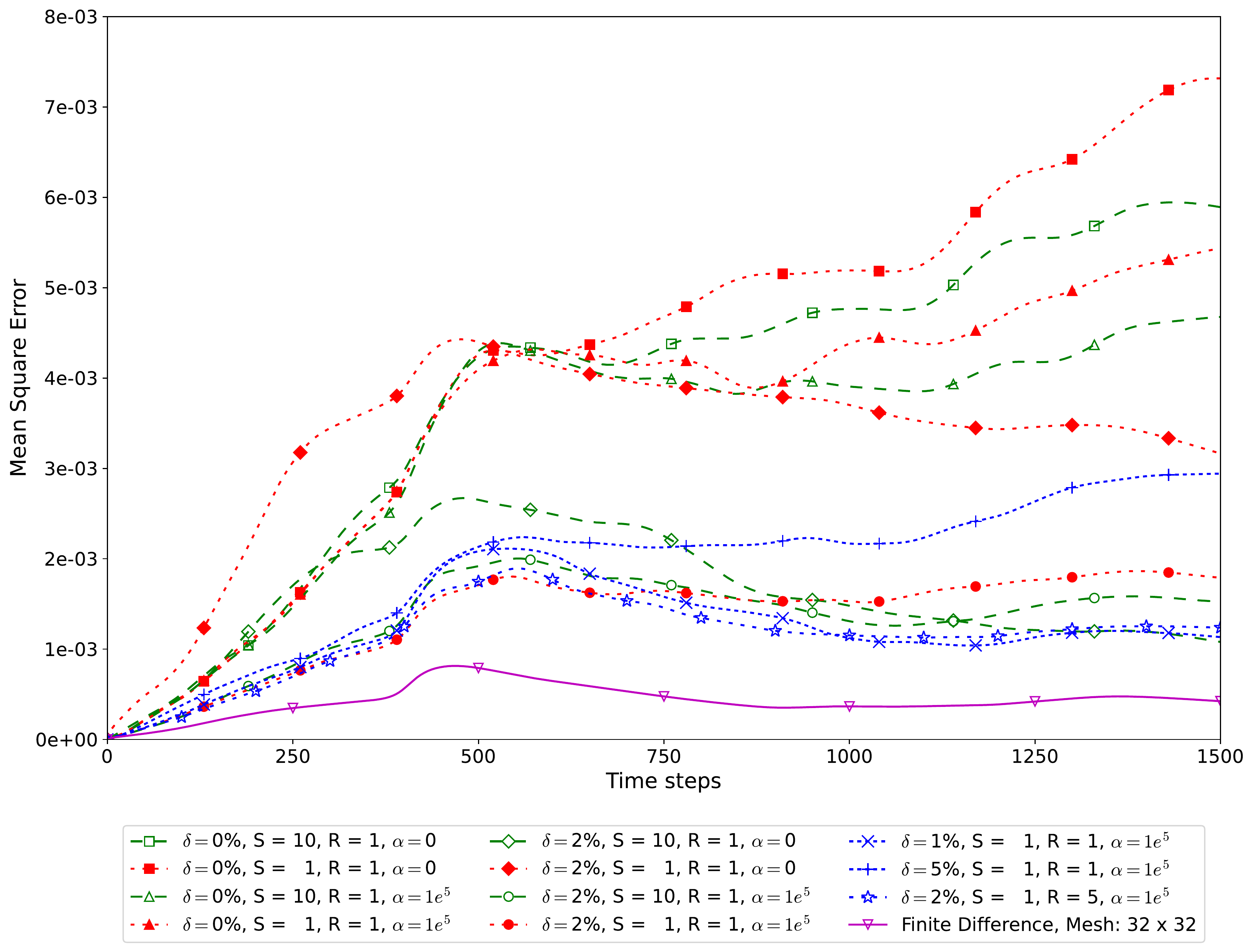}
    \caption{\textbf{Burger's equations}. Comparison of mean square error among different neural networks trained with $200$ data with/without noise. Recall that $\alpha = 0$ corresponds to the pure data-driven neural network training without model-constrained terms. Forward solver denotes the numerical solution on $32\times 32$ spatial mesh. } 
    \figlab{2D_Bur_compare_all}
\end{figure}

{\bf Comparison of different learned neural networks.}
\cref{fig:2D_Bur_compare_all} presents the comparison of mean square error obtained by different learned neural networks  with the data set of 200 samples. It can be seen that, in general, the model-constrained neural networks are far better than their pure data-driven counterparts  (i.e. with $\alpha = 0$). Additionally, long sequential machine learning trainings with $S = 10$ provide slightly better accuracy than  $S = 1$, except for the noisy data with pure data-driven network in which the improvement is significant. 

For model-constrained neural networks, long sequential training results with $S = 10$ in two settings $\LRp{d200,0\%, 10, 1, 10^5}$ and $\LRp{d200,2\%, 10, 1, 10^5}$ show an marginal improvement compared to short sequential training with $S = 1$ in two settings $\LRp{d200,0\%, 1, 1, 10^5}$ and $\LRp{d200,2\%, 1, 1, 10^5}$. Therefore, $S = 1$ is sufficient and we use it for the rest of numerical results with model-constrained neural networks. \cref{fig:2D_Bur_compare_all} shows that  using  $5\%$ noise causes the neural network corresponding to $\LRp{d200,5\%, 1, 1, 10^5}$ to perform poorly, while  $1\%$ noise gives almost the same accuracy as $2\%$ noise. It is noticeable that the long sequential model-constrained training with $R = 5$ ($d200,2\%, 1, 5, 10^5$) yields higher accuracy than the others.  However, large $R$ is more computationally expensive  since many passes through the back-propagation computational graph are needed.

\begin{figure}[htb!]
    \centering
    \includegraphics[width=\textwidth]{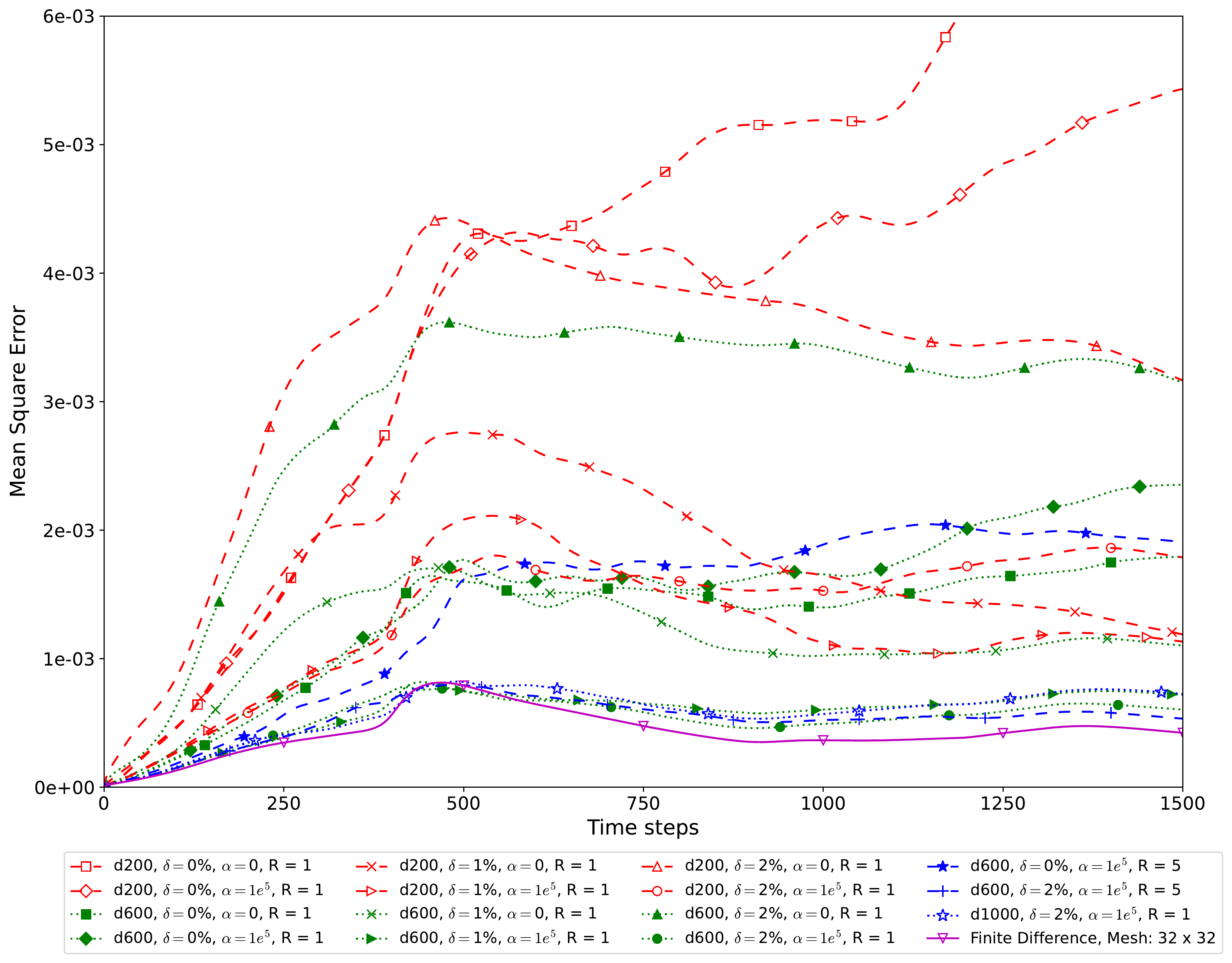}
    \caption{\textbf{Burger's equations}. The mean square error versus the number of time steps for various learned neural networks using 200, 600, and 1000 data samples with $S = 1$. } 
    \figlab{2D_Bur_d200d600}
\end{figure}
 \begin{figure}[htb!]
    \centering
    \begin{tabular*}{\textwidth}{c c c c c}
        \centering
         &
        \raisebox{-0.5\height}{\small $n_t = 0$} &
        \raisebox{-0.5\height}{\small $n_t = 100$} &
        \raisebox{-0.5\height}{\small $n_t = 500$} & 
        \raisebox{-0.5\height}{\small $n_t = 1500$} 
        \\
        \centering
        \rotatebox[origin=c]{90}{\small True $\ub$} &
        \raisebox{-0.5\height}{\includegraphics[width=.20 \textwidth]{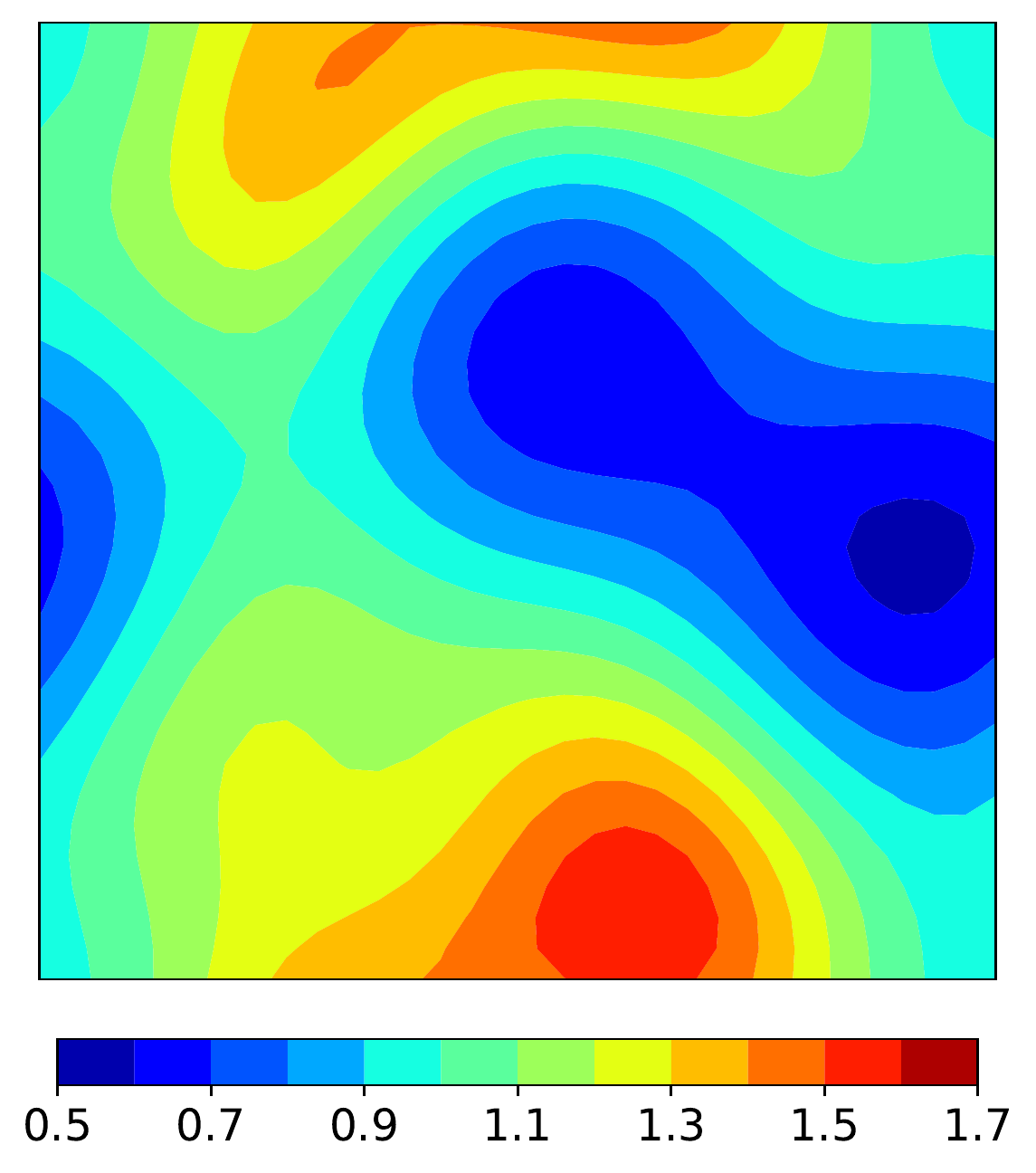}} &
        \raisebox{-0.5\height}{\includegraphics[width=.20 \textwidth]{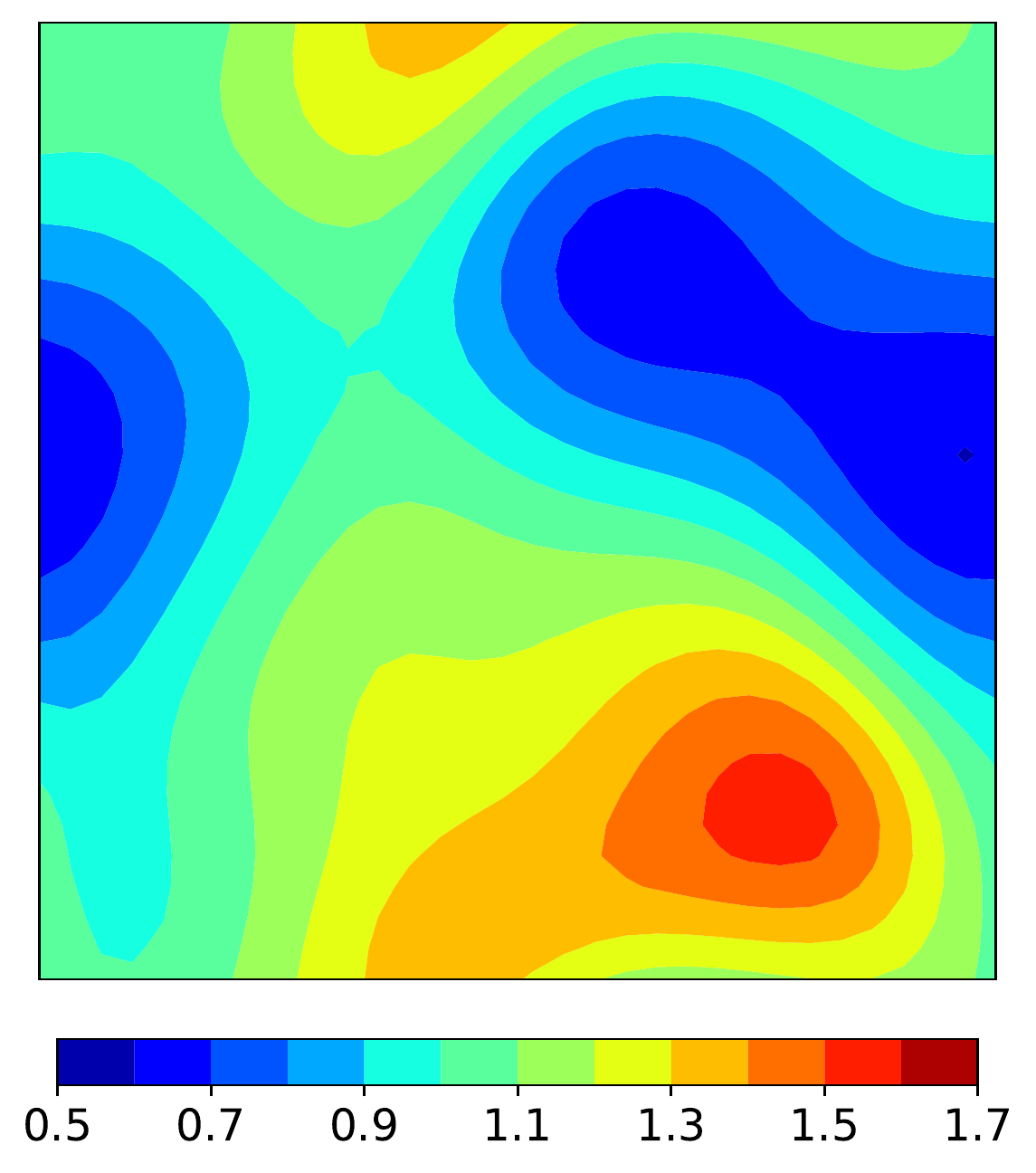}} &
        \raisebox{-0.5\height}{\includegraphics[width=.20 \textwidth]{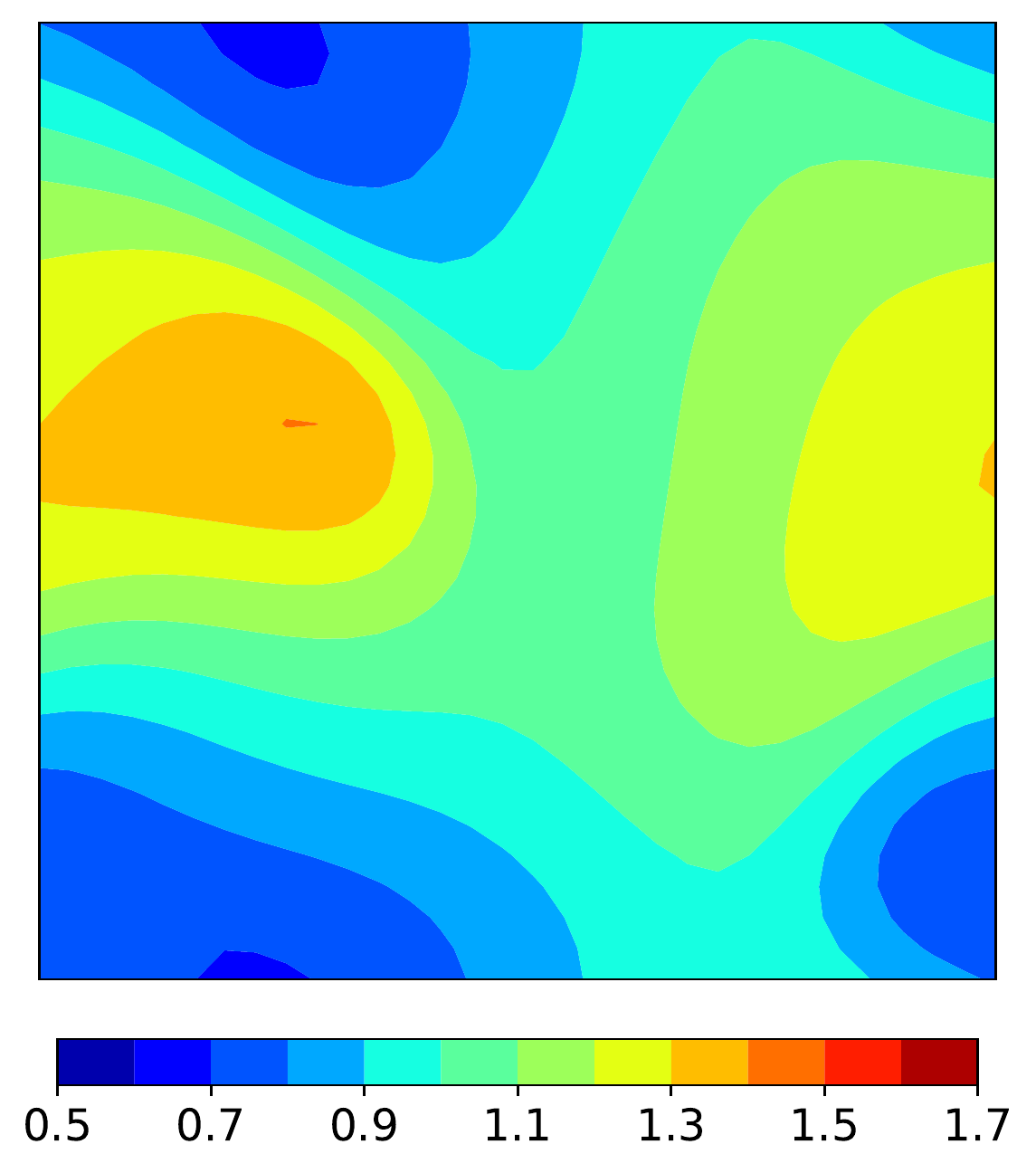}} & 
        \raisebox{-0.5\height}{\includegraphics[width=.20 \textwidth]{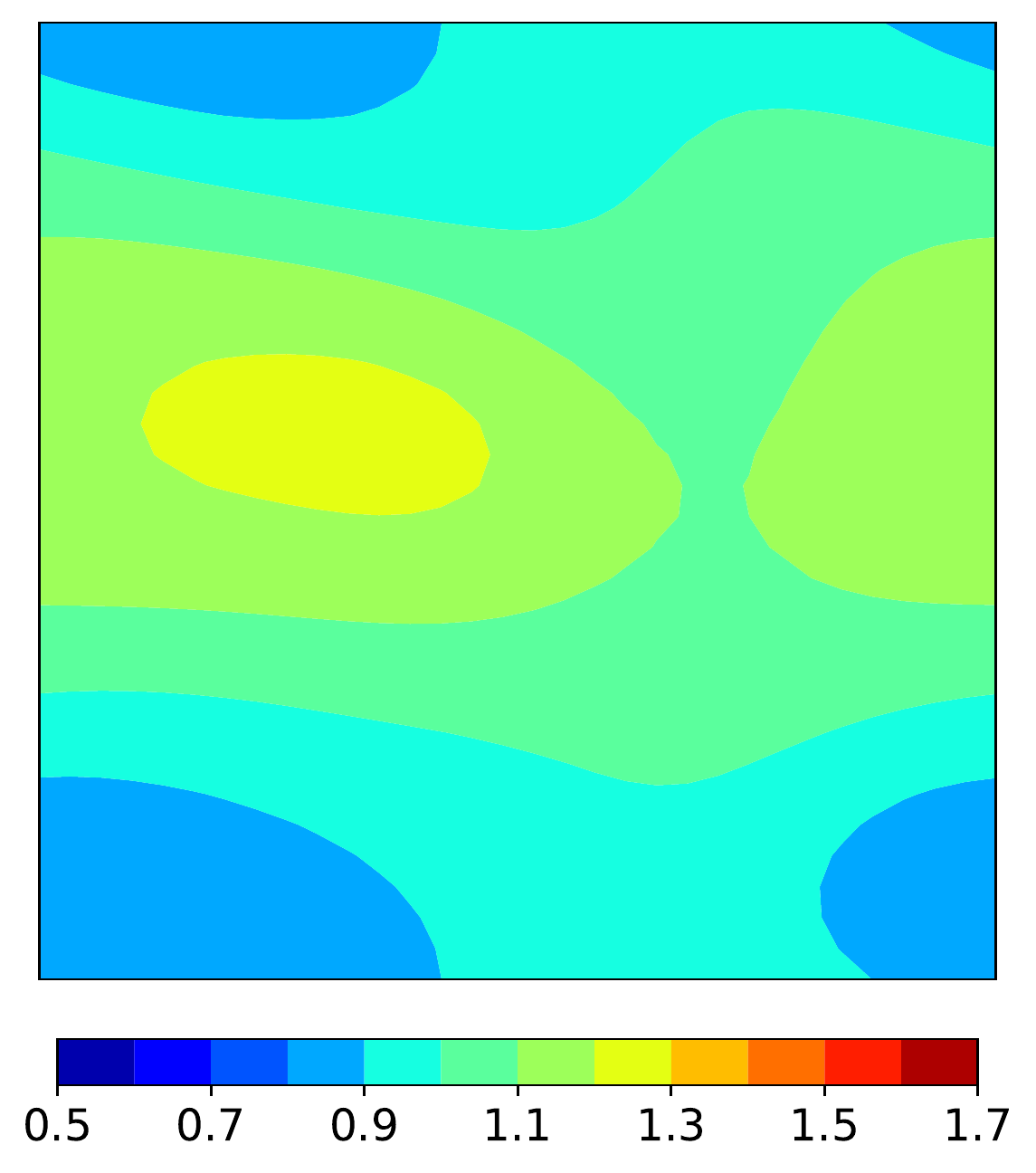}} 
        \\
        \centering
        \rotatebox[origin=c]{90}{\small $\alpha =0,\delta = 0\%$} &
        \raisebox{-0.5\height}{\includegraphics[width=.20 \textwidth]{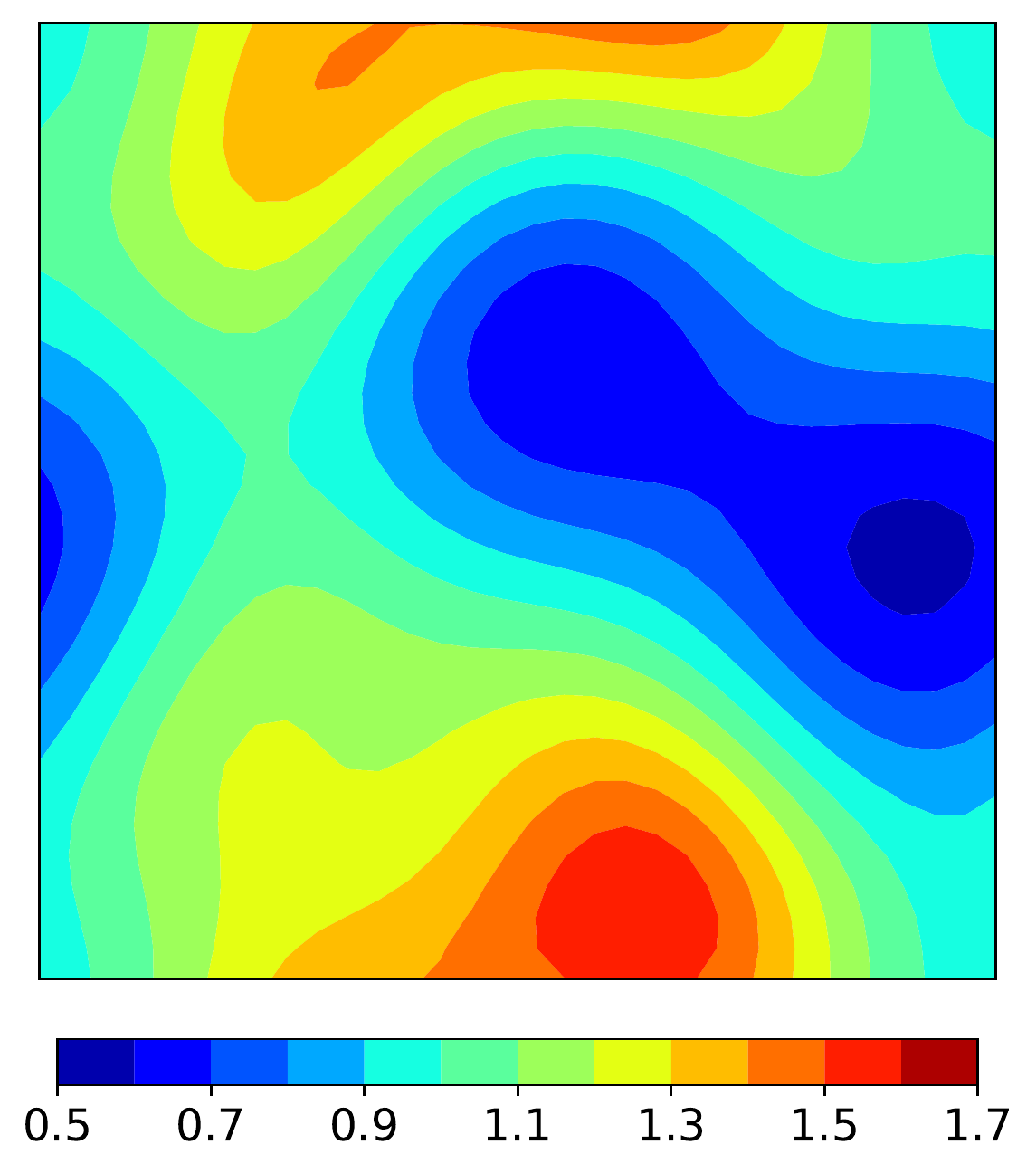}} &
        \raisebox{-0.5\height}{\includegraphics[width=.20 \textwidth]{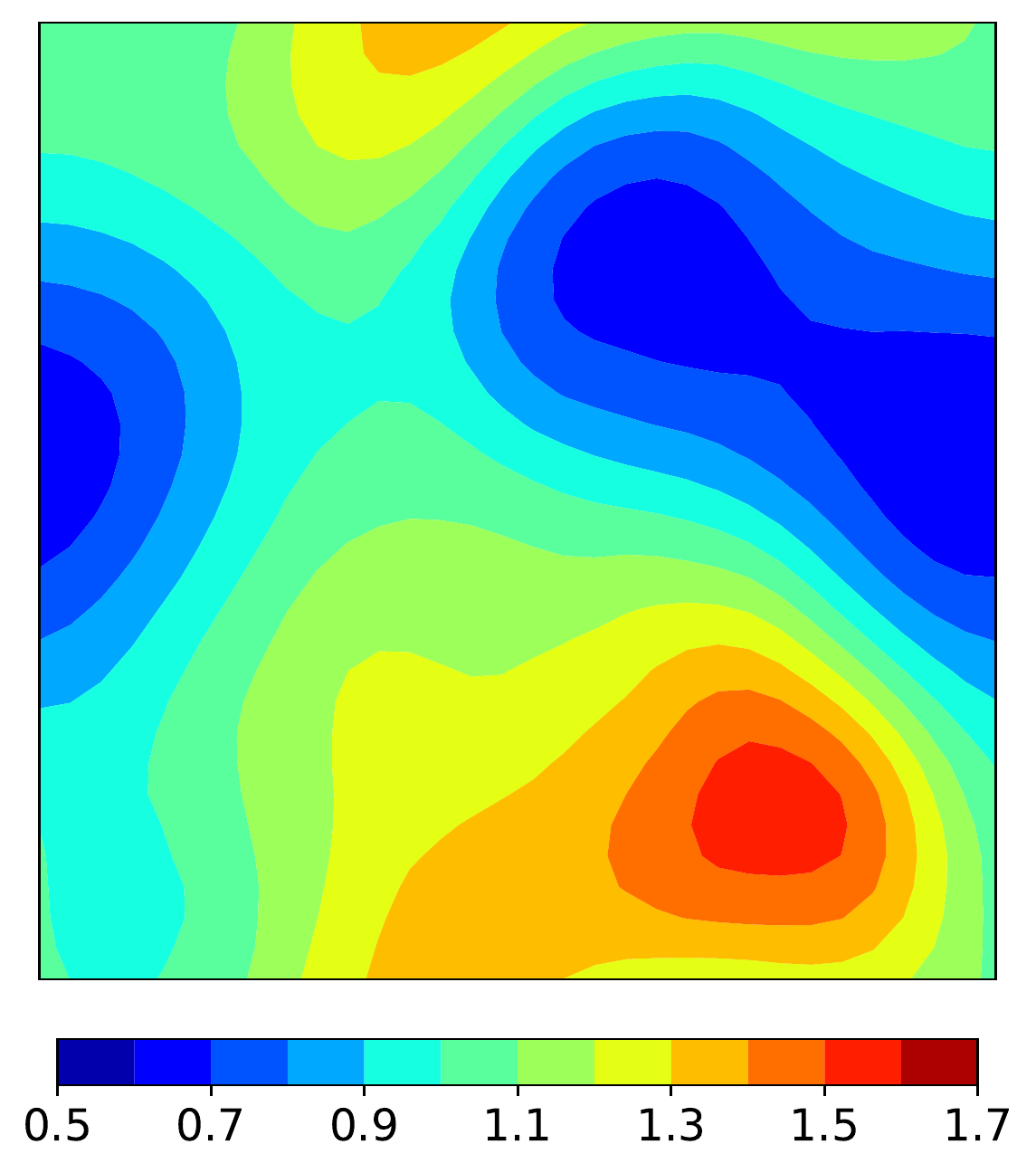}} &
        \raisebox{-0.5\height}{\includegraphics[width=.20 \textwidth]{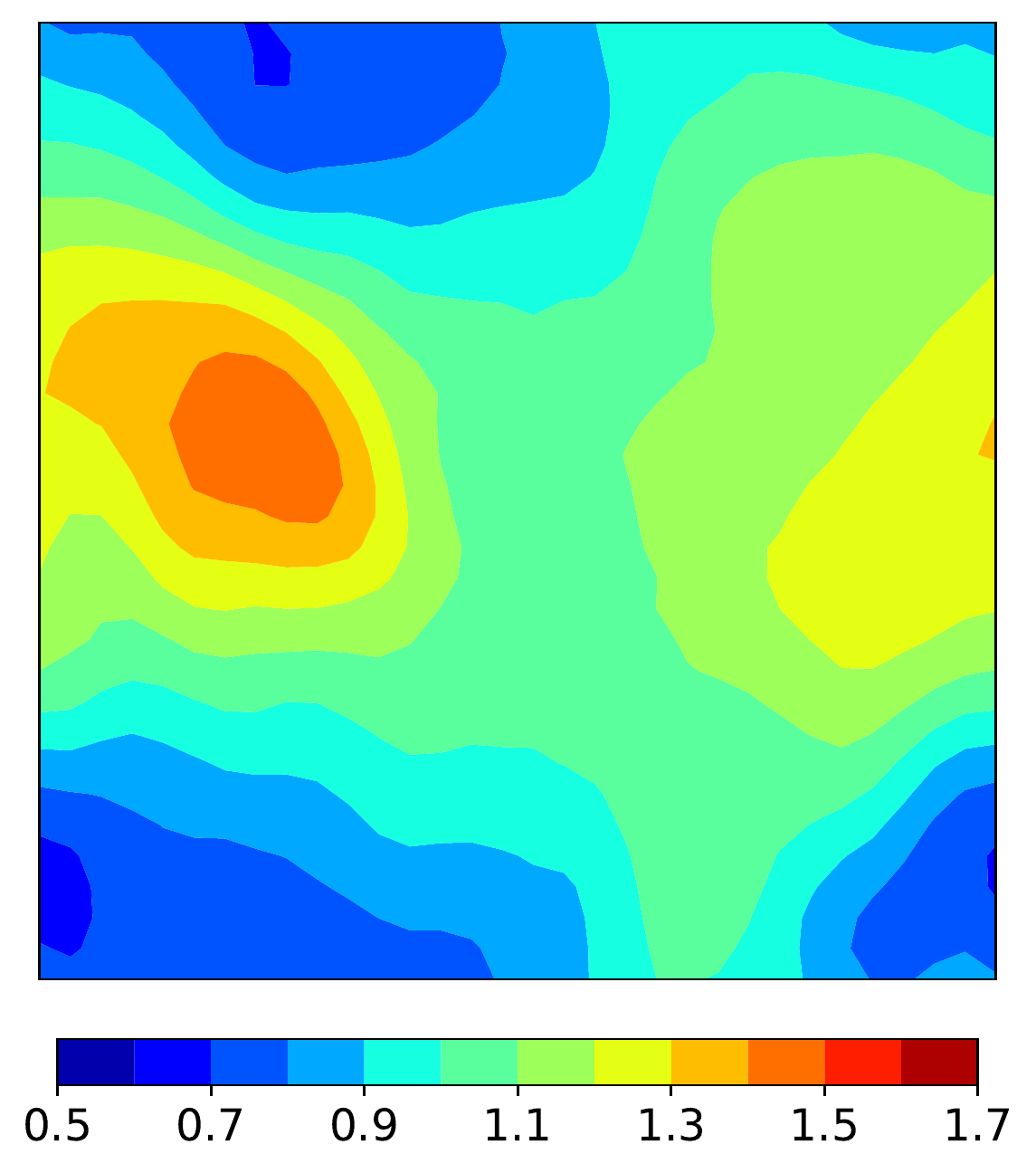}} & 
        \raisebox{-0.5\height}{\includegraphics[width=.20 \textwidth]{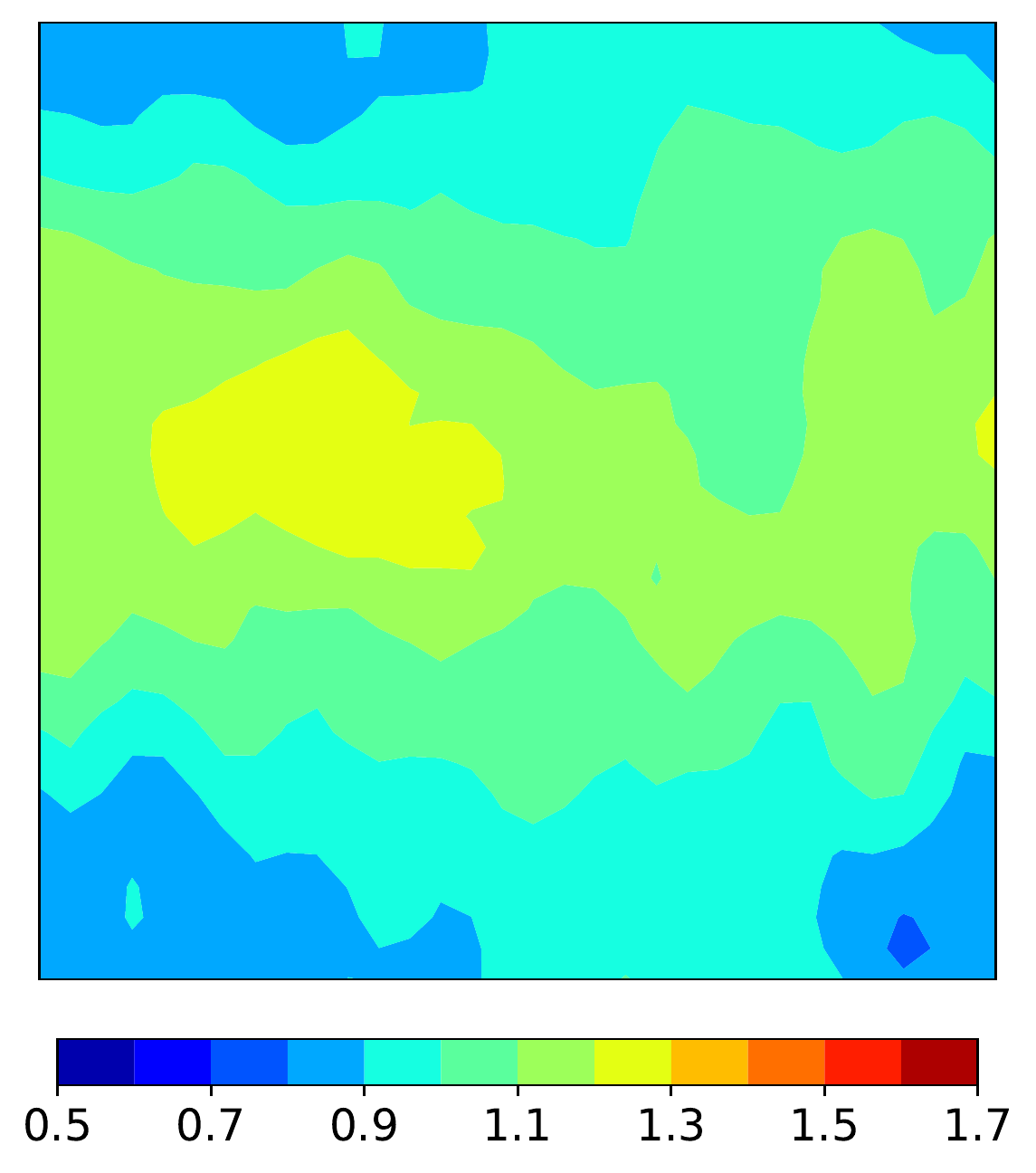}} 
        \\
        \centering
        \rotatebox[origin=c]{90}{\small $\alpha =0,\delta = 2\%$} &
        \raisebox{-0.5\height}{\includegraphics[width=.20 \textwidth]{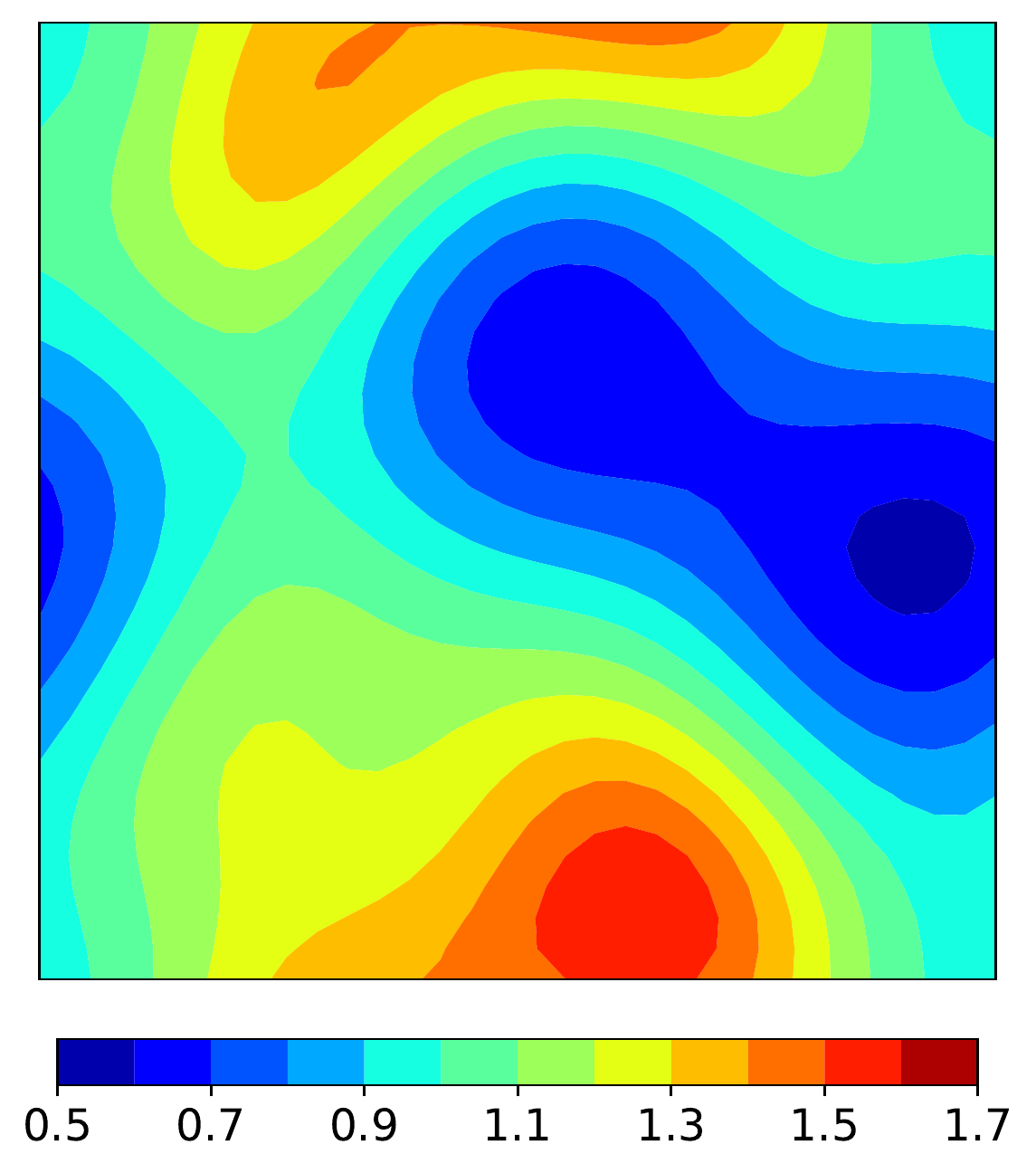}} &
        \raisebox{-0.5\height}{\includegraphics[width=.20 \textwidth]{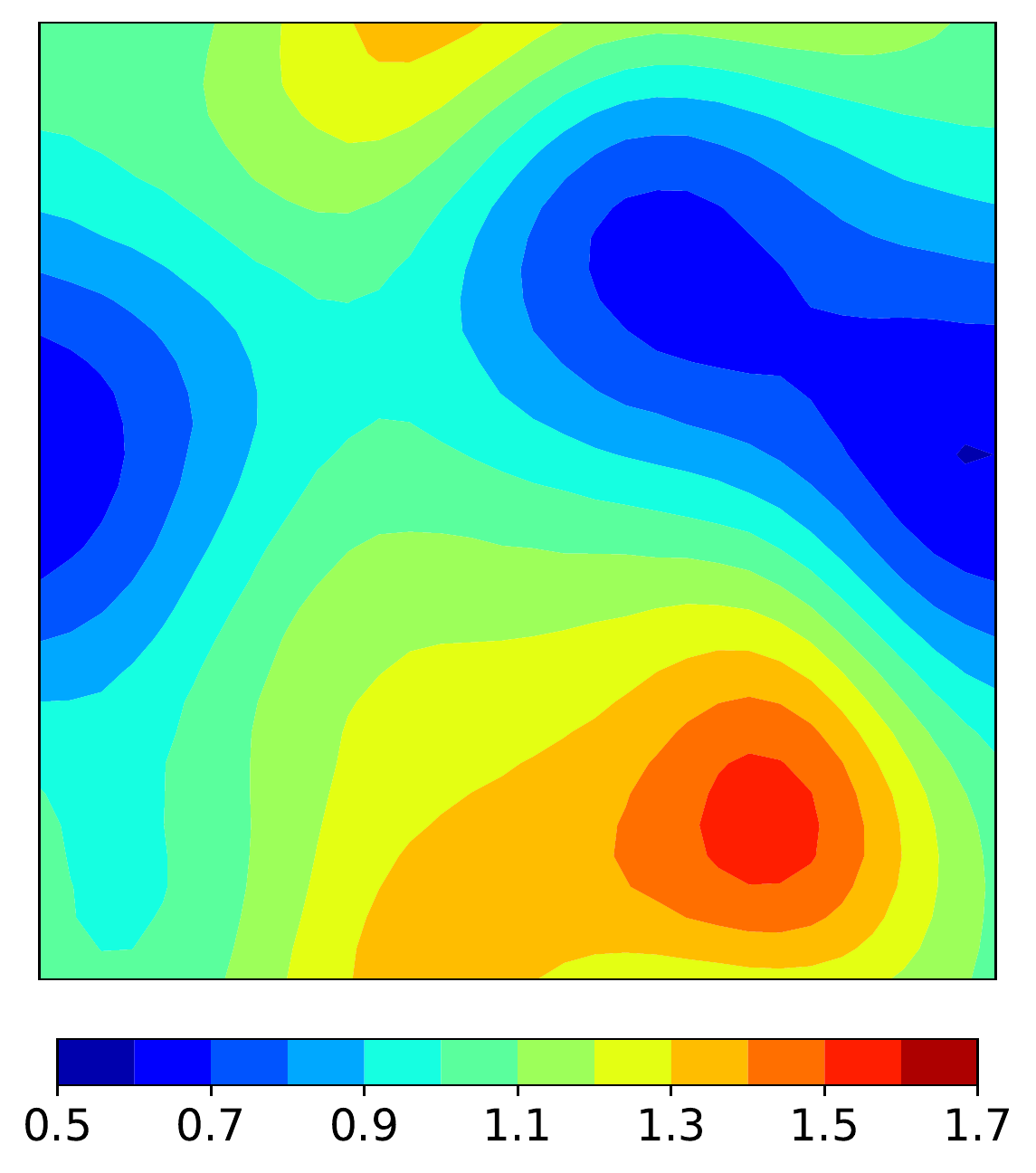}} &
        \raisebox{-0.5\height}{\includegraphics[width=.20 \textwidth]{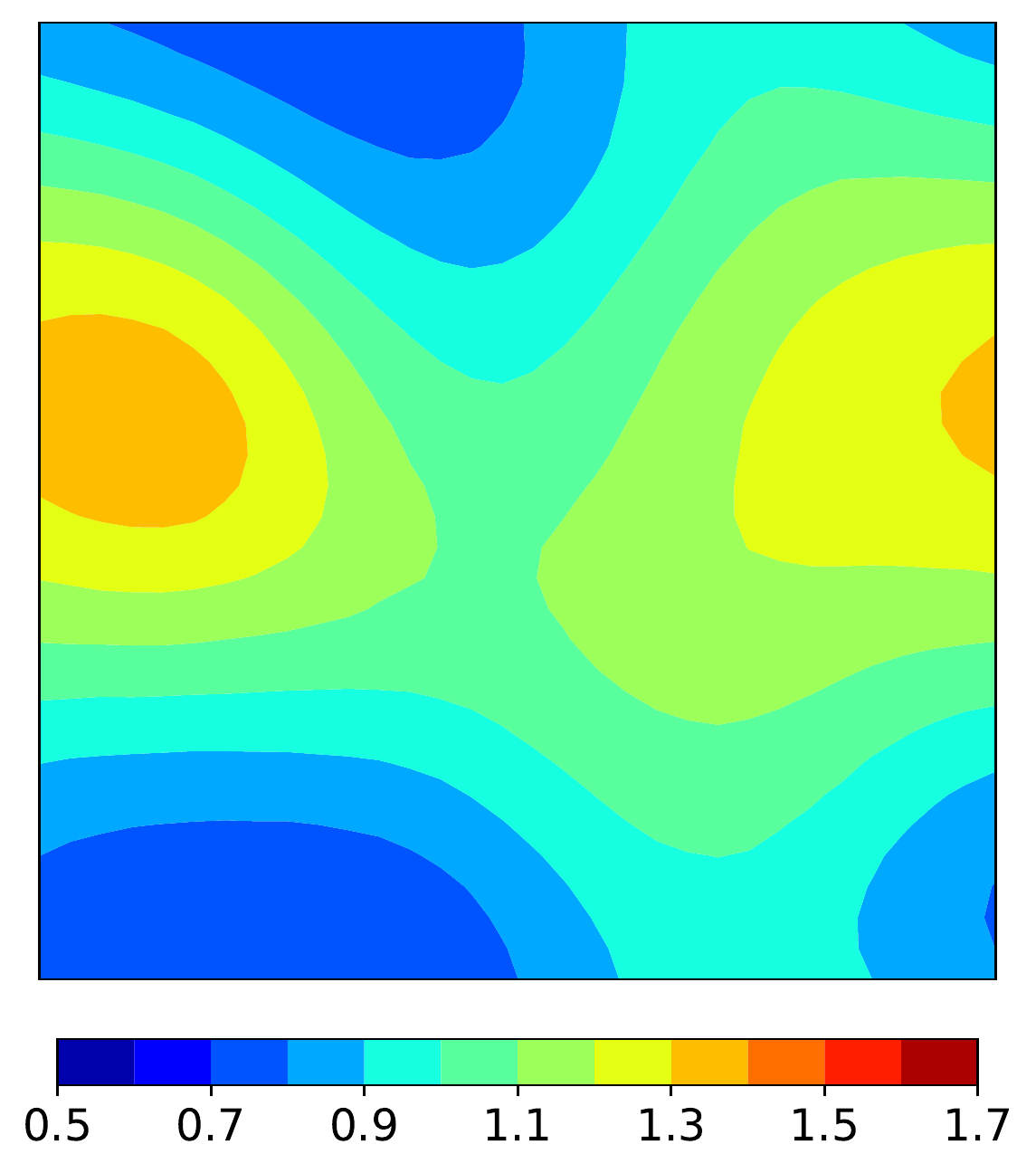}} & 
        \raisebox{-0.5\height}{\includegraphics[width=.20 \textwidth]{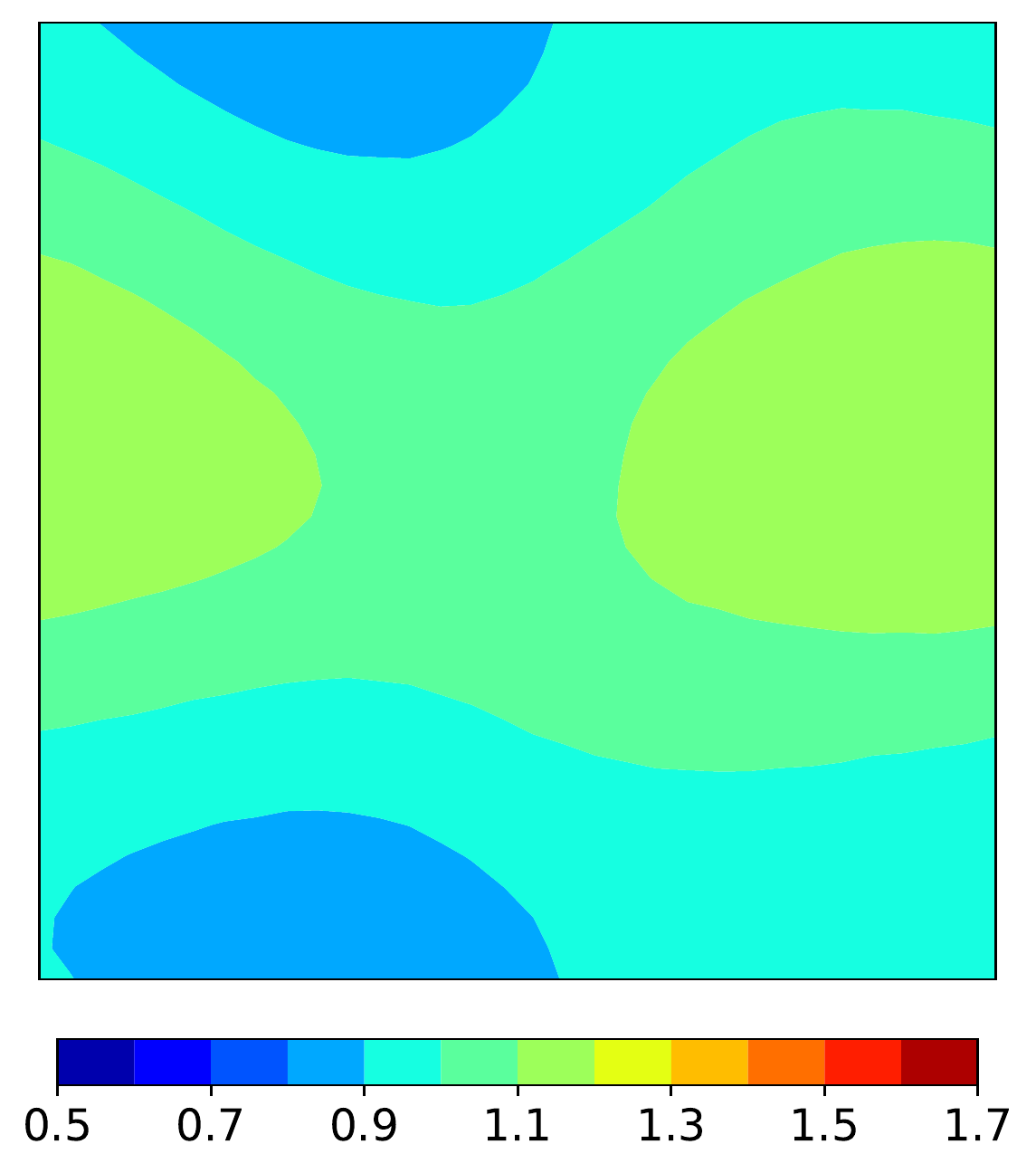}} 
        \\
        \centering
        \rotatebox[origin=c]{90}{\small $\alpha = 1e^5,\delta = 0\%$} &
        \raisebox{-0.5\height}{\includegraphics[width=.20 \textwidth]{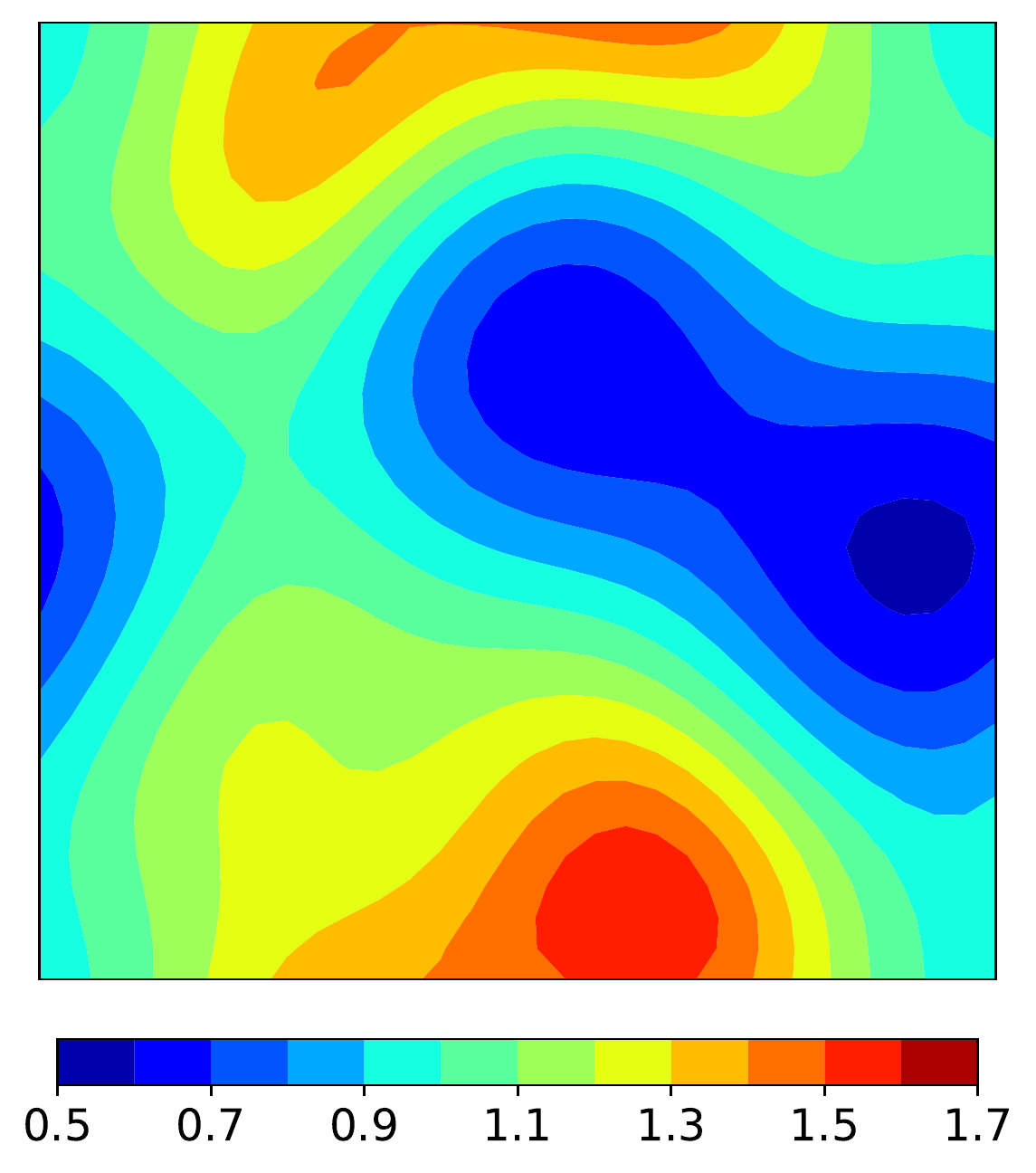}} &
        \raisebox{-0.5\height}{\includegraphics[width=.20 \textwidth]{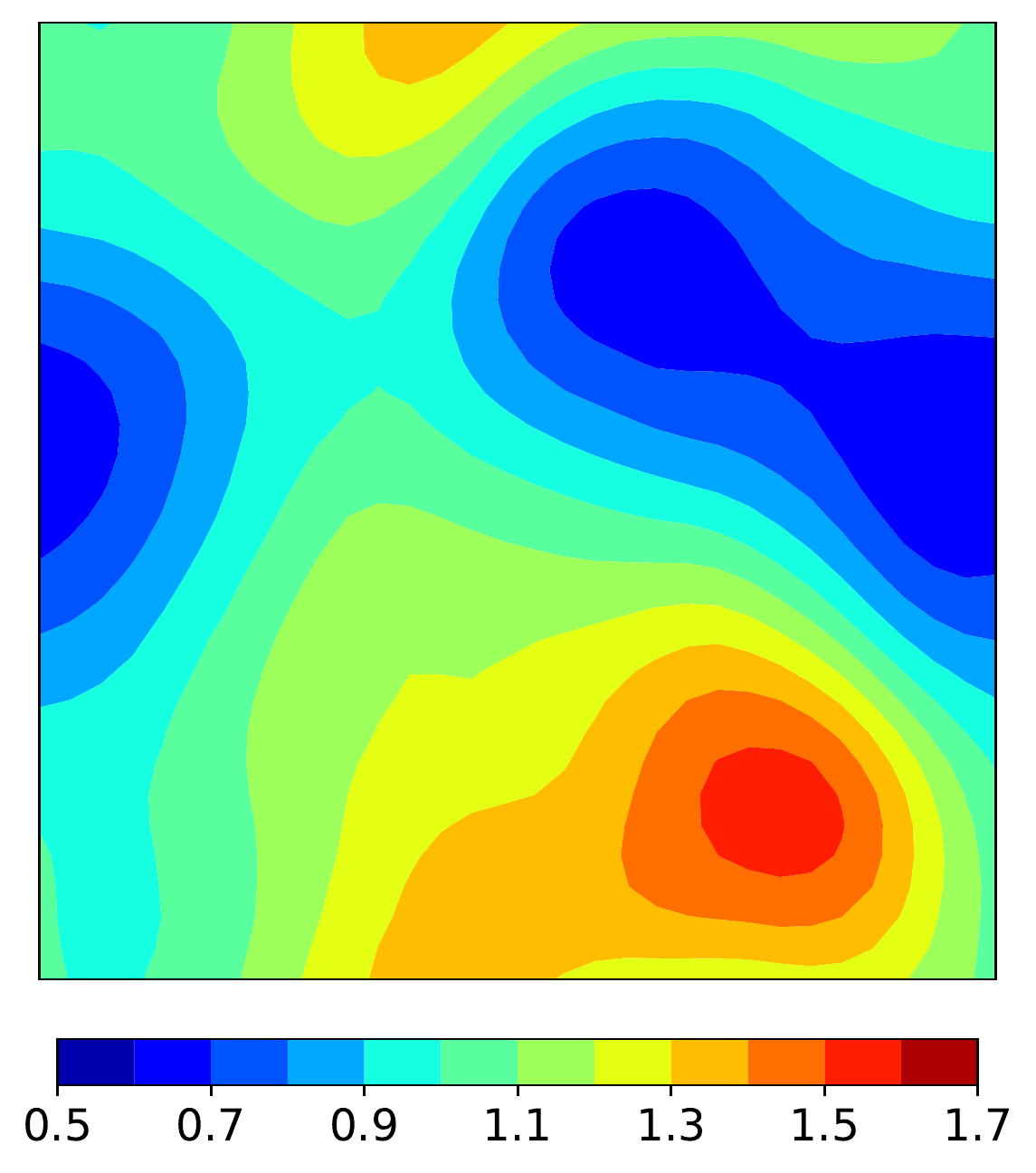}} &
        \raisebox{-0.5\height}{\includegraphics[width=.20 \textwidth]{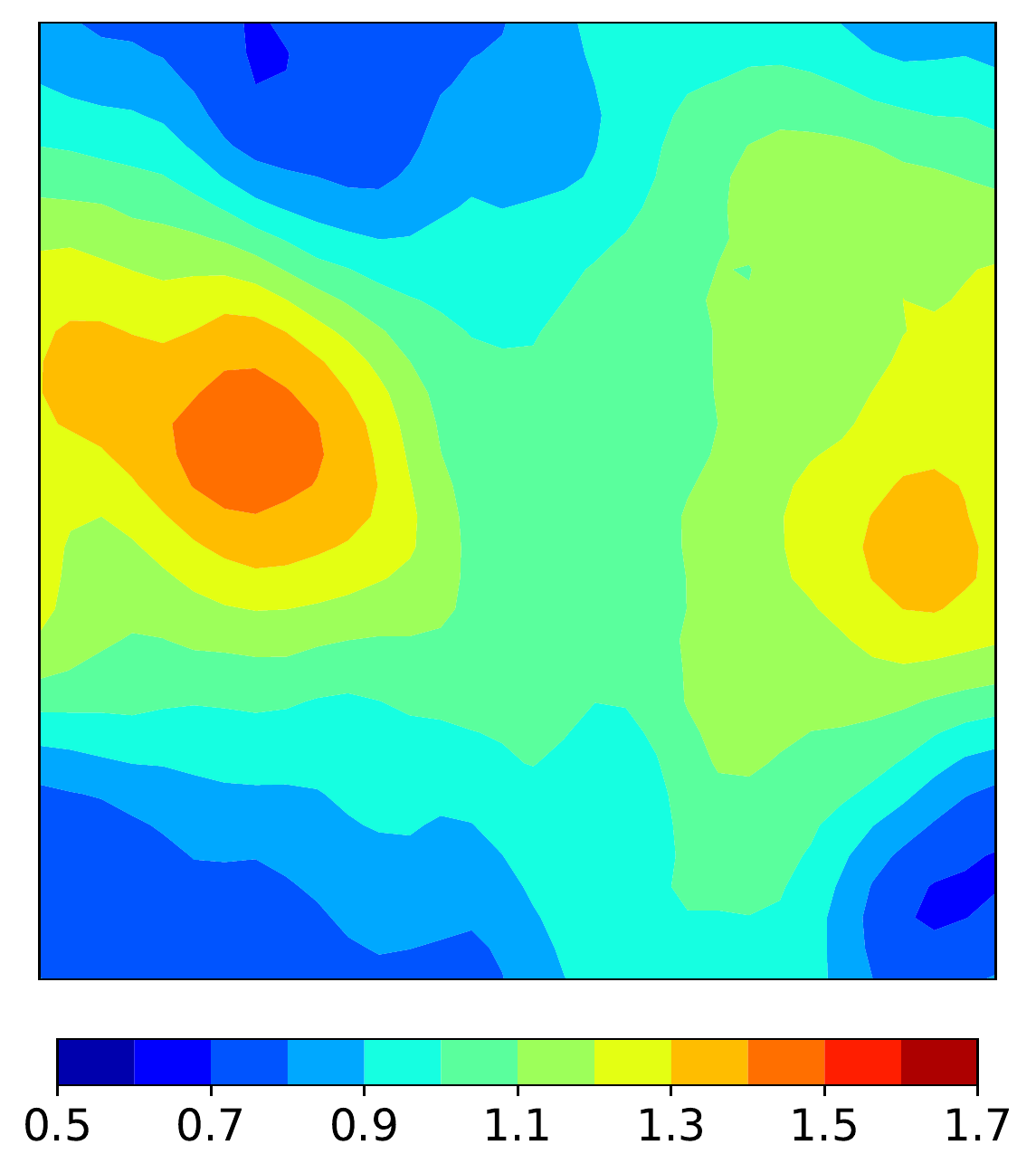}} & 
        \raisebox{-0.5\height}{\includegraphics[width=.20 \textwidth]{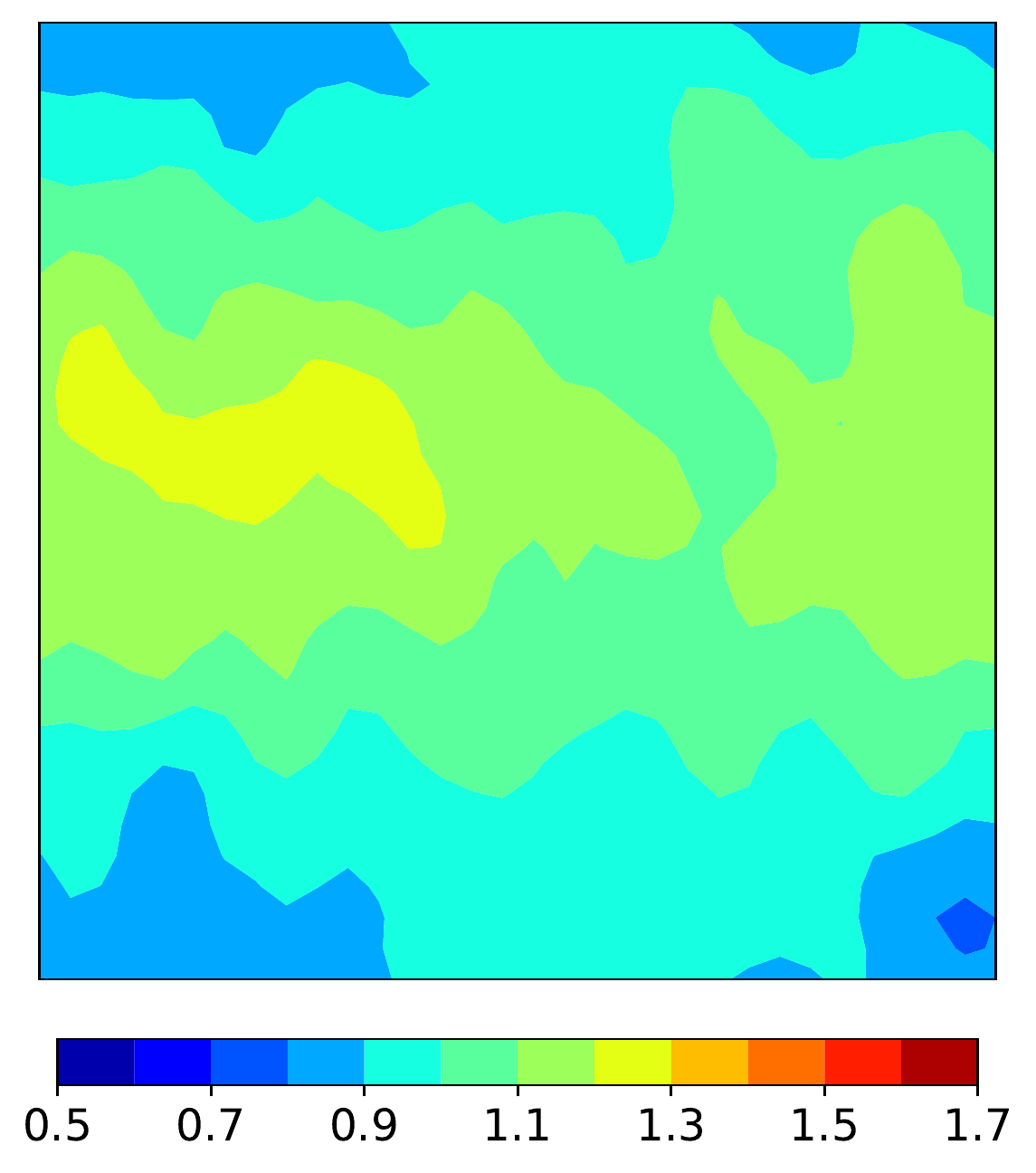}} 
        \\
        \centering
        \rotatebox[origin=c]{90}{\small $\alpha =1e^5,\delta = 2\%$} &
        \raisebox{-0.5\height}{\includegraphics[width=.20 \textwidth]{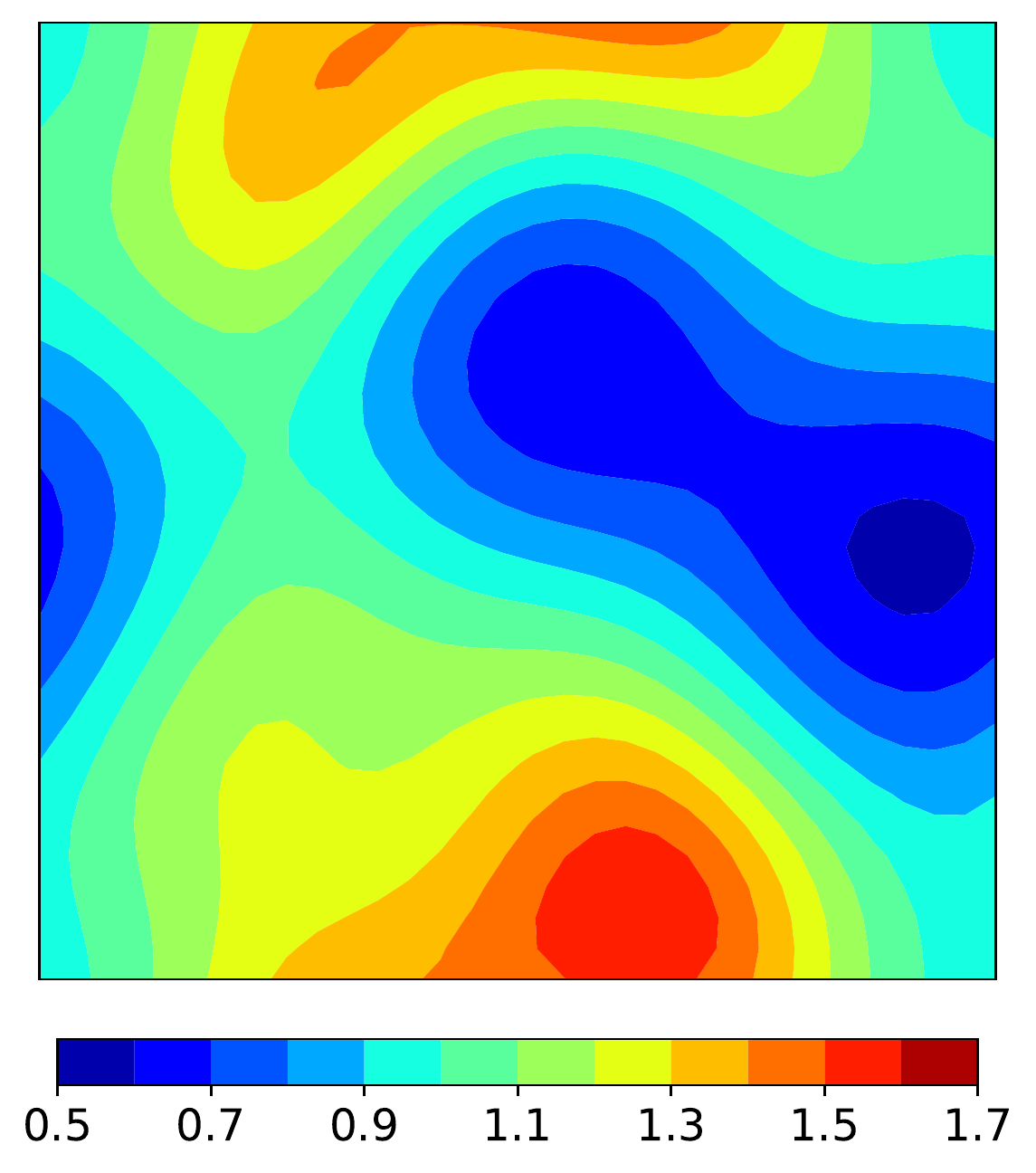}} &
        \raisebox{-0.5\height}{\includegraphics[width=.20 \textwidth]{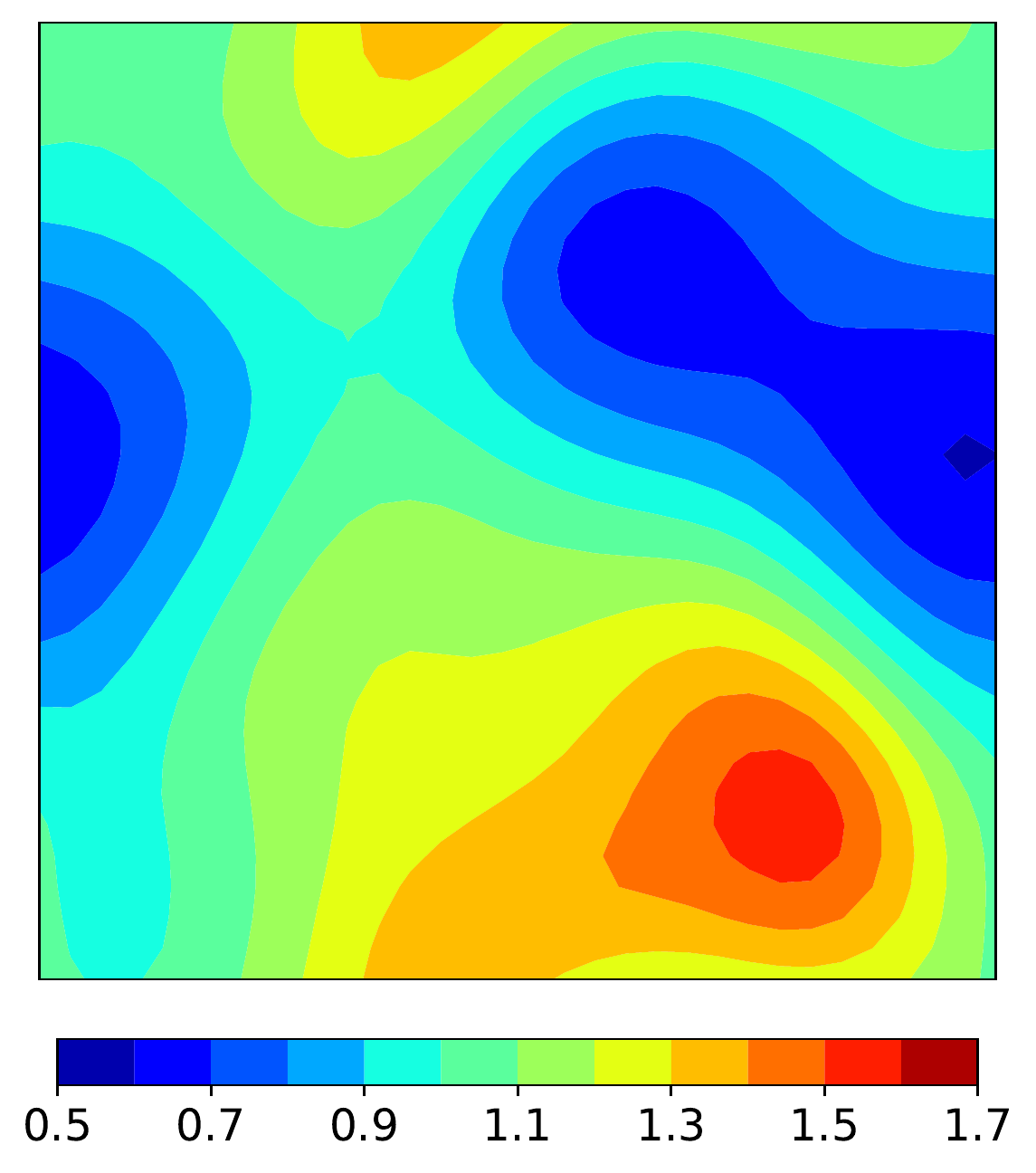}} &
        \raisebox{-0.5\height}{\includegraphics[width=.20 \textwidth]{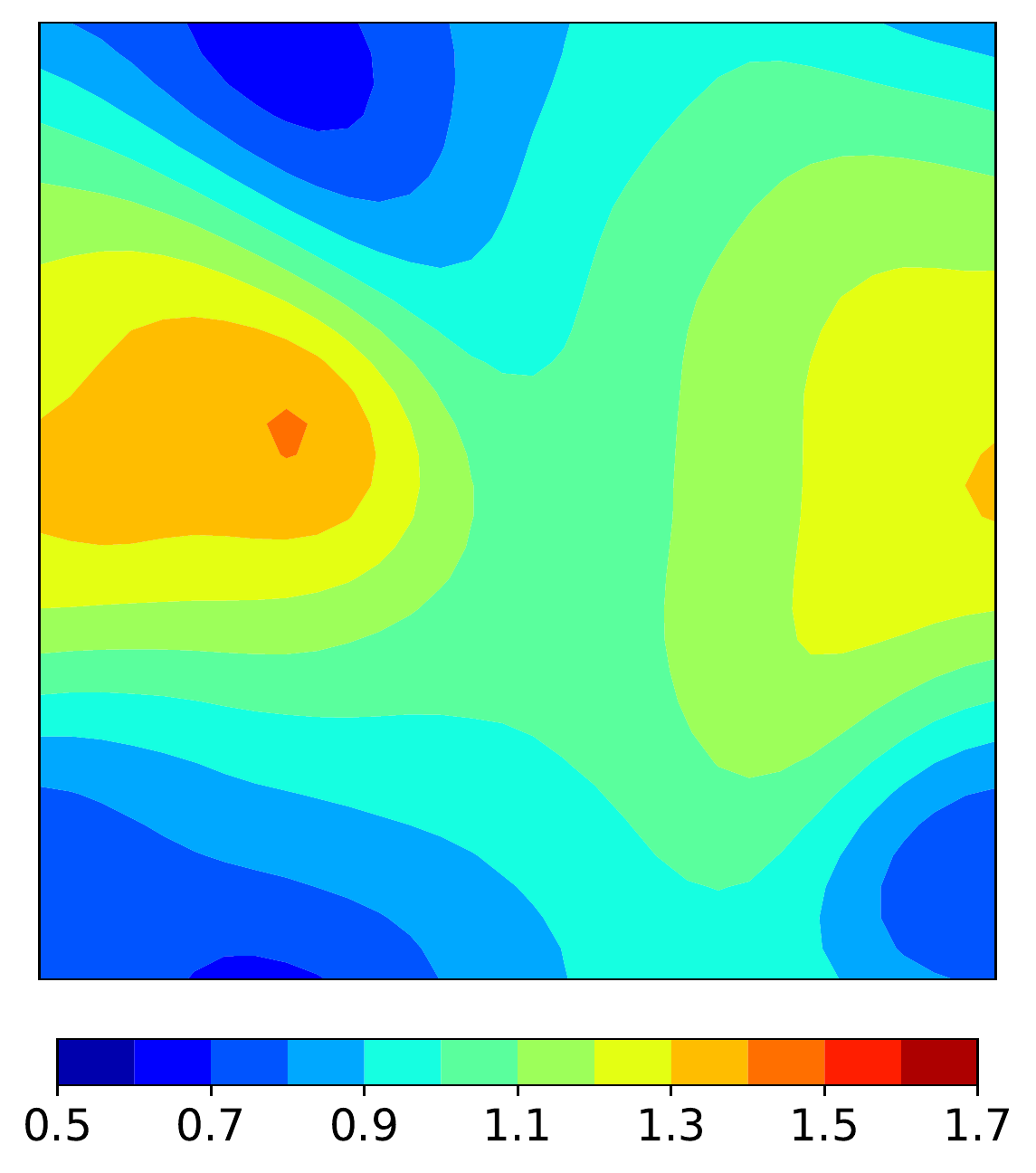}} & 
        \raisebox{-0.5\height}{\includegraphics[width=.20 \textwidth]{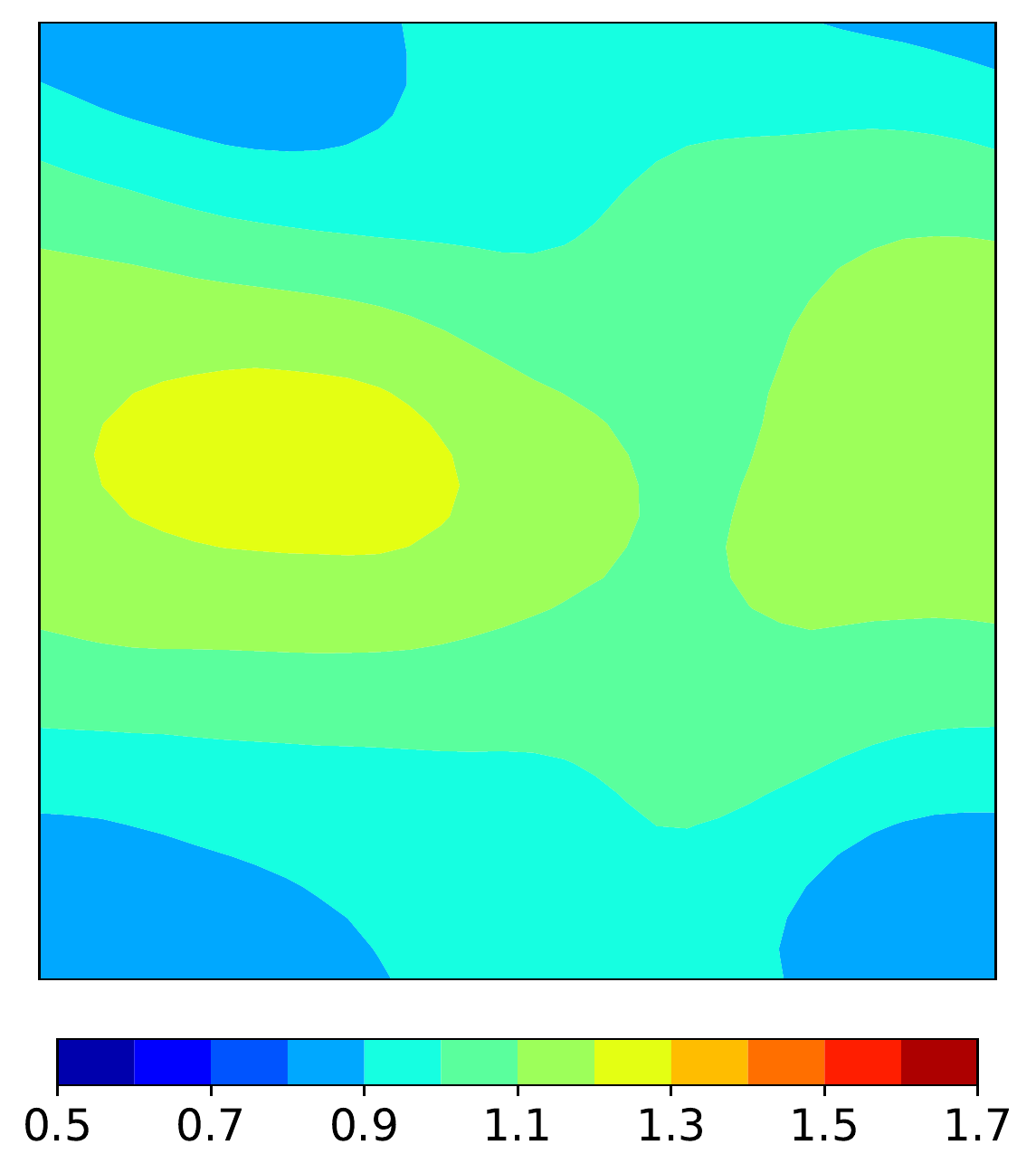}} 
    \end{tabular*}
    \caption{\textbf{Burger's equations}. Predicted solutions at different time steps ($n_t$) obtained by various learned neural network tangents with  $600$ training data samples and $S = 1$ and  $\dt = 10^{-3}$. {\em Top row}: True high-resolution solution; {\em Second row}: pure data-driven network without data randomization; {\em Third row}: pure data-driven  network with noisy data; {\em Fourth row}: model-constrained network without data randomization; {\em Fifth row}: model-constrained network with noisy data.} 
    \figlab{2D_Bur_samples}
\end{figure}

{\bf Long-time predictions with small and large training data sets.}
As discussed above, since long sequential machine learning training  does not  provide significant improvement, 
we consider $S = 1$ for  numerical results using large data sets
 in \cref{fig:2D_Bur_d200d600}. As can be seen, compared to $200$ data samples, training with $600$ data samples provides more accurate predictions. Moreover, model-constrained neural networks with randomized data are the most accurate among others (model-constrained with noise-free data and pure machine learning with/without randomized data). We can also observe that using more than $600$ data samples does not provide significant improvements but is more expensive. 
 Unlike the case with $200$ data samples,  long and short sequential model-constrained trainings with $R = 5$ and $R = 1$, respectively,  provide similar results for $600$ data samples. This is expected as richer data reduces the significance of the model-constrained term.
 
 As shown in \cref{fig:2D_Bur_samples}, predicted solutions obtained by the model-constrained approach (the fifth row) with data randomization are in good agreement with the ground-truth counterparts. On the contrary, the pure data-driven approach with  data randomization (the third row) shows poor long-time predictions.  We also observe that
 both pure data-driven learning solutions and model-constrained  solutions (the second and fourth rows, respectively) without randomization  are unstable for long-time predictions. It is not surprising since both do not have sufficient regularizations compared to the randomized cases in which extra regularizations are implicitly performed (see \cref{sect:noise_data_sec}). Moreover,  regularizations induced by data randomization shown in \cref{sect:noise_data_sec} stabilize the network predictions and this can be clearly seen by comparing the third and the second rows for the pure data-driven learning approach, and by comparing the fourth and the fifth rows for the  model-constrained learning approach. 
 
 \cref{fig:2D_Bur_dU_samples} plots the contours of the learned and the true tangent slopes. Clearly, the learned model-constrained tangent slope with data randomization provides the best agreement with the true tangent slope. This is not surprising as both the governing equations (explicit via model-constrained term) and sufficient regularizations (implicit via data randomization) are incorporated.

\begin{figure}[htb!]
    \centering
    \begin{tabular*}{\textwidth}{c c c c c c}
        \centering
         &
        \raisebox{-0.5\height}{\small True $\F(\ub)$} &
        \raisebox{-0.5\height}{\small $\LRp{0, 0\%}$} &
        \raisebox{-0.5\height}{\small $\LRp{0, 2\%}$} &
        \raisebox{-0.5\height}{\small $\LRp{10^5, 0\%}$} & 
        \raisebox{-0.5\height}{\small $\LRp{10^5, 2\%}$} 
        \\
        \centering
        \rotatebox[origin=c]{90}{\small $n_t = 100$} &
        \raisebox{-0.5\height}{\includegraphics[width=.16 \textwidth]{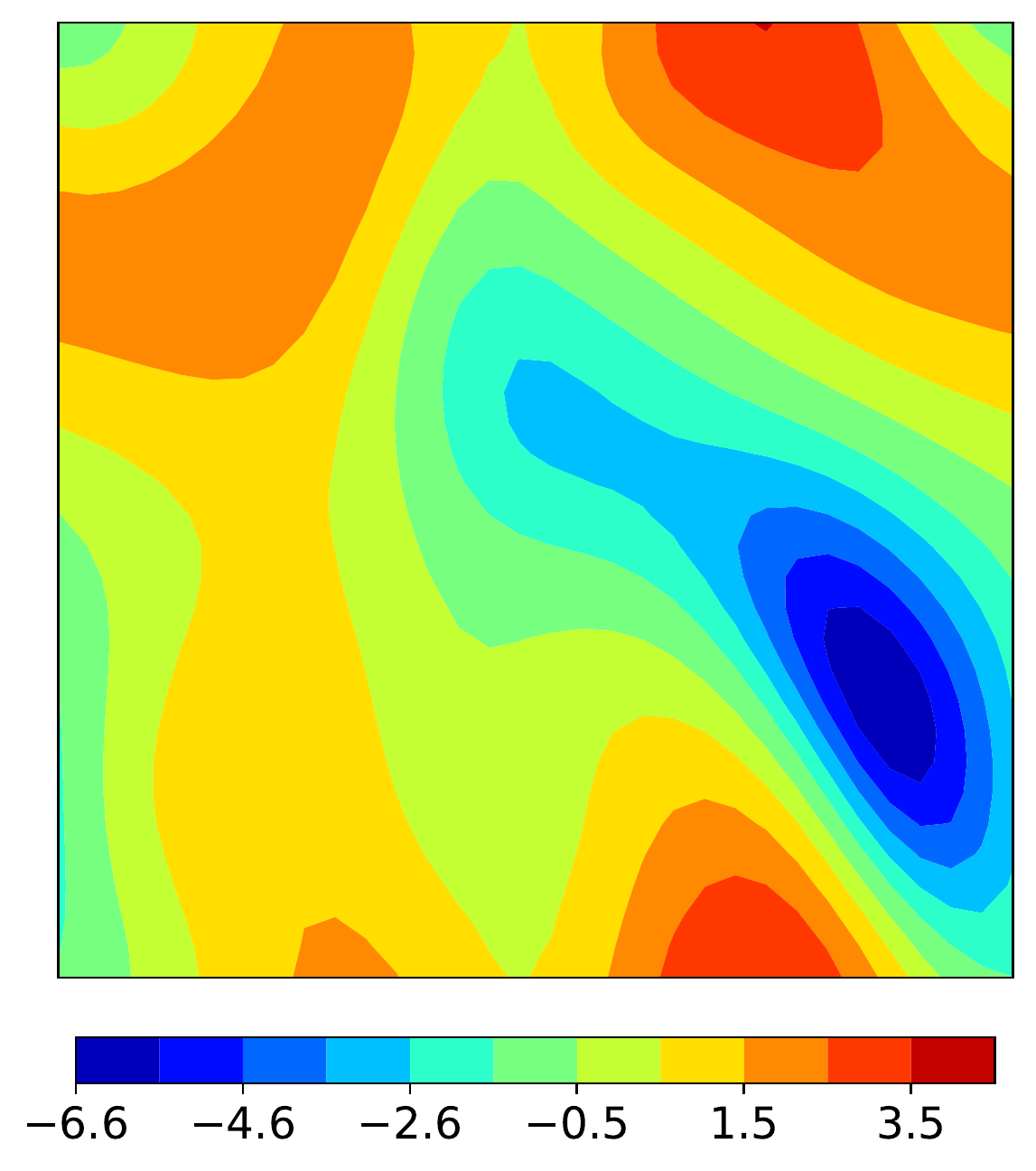}} &
        \raisebox{-0.5\height}{\includegraphics[width=.16 \textwidth]{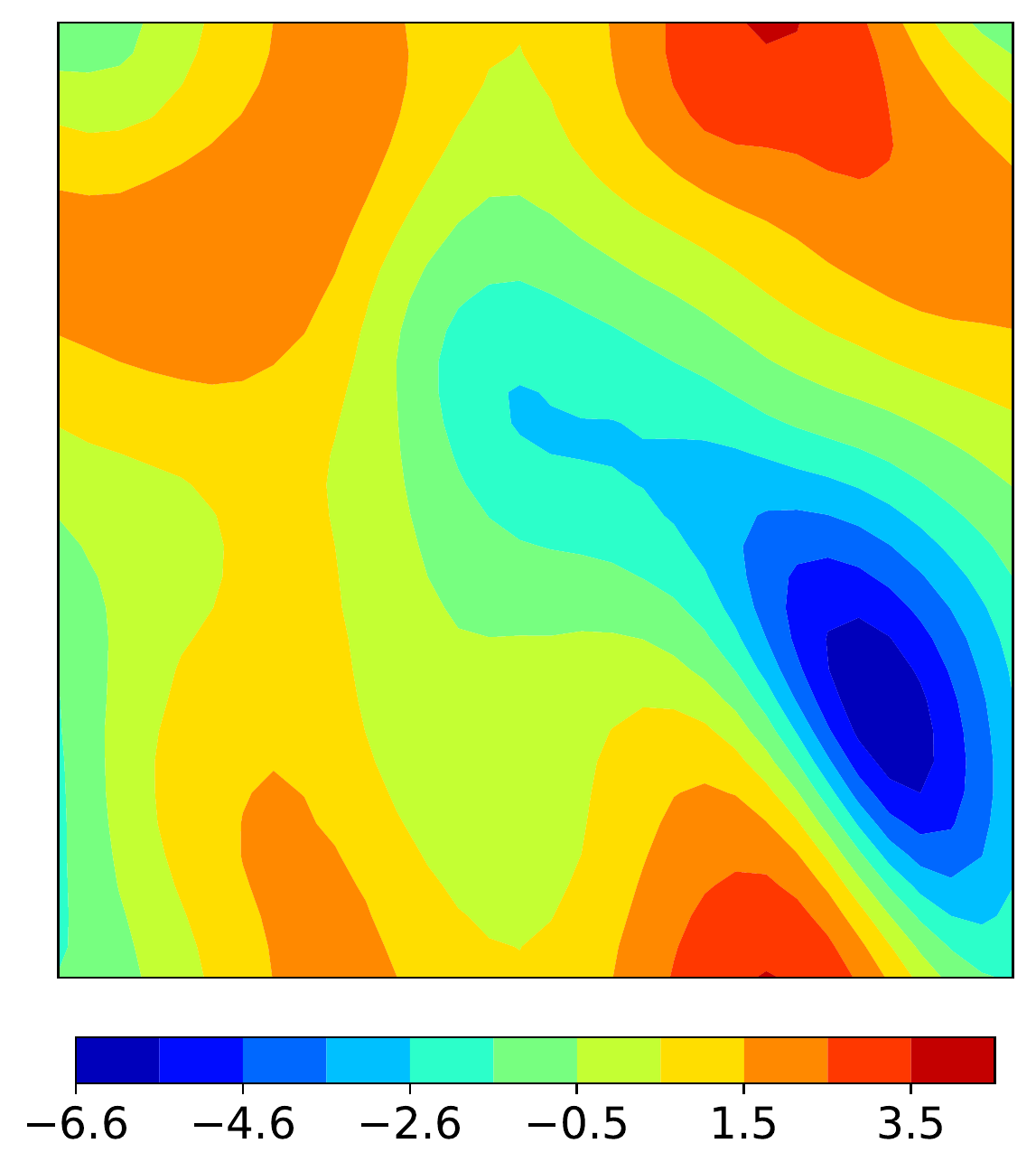}} &
        \raisebox{-0.5\height}{\includegraphics[width=.16 \textwidth]{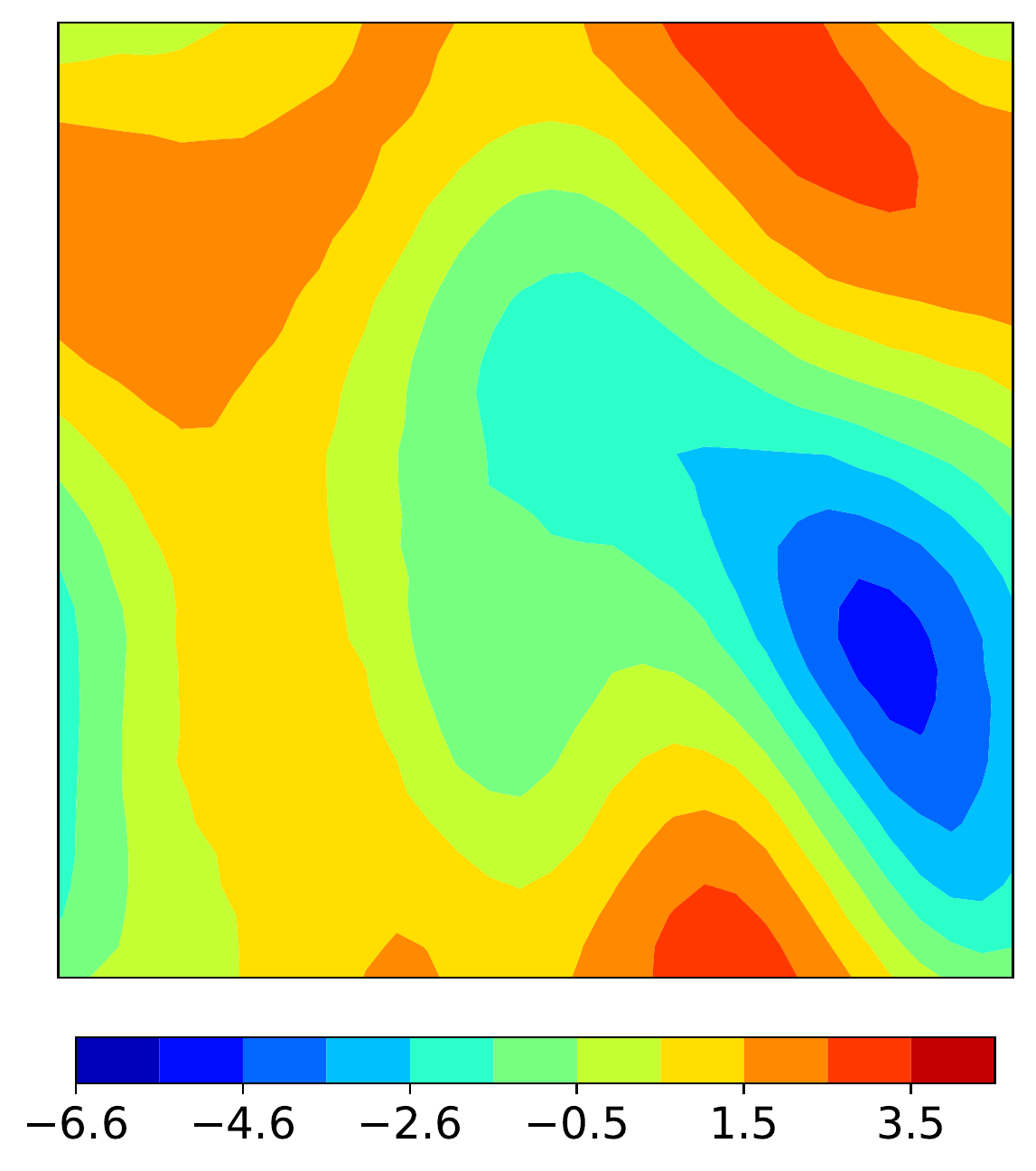}} &
        \raisebox{-0.5\height}{\includegraphics[width=.16 \textwidth]{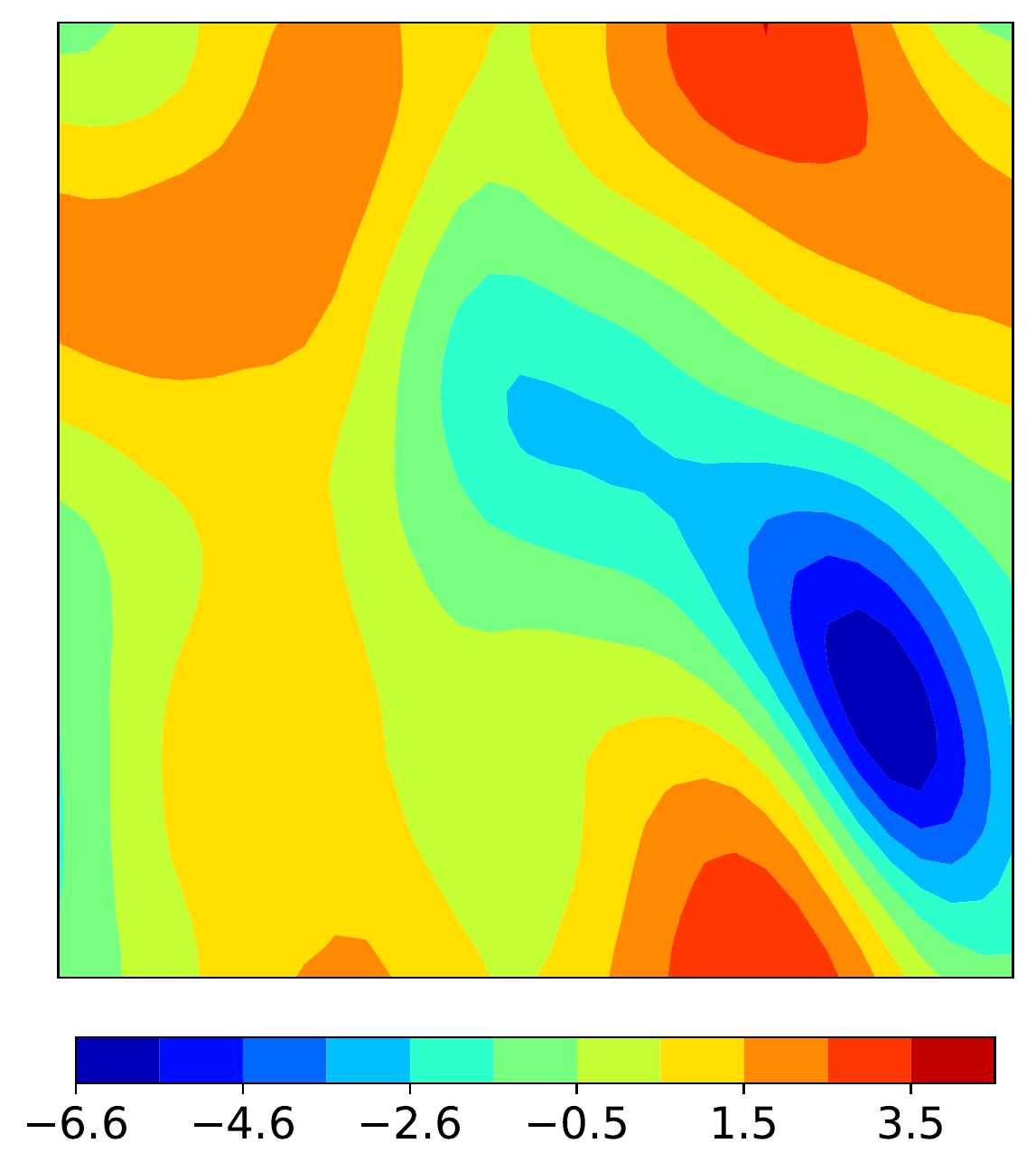}} & 
        \raisebox{-0.5\height}{\includegraphics[width=.16 \textwidth]{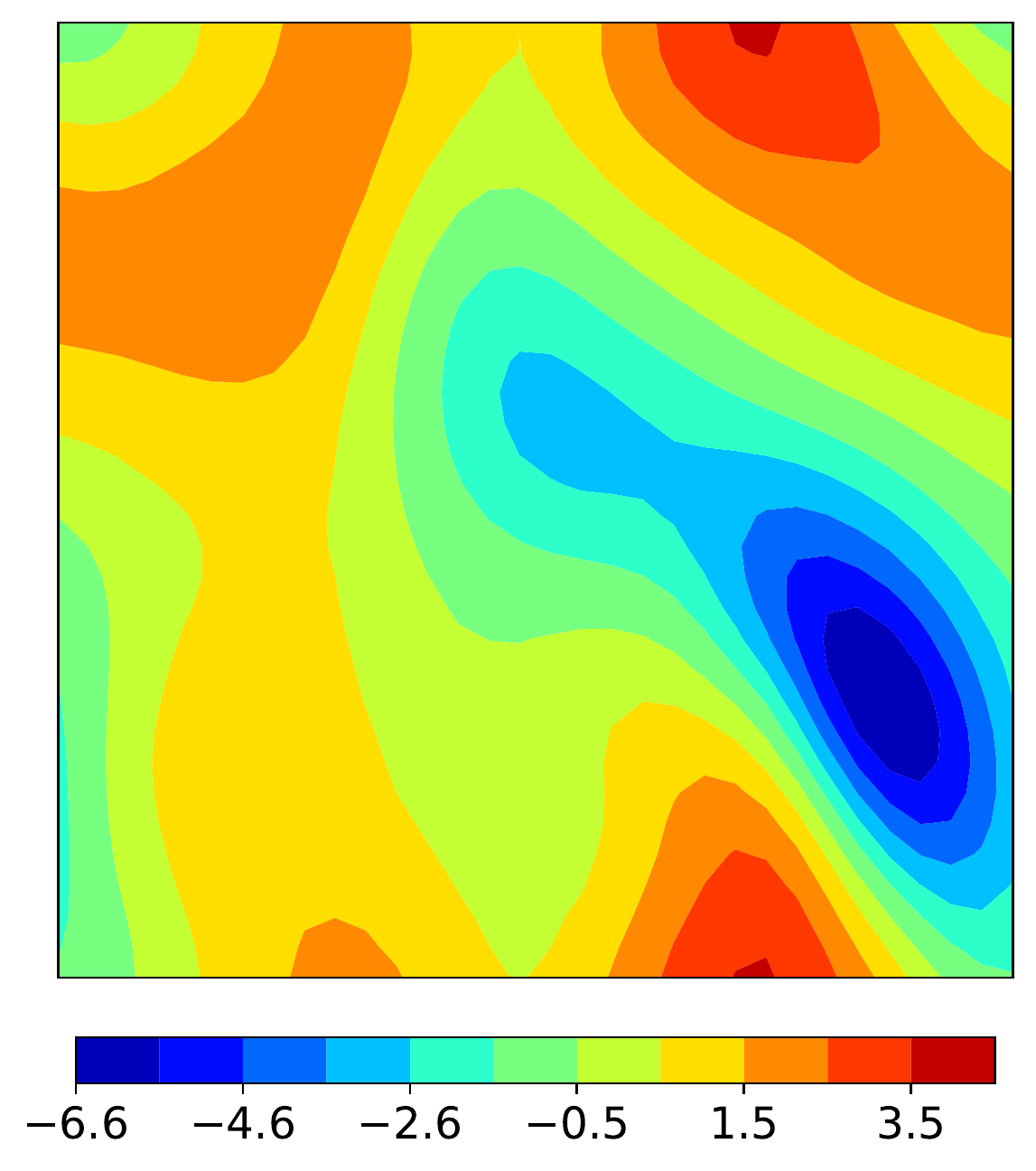}}
        \\
        \centering
        \rotatebox[origin=c]{90}{\small $n_t = 500$} &
        \raisebox{-0.5\height}{\includegraphics[width=.16 \textwidth]{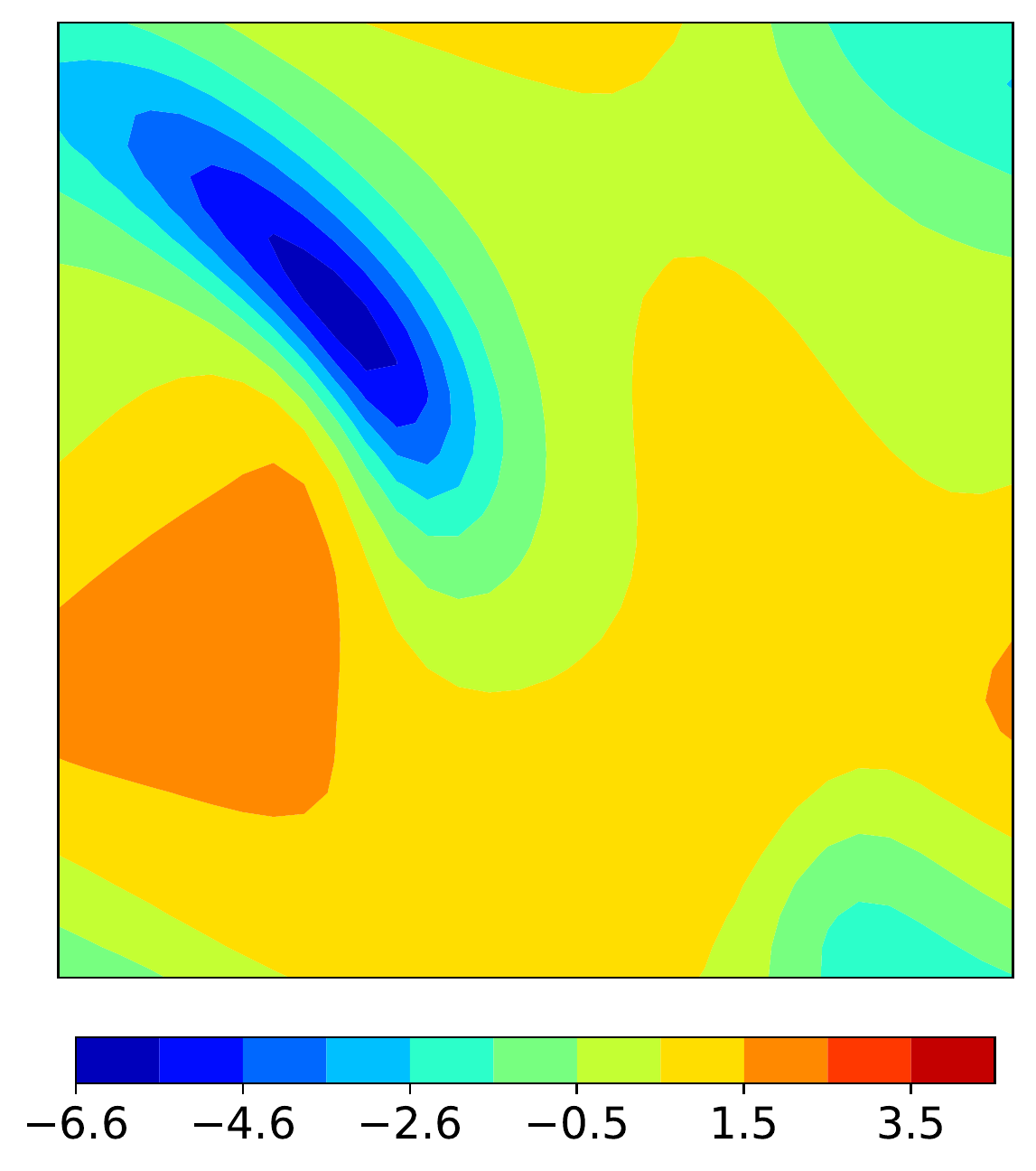}} &
        \raisebox{-0.5\height}{\includegraphics[width=.16 \textwidth]{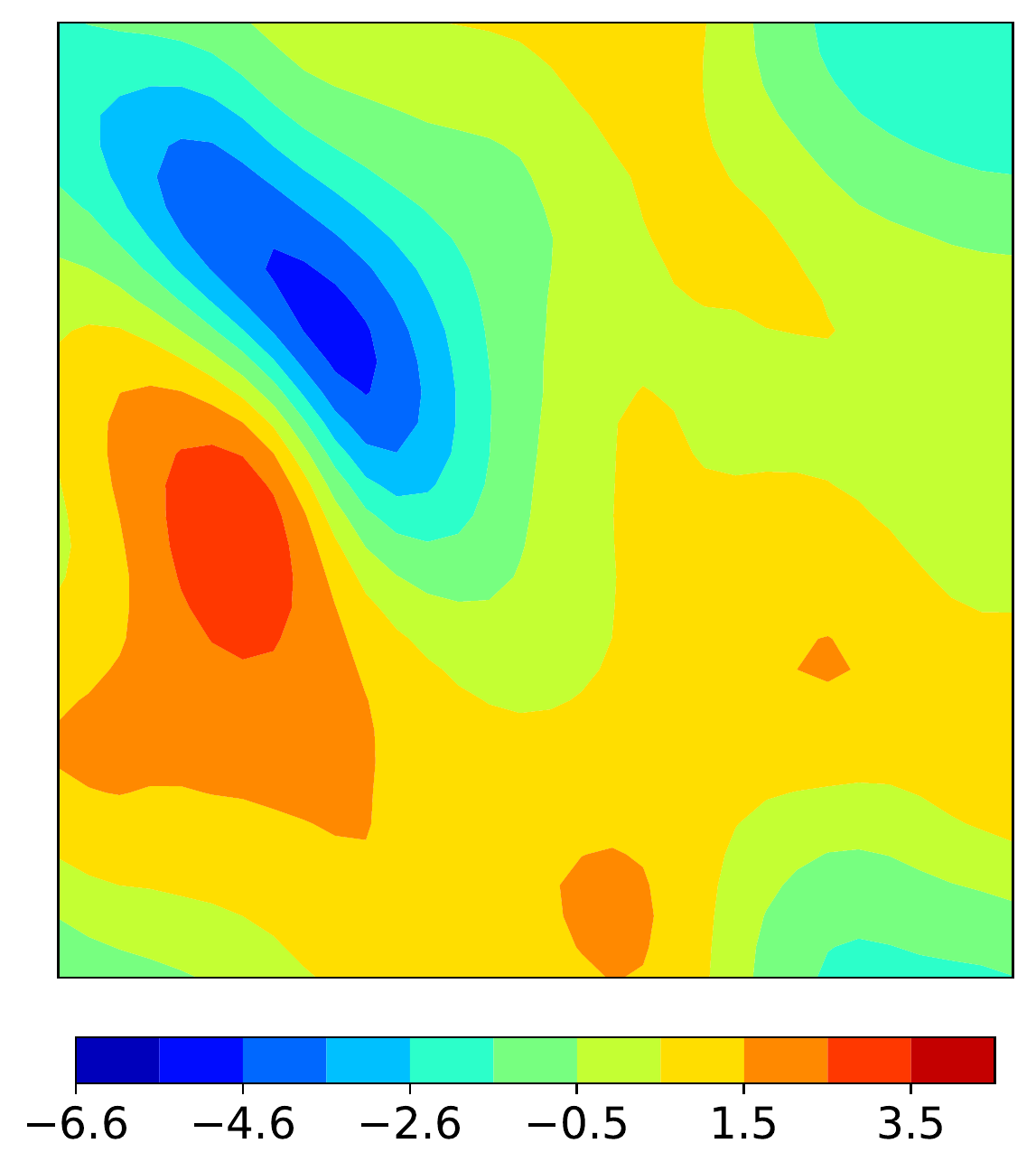}} &
        \raisebox{-0.5\height}{\includegraphics[width=.16 \textwidth]{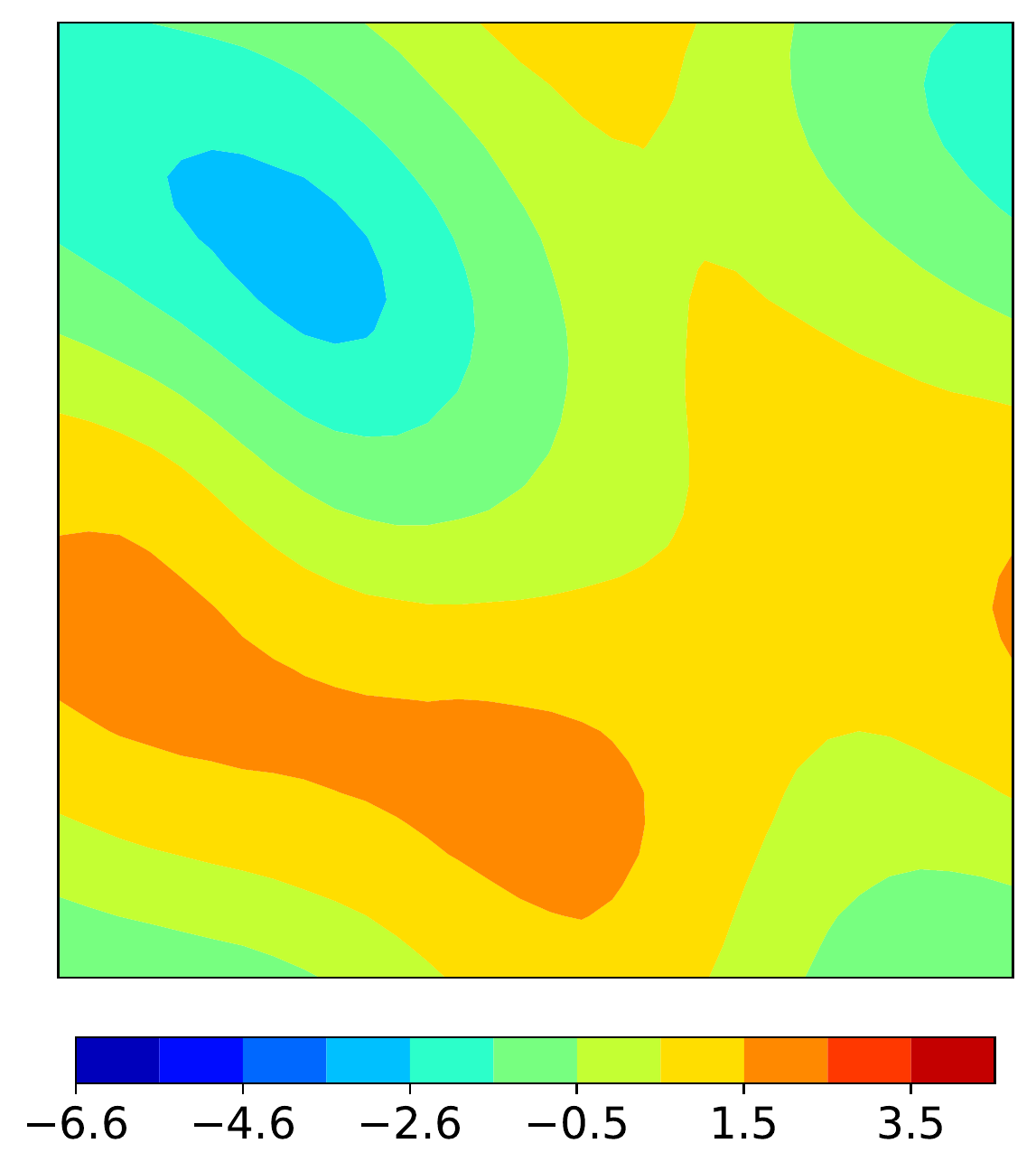}} &
        \raisebox{-0.5\height}{\includegraphics[width=.16 \textwidth]{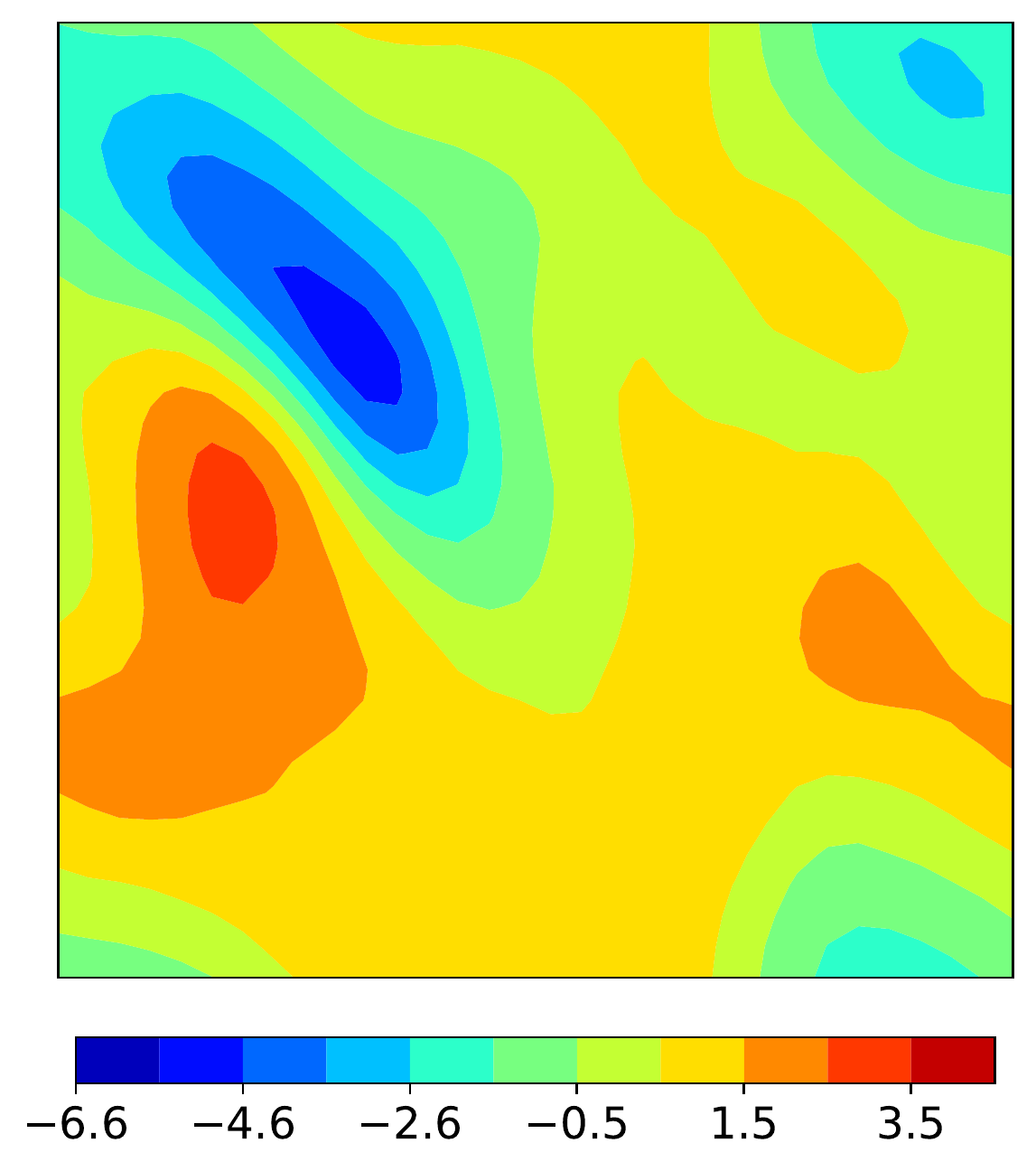}} & 
        \raisebox{-0.5\height}{\includegraphics[width=.16 \textwidth]{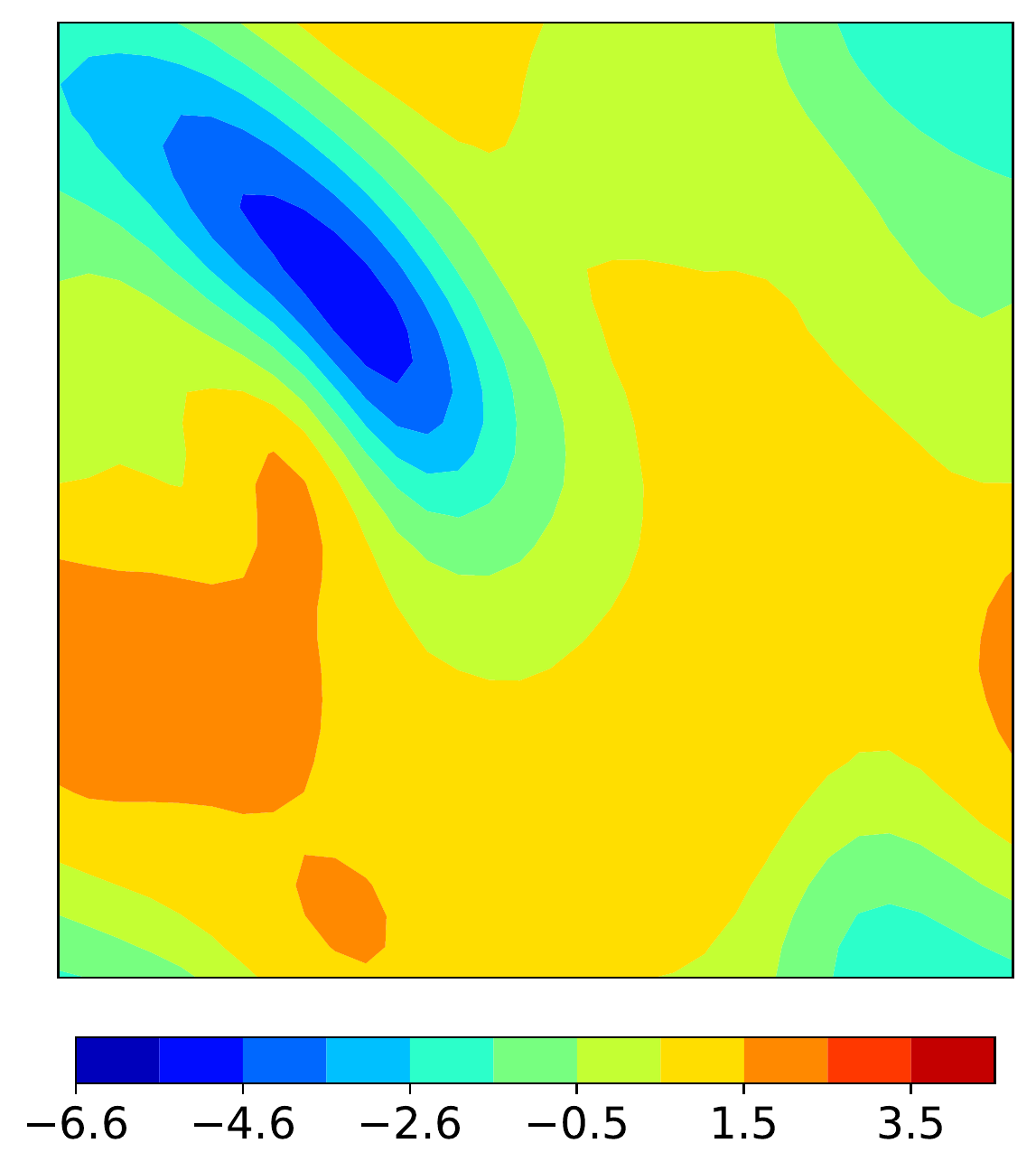}}
        \\
        \centering
        \rotatebox[origin=c]{90}{\small $n_t = 1500$} &
        \raisebox{-0.5\height}{\includegraphics[width=.16 \textwidth]{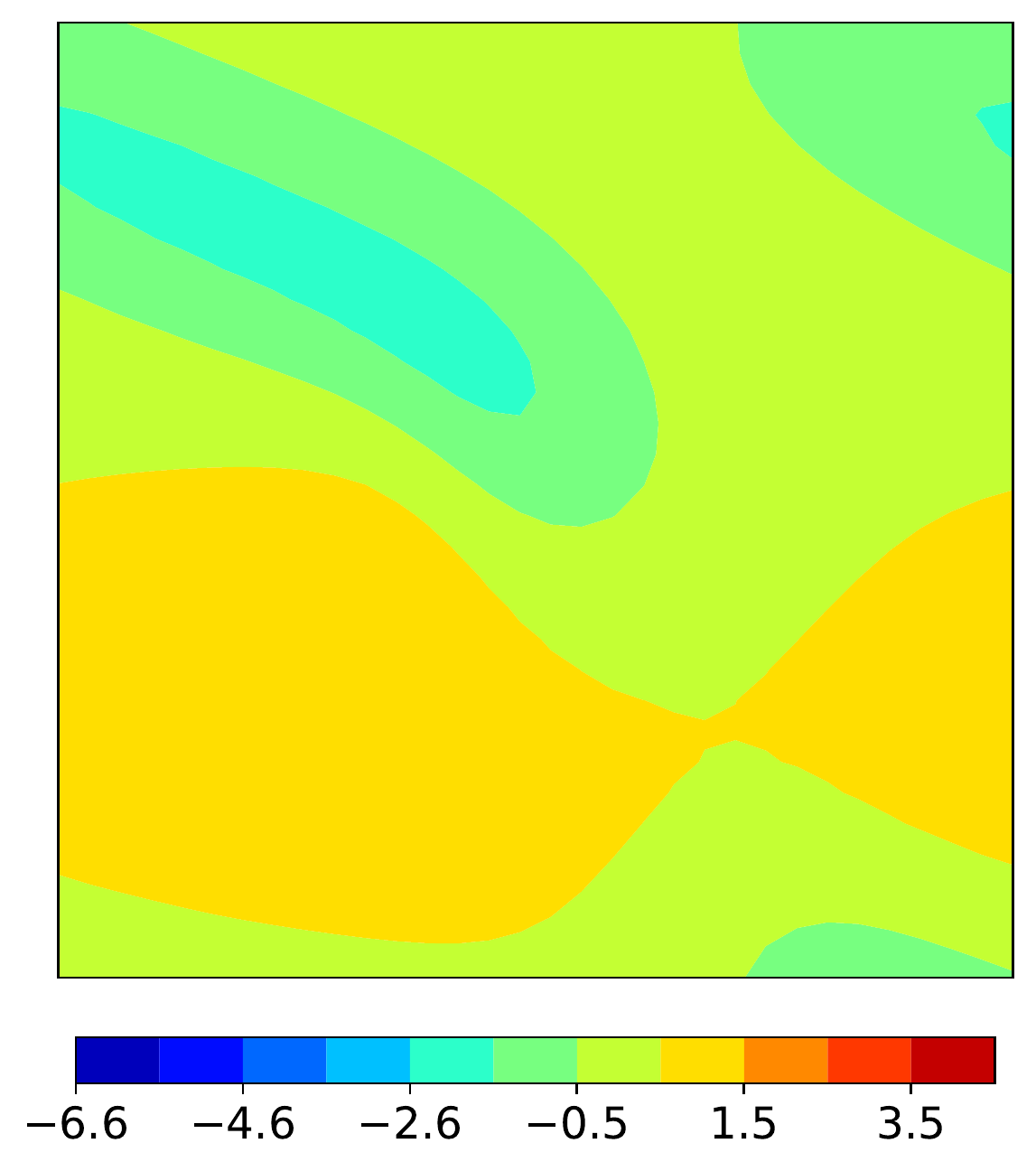}} &
        \raisebox{-0.5\height}{\includegraphics[width=.16 \textwidth]{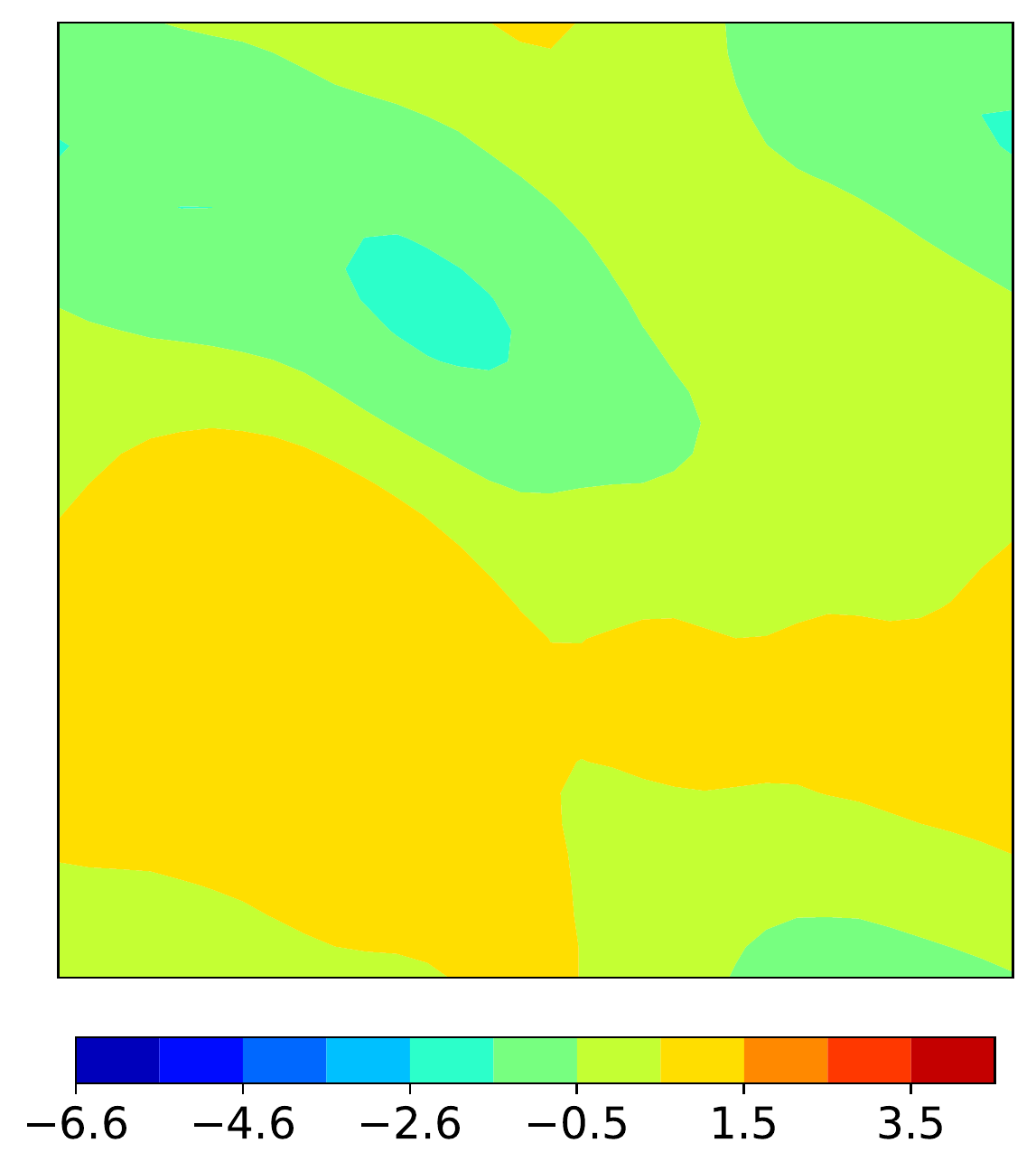}} &
        \raisebox{-0.5\height}{\includegraphics[width=.16 \textwidth]{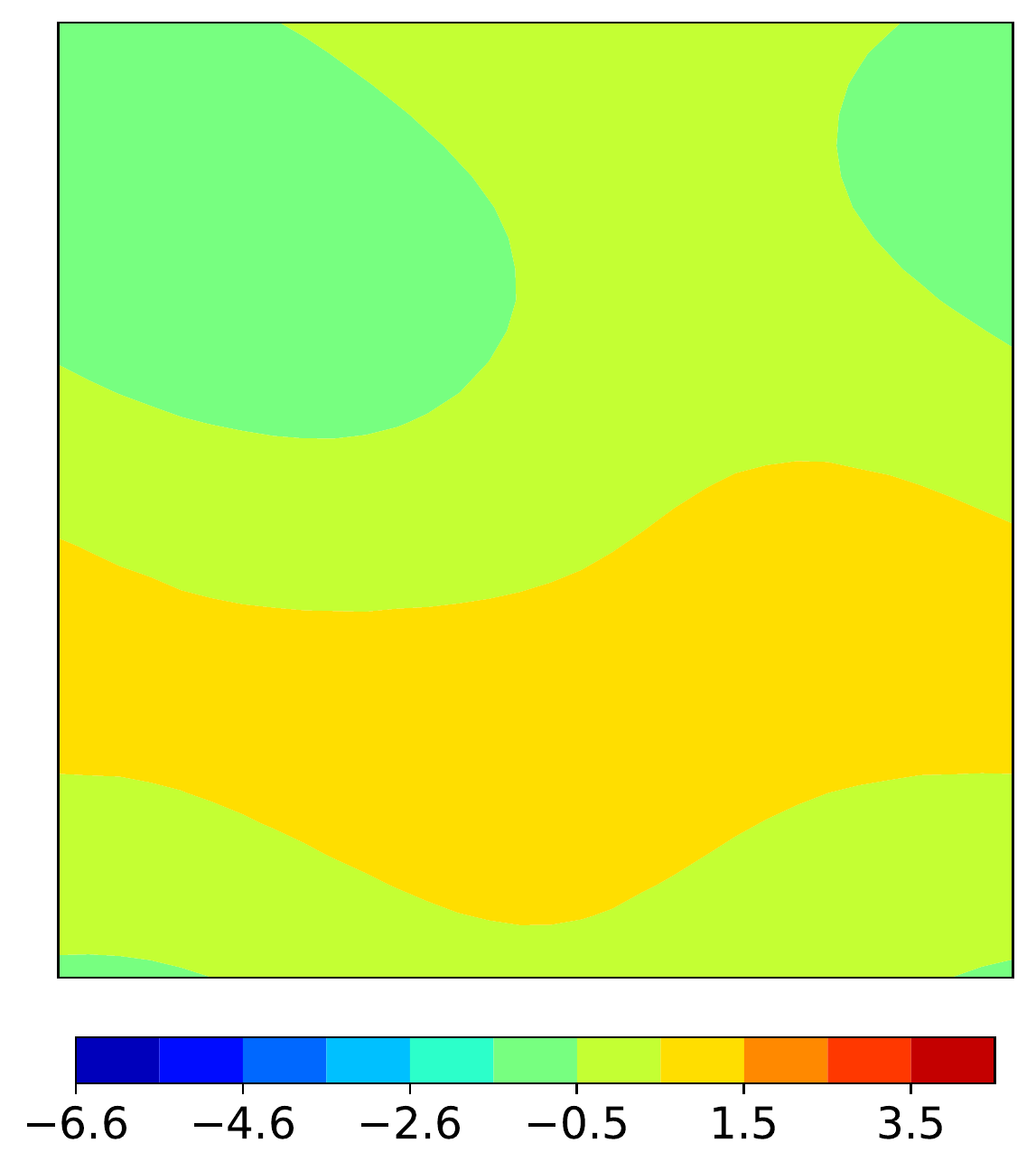}} &
        \raisebox{-0.5\height}{\includegraphics[width=.16 \textwidth]{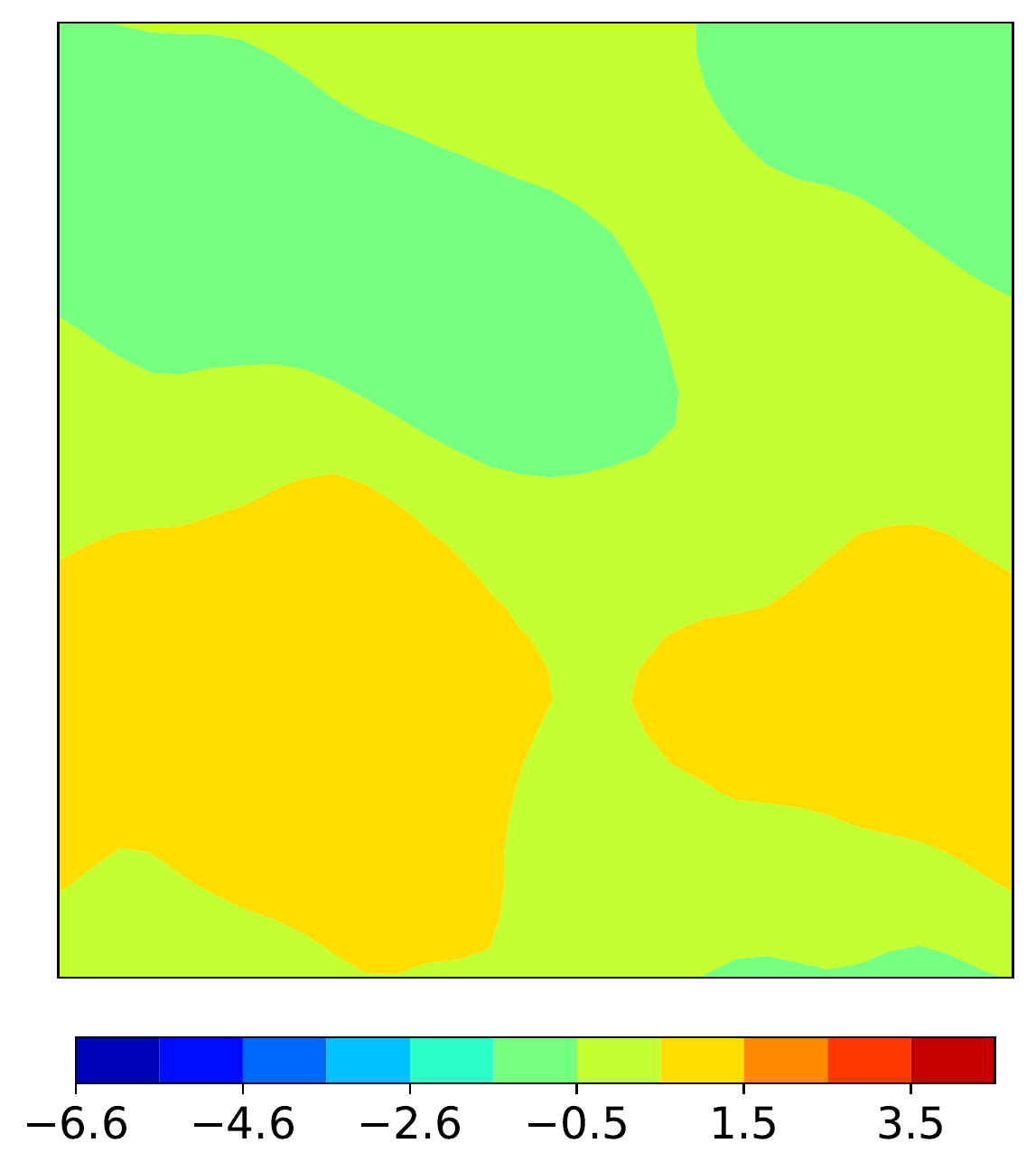}} & 
        \raisebox{-0.5\height}{\includegraphics[width=.16 \textwidth]{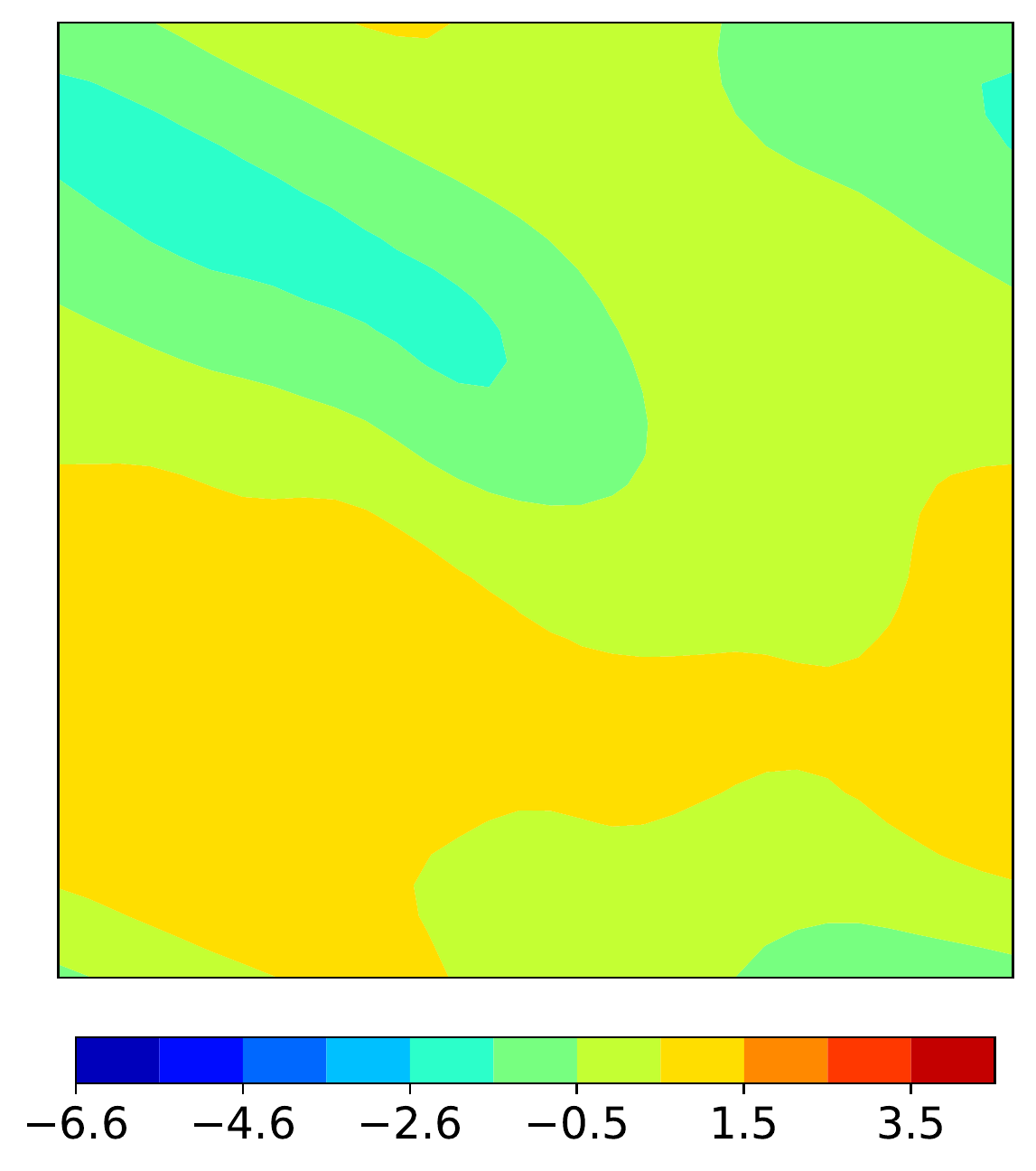}}
    \end{tabular*}
    \caption{\textbf{Burger's equations}. Contours of True and various learned tangent slopes. Contours are plotted at different time steps $n_t$ for four different combinations of regularization parameter $\alpha$ and noise level $\delta$. For all cases, we use  $600$ data samples, $S = 1$, and  $\dt = 10^{-3}$. \textit{First column}: True tangent slope ; \textit{Second column}: pure data-driven  tangent slope without data randomization $\LRp{0, 0\%}$; \textit{Third column}: pure data-driven  tangent slope with data randomization $\LRp{0, 2\%}$; \textit{Fourth column}: model-constrained tangent slope without data randomization $\LRp{10^5, 0\%}$; \textit{Fifth column}: model-constrained tangent slope with  data randomization $\LRp{10^5, 2\%}$.} 
    \figlab{2D_Bur_dU_samples}
\end{figure}

{\bf Predictive flexibility in time for \texttt{mcTangent} approach}
As discussed above, one appealing feature of tangent slope learning is that once trained it can be used to solve for approximate solutions with smaller or larger time stepsizes, despite the fact that it is trained based on a particular spatial discretization. On the contrary, direct learning is attached to a space-time discretization. \cref{fig:2D_Bur_smaller_dt_samples} shows the model-constrained tangent slope learning solutions and  contours of the corresponding learned tangent slope at various times  for the setting $\LRp{d600, 2\%, 1,1, 10^5}$. Here we use half of the training time stepsize $\dt'' = \frac{1}{2}\dt = 5 \times 10^{-4}$. It can be seen that these predictions are indistinguishable  from ones (the fifth row in \cref{fig:2D_Bur_samples} for prediction solutions and the fifth column in \cref{fig:2D_Bur_dU_samples} for predicted tangent slopes) obtained by using the training time stepsize $\dt = 10^{-3}$ with the same learned network.

\begin{figure}[htb!]
    \centering
    \begin{tabular*}{\textwidth}{c c c c c}
        \centering
         &
        \raisebox{-0.5\height}{\small $t = .1$} &
        \raisebox{-0.5\height}{\small $t = 0.5$} &
        \raisebox{-0.5\height}{\small $t = 0.5005$} & 
        \raisebox{-0.5\height}{\small $t = 1.5$} 
        \\
        \centering
        \rotatebox[origin=c]{90}{\small True $\ub$} &
        \raisebox{-0.5\height}{\includegraphics[width=.20 \textwidth]{figures/2D_Bur/Bur_FD_127x127_step_t_100.pdf}} &
        \raisebox{-0.5\height}{\includegraphics[width=.20 \textwidth]{figures/2D_Bur/Bur_FD_127x127_step_t_500.pdf}} &
        \raisebox{-0.5\height}{\includegraphics[width=.20 \textwidth]{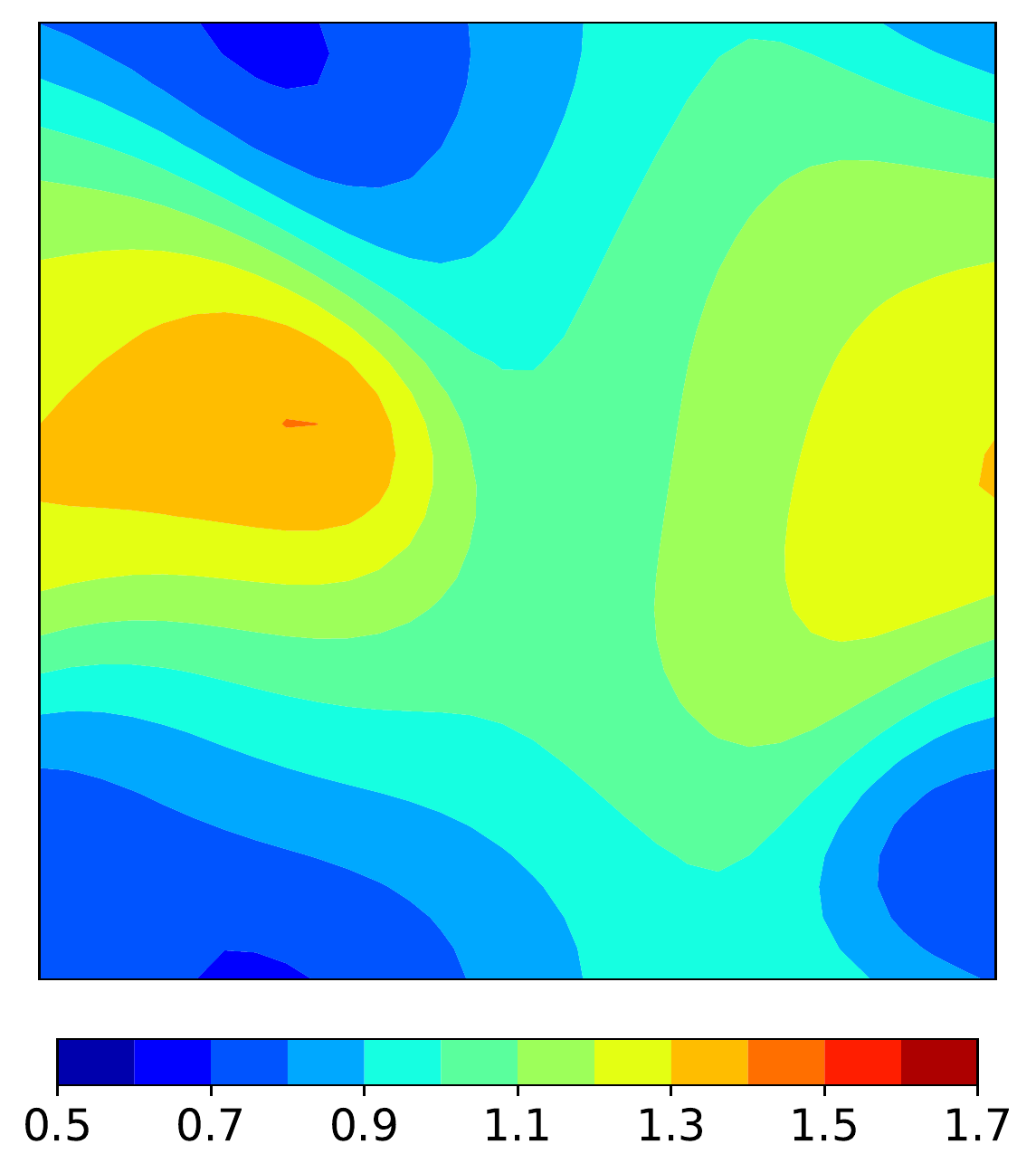}} & 
        \raisebox{-0.5\height}{\includegraphics[width=.20 \textwidth]{figures/2D_Bur/Bur_FD_127x127_step_t_1500.pdf}} 
        \\
        \centering
        \rotatebox[origin=c]{90}{\small True $\F(\ub)$} &
        \raisebox{-0.5\height}{\includegraphics[width=.20 \textwidth]{figures/2D_Bur/Bur_dU_FD_32x32_step_t_100.pdf}} &
        \raisebox{-0.5\height}{\includegraphics[width=.20 \textwidth]{figures/2D_Bur/Bur_dU_FD_32x32_step_t_500.pdf}} &
        \raisebox{-0.5\height}{\includegraphics[width=.20 \textwidth]{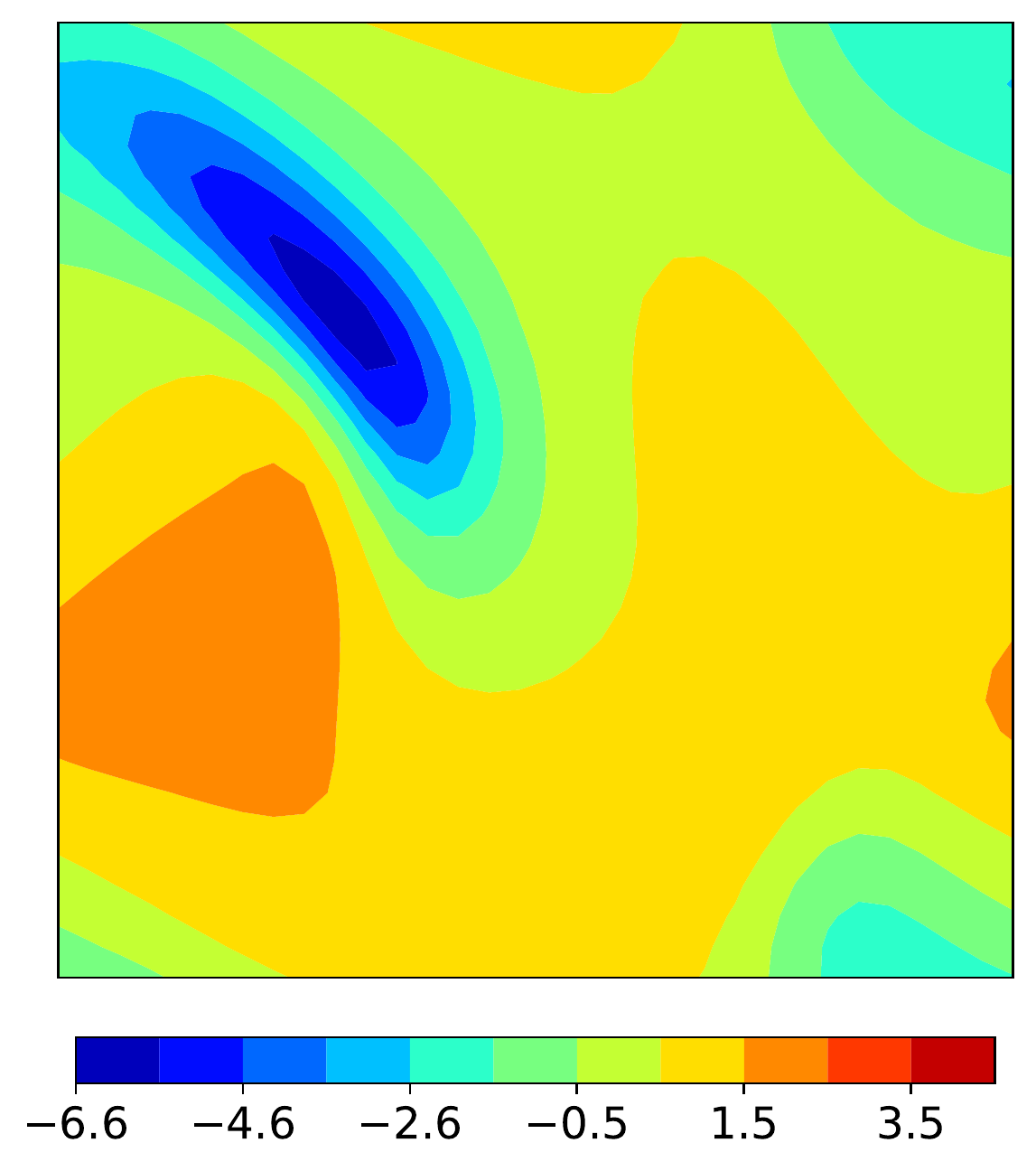}} & 
        \raisebox{-0.5\height}{\includegraphics[width=.20 \textwidth]{figures/2D_Bur/Bur_dU_FD_32x32_step_t_1500.pdf}} 
        \\
        \centering
        \rotatebox[origin=c]{90}{\small Predicted $\ub$} &
        \raisebox{-0.5\height}{\includegraphics[width=.20 \textwidth]{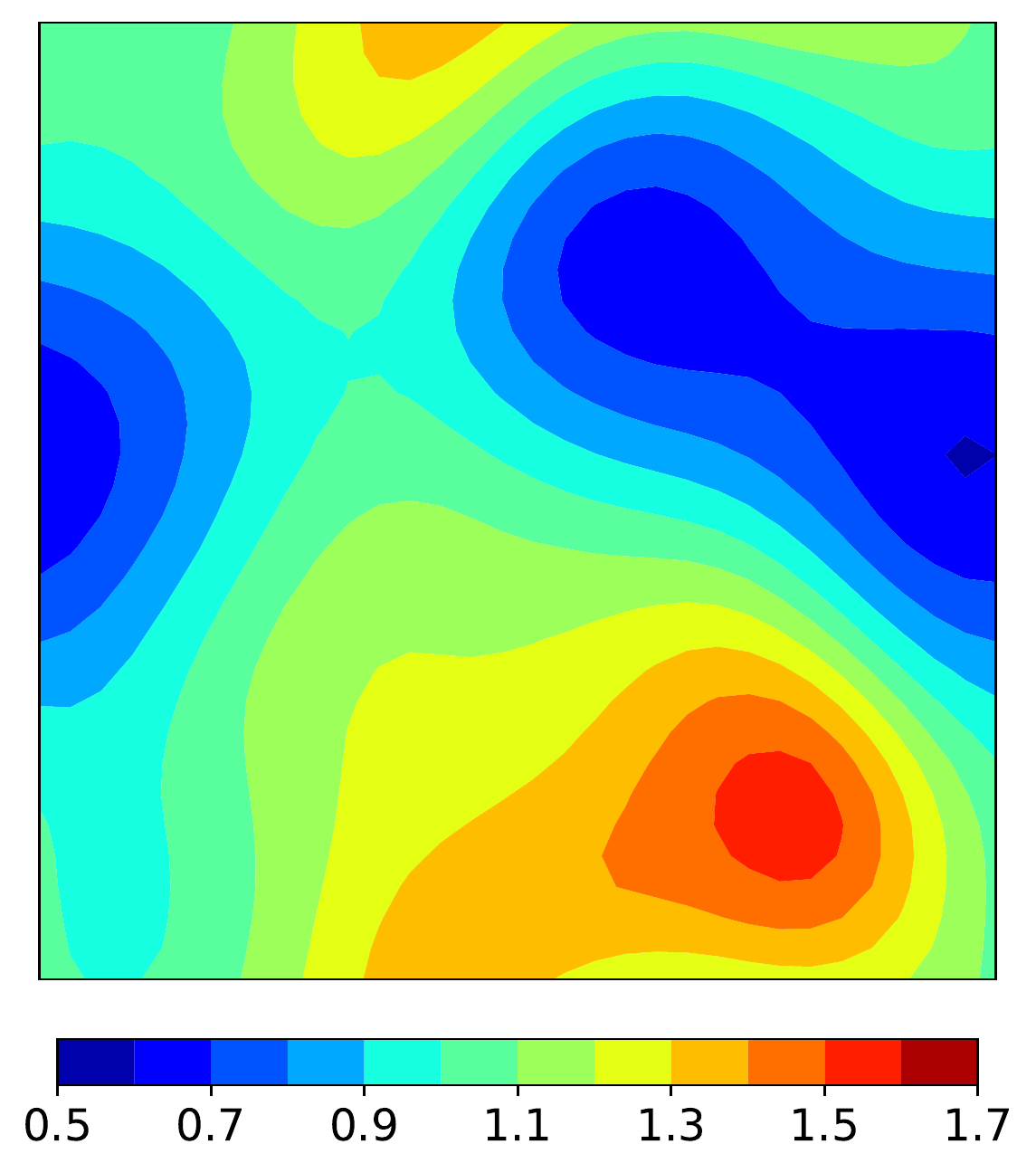}} &
        \raisebox{-0.5\height}{\includegraphics[width=.20 \textwidth]{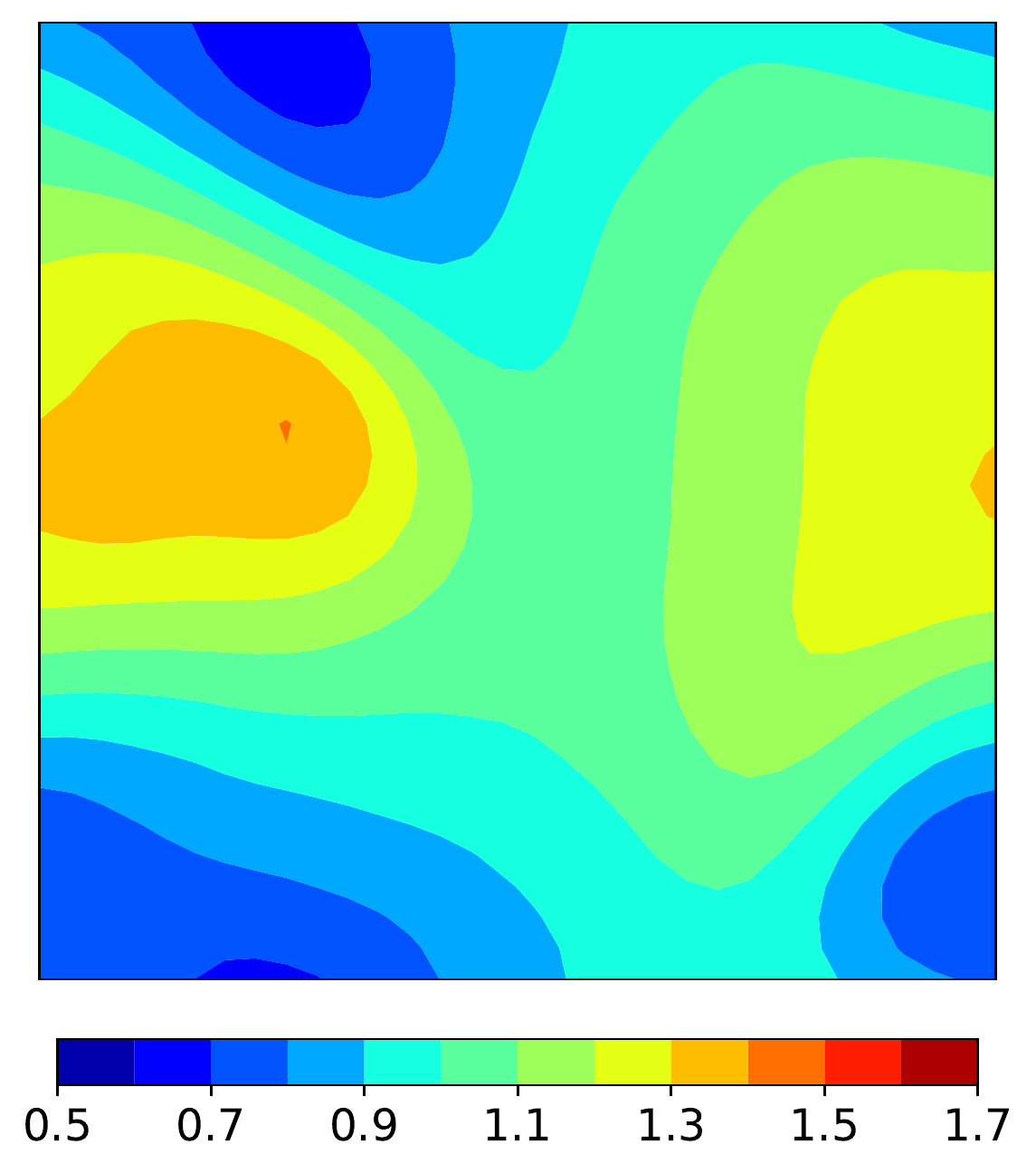}} &
        \raisebox{-0.5\height}{\includegraphics[width=.20 \textwidth]{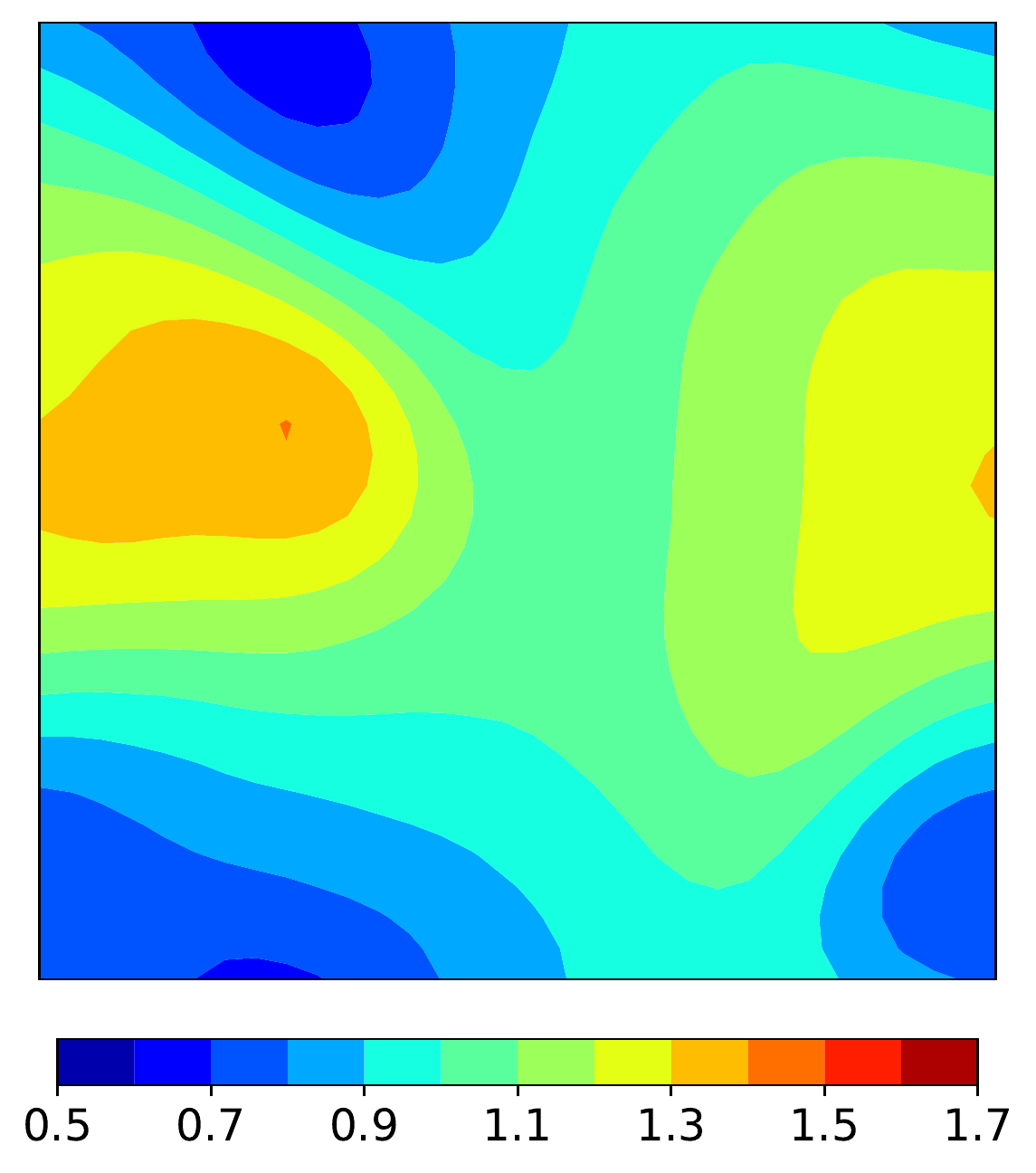}} & 
        \raisebox{-0.5\height}{\includegraphics[width=.20 \textwidth]{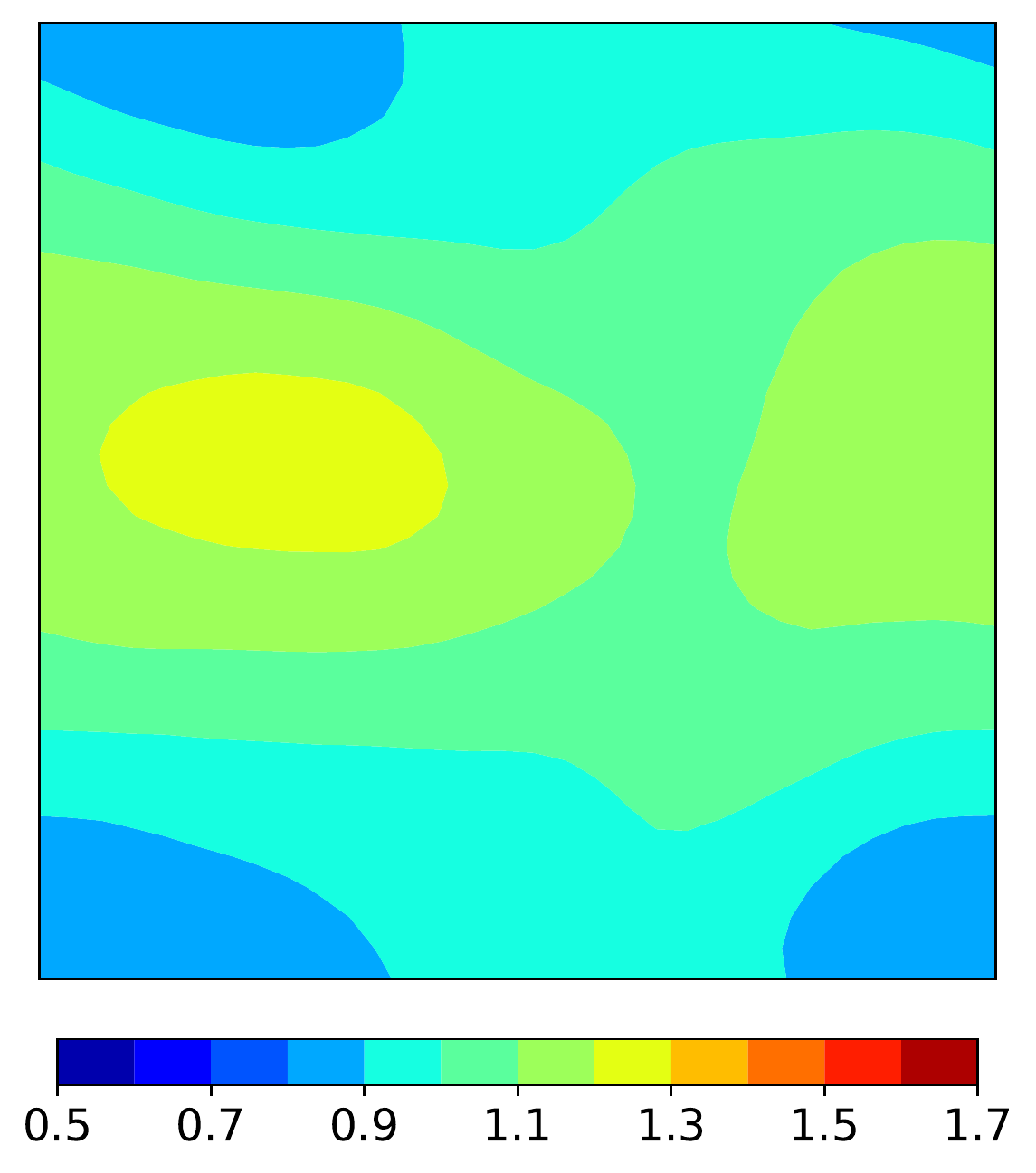}} 
        \\
        \centering
        \rotatebox[origin=c]{90}{\small Predicted $\Psi(\ub)$} &
        \raisebox{-0.5\height}{\includegraphics[width=.20 \textwidth]{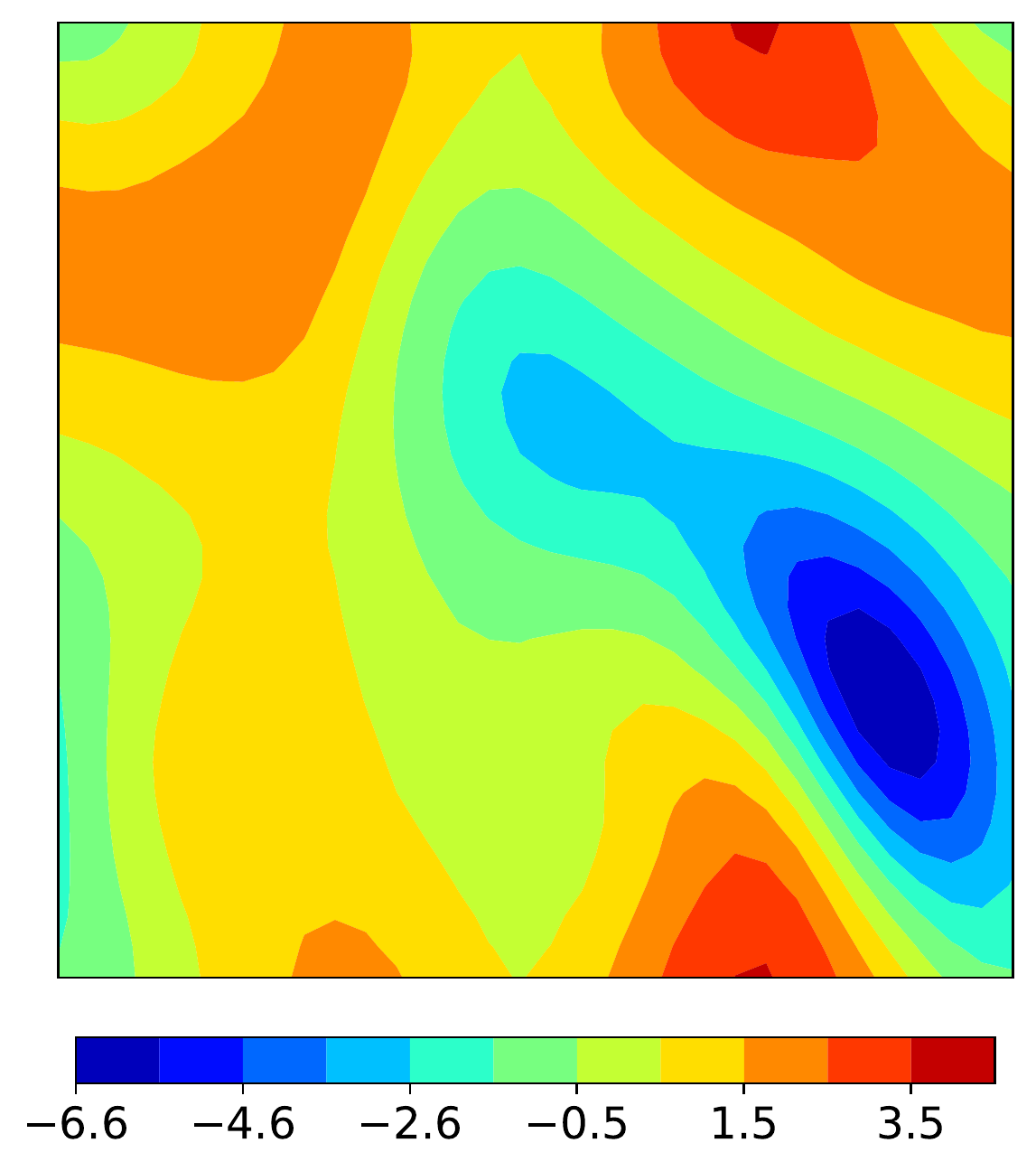}} &
        \raisebox{-0.5\height}{\includegraphics[width=.20 \textwidth]{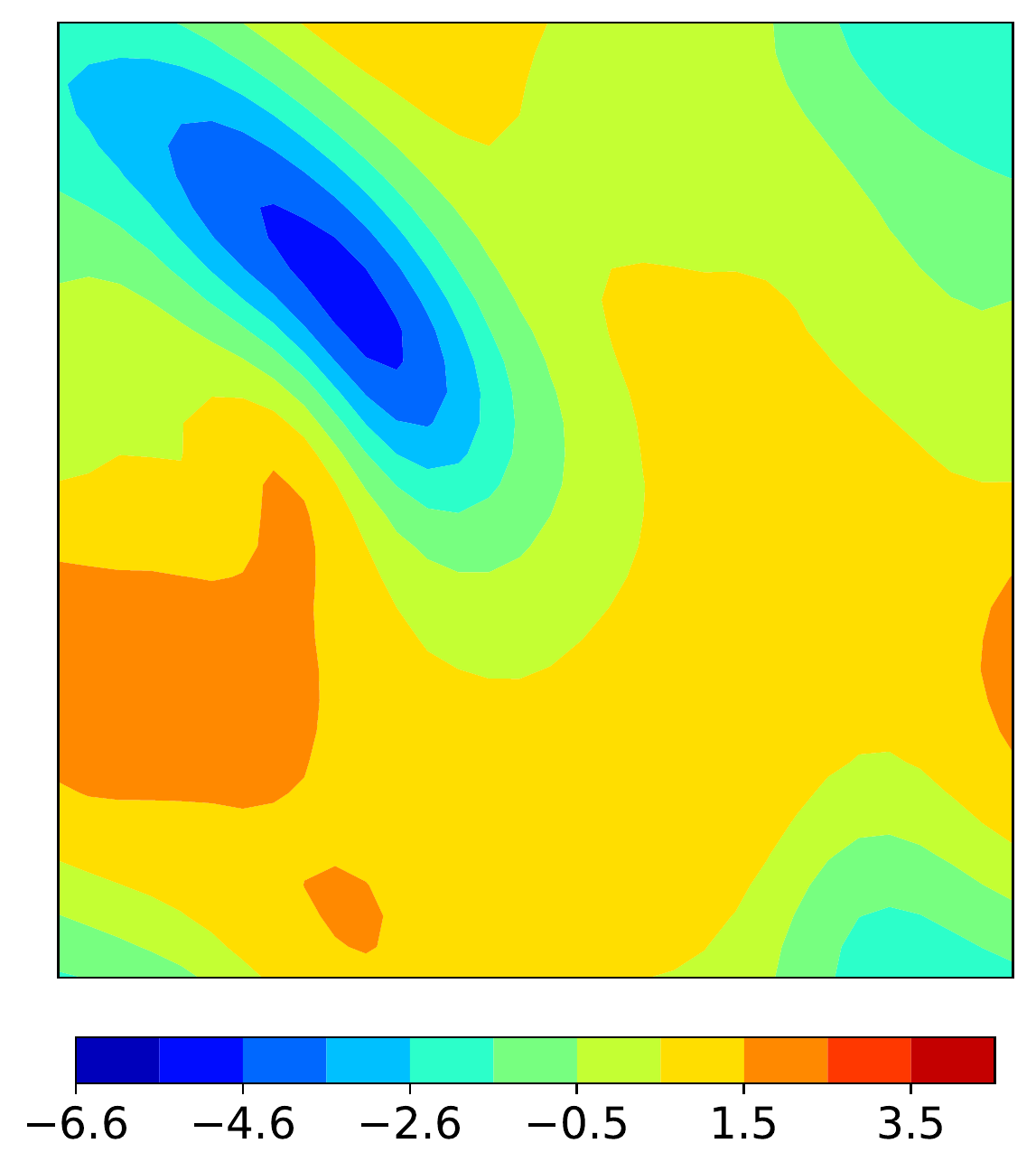}} &
        \raisebox{-0.5\height}{\includegraphics[width=.20 \textwidth]{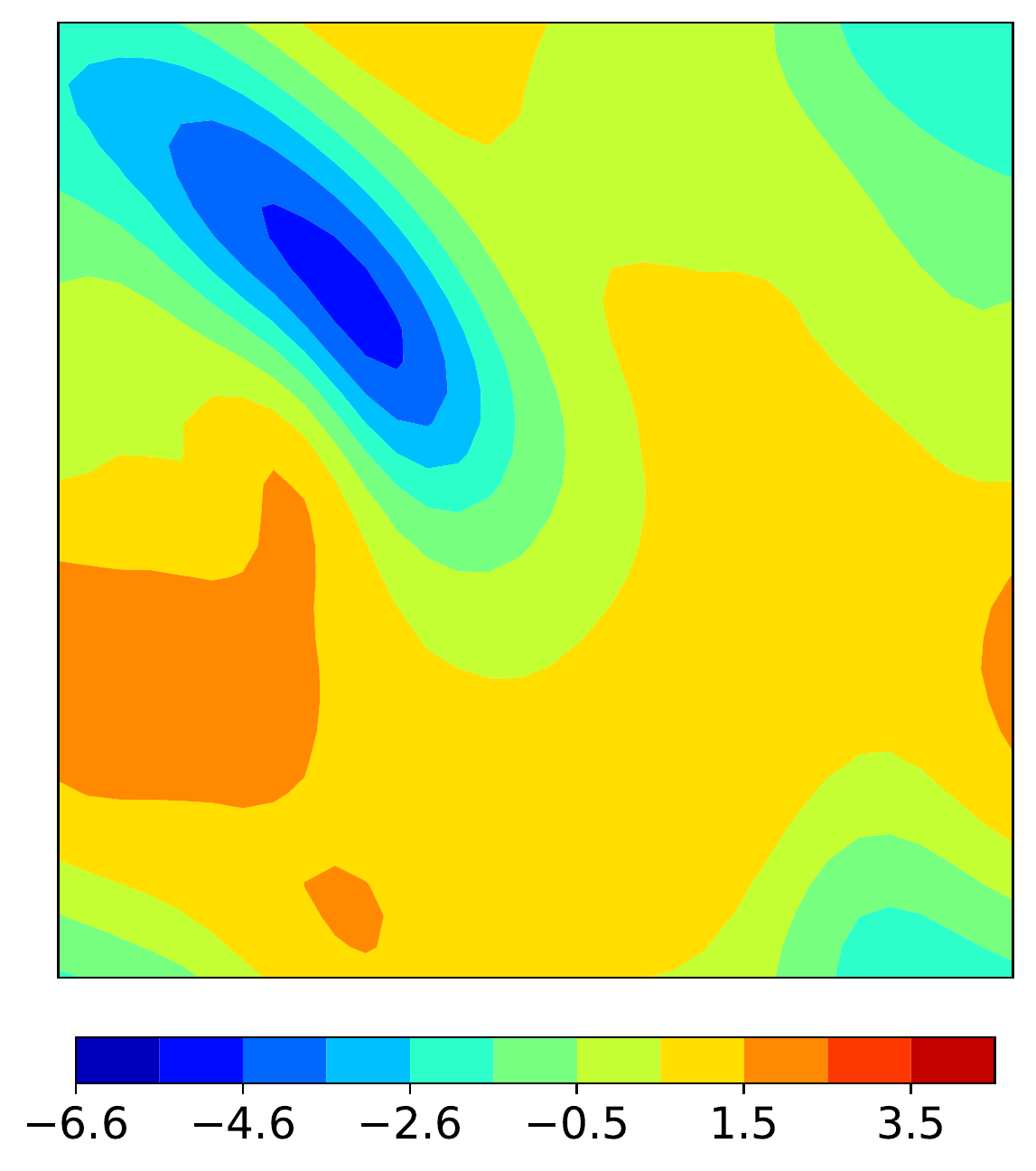}} & 
        \raisebox{-0.5\height}{\includegraphics[width=.20 \textwidth]{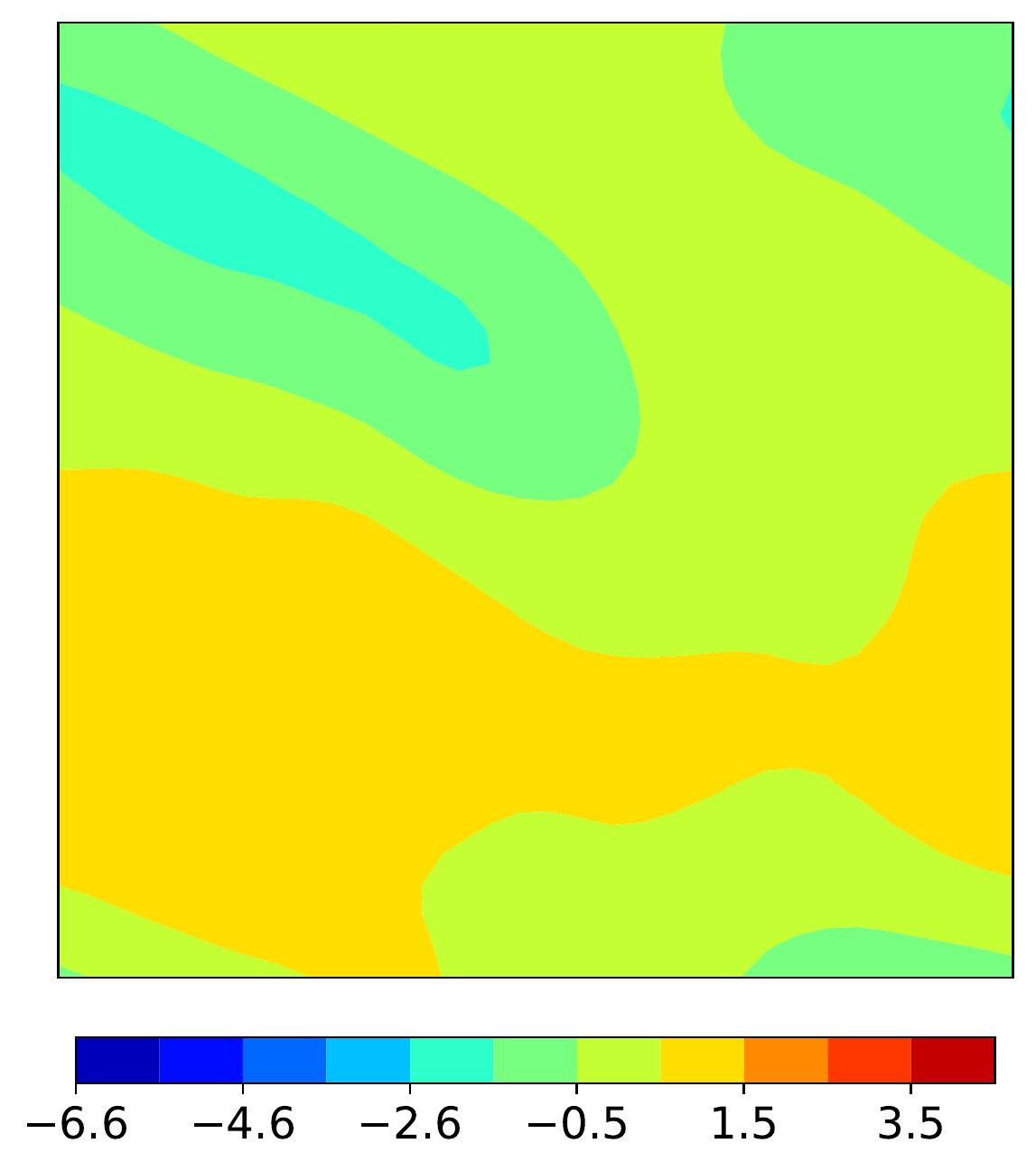}}  
    \end{tabular*}
    \caption{\textbf{Burger's equations}. Predicted solutions and tangent slope using by \texttt{mcTangent} neural networks with $\LRp{d600, 2\%, 1,1, 10^5}$, and time step $\dt'' = \frac{1}{2}\dt = 5 \times 10^{-4}$. \textit{First row}: True high-resolution solutions; \textit{Second row}: contours of True tangent slope, $\F(\ub)$; \textit{Third row}: Predicted \texttt{mcTangent}  solutions, $\ub$; \textit{Fourth row}: contours of \texttt{mcTangent} tangent slope, $\Psi(\ub)$.} 
    \figlab{2D_Bur_smaller_dt_samples}
\end{figure}

{\bf Implicit time integration with learned network.}
Another appealing feature of tangent slope learning is that once trained it can be deployed with any time discretization schemes. 
We use the learned network from the setting $\LRp{d600, 2\%, 1, 1, 10^5}$ together with the backward Euler method with a larger time stepsize $\dt' = 12.5 \dt = 1.25 \times 10^{-2}$, where $\dt = 10^{-3}$ is the training stepsize. Shown in \cref{fig:2D_Bur_implicit_samples} are predicted solutions at $t=\LRc{0, 0.1, 0.5, 1.5}$ corresponding to $0, 100, 500, 1500$th time steps. We observe that solutions using the forward Euler scheme, regardless of using the true tangent slope or learned one (second and third rows, respectively),  are unstable as the time stepsize $\dt'$ is too big for stability. On the contrary, using the backward Euler scheme, \texttt{mcTangent} solutions are comparable to the true counterparts (fourth and fifth rows, respectively). 
Clearly, due to large time stepsize, both are more diffusive compared to the true solutions with small time stepsize $\dt$ in the first row.
%

\begin{figure}[htb!]
    \centering
    \begin{tabular*}{\textwidth}{c c c c c}
        \centering
         &
        \raisebox{-0.5\height}{\small $t = 0$} &
        \raisebox{-0.5\height}{\small $t = 0.1$} &
        \raisebox{-0.5\height}{\small $t = 0.5$} & 
        \raisebox{-0.5\height}{\small $t = 1.5$} 
        \\
        \centering
        \rotatebox[origin=c]{90}{\small True solution $\ub$} &
        \raisebox{-0.5\height}{\includegraphics[width=.20 \textwidth]{figures/2D_Bur/Bur_FD_127x127_step_t_0.pdf}} &
        \raisebox{-0.5\height}{\includegraphics[width=.20 \textwidth]{figures/2D_Bur/Bur_FD_127x127_step_t_100.pdf}} &
        \raisebox{-0.5\height}{\includegraphics[width=.20 \textwidth]{figures/2D_Bur/Bur_FD_127x127_step_t_500.pdf}} & 
        \raisebox{-0.5\height}{\includegraphics[width=.20 \textwidth]{figures/2D_Bur/Bur_FD_127x127_step_t_1500.pdf}} 
        \\
        \centering
        \rotatebox[origin=c]{90}{FE - True $\F$} &
        \raisebox{-0.5\height}{\includegraphics[width=.20 \textwidth]{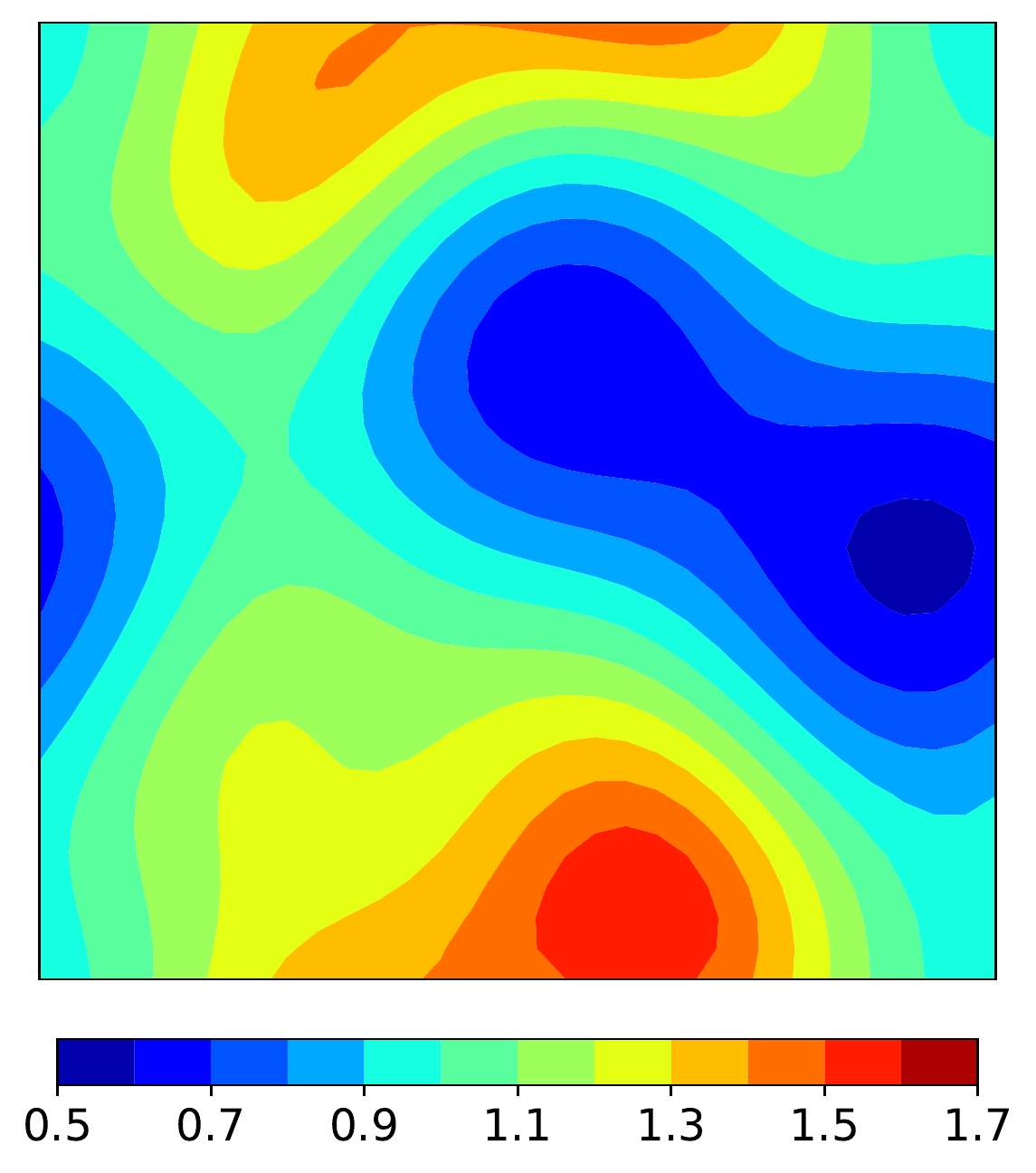}} &
        \raisebox{-0.5\height}{\includegraphics[width=.20 \textwidth]{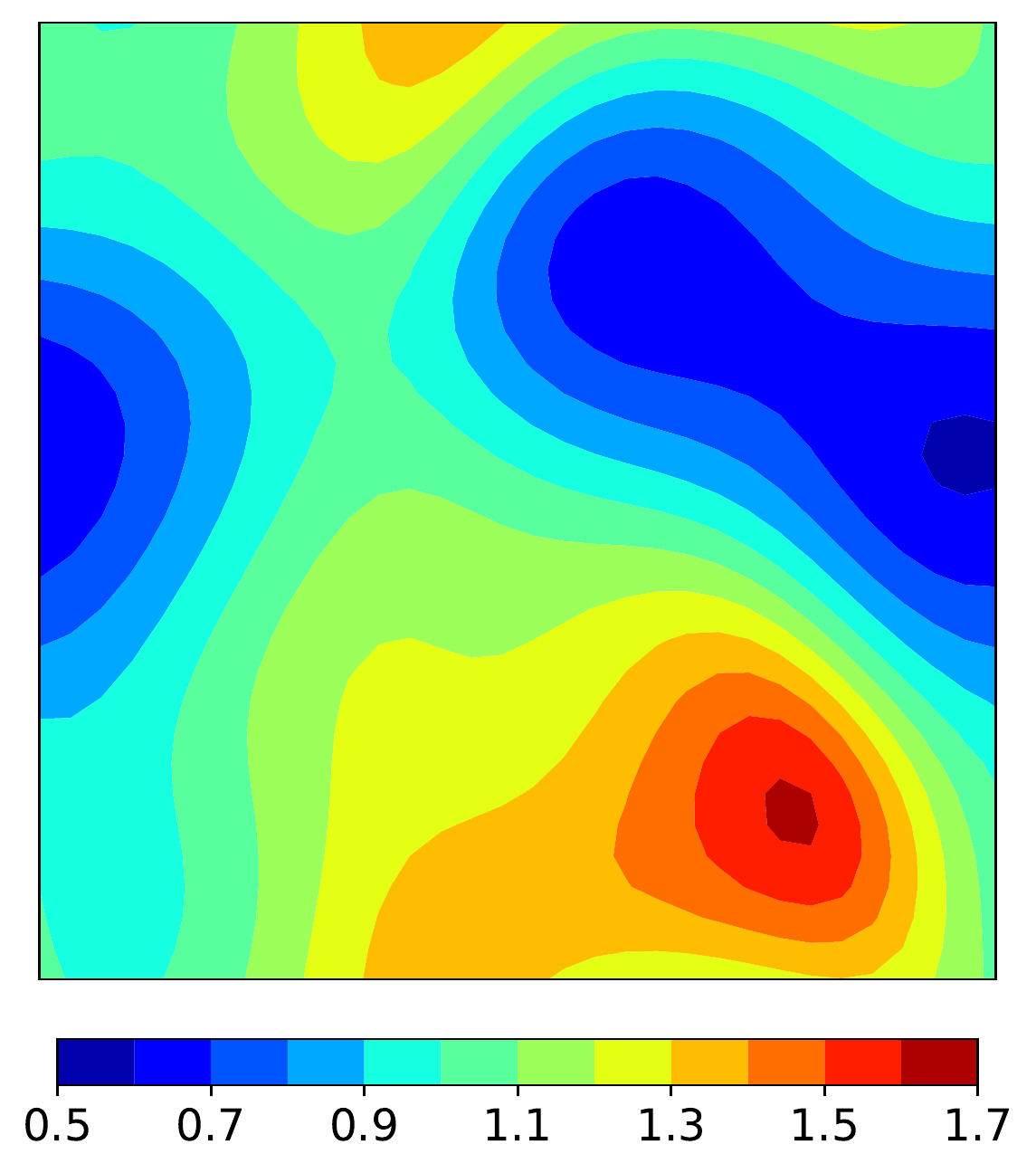}} &
        \raisebox{-0.5\height}{\includegraphics[width=.20 \textwidth]{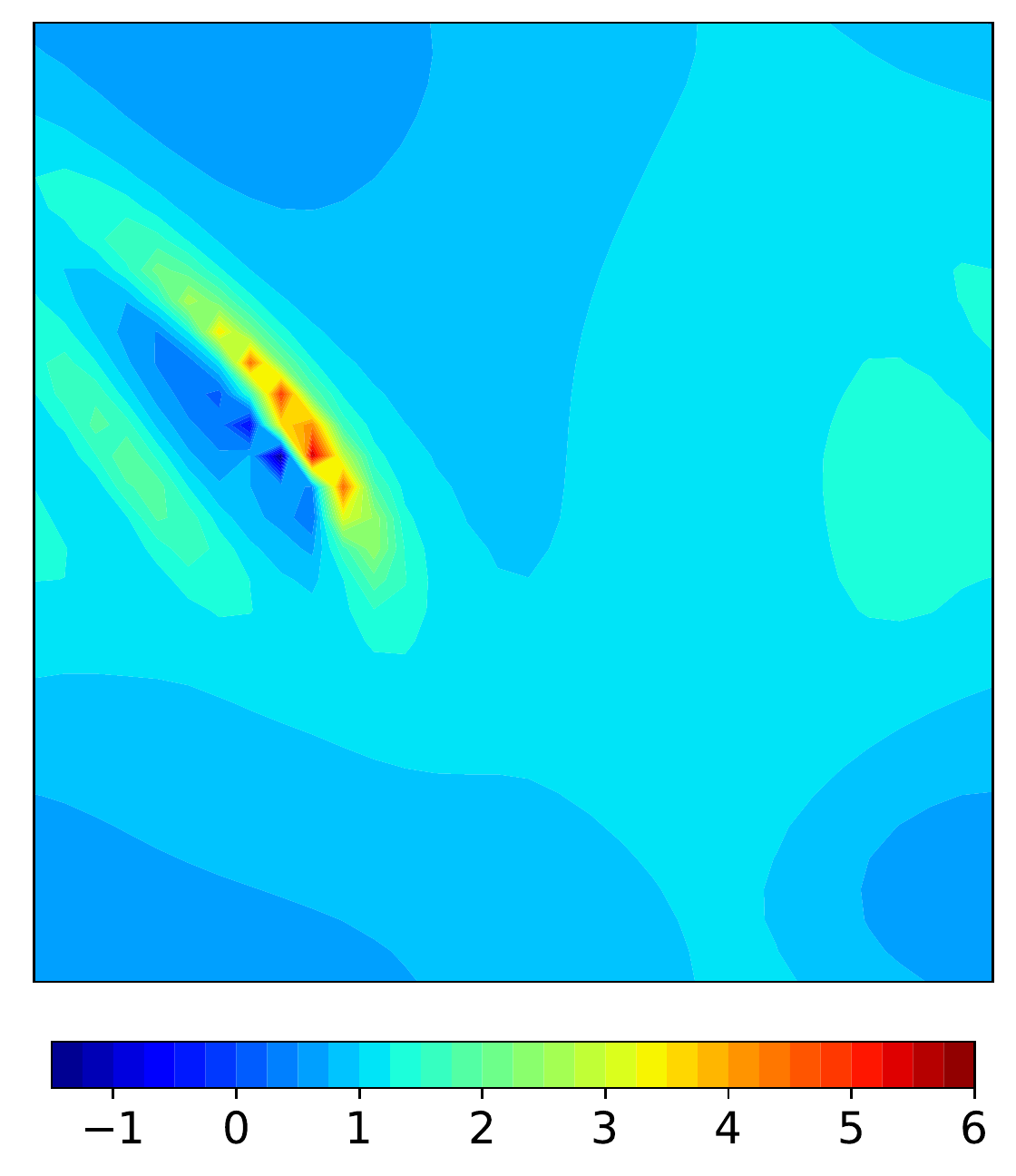}} & 
        \raisebox{-0.5\height}{\includegraphics[width=.20 \textwidth]{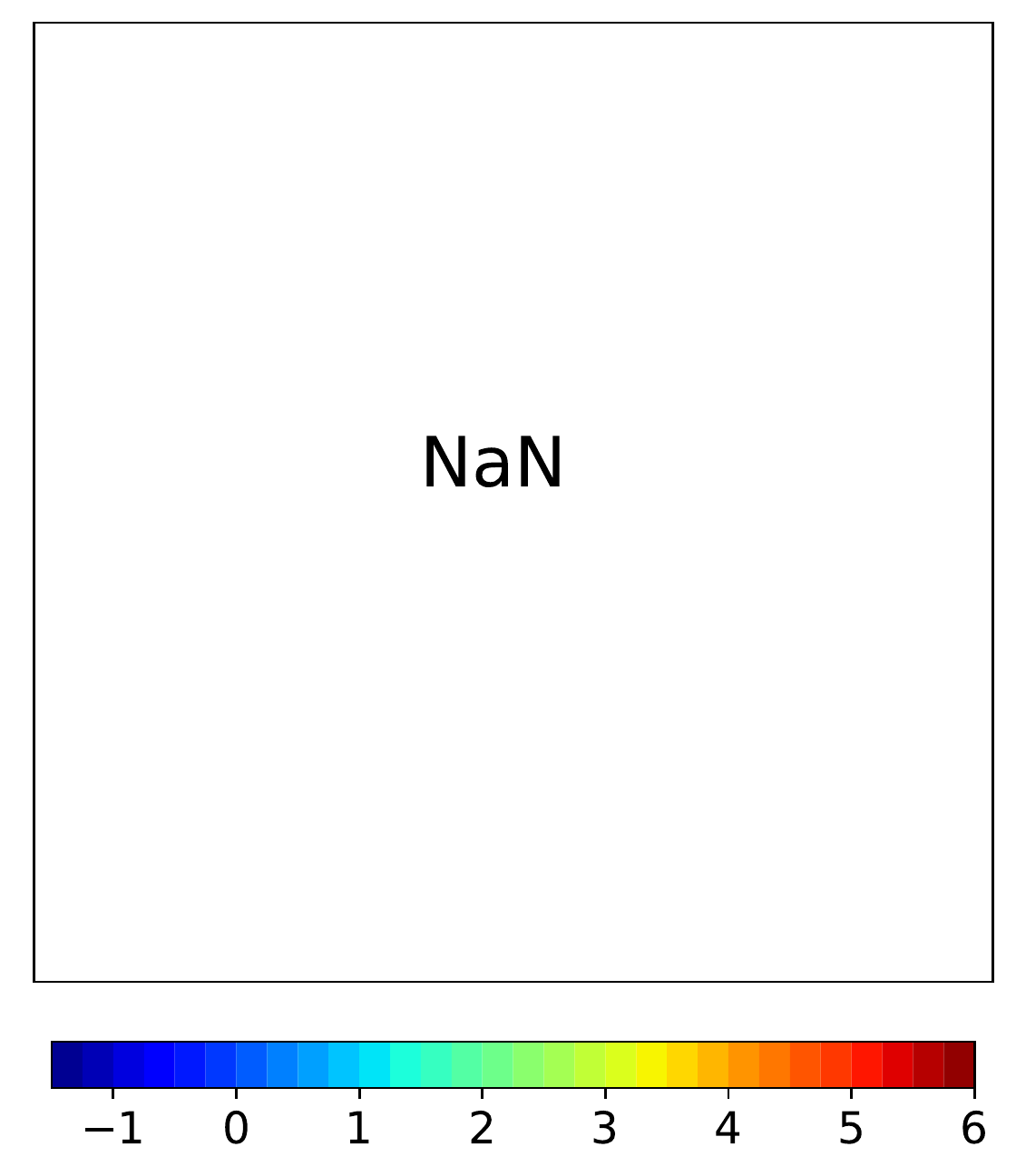}} 
        \\
        \centering
        \rotatebox[origin=c]{90}{FE - Learned $\Psi$} &
        \raisebox{-0.5\height}{\includegraphics[width=.20 \textwidth]{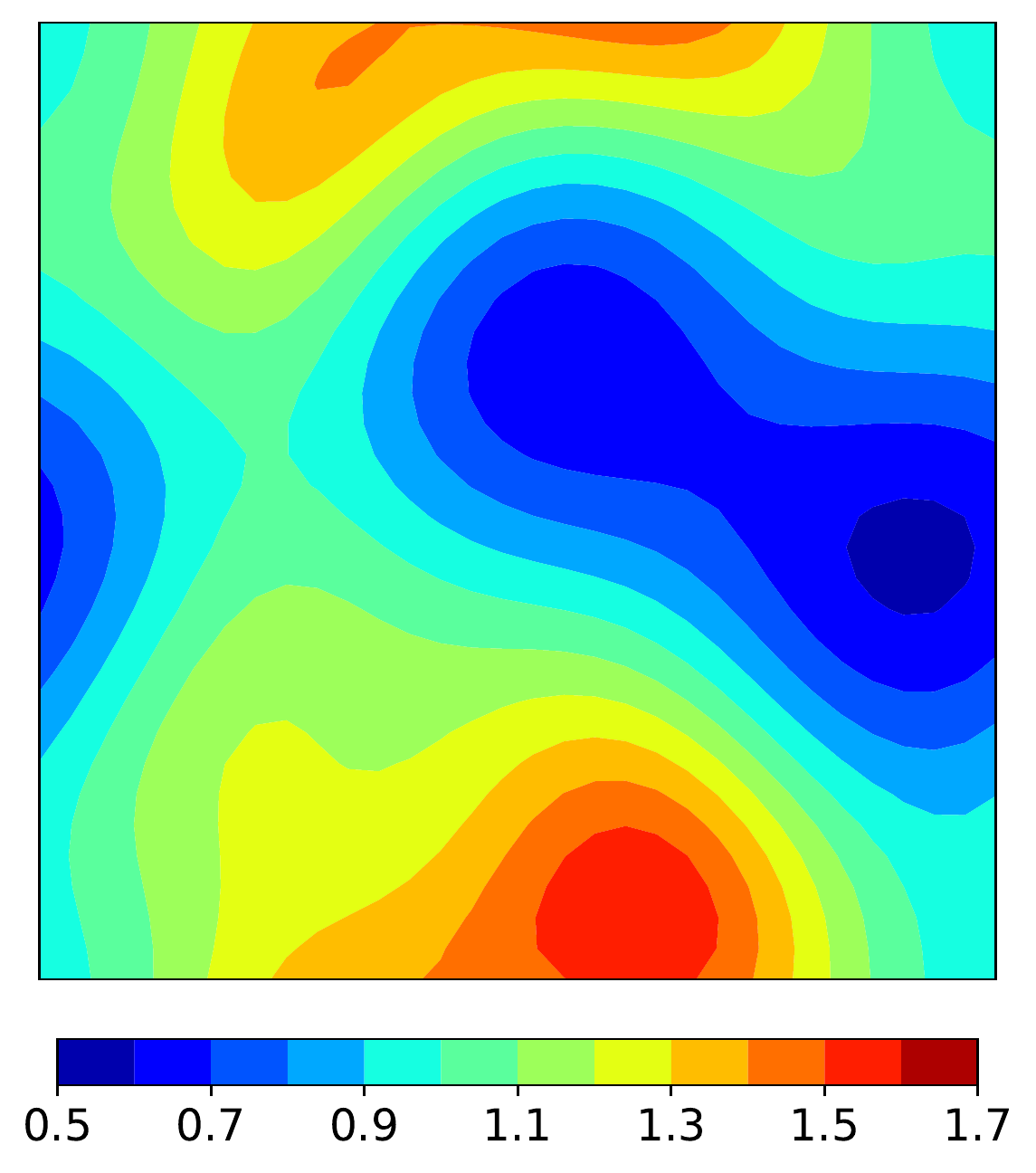}} &
        \raisebox{-0.5\height}{\includegraphics[width=.20 \textwidth]{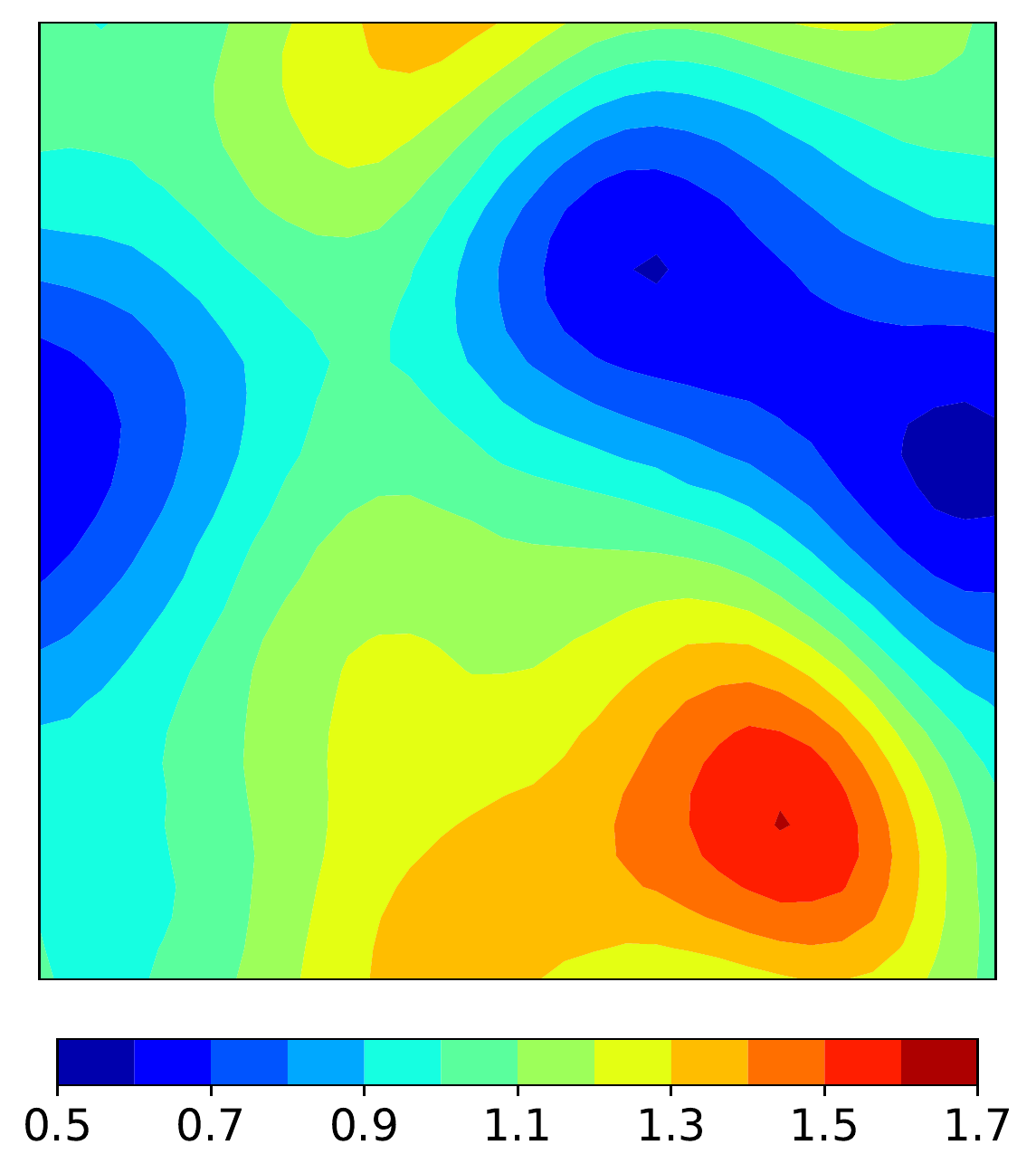}} &
        \raisebox{-0.5\height}{\includegraphics[width=.20 \textwidth]{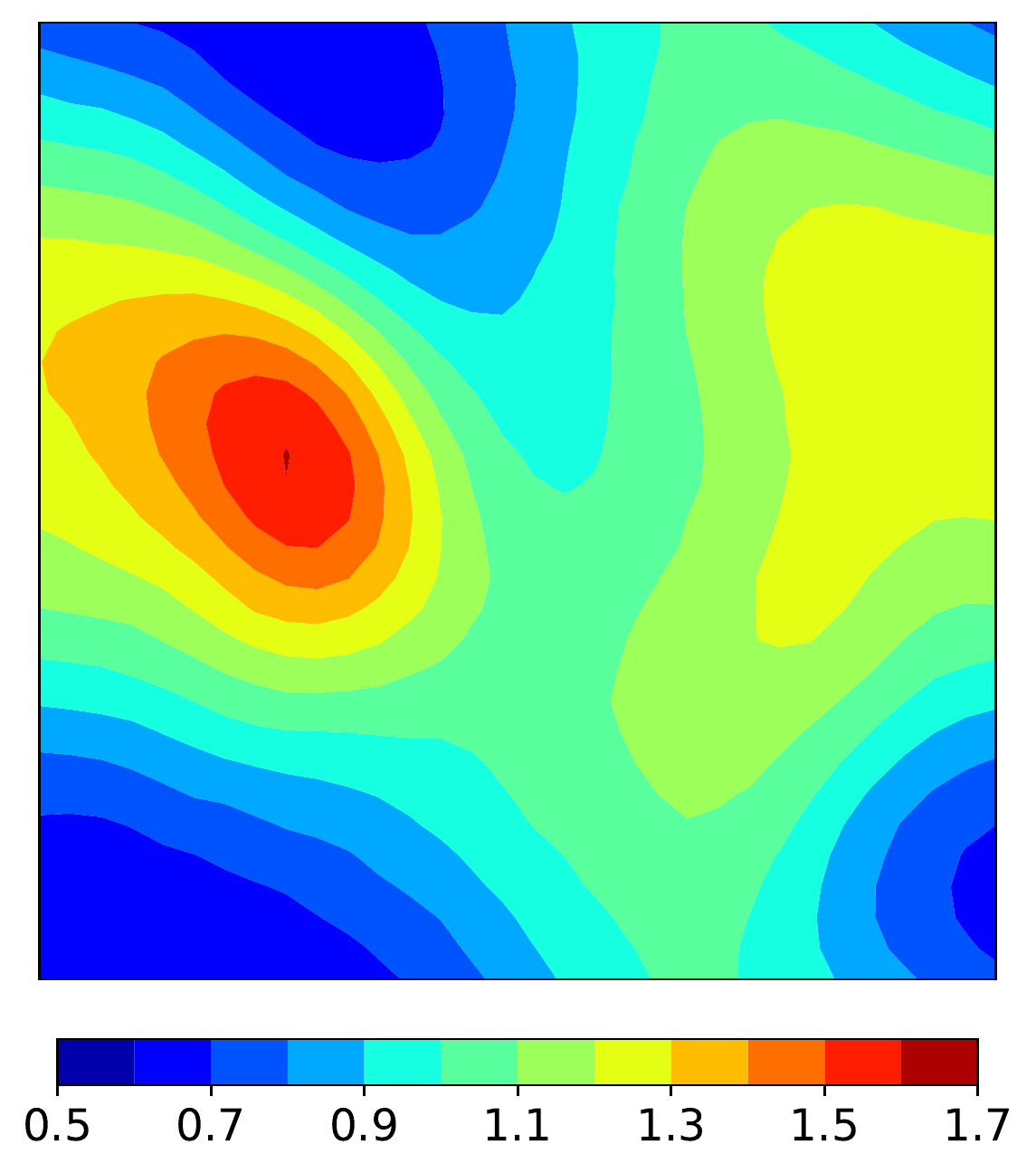}} & 
        \raisebox{-0.5\height}{\includegraphics[width=.20 \textwidth]{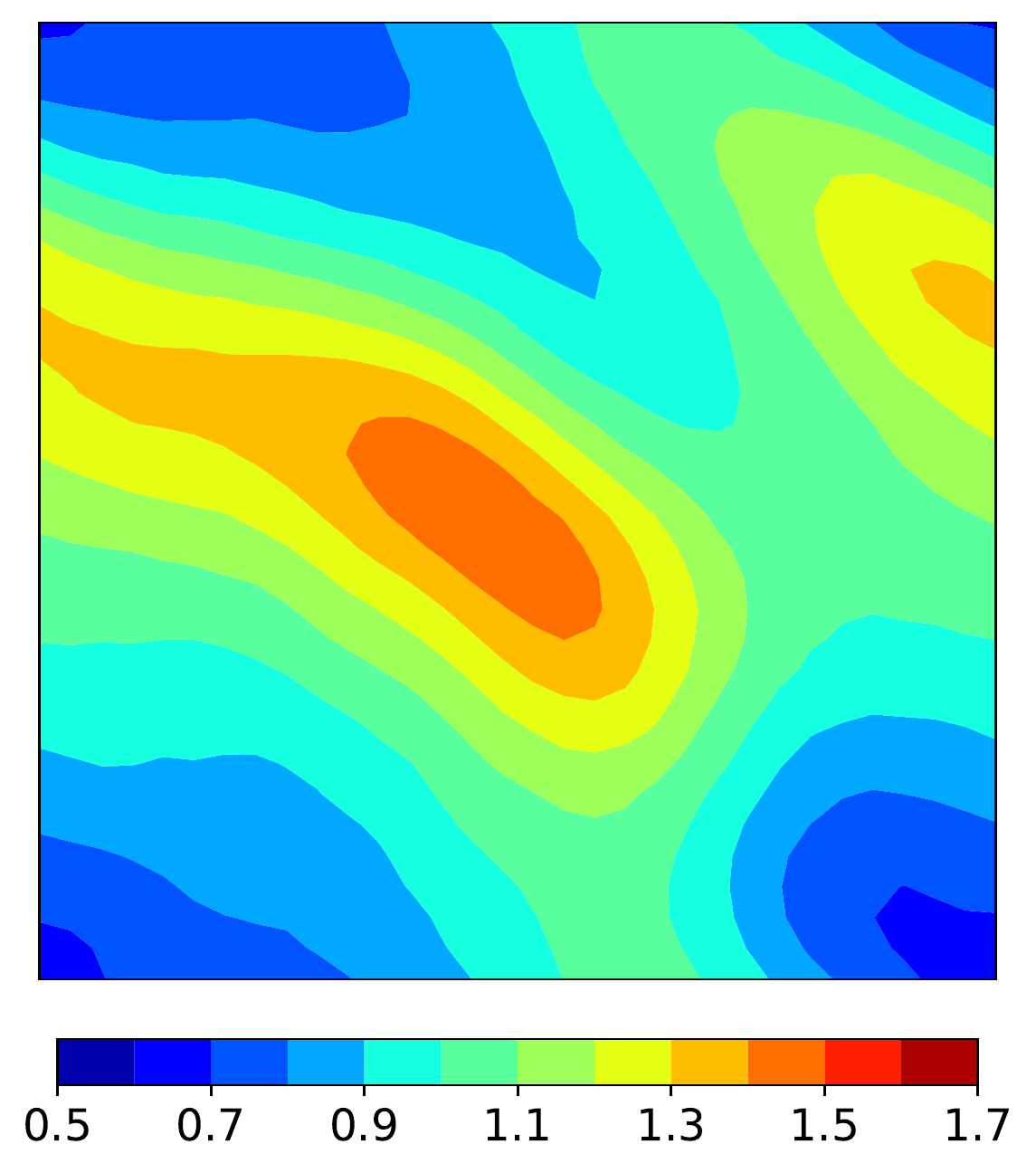}} 
         \\
        \centering
        \rotatebox[origin=c]{90}{\small BE - True $\F$} &
        \raisebox{-0.5\height}{\includegraphics[width=.20 \textwidth]{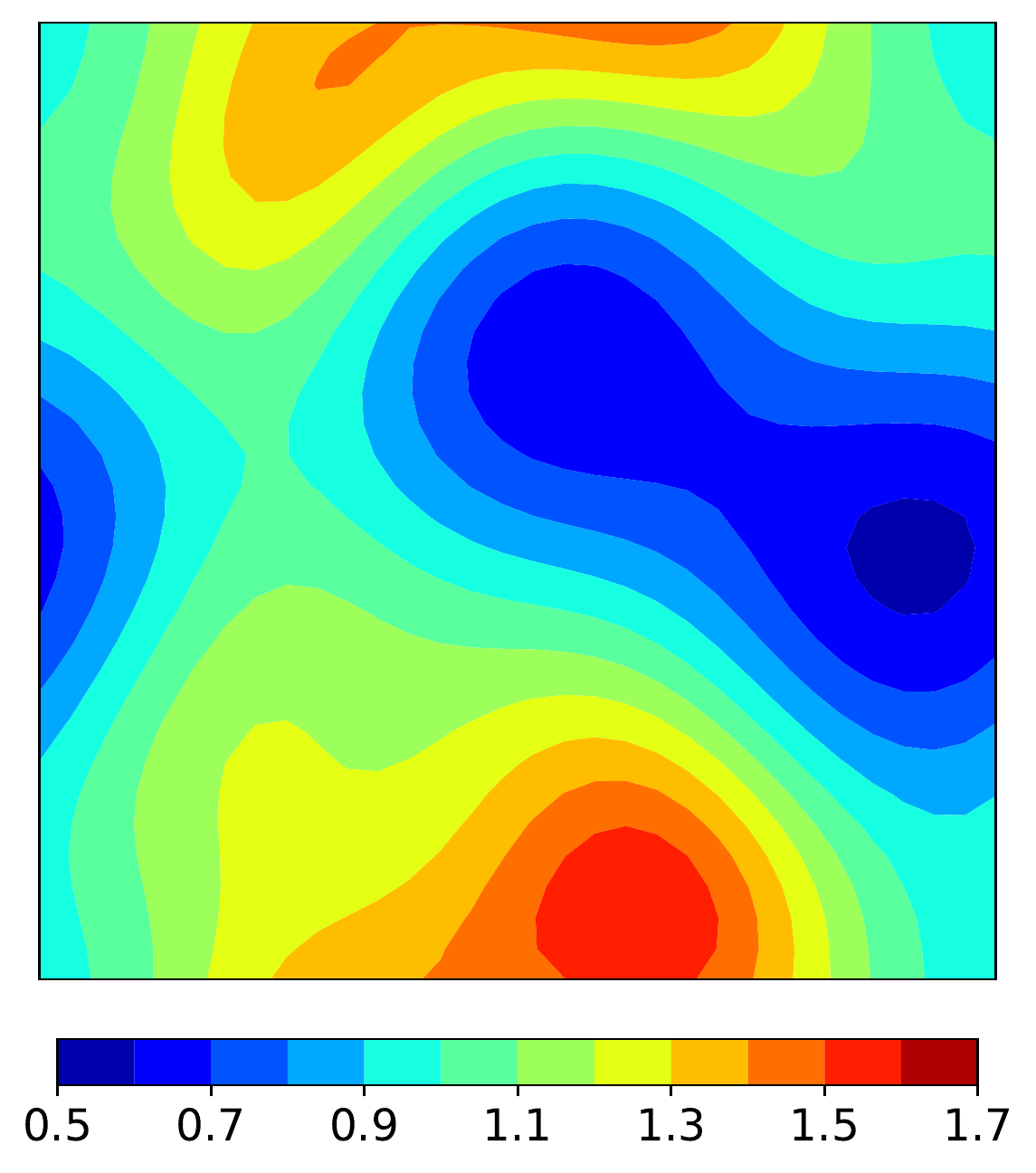}} &
        \raisebox{-0.5\height}{\includegraphics[width=.20 \textwidth]{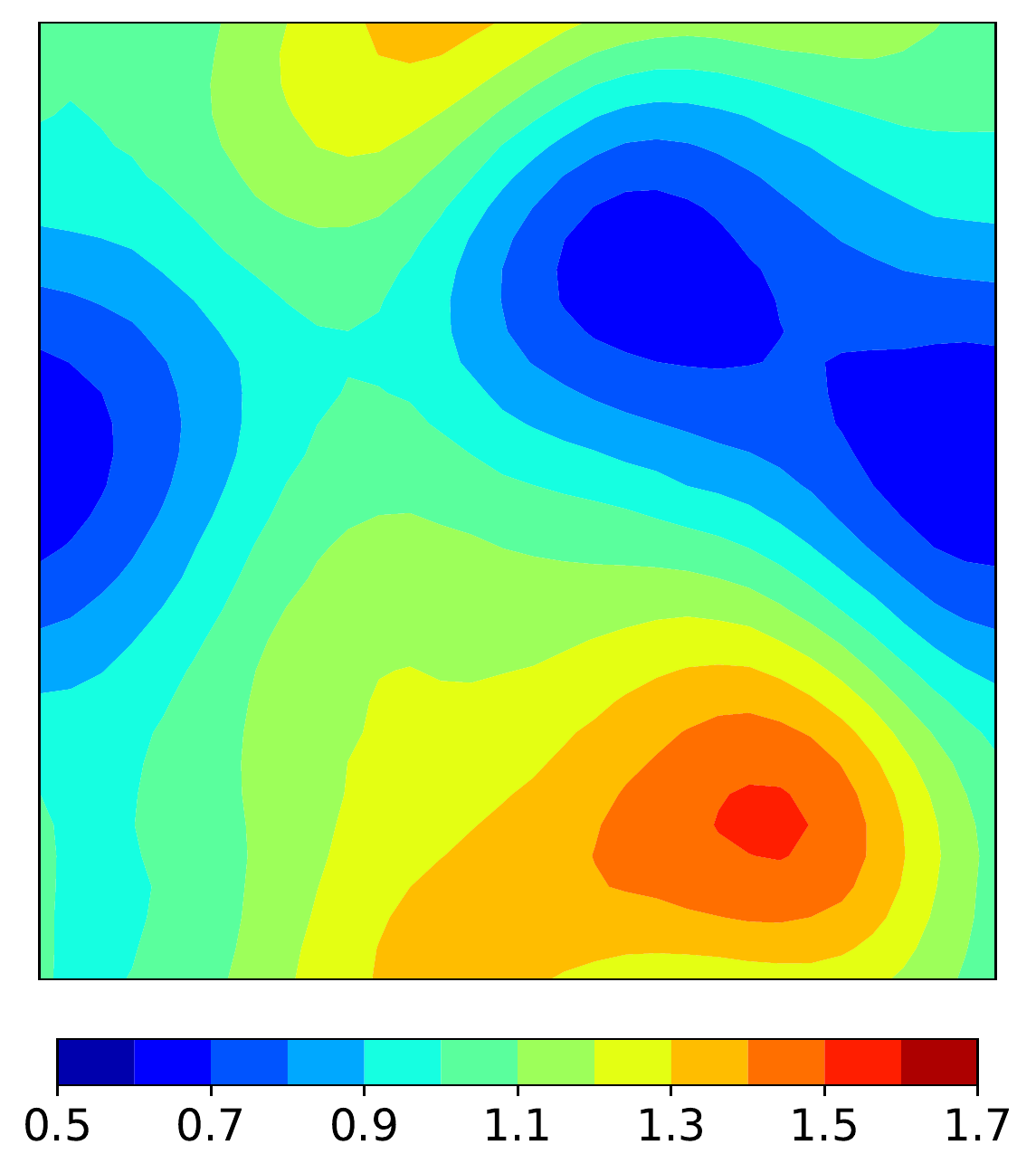}} &
        \raisebox{-0.5\height}{\includegraphics[width=.20 \textwidth]{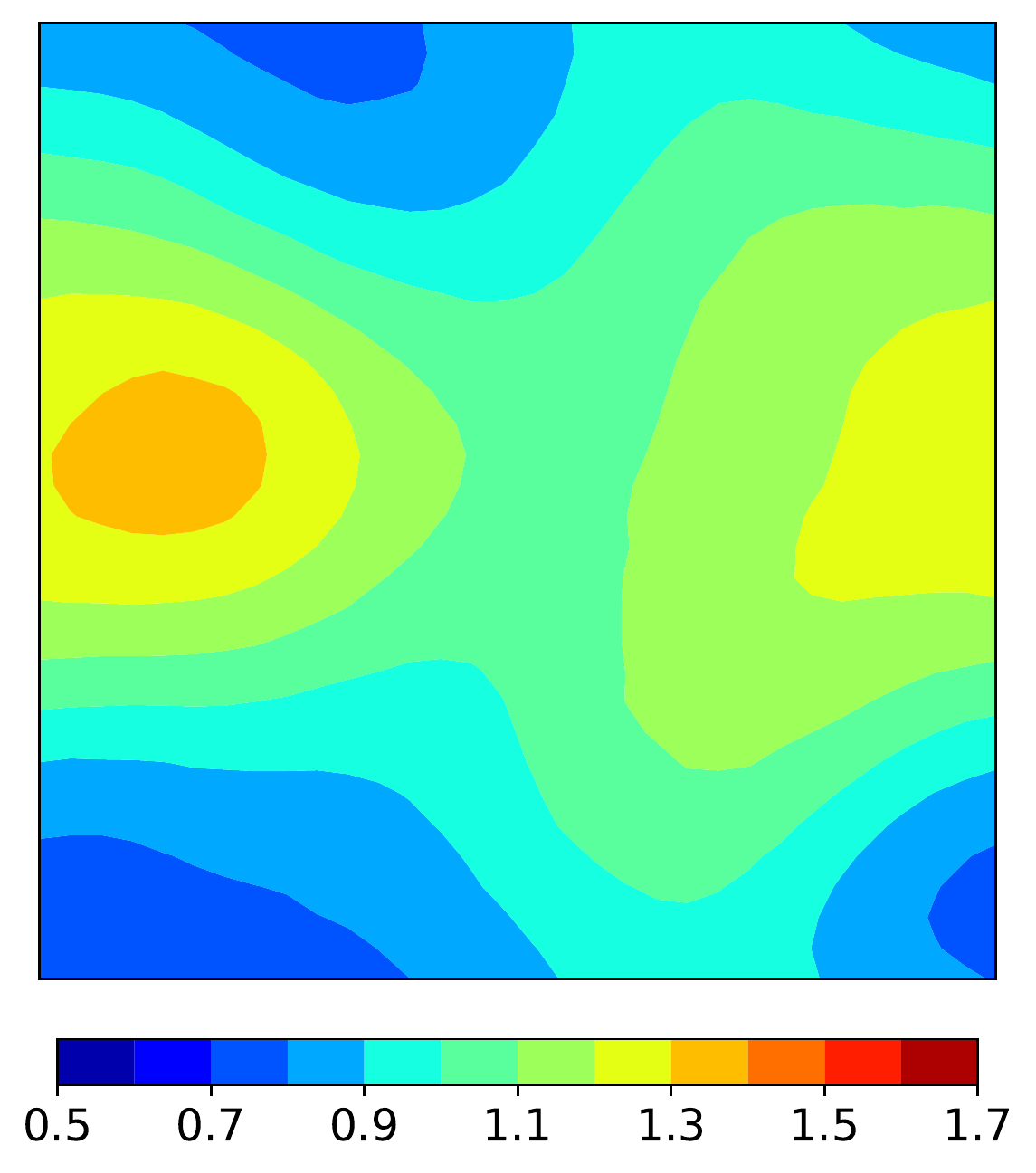}} & 
        \raisebox{-0.5\height}{\includegraphics[width=.20 \textwidth]{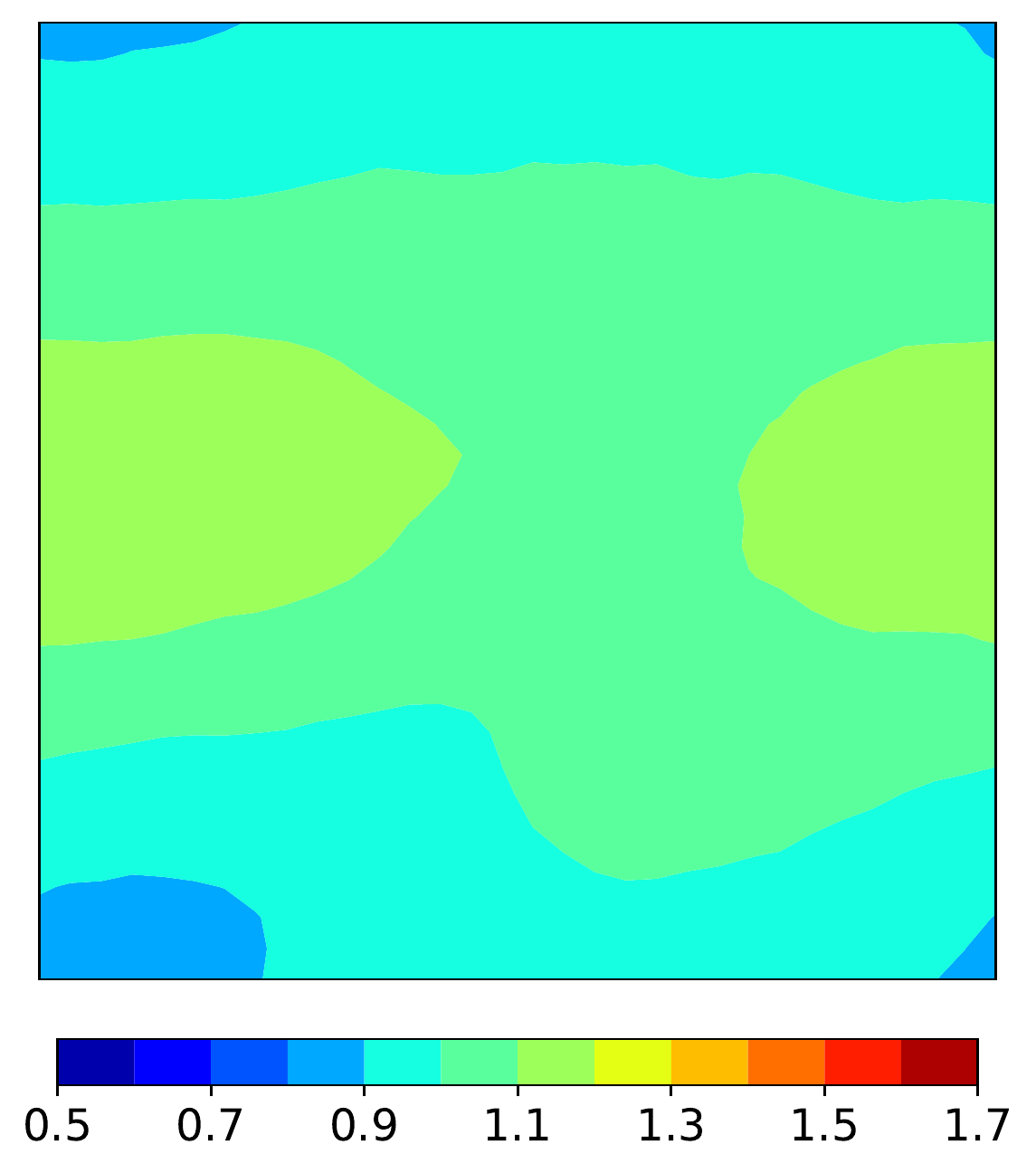}} 
        \\
        \centering
        \rotatebox[origin=c]{90}{\small BE - Learned $\Psi$} &
        \raisebox{-0.5\height}{\includegraphics[width=.20 \textwidth]{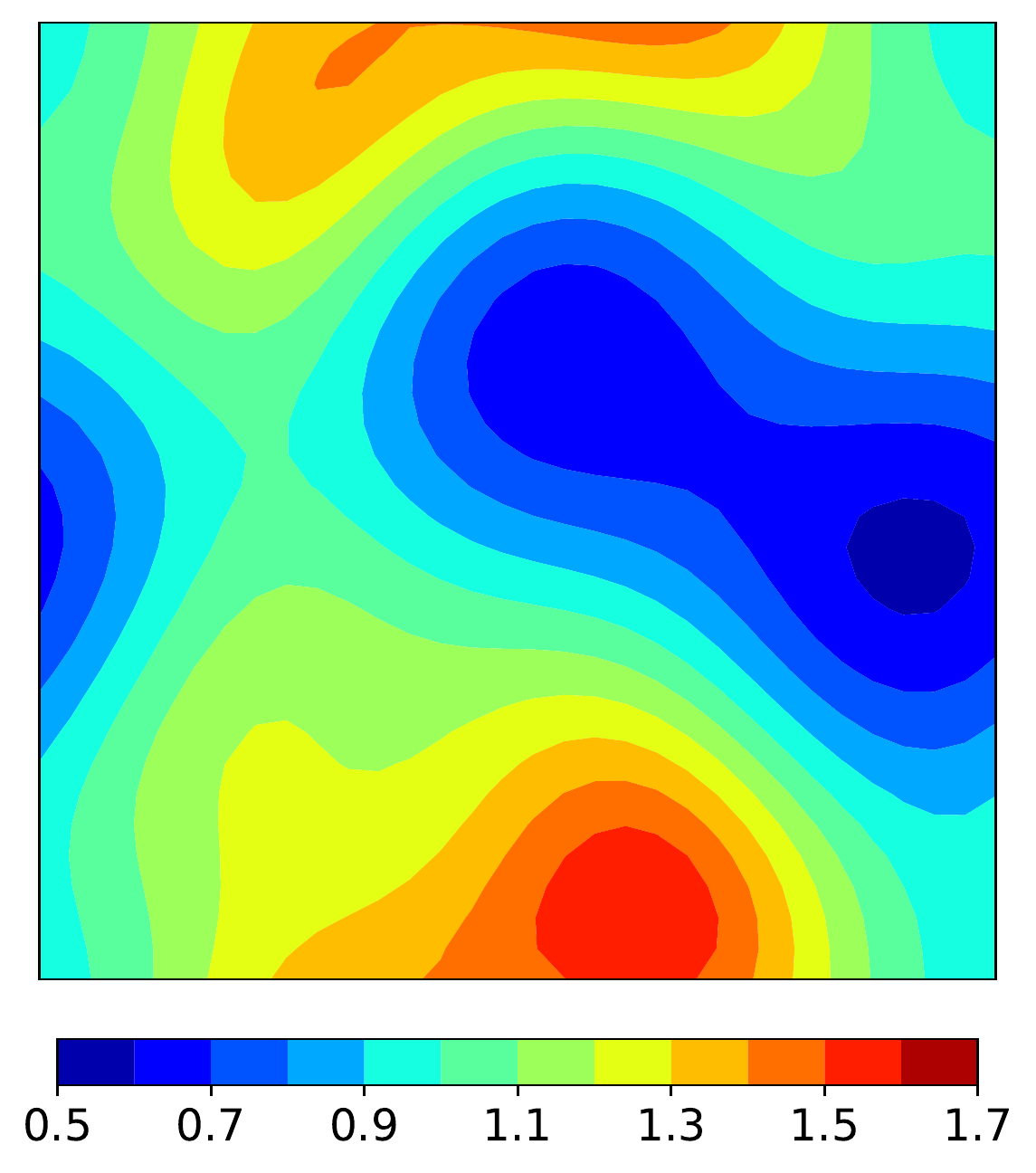}} &
        \raisebox{-0.5\height}{\includegraphics[width=.20 \textwidth]{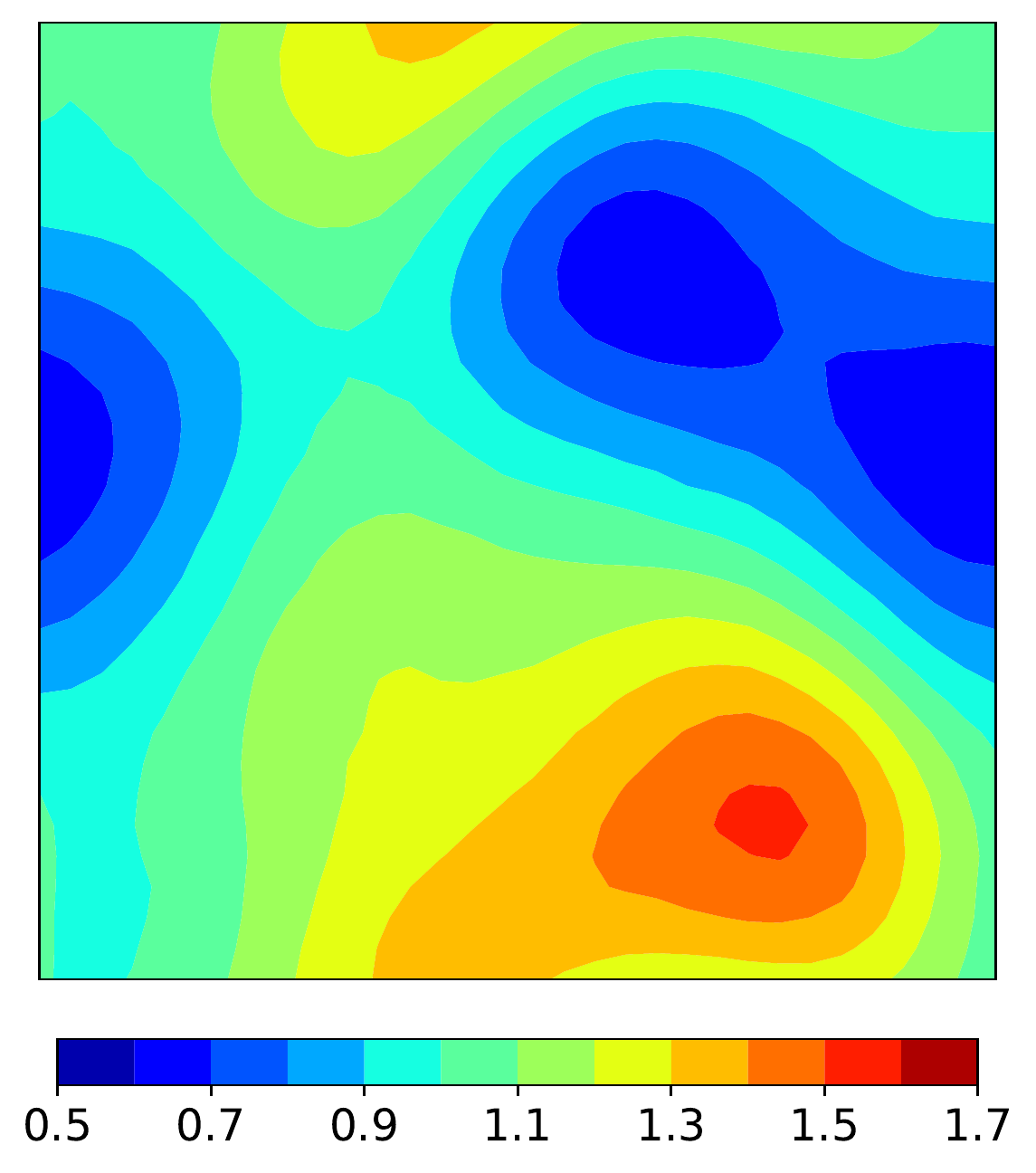}} &
        \raisebox{-0.5\height}{\includegraphics[width=.20 \textwidth]{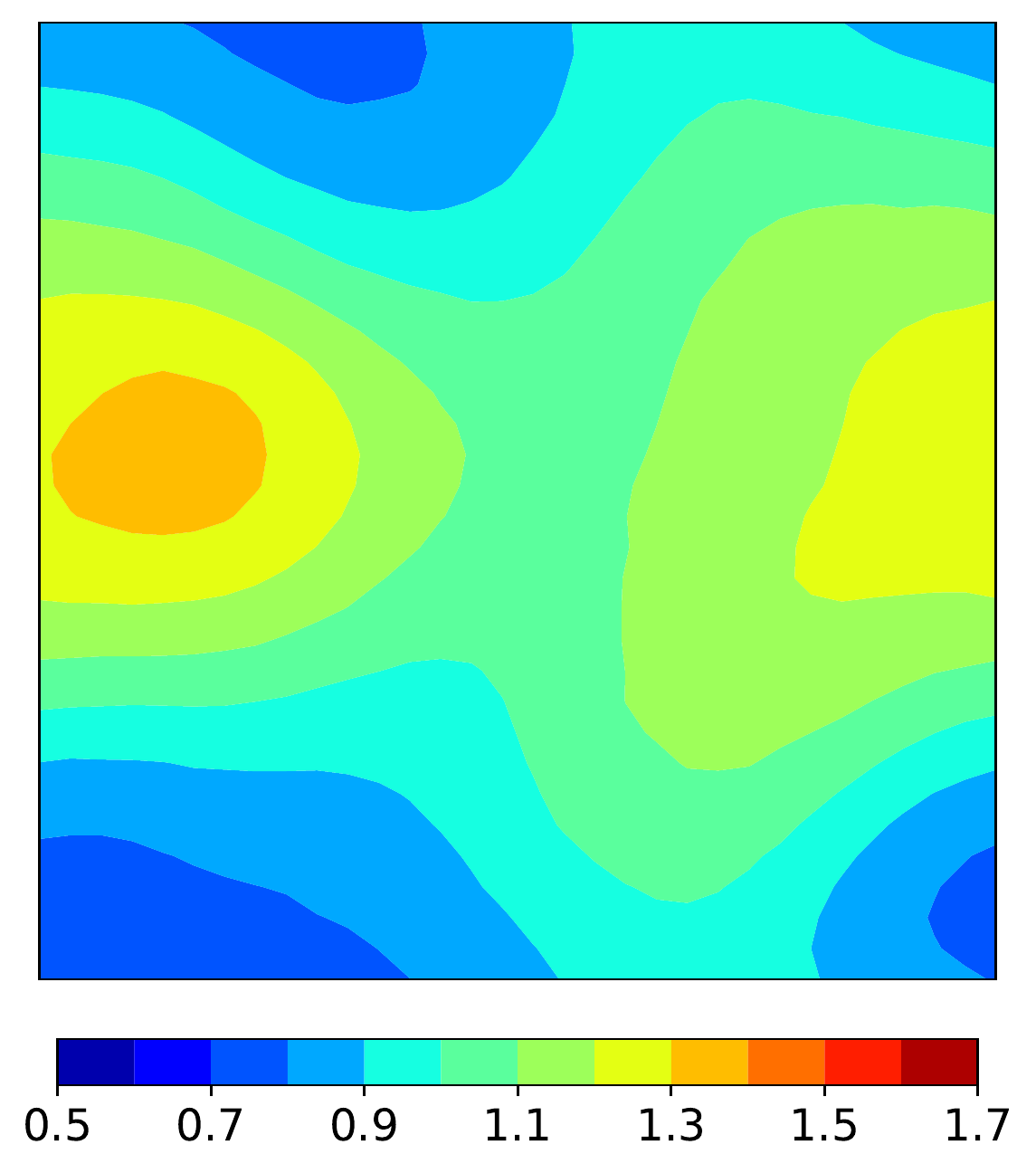}} & 
        \raisebox{-0.5\height}{\includegraphics[width=.20 \textwidth]{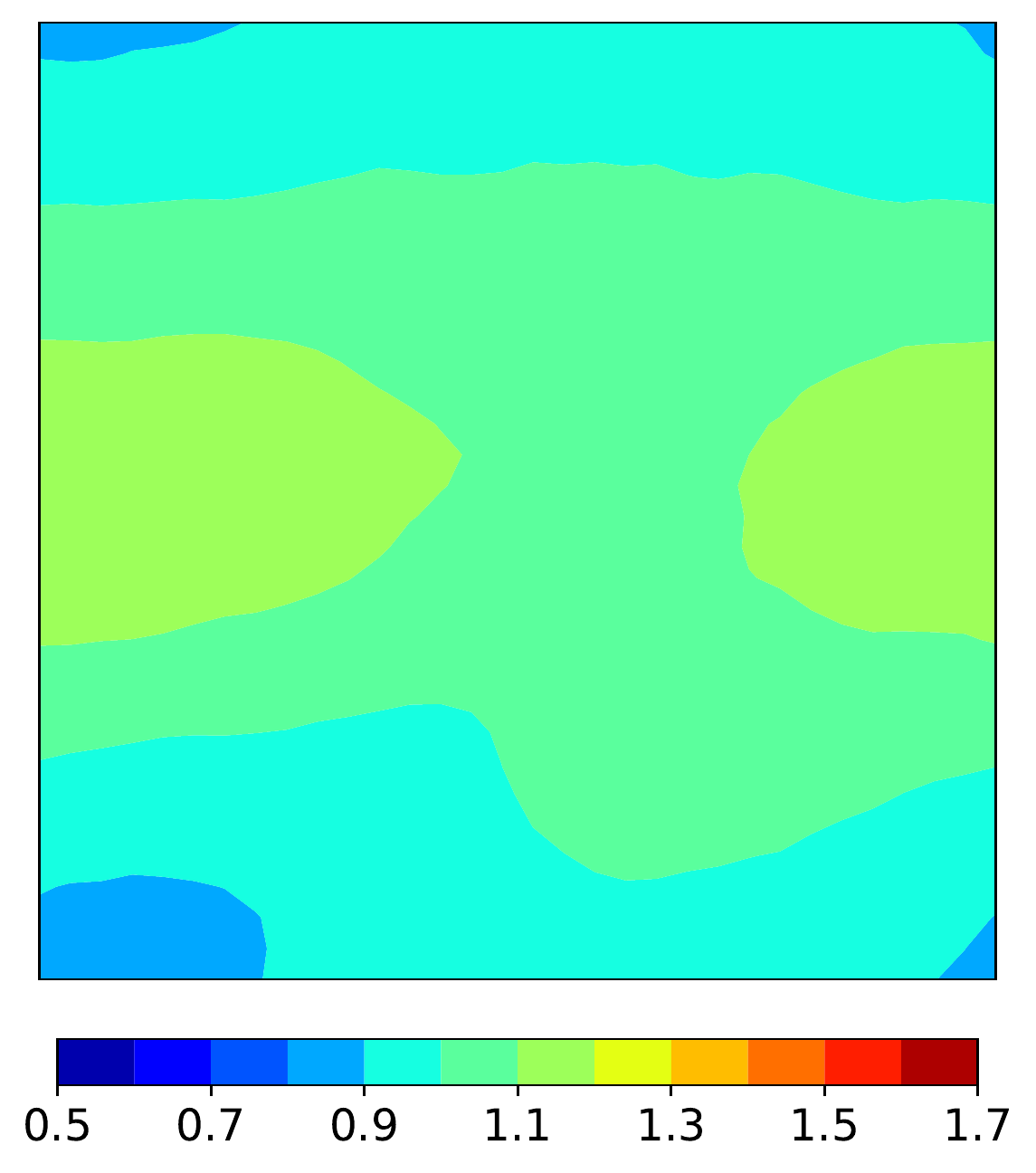}} 
    \end{tabular*}
    \caption{\textbf{Burger's equations}. Predicted solutions at different times obtained by Forward Euler (FE) scheme and Backward Euler (BE) scheme using large stepsize $\dt' = 12.5 \dt = 1.25 \times 10^{-2}$ with the true tangent slope $\F$ and the learned neural network $\Psi$ for the setting $\LRp{d600, 2\%, 1, 1, 10^5}$.}
    \figlab{2D_Bur_implicit_samples}
\end{figure}

{\bf Direct learning versus \texttt{mcTangent} slope  learning.}
Recall that by direct learning we mean learning the map from $\ui{i}$ to $\ui{i+1}$ for two consecutive time steps.
We investigate the difficulty and complexity of direct learning.
Specifically, we use a data set with $600$ samples with/without  data randomization to learn the neural network with one layer of 5000 neurons that maps velocities from one step to the next. As shown in \cref{fig:2D_Bur_Direct_EF_NNs}, the direct learning approach (with the best combination of hyperparameters) for the setting  $\LRp{d600, 2\%, 1,3,2}$ is less accurate for both short-time and long-time predictions compared to the tangent learning counterpart  with even a smaller data set of $200$ samples with the setting $\LRp{d200, 2\%, 1,1,10^5}$. 
Interestingly, unlike the tangent learning approach, the direct learning approach, both pure data-driven and model-constrained approaches, trained with randomized data is  less accurate compared to noise-free data in short sequential training $S = 1$. On the other hand, data randomization does not have visible benefits on long sequential training $S = 10$. Specifically, both $\LRp{d600, 0\%, 10,1,0}$ and $\LRp{d600, 2\%, 10,1,0}$ settings behave similarly.

Also seen in \cref{fig:2D_Bur_Direct_EF_NNs}, among pure data-driven networks ($\alpha = 0$) with direct learning,  long sequential machine learning training with $S = 10$ is the most  accurate.
 Model-constrained network with direct learning for the setting  $\LRp{d600, 2\%, 1, 1, 10}$ is much more accurate  compared to the pure data-driven network with direct learning for the same setting.
Moreover,
sequential model-constrained networks for $R = 2, 3$ corresponding to two settings $\LRp{d600,2\%, 1, 2, 2}$ and $\LRp{d600,2\%, 1, 3, 2}$ are comparable to  much longer sequential machine learning network with $S = 10$ for the setting $\LRp{d600,2\%, 10, 1, 0}$. 
 In the presented results,
it is important to point out that for direct learning, 
care must be taken in choosing a good regularization parameter $\alpha$. For example, $\alpha = 2$ is good for $R = 2, 3$, but $\alpha = 10$ is good for $R = 1$. On the contrary, tangent learning is more robust. In particular, a single $\alpha = 10^5$ works well for all settings.
Solutions predicted by direct and tangent learnings
(both with model-constrained terms) 
for $\LRp{d600, 2\%, 1, 3, 2}$  and $\LRp{d200, 2\%, 1, 1, 10^5}$, respectively, are shown in \cref{fig:2D_Bur_Direct_samples}. As can be observed, tangent learning solutions with even smaller data set $\LRp{d200, 2\%, 1, 1, 10^5}$  are much more accurate than the direct learning with $\LRp{d600, 2\%, 1, 3, 2}$. This is due to the fact that direct learning tries to learn a mixed space-time discretization, which is more difficult than learning only the spatial discretization in tangent learning.


\begin{figure}[htb!]
    \centering
    \includegraphics[width=\textwidth]{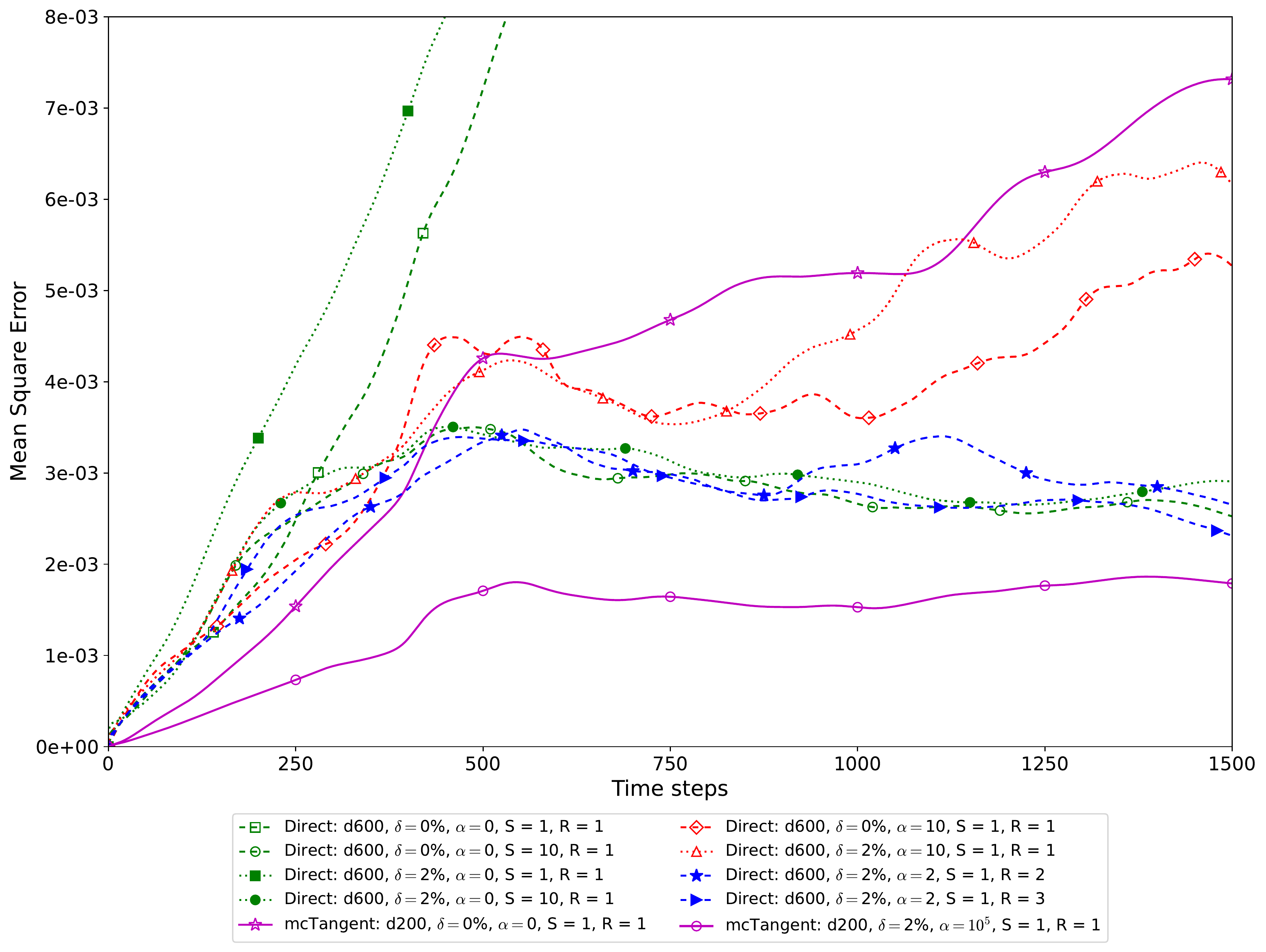}
    \caption{\textbf{Burger's equations}. The mean square error versus time steps obtained by the direct learning approach (Direct) using 600 training samples and the tangent learning approach (\texttt{mcTangent}) using 200 training samples.} 
    \figlab{2D_Bur_Direct_EF_NNs}
\end{figure}

\begin{figure}[htb!]
    \centering
    \begin{tabular*}{\textwidth}{c c c c c}
        \centering
         &
        \raisebox{-0.5\height}{\small $t = 0$} &
        \raisebox{-0.5\height}{\small $t = 0.1$} &
        \raisebox{-0.5\height}{\small $t = 0.5$} & 
        \raisebox{-0.5\height}{\small $t = 1.5$} 
        \\
        \centering
        \rotatebox[origin=c]{90}{\small True  $\ub$} &
        \raisebox{-0.5\height}{\includegraphics[width=.20 \textwidth]{figures/2D_Bur/Bur_FD_127x127_step_t_0.pdf}} &
        \raisebox{-0.5\height}{\includegraphics[width=.20 \textwidth]{figures/2D_Bur/Bur_FD_127x127_step_t_100.pdf}} &
        \raisebox{-0.5\height}{\includegraphics[width=.20 \textwidth]{figures/2D_Bur/Bur_FD_127x127_step_t_500.pdf}} & 
        \raisebox{-0.5\height}{\includegraphics[width=.20 \textwidth]{figures/2D_Bur/Bur_FD_127x127_step_t_1500.pdf}} 
        \\
        \centering
        \rotatebox[origin=c]{90}{\small \texttt{mcTangent} $\ub$, d200} &
        \raisebox{-0.5\height}{\includegraphics[width=.20 \textwidth]{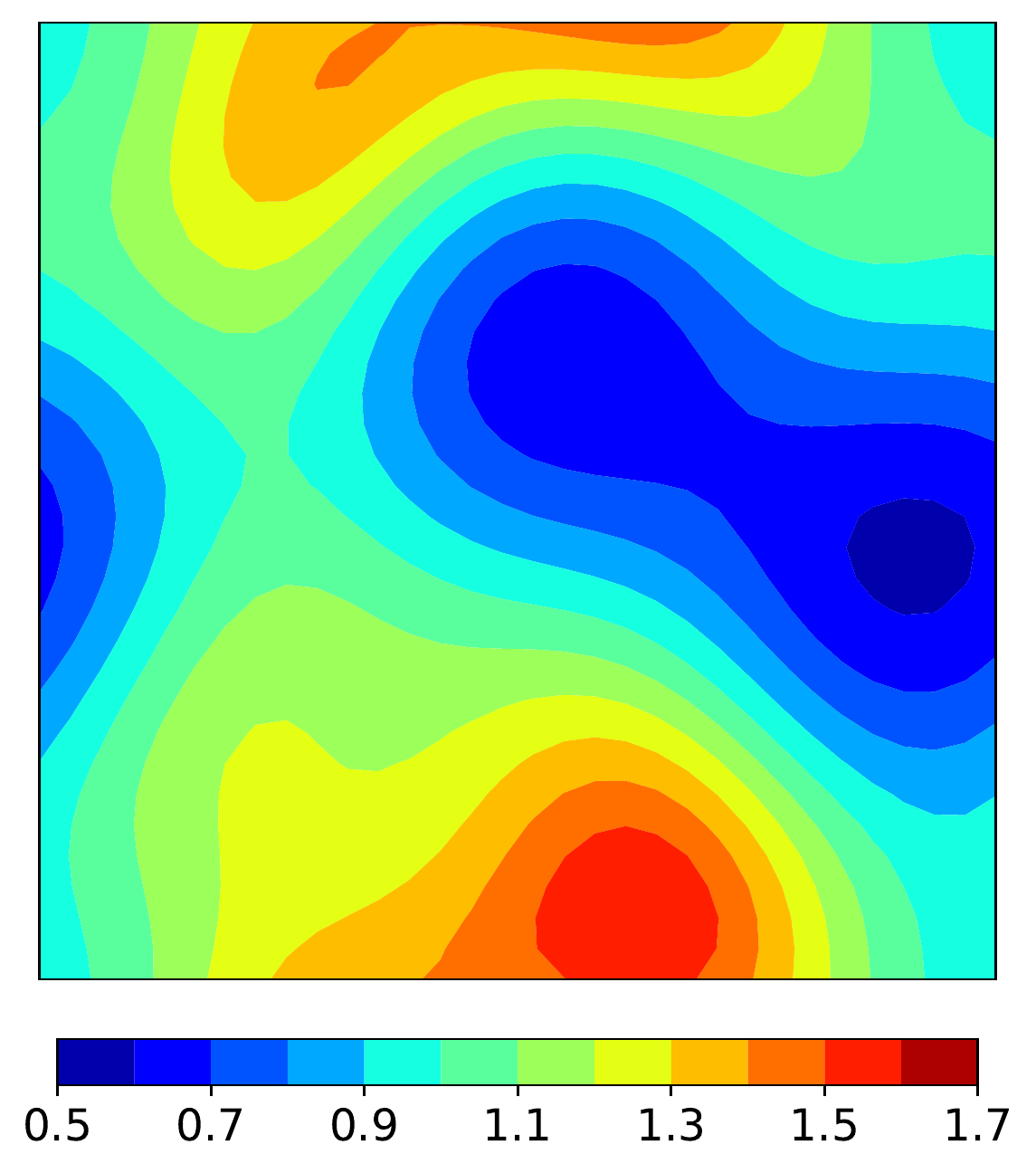}} &
        \raisebox{-0.5\height}{\includegraphics[width=.20 \textwidth]{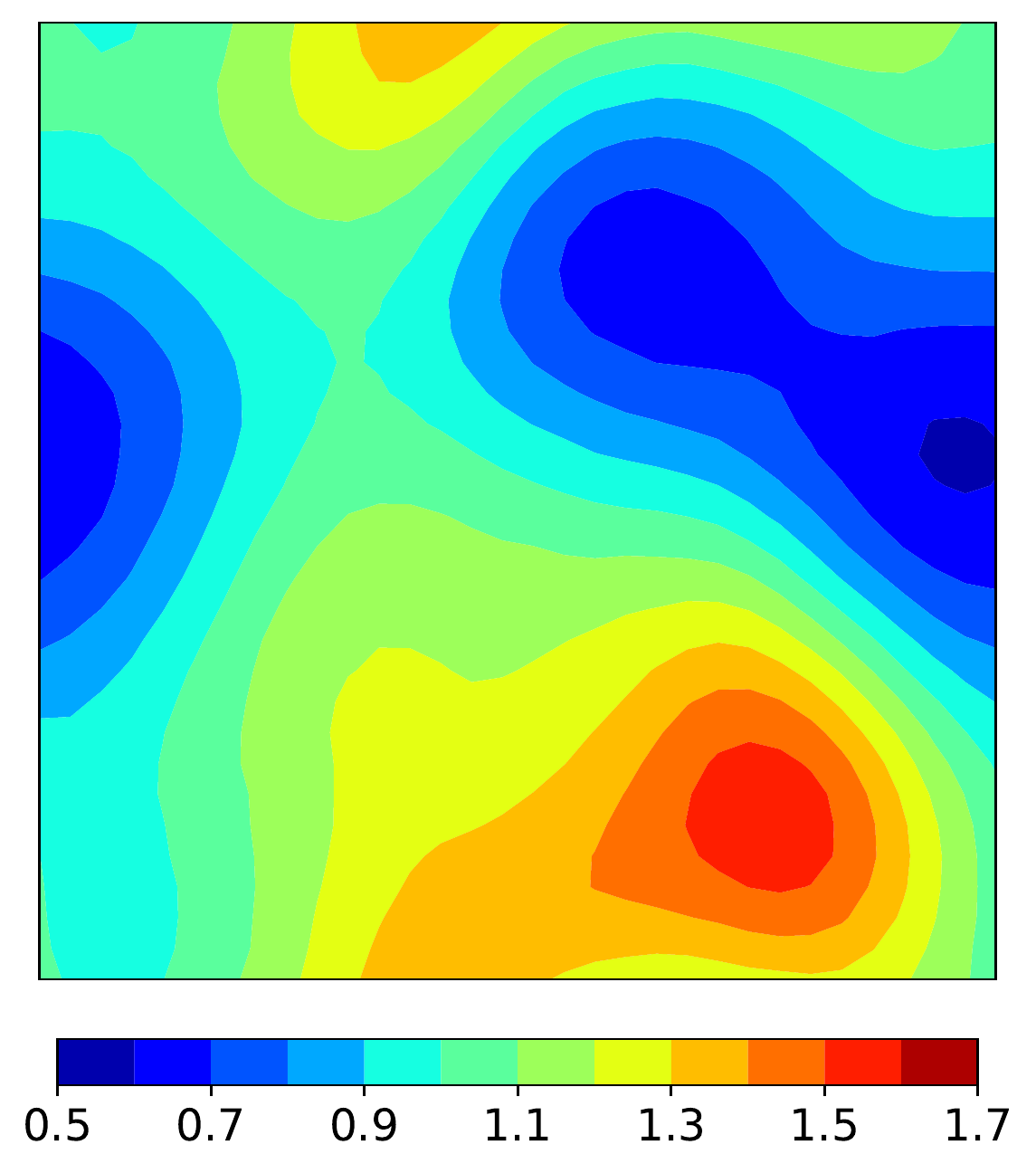}} &
        \raisebox{-0.5\height}{\includegraphics[width=.20 \textwidth]{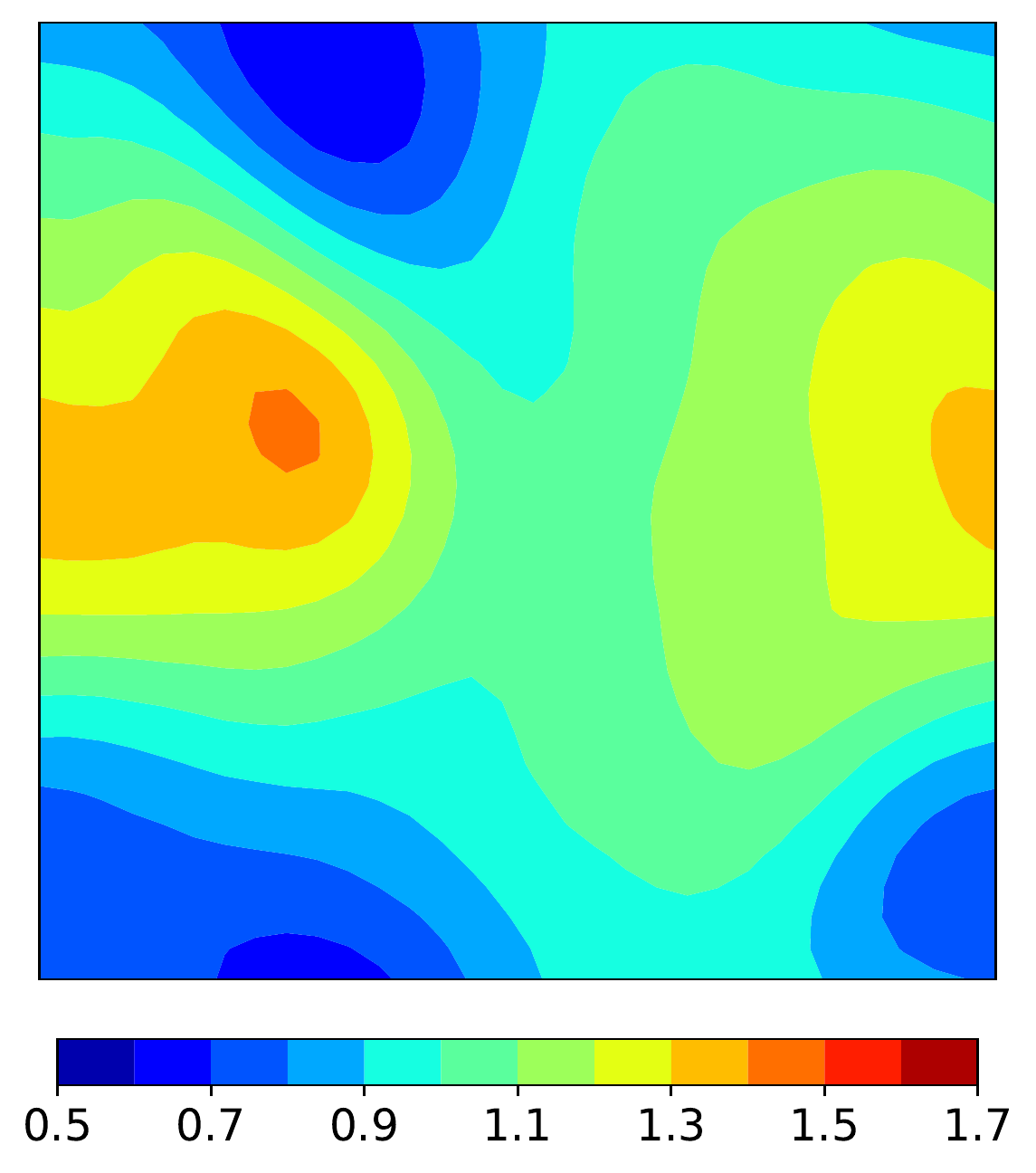}} & 
        \raisebox{-0.5\height}{\includegraphics[width=.20 \textwidth]{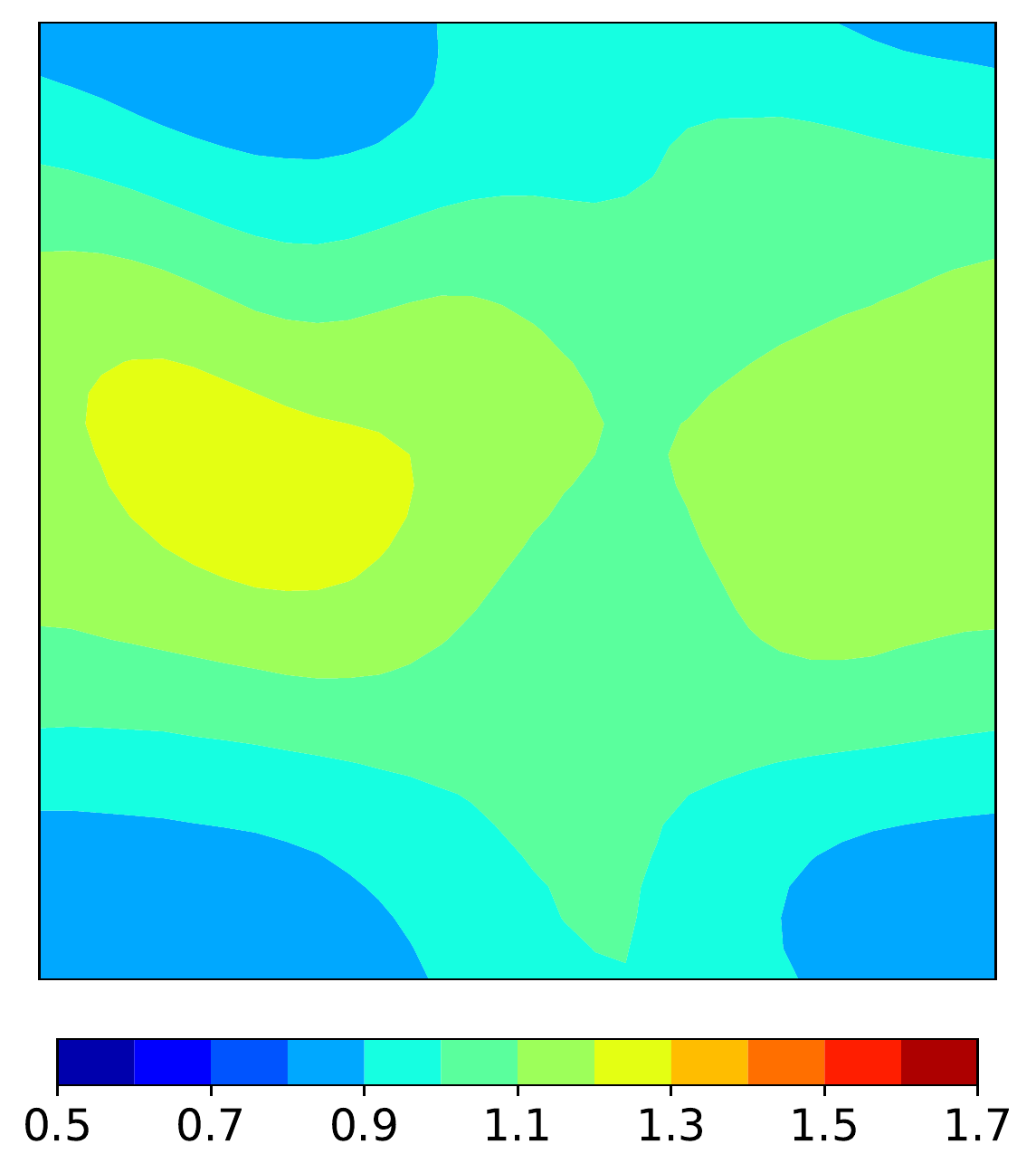}} 
        \\
        \centering
        \rotatebox[origin=c]{90}{Direct  $\ub$, d600} &
        \raisebox{-0.5\height}{\includegraphics[width=.20 \textwidth]{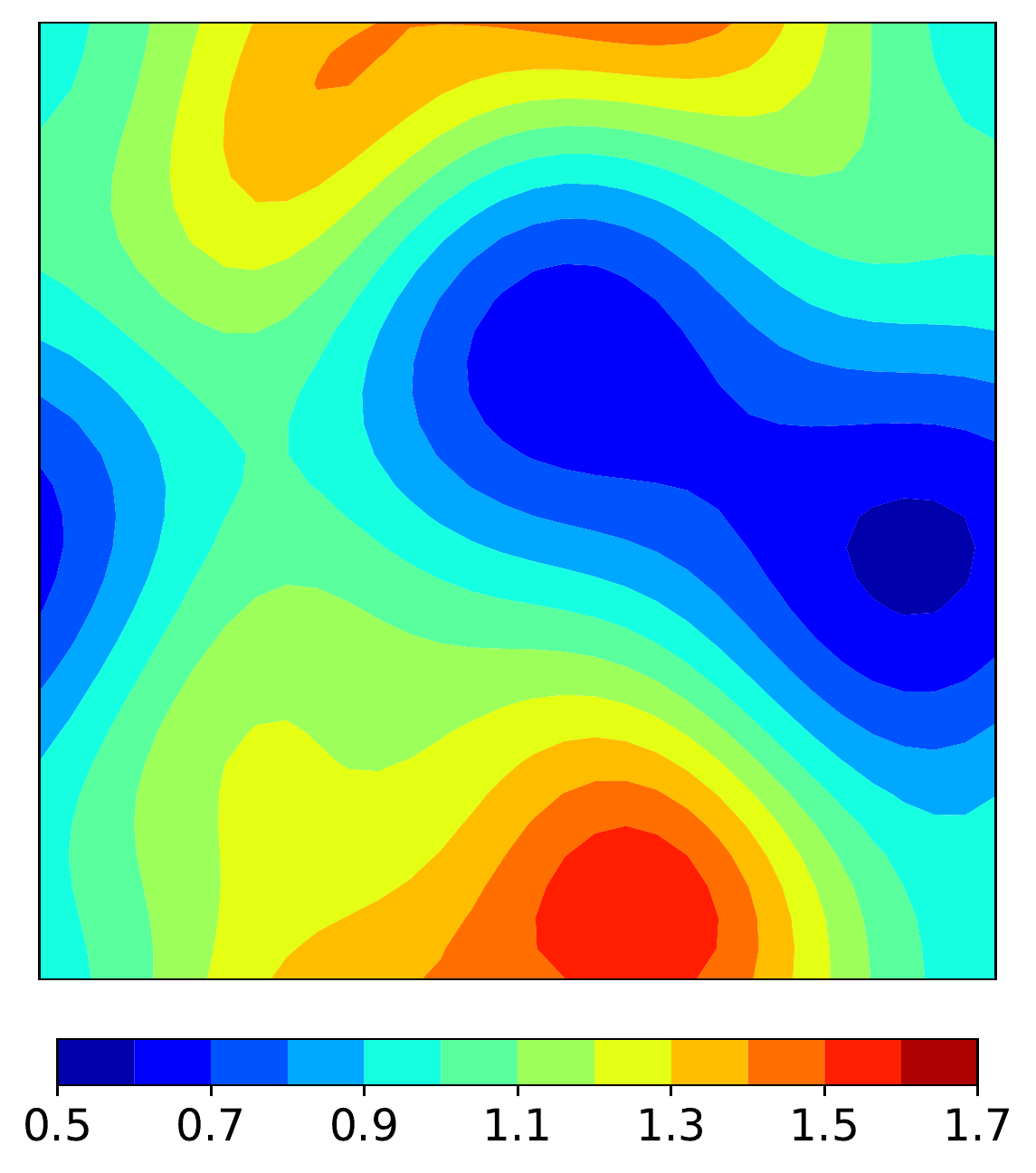}} &
        \raisebox{-0.5\height}{\includegraphics[width=.20 \textwidth]{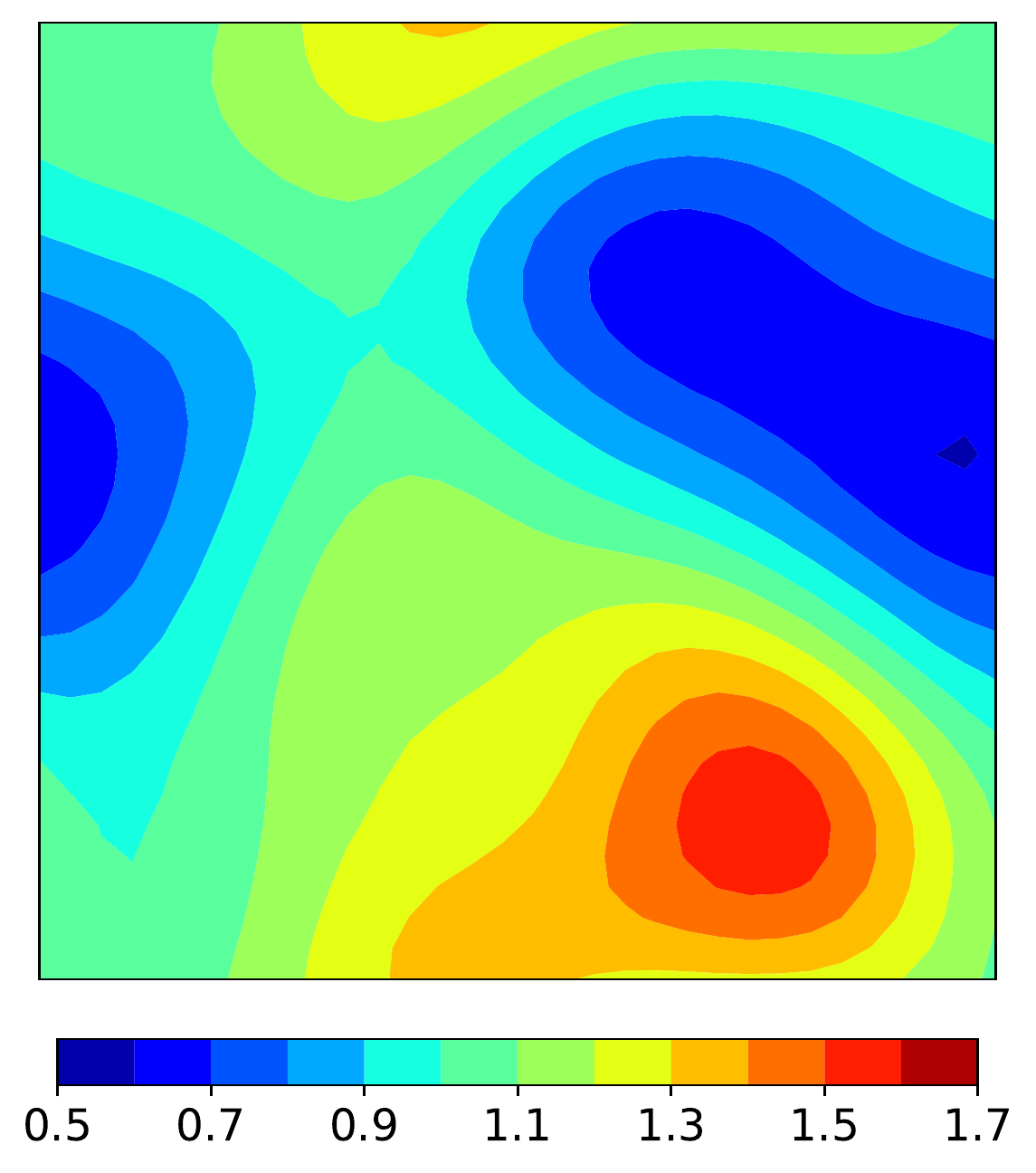}} &
        \raisebox{-0.5\height}{\includegraphics[width=.20 \textwidth]{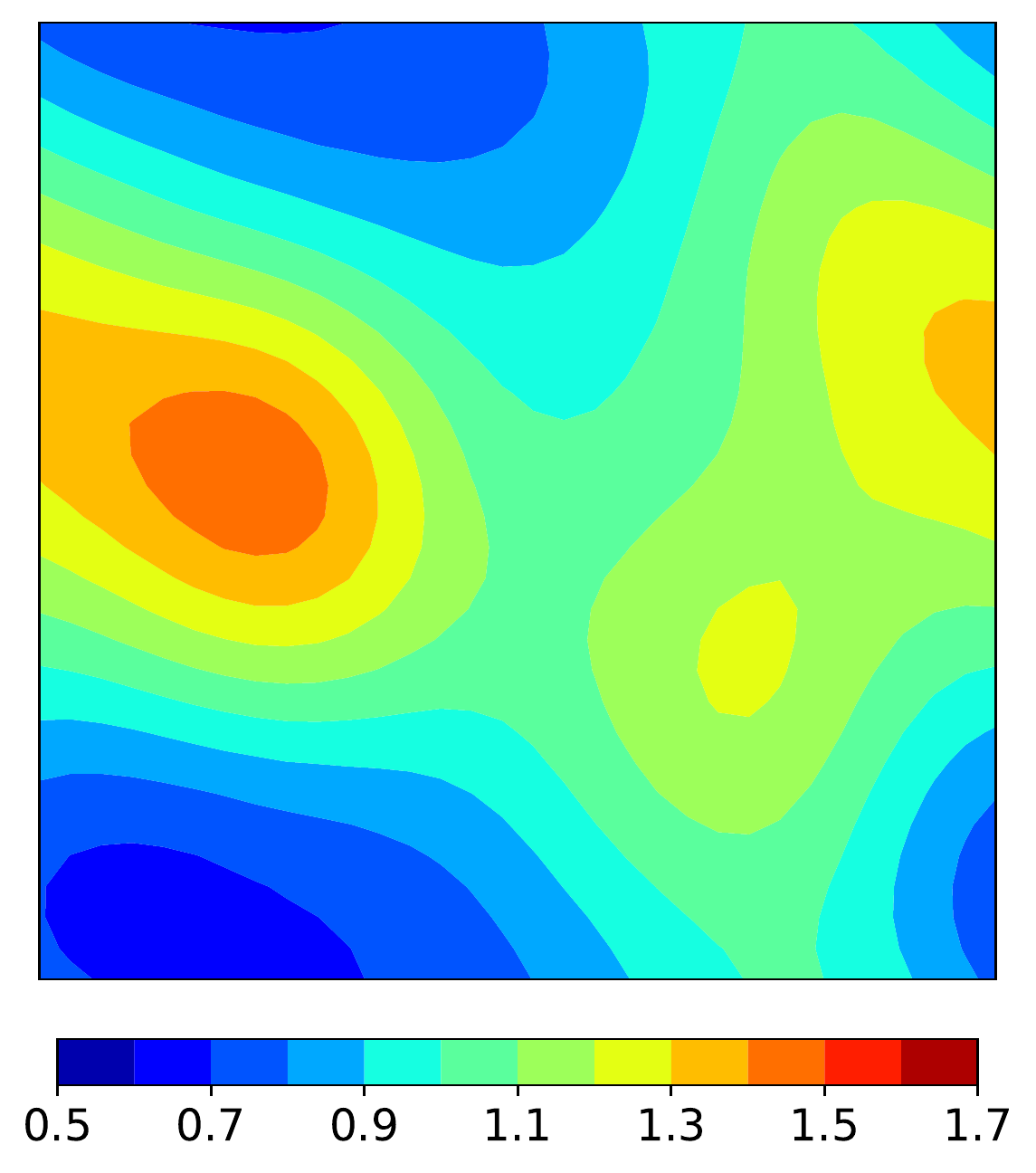}} & 
        \raisebox{-0.5\height}{\includegraphics[width=.20 \textwidth]{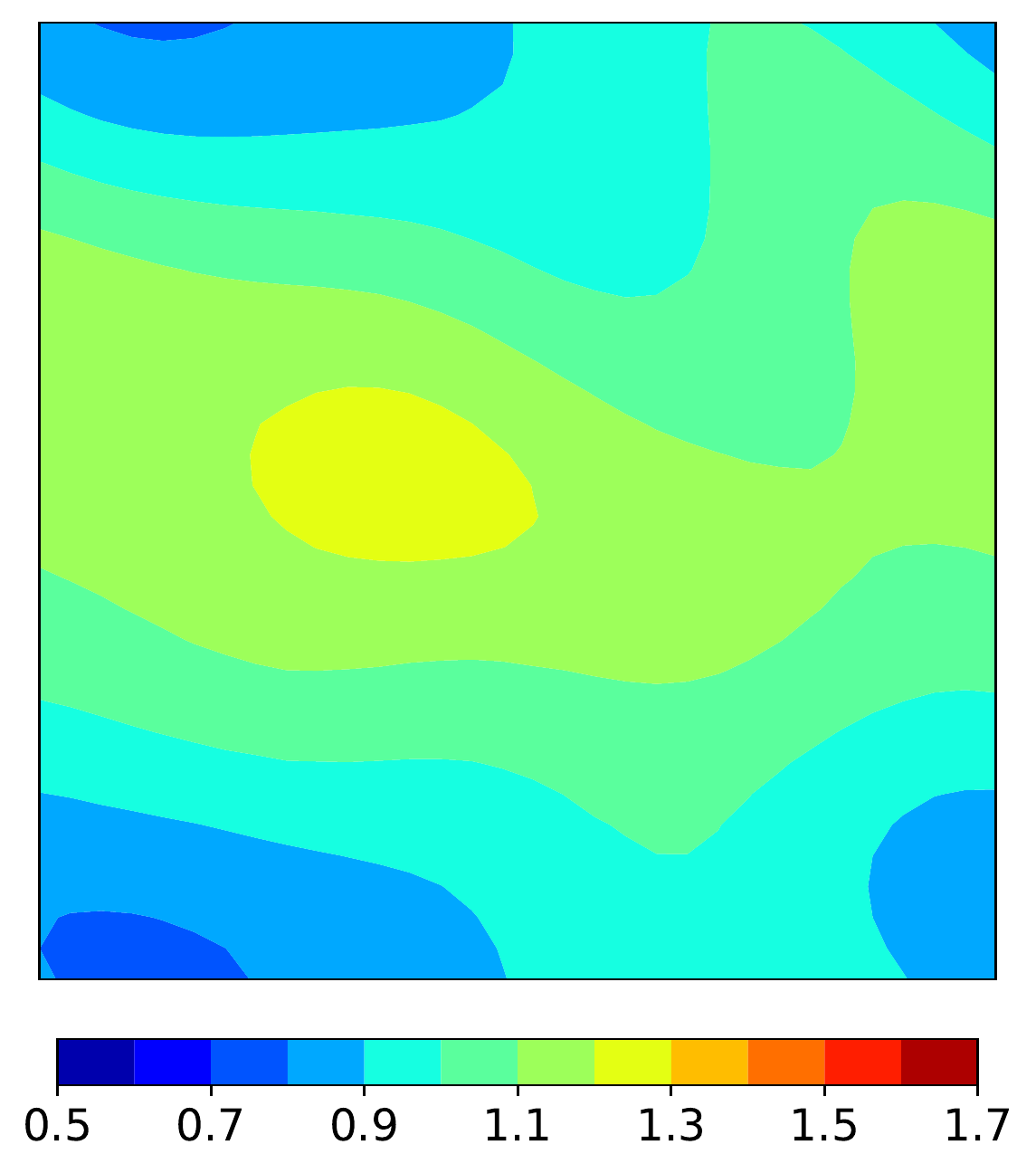}} 
        
    \end{tabular*}
    \caption{\textbf{Burger's equations}. Solutions at different times, \textit{First row}: True high-resolution solutions; \textit{Second row}: learned tangent slope neural network solutions with $\LRp{d200, 2\%, 1, 1, 10^5}$; \textit{Third row}: learned direct neural network solutions with $\LRp{d600, 2\%, 1, 3, 2}$.}
    \figlab{2D_Bur_Direct_samples}
\end{figure}

\subsection{Navier-Stokes equation}
\seclab{2D_NS}

The vorticity form of the 2D Navier-Stokes equation for viscous and incompressible fluid \cite{FouierOP} can be written as 
\begin{equation*}
    \begin{aligned}
        \partial_t u(x,t) + v(x,t) \cdot \nabla u(x,t) &= \nu \Delta u(x,t) + f(x), & \quad x \in \LRp{0,1}^2, t \in (0, T] \\
        \nabla \cdot v(x,t) & = 0, & \quad x \in \LRp{0,1}^2, t \in (0, T] \\
        u(x,0) & = u_0(x),  & \quad x \in \LRp{0,1}^2 
    \end{aligned}
\end{equation*}
where $v\LRp{x,t}$ is the velocity field, $u = \nabla \times v$ is the vorticity, $u_0 $ is the initial vorticity, $f(x) = 0.1 \LRp{\sin \LRp{2 \pi \LRp{x_1 + x_2}} + \cos\LRp{2 \pi \LRp{x_1 + x_2}}}$ is the forcing function and $\nu = 10^{-3}$ is the viscosity coefficient. Our goal is to solve for the vorticity  $u\LRp{x,y,t}$ given the initial condition $u_0$ at $t = 0$ by a trained tangent network $\Psi$ .

{\bf Data generation.} 
Data pair $\LRp{\ub, \yb}$ is generated by a similar procedure outlined for Burger's equation problem in \cref{sect:Burger_eq}. 
In particular, we draw samples of $\ub_0$ using the truncated Karhunen-Loève expansion
\[\ub_0 = \sum_{i=1}^{15} \sqrt{\lambda_i} \, {\bf \omega_i}(x) \, z_i,\]
where $z_i \sim \mathcal{N} \LRp{0, 1}, i = 1, \hdots, 15$, and $\LRp{\lambda, {\bf \omega}}$ is eigenpairs obtained by the eigendecomposition of the covariance operator $ 7^{\frac{3}{2}} \LRp{-\Delta + 49 \textbf{I}}^{-2.5}$ with periodic boundary conditions. Next, given initial vorticity $\ub_0$, we solve the Navier-Stokes equation by the stream-function formulation with a pseudospectral method \cite{FouierOP}. 
High resolution solutions are obtained on a uniform  $128 \times 128$ spatial mesh and uniform 1000 time steps in $\LRp{0, 2}$.
The high-resolution solutions are then down-sampled on a coarser mesh $32 \times 32$ in space and 200 uniform time steps, and they are used as the training data. To verify the accuracy of the learned neural network, we draw 10 test samples independently. It turns out that the Navier-stokes equation is much more challenging than Burger's equation, thus we use 200 time steps for each training data as opposed to 100 for the Burger equation.
Similar to the above, to challenge the learned network we use 1500 time steps for testing, and thus the testing time horizon is far beyond the training time horizon.

\begin{figure}[htb!]
    \centering
    \includegraphics[width=\textwidth]{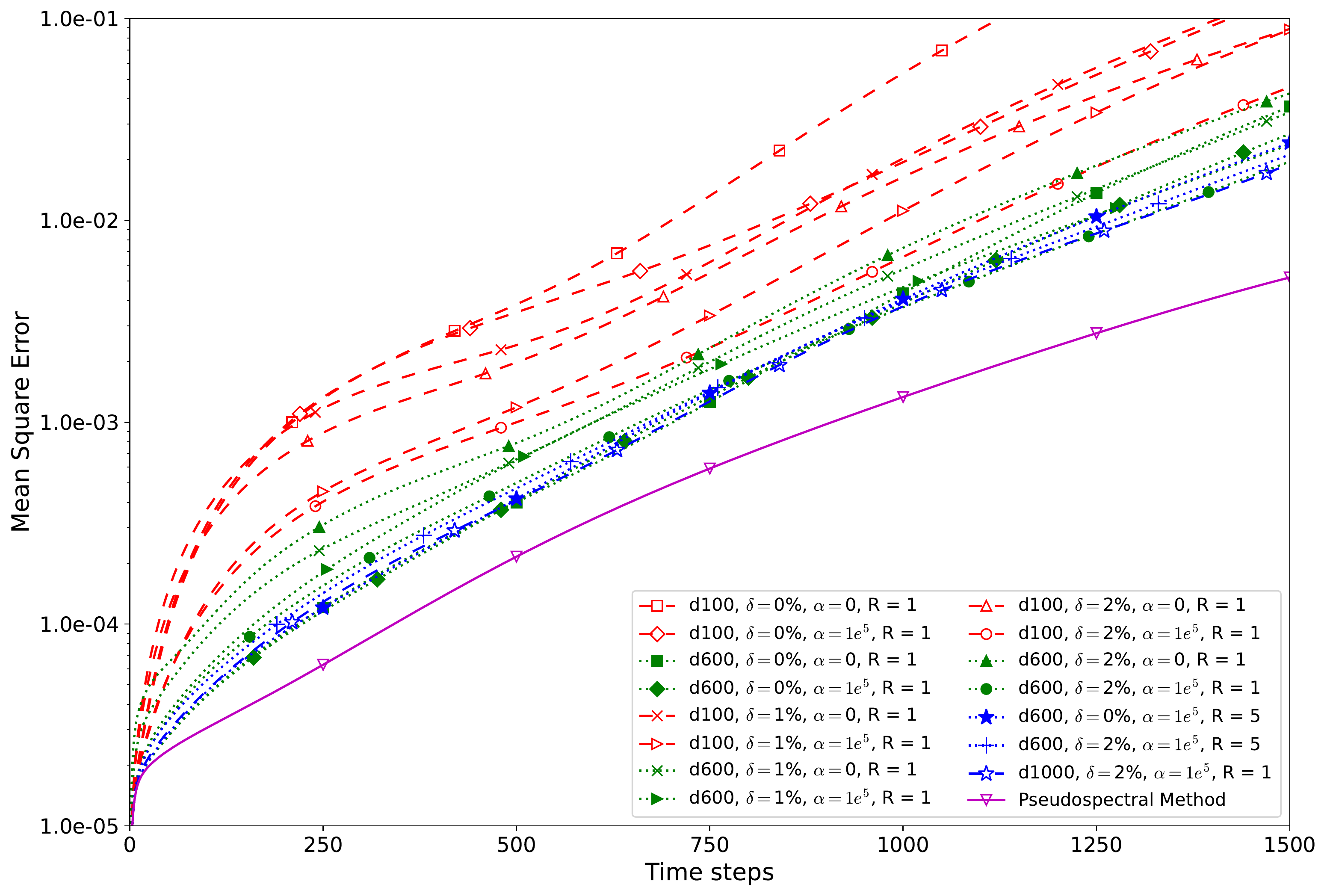}
    \caption{\textbf{Navier-Stokes equation}. The mean-square error versus time steps obtained by the various learned neural network with $S = 1$.} 
    \figlab{2D_NS_d200d600}
\end{figure}

{\bf Neural network architecture.}
With the same observation for the Burger equation in \cref{sect:Burger_eq}, we use a shallow network of one layer with  5000 neurons using ReLU activation function.
\texttt{ADAM} optimizer with default parameters is used with the learning rate of $2 \times 10^{-4}$, while the training batch size is 2 samples. The chosen ``optimal" network is the one having the lowest accumulated mean square error after $1500$ time steps for 10 testing samples. Following the wave and Burger examples, we pick a relatively large value for the model-constrained regularization parameter $\alpha = 10^5$.

{\bf Long-time predictions.}
\begin{figure}[htb!]
    \centering
    \begin{tabular*}{\textwidth}{c c c c c}
        \centering
         &
        \raisebox{-0.5\height}{\small $n_t = 0$} &
        \raisebox{-0.5\height}{\small $n_t = 200$} &
        \raisebox{-0.5\height}{\small $n_t = 1000$} & 
        \raisebox{-0.5\height}{\small $n_t = 1500$} 
        \\
        \centering
        \rotatebox[origin=c]{90}{\small True $\ub$} &
        \raisebox{-0.5\height}{\includegraphics[width=.20 \textwidth]{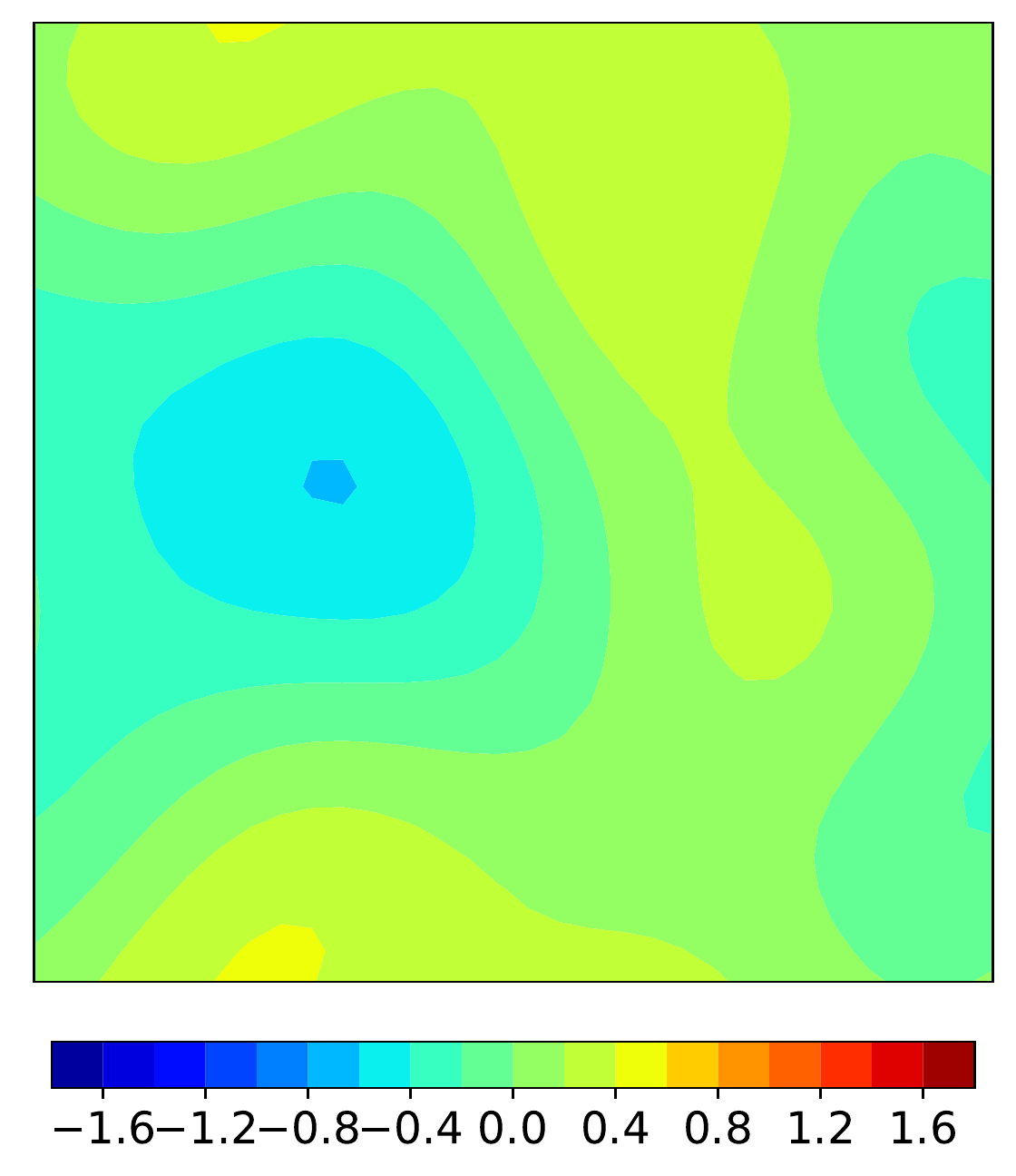}} &
        \raisebox{-0.5\height}{\includegraphics[width=.20 \textwidth]{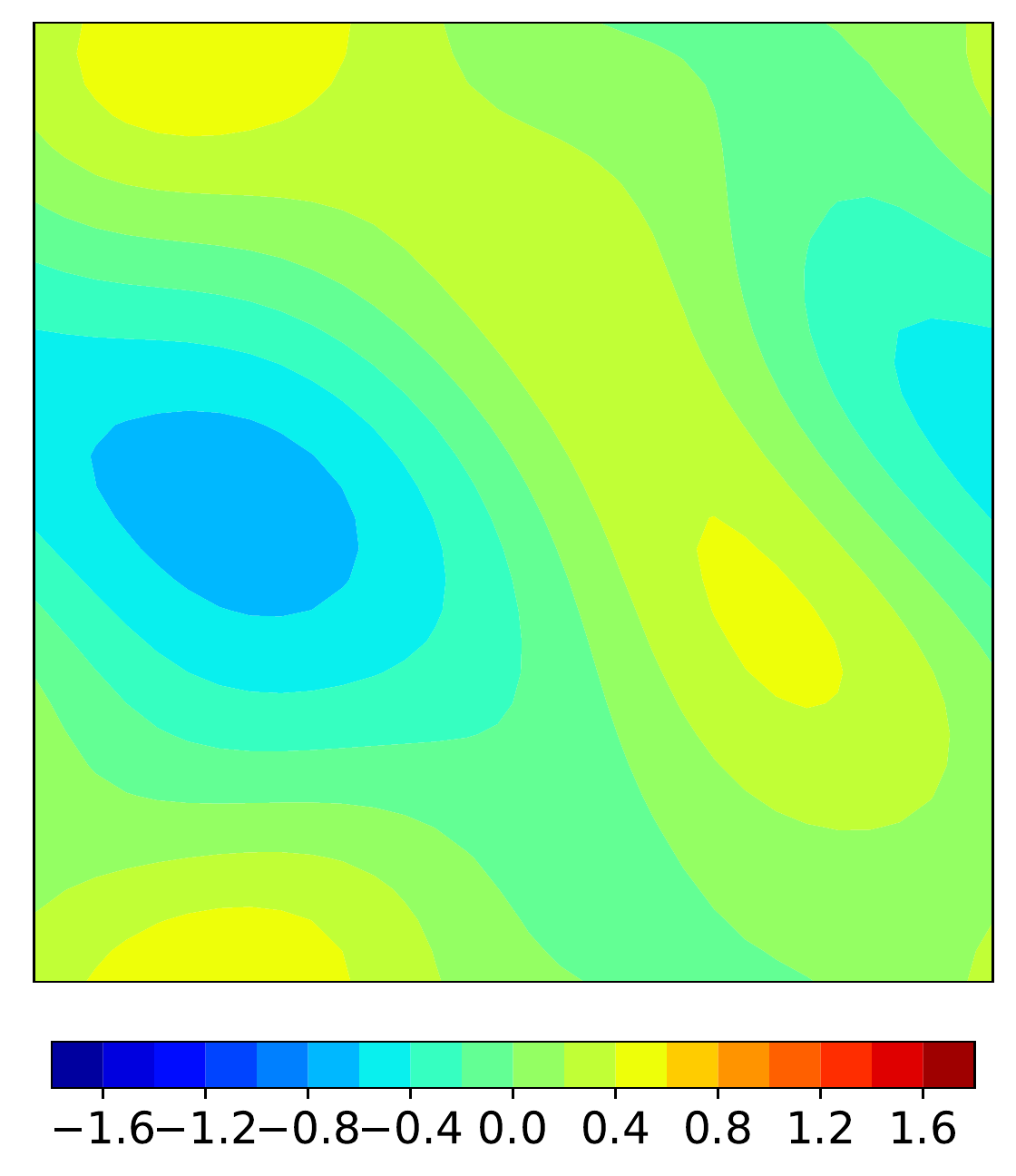}} &
        \raisebox{-0.5\height}{\includegraphics[width=.20 \textwidth]{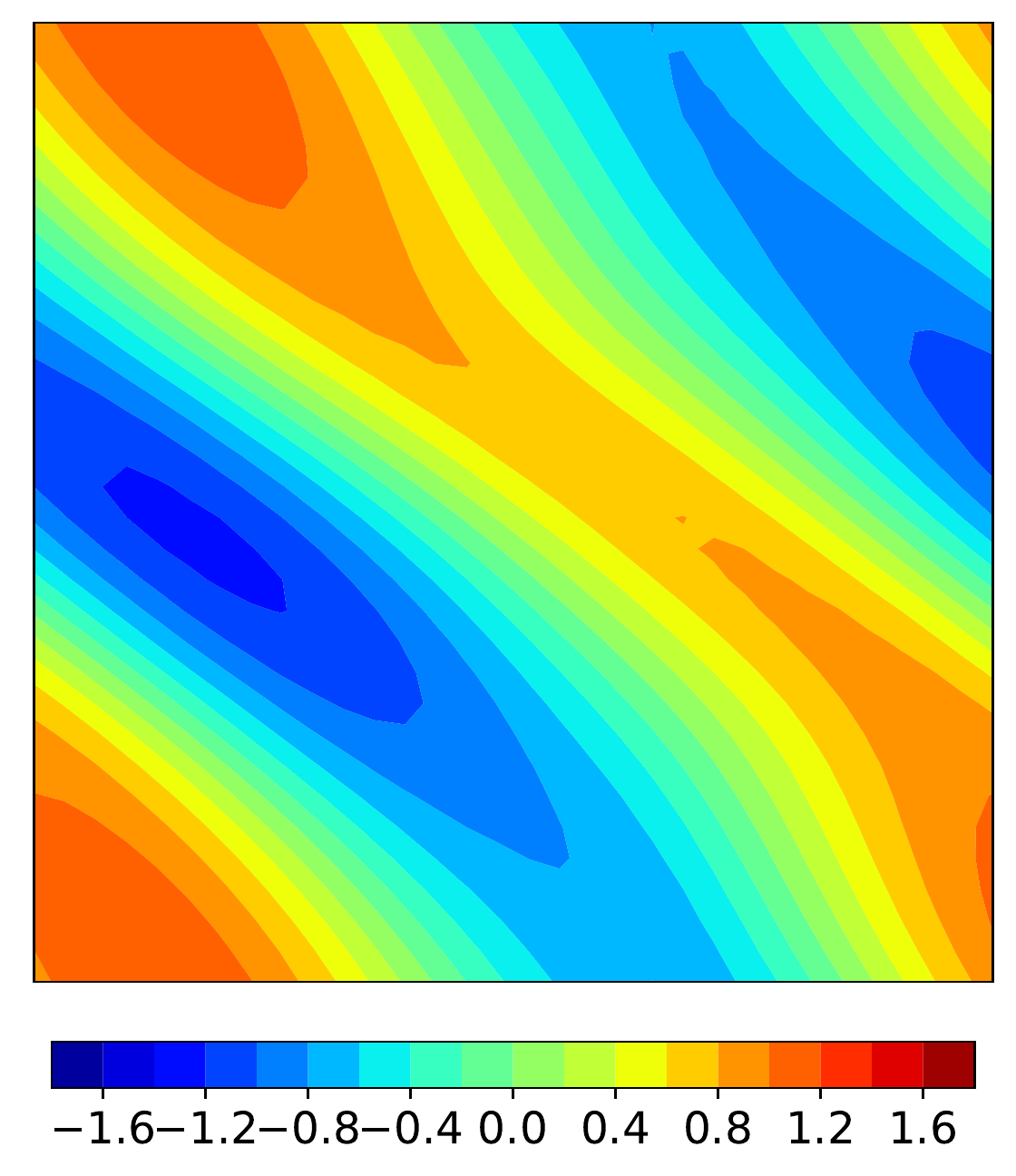}} & 
        \raisebox{-0.5\height}{\includegraphics[width=.20 \textwidth]{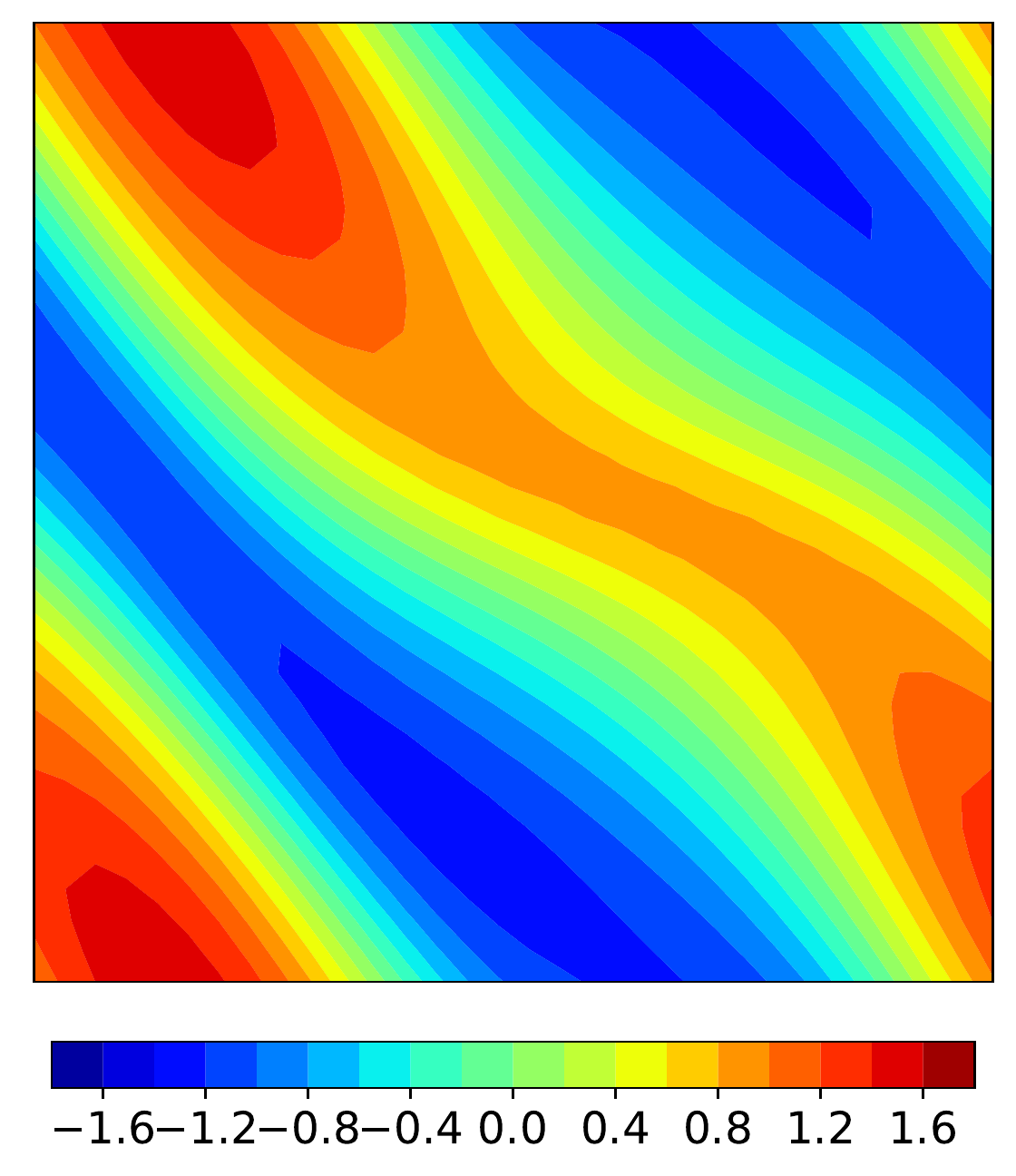}} 
        \\
        \centering
        \rotatebox[origin=c]{90}{$\alpha =0,\delta = 0\%$} &
        \raisebox{-0.5\height}{\includegraphics[width=.20 \textwidth]{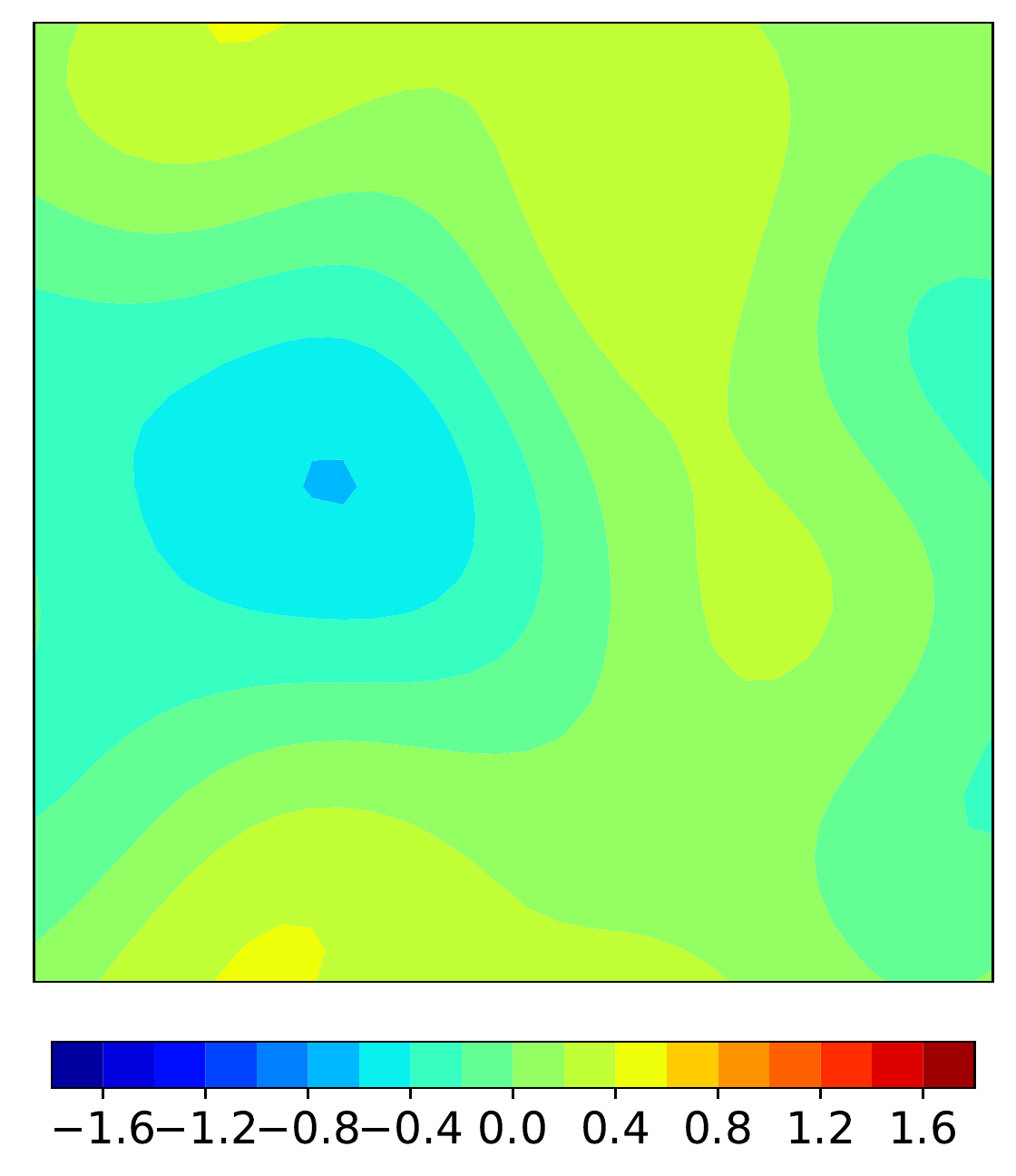}} &
        \raisebox{-0.5\height}{\includegraphics[width=.20 \textwidth]{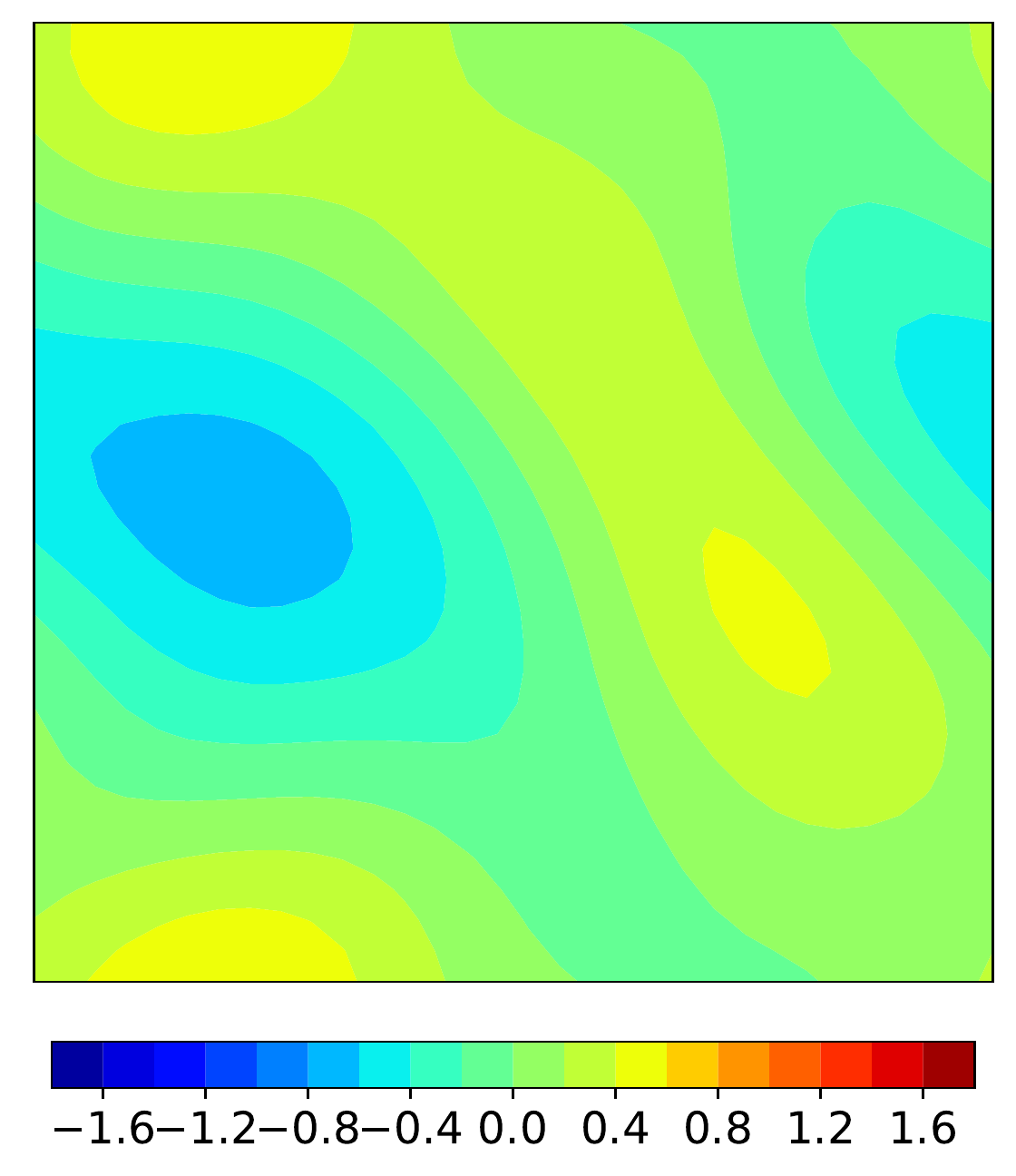}} &
        \raisebox{-0.5\height}{\includegraphics[width=.20 \textwidth]{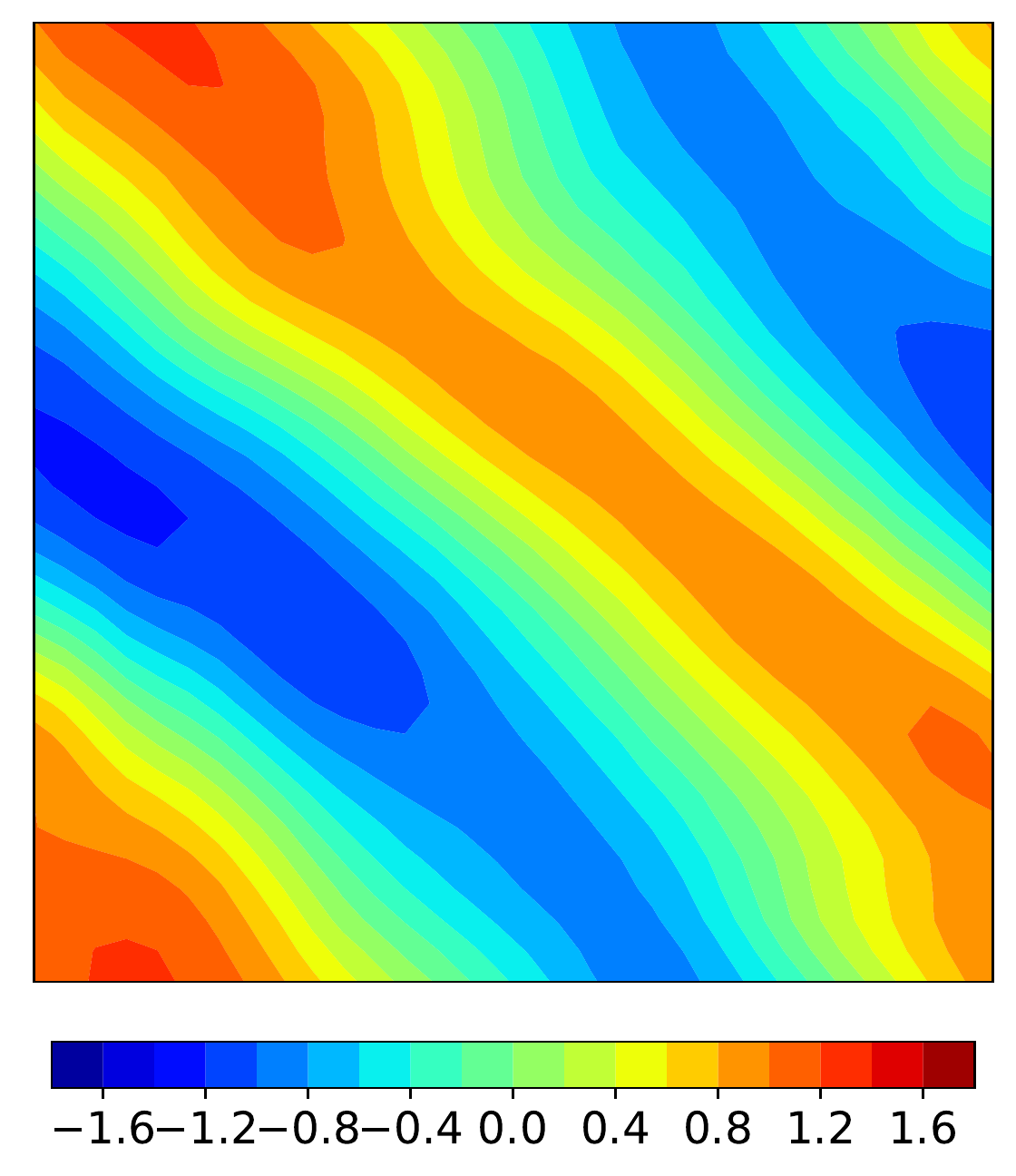}} & 
        \raisebox{-0.5\height}{\includegraphics[width=.20 \textwidth]{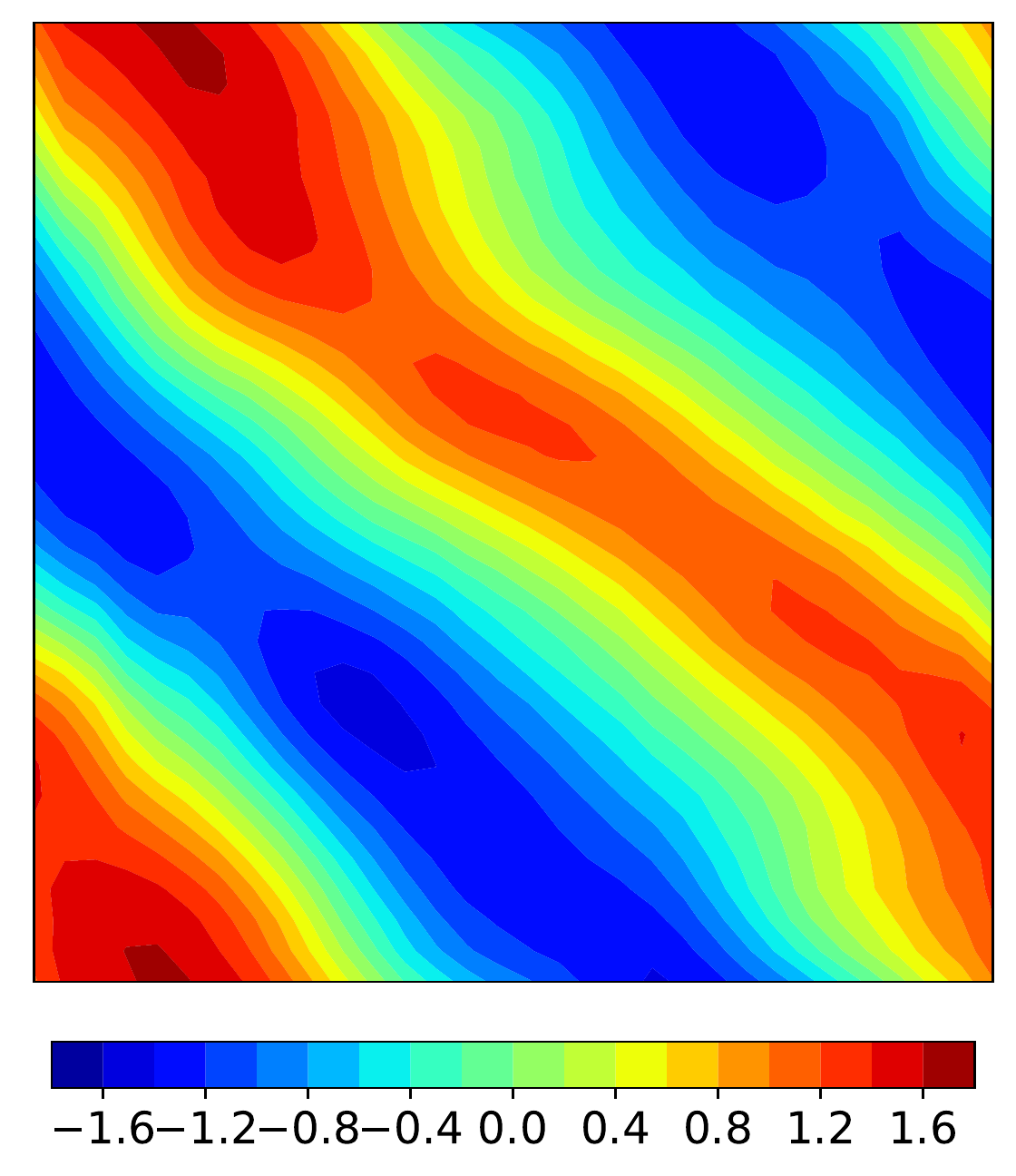}} 
        \\
        \centering
        \rotatebox[origin=c]{90}{$\alpha =0,\delta = 2\%$} &
        \raisebox{-0.5\height}{\includegraphics[width=.20 \textwidth]{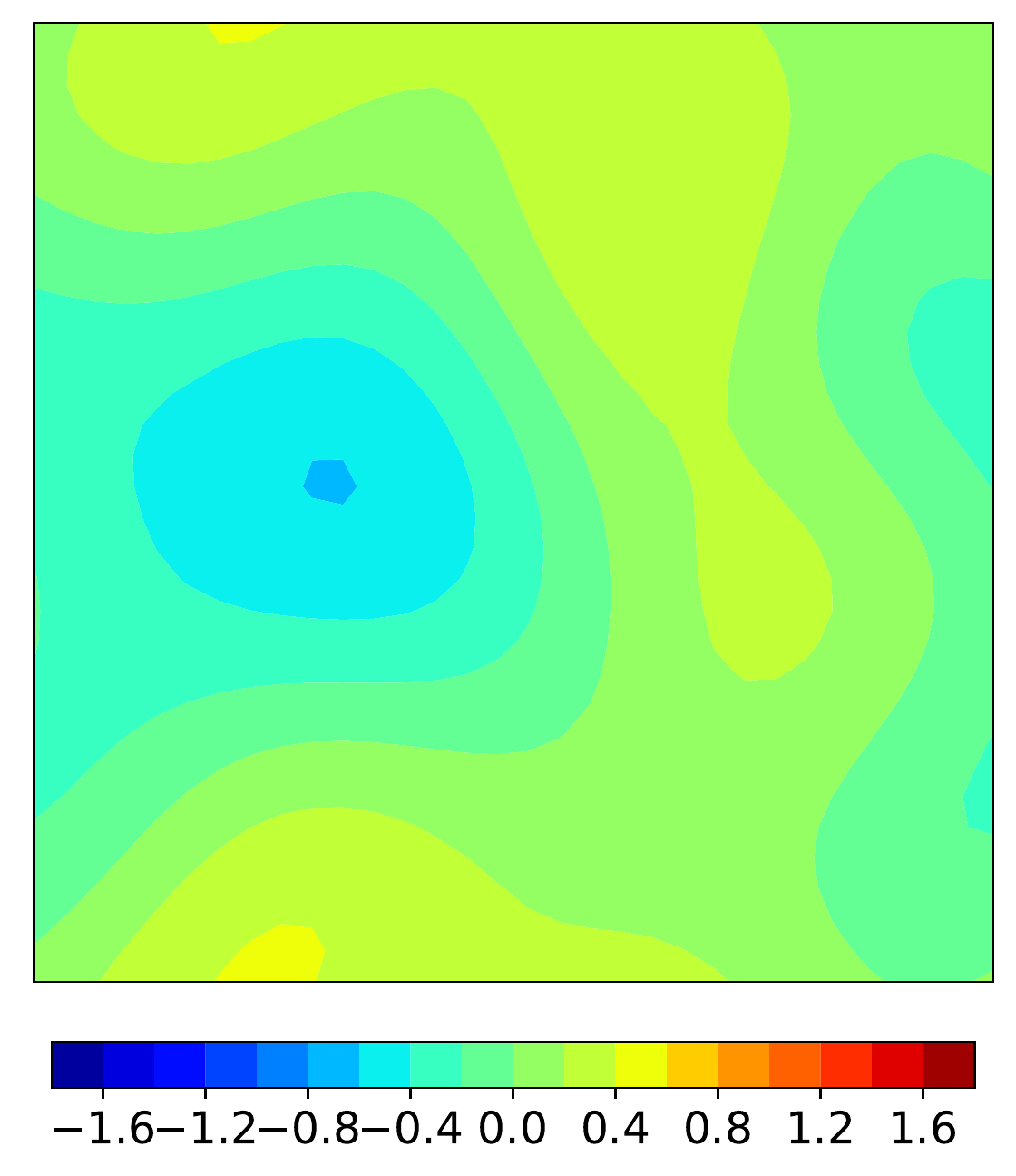}} &
        \raisebox{-0.5\height}{\includegraphics[width=.20 \textwidth]{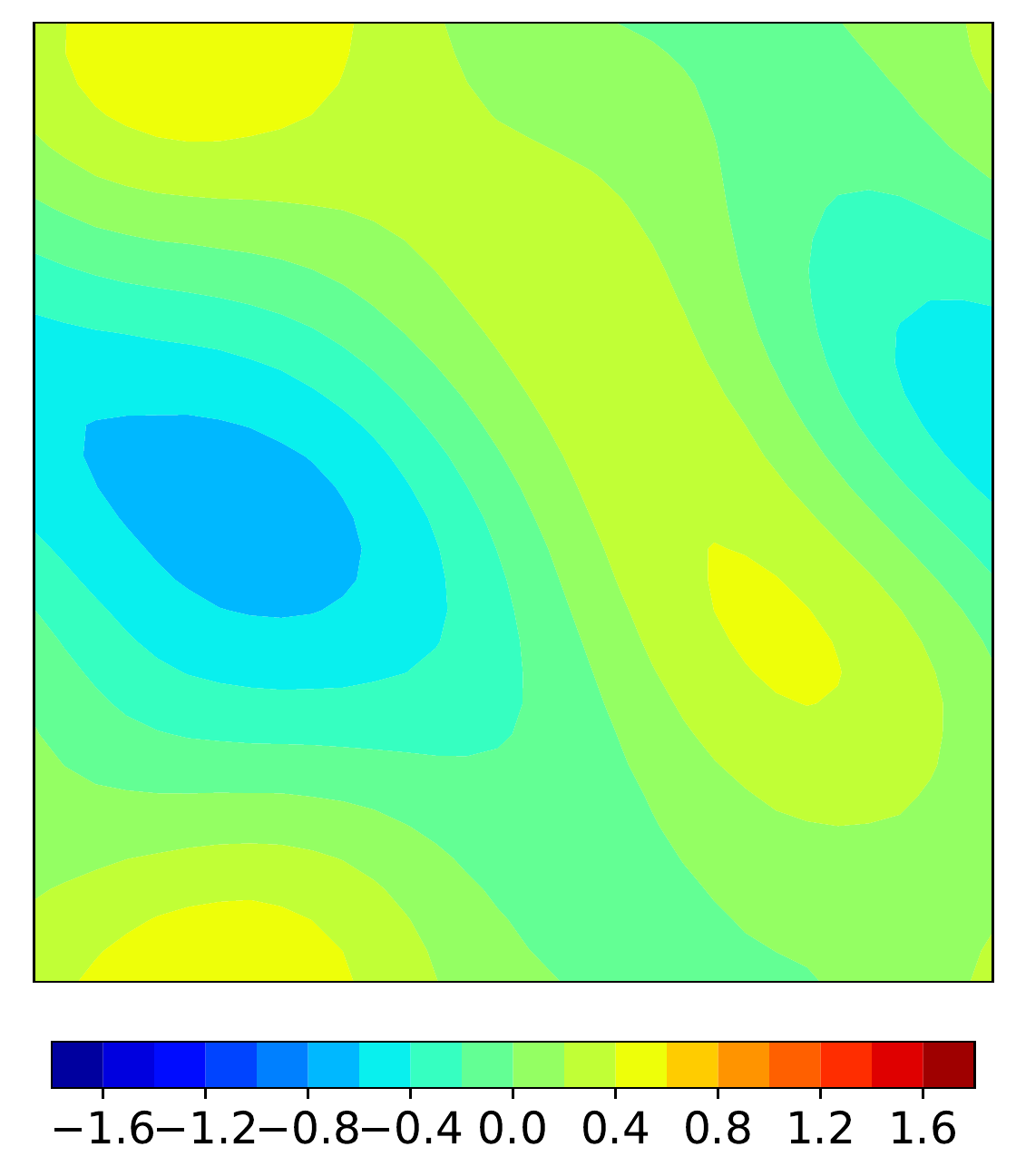}} &
        \raisebox{-0.5\height}{\includegraphics[width=.20 \textwidth]{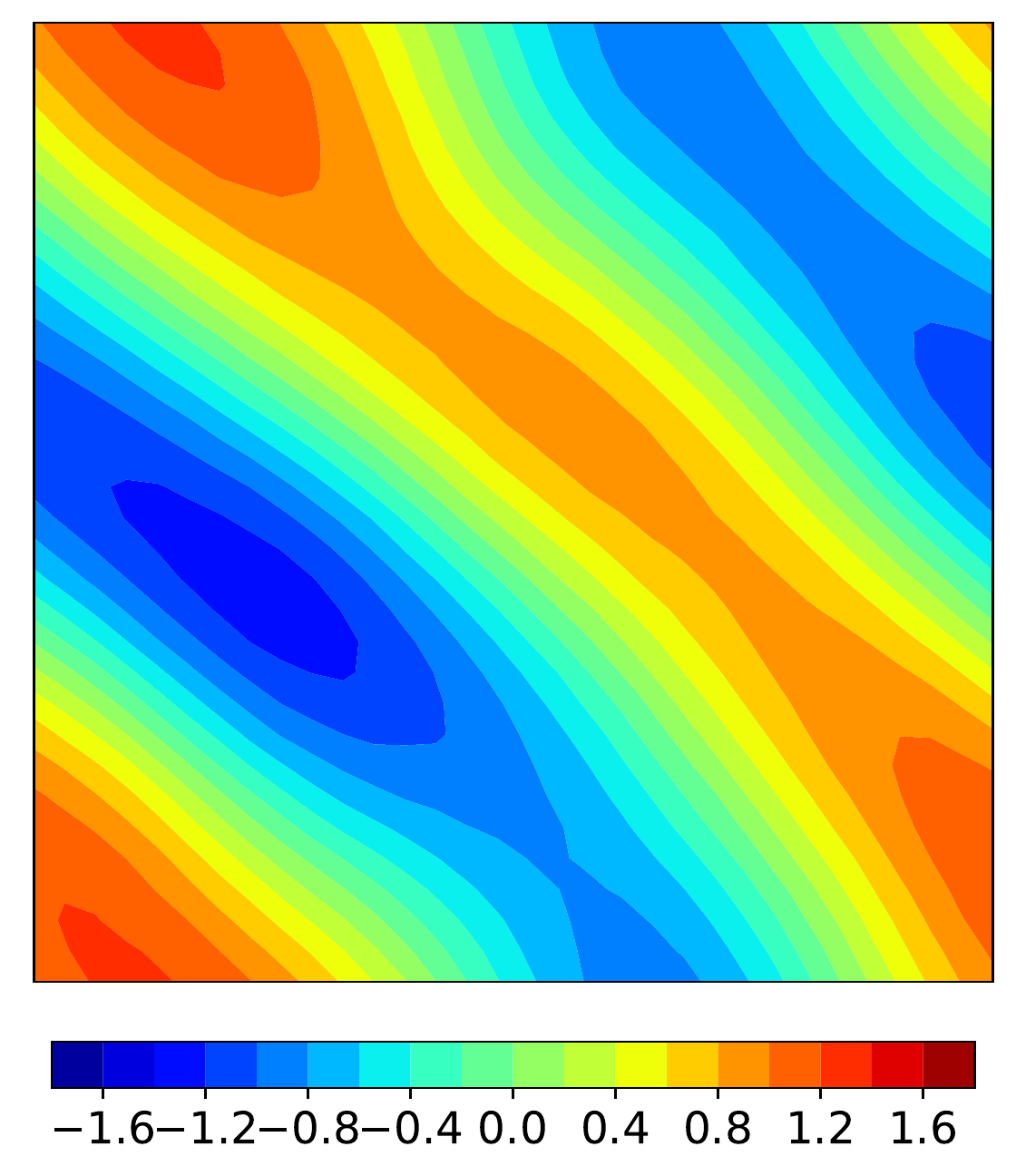}} & 
        \raisebox{-0.5\height}{\includegraphics[width=.20 \textwidth]{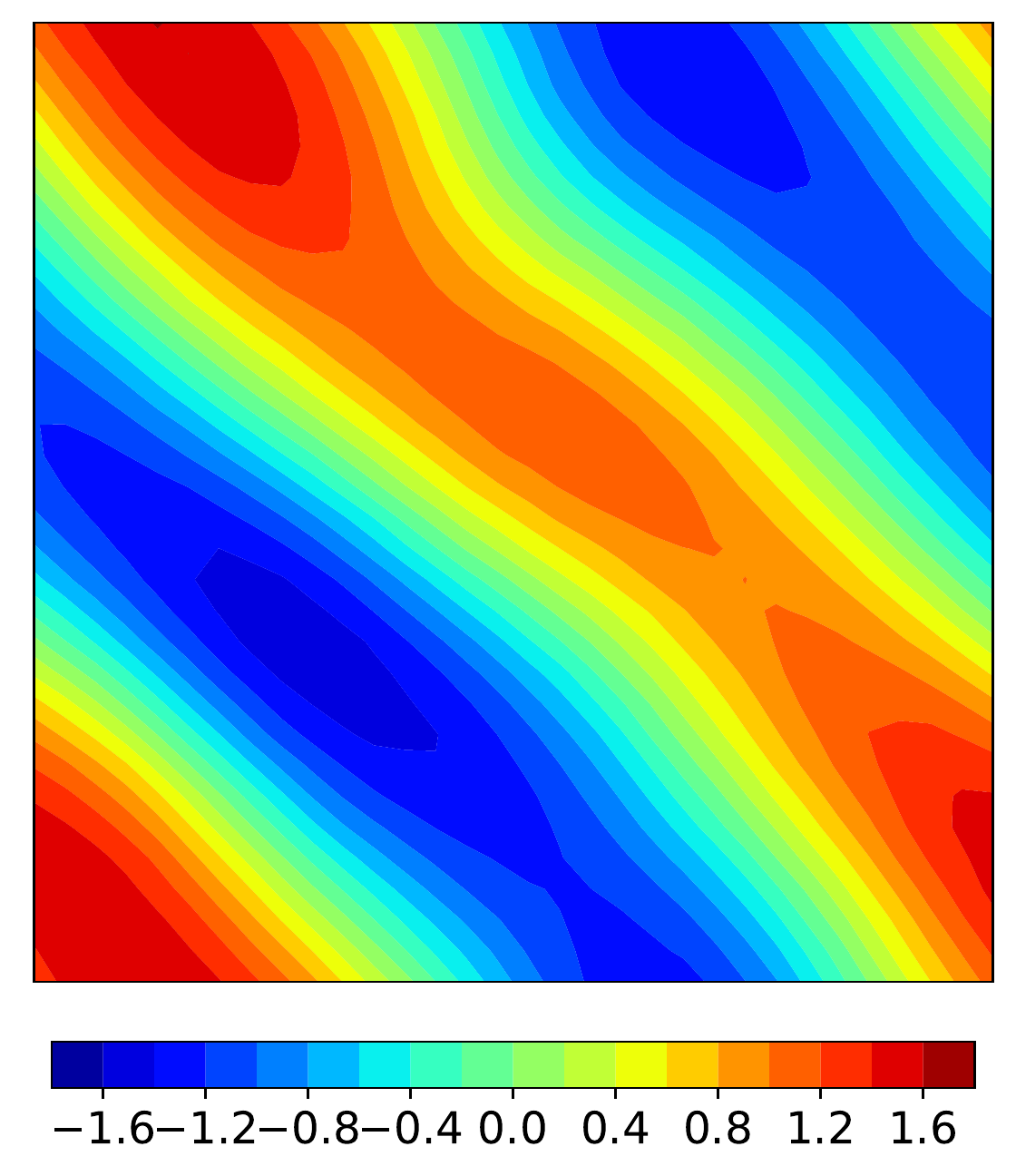}} 
        \\
        \centering
        \rotatebox[origin=c]{90}{$\alpha =1e^5,\delta = 0\%$} &
        \raisebox{-0.5\height}{\includegraphics[width=.20 \textwidth]{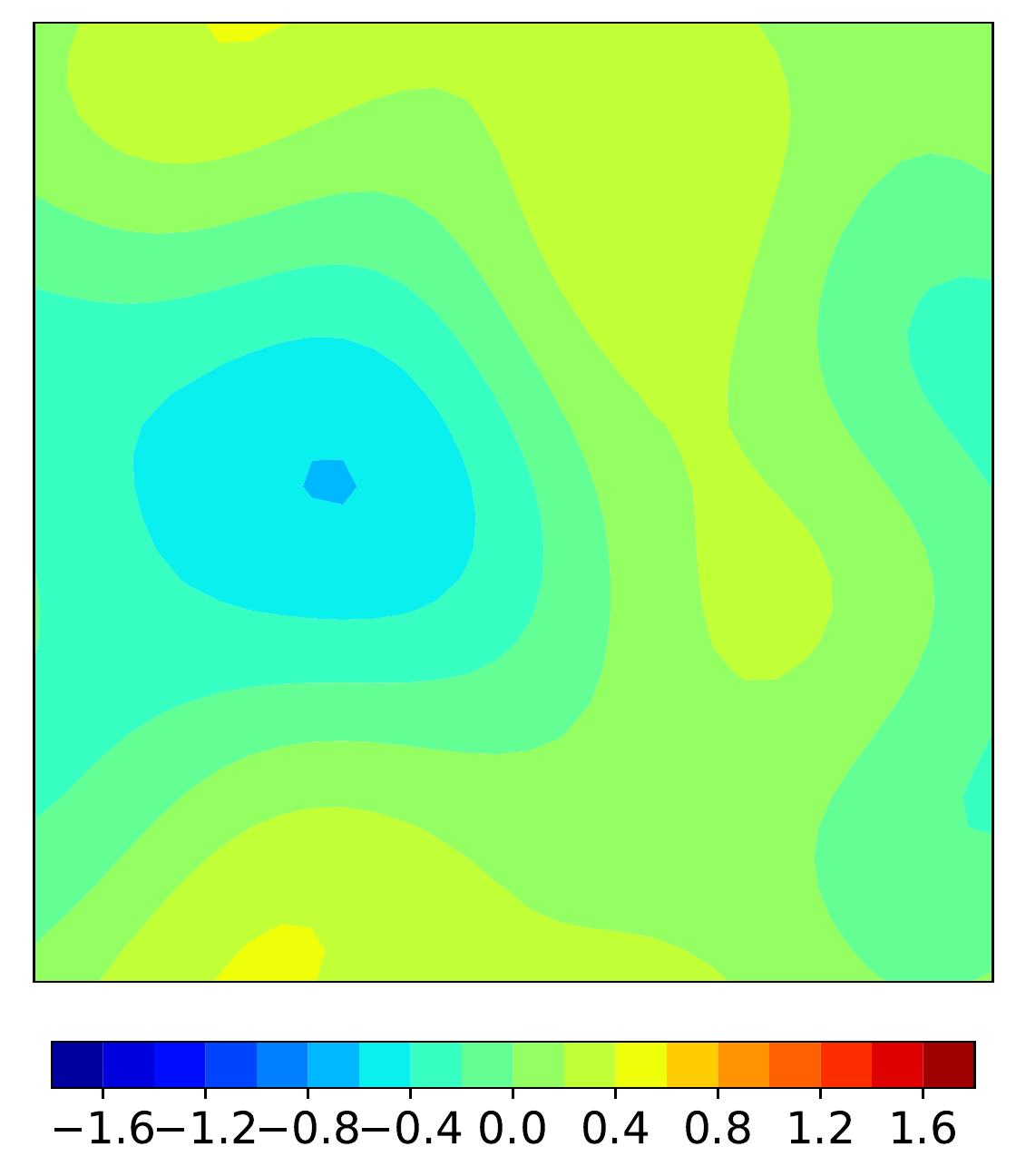}} &
        \raisebox{-0.5\height}{\includegraphics[width=.20 \textwidth]{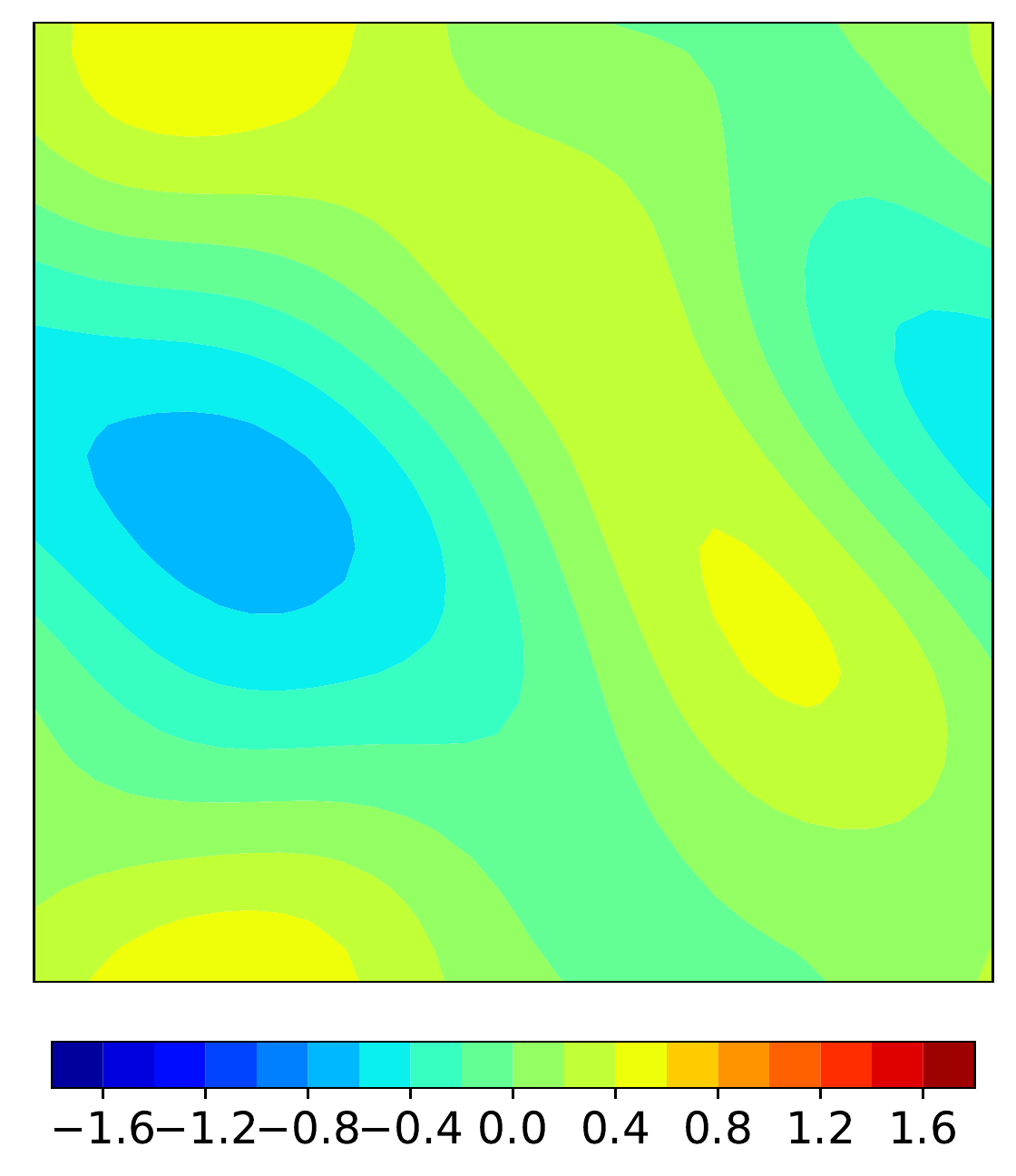}} &
        \raisebox{-0.5\height}{\includegraphics[width=.20 \textwidth]{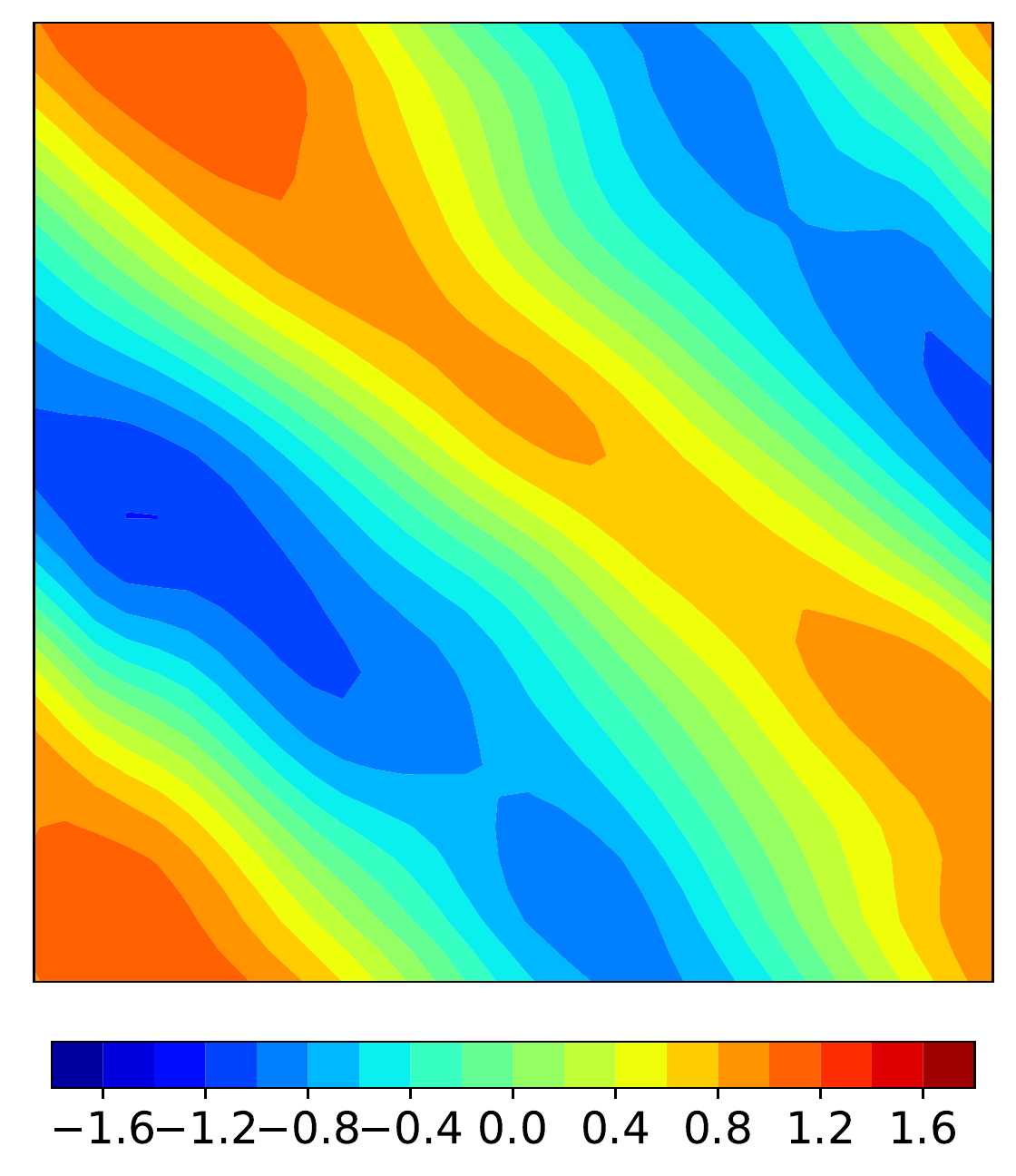}} & 
        \raisebox{-0.5\height}{\includegraphics[width=.20 \textwidth]{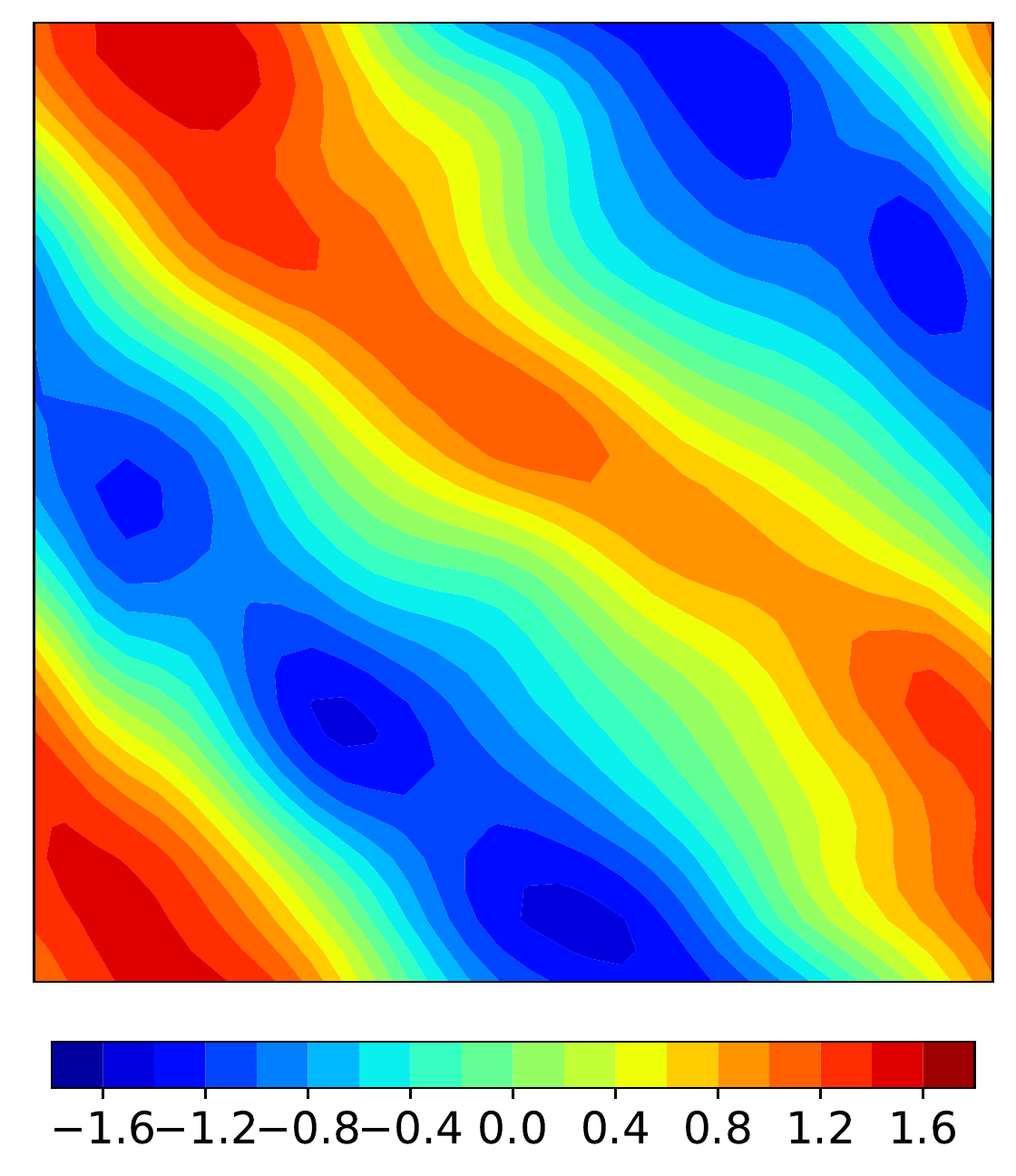}}
        \\
        \centering
        \rotatebox[origin=c]{90}{$\alpha =1e^5,\delta = 2\%$} &
        \raisebox{-0.5\height}{\includegraphics[width=.20 \textwidth]{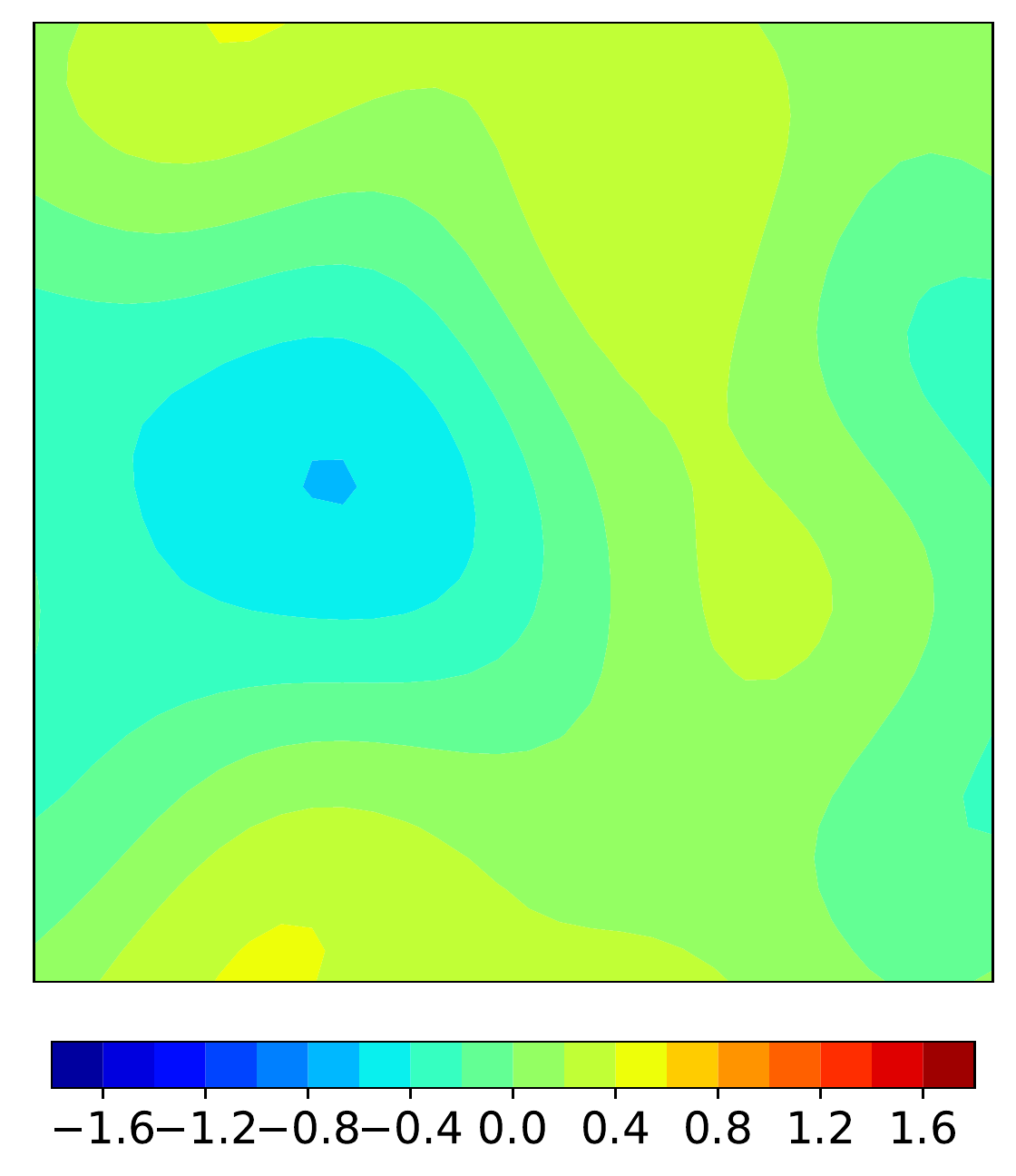}} &
        \raisebox{-0.5\height}{\includegraphics[width=.20 \textwidth]{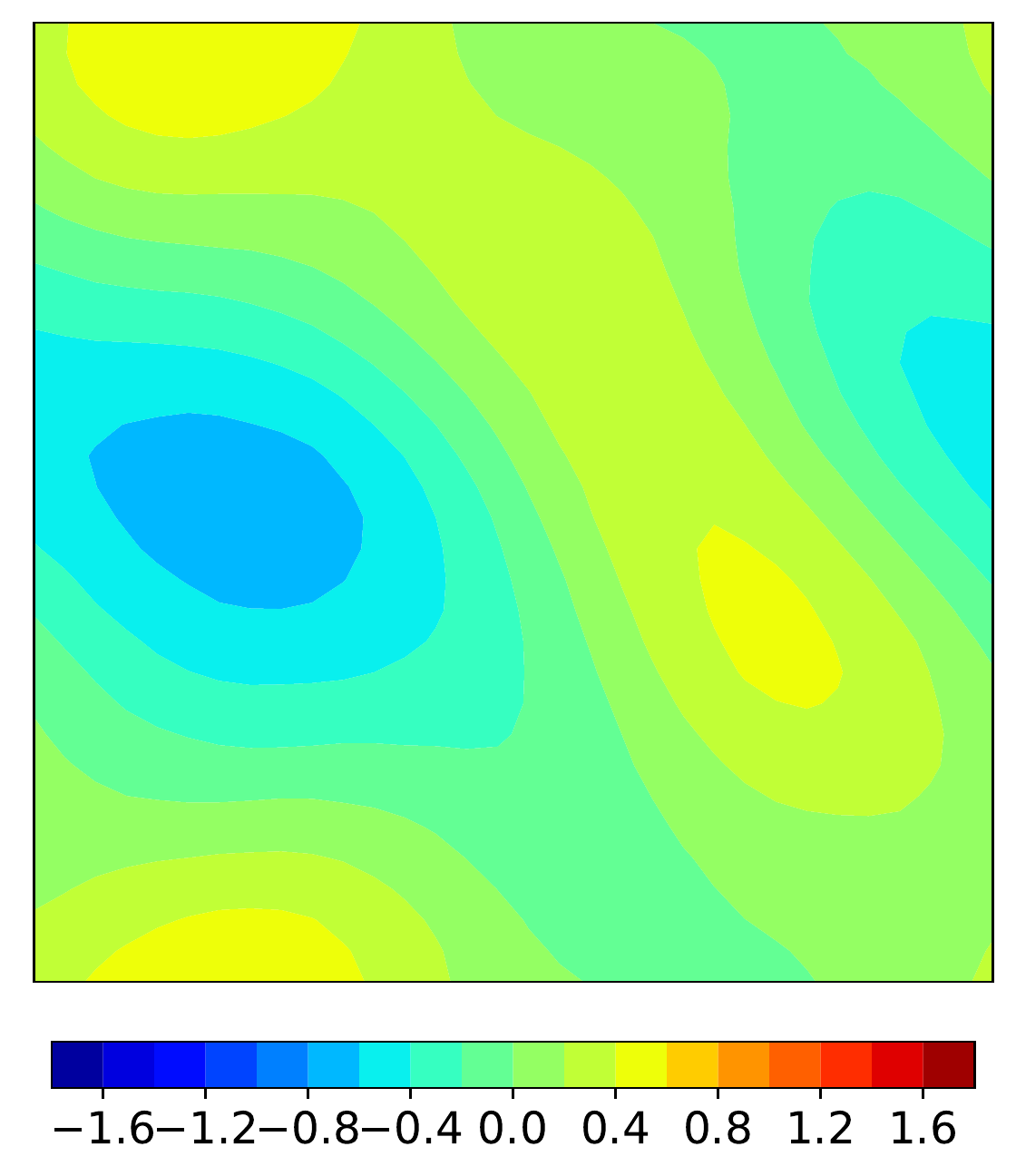}} &
        \raisebox{-0.5\height}{\includegraphics[width=.20 \textwidth]{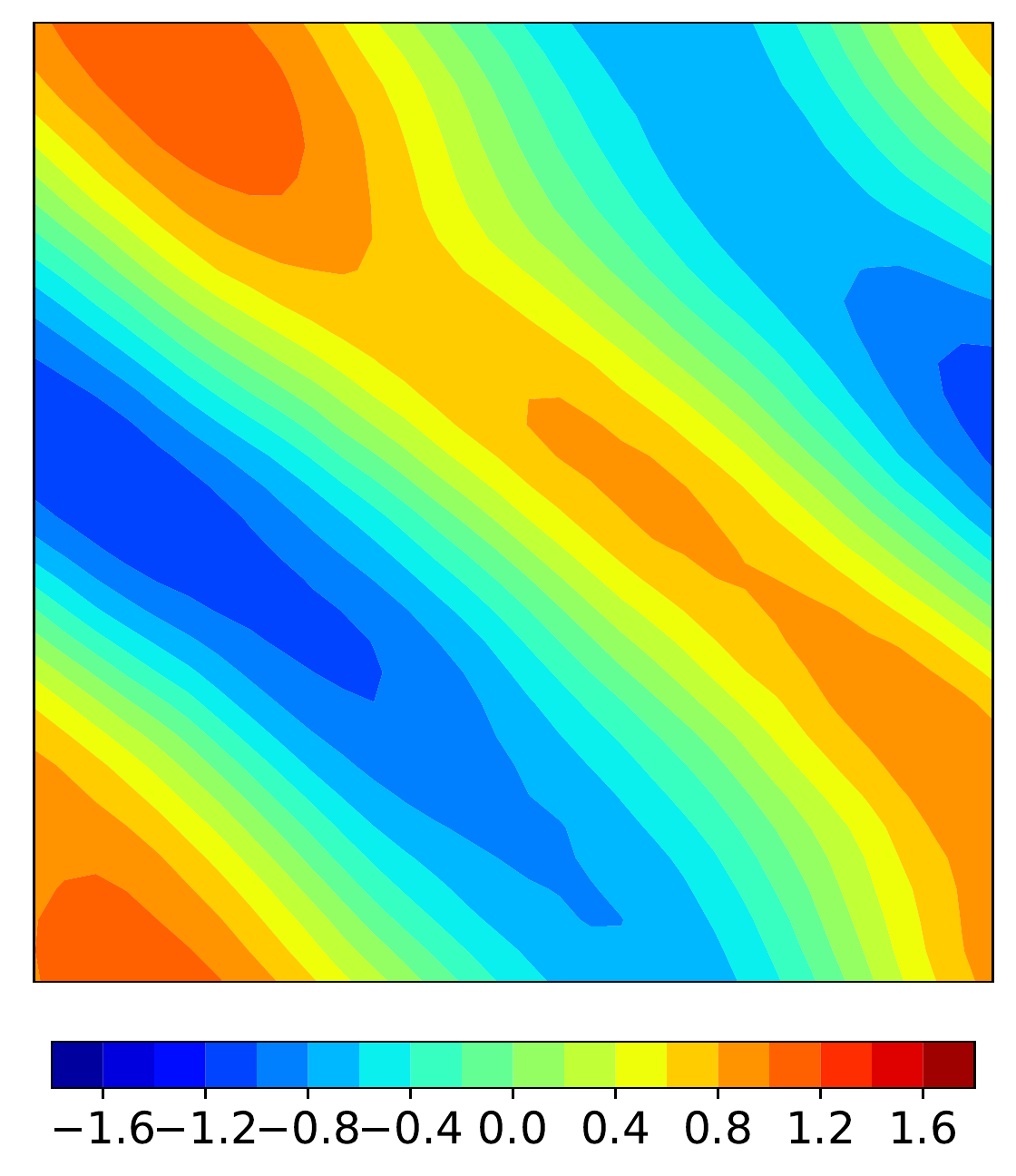}} & 
        \raisebox{-0.5\height}{\includegraphics[width=.20 \textwidth]{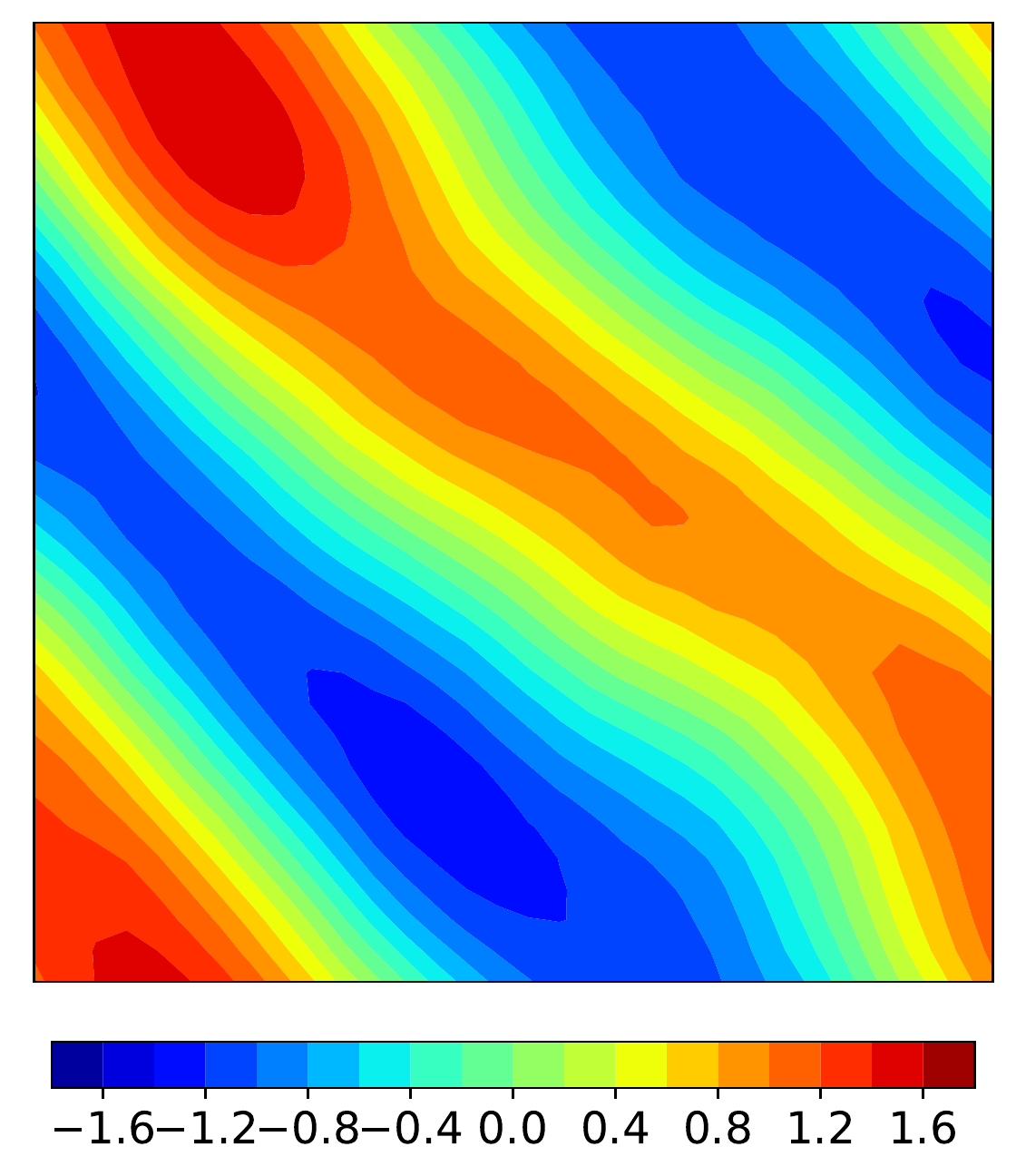}} 
    \end{tabular*}
    \caption{\textbf{Navier-Stokes equation}. Predicted solutions at different time steps obtained by various trained networks with 600 data samples. \textit{First row}: ground truth; \textit{Second row}: pure data-driven network solutions with noise-free data; \textit{Third row}: pure data-driven network solutions with randomized data; \textit{Fourth row}:   model-constrained network solutions with noise-free data; \textit{Fifth row}:  model-constrained network solutions with randomized data.} 
    \figlab{2D_NS_samples}
\end{figure}
\cref{fig:2D_NS_d200d600} shows  the mean-square error of predictions and ground truth solutions as a function of time steps. It can be seen that training with a large data set with $600$ samples provides much more accurate solutions  than with small data set with $100$ samples. On the one hand, among learned neural networks trained with $100$ data samples, the model-constrained network with data randomization for $\LRp{d100,2\%, 1,1,10^5}$ setting is far closer to the true solution than the other networks. This implies that the model-constrained approach has a significant contribution to producing accurate predictions in the context of  small data. 
In the case of richer data set with 600 samples,  networks with two settings $\LRp{d600,0\%, 1,1,0}$ and $\LRp{d600,0\%, 1,1,10^5}$ trained with noise-free data show a good performance in the short time predictions, while the long-time predictions deteriorate. Noticeably, between these two networks, the model-constrained one has more accurate predictions starting from the $500$th time step. 
In the meantime, with the same data set with 600 samples, pure data-driven neural networks trained with higher noise level data give a higher error, for example, $\LRp{d600,2\%, 1,1,0}$ neural network predictions are less accurate than those obtained from $\LRp{d600,1\%, 1,1,0}$. In contrast, model-constrained network with 2\% noise level $\LRp{d600,2\%, 1,1,10^5}$ is superior to 1\% noise level $\LRp{d600,1\%, 1,1,10^5}$. 
Another point is that as we increase the sequential model-constrained value to $R=5$, we obtain good predictions in both short-time and long-time intervals. 
Two model-constrained networks with $\LRp{d600,0\%, 1,5,10^5}$ and $\LRp{d600,2\%, 1,5,10^5}$ are comparable to the network with much larger data set $\LRp{d1000,0\%, 1,1,10^5}$ without randomization. \begin{figure}[htb!]
    \centering
    \begin{tabular*}{\textwidth}{c c c c c}
        \centering
         &
        \raisebox{-0.5\height}{\small $t = 0$} &
        \raisebox{-0.5\height}{\small $t = 2$} &
        \raisebox{-0.5\height}{\small $t = 10$} & 
        \raisebox{-0.5\height}{\small $t = 15$} 
        \\
        \centering
        \rotatebox[origin=c]{90}{True solutions $\ub$} &
        \raisebox{-0.5\height}{\includegraphics[width=.20 \textwidth]{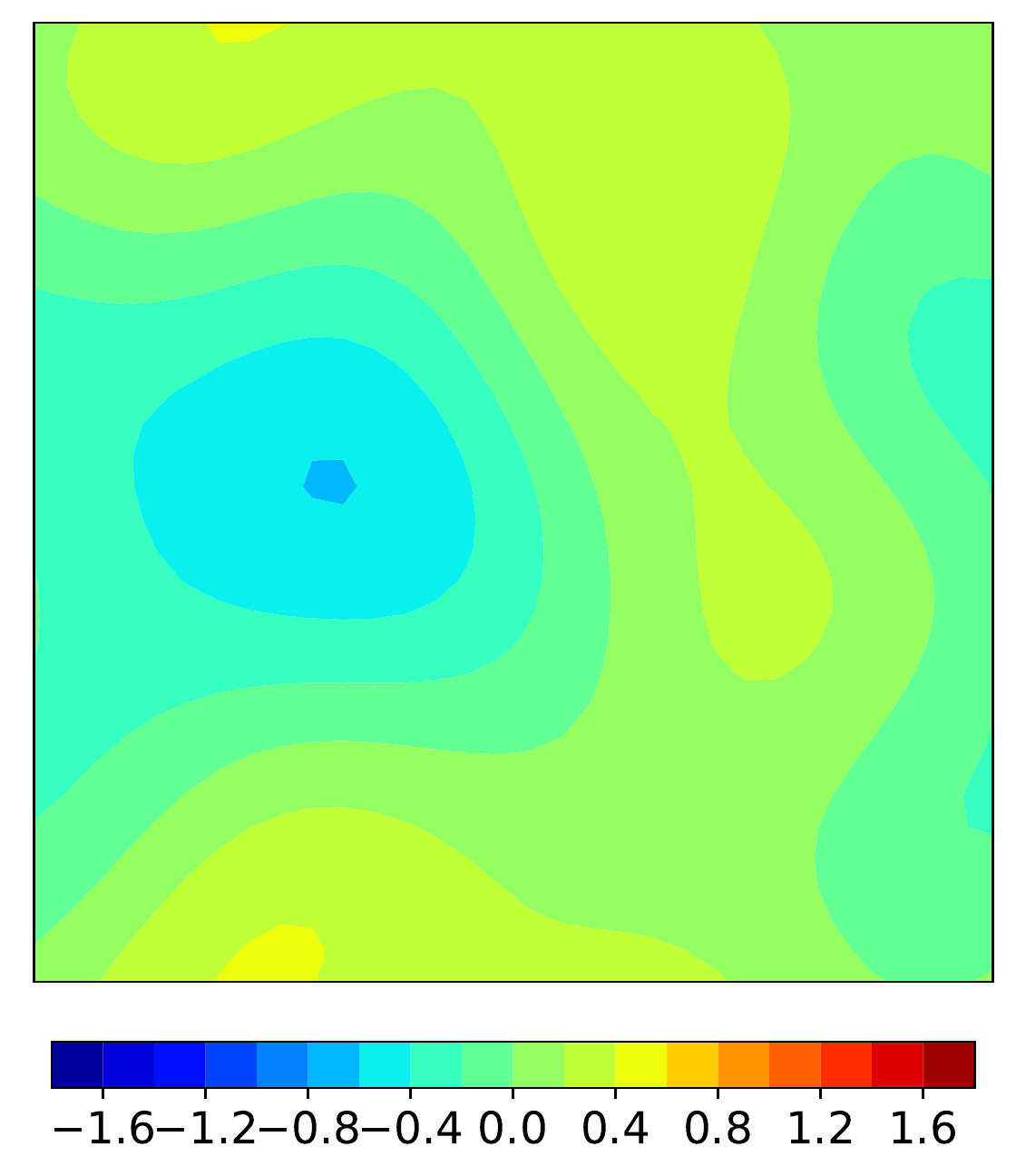}} &
        \raisebox{-0.5\height}{\includegraphics[width=.20 \textwidth]{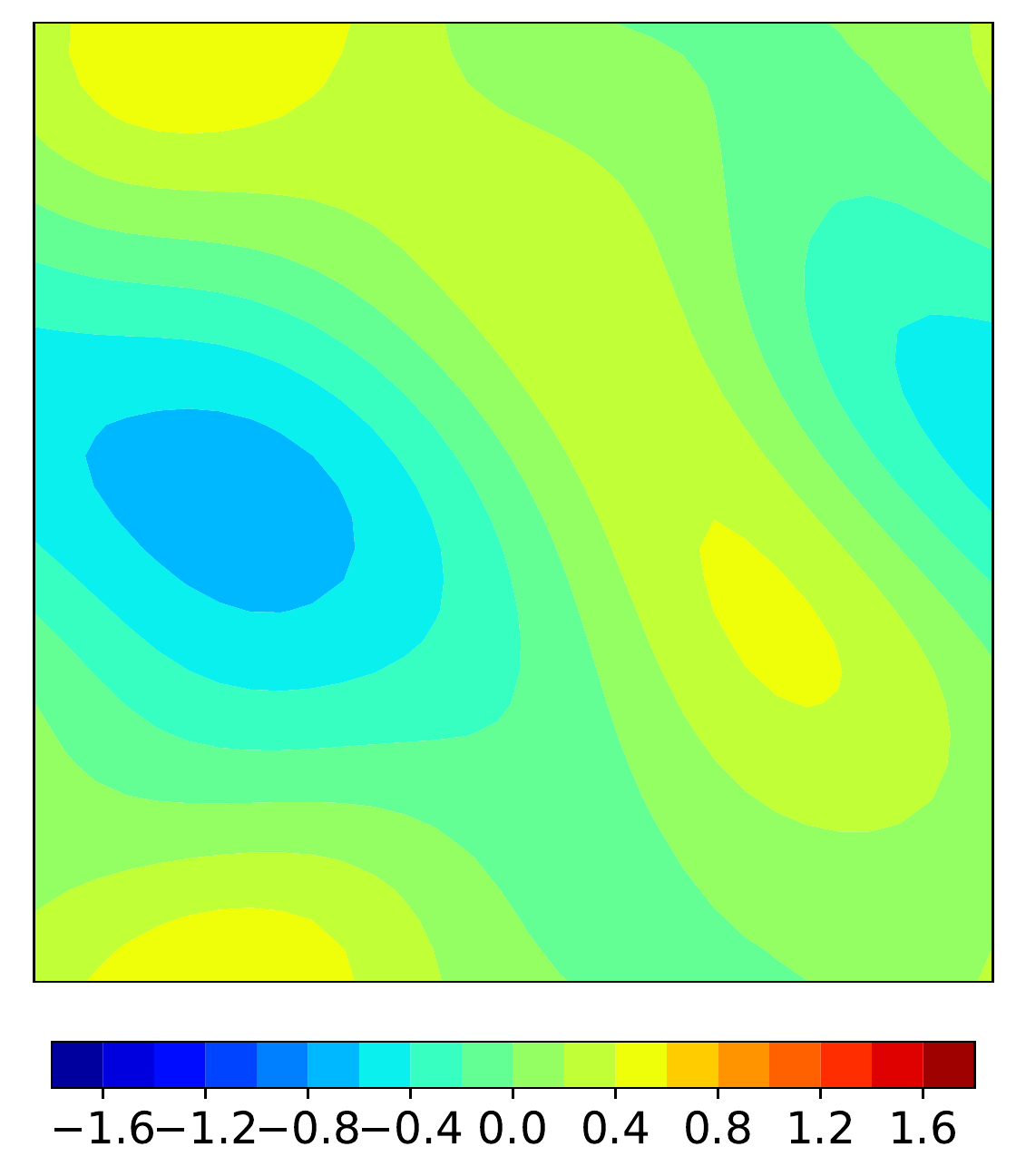}} &
        \raisebox{-0.5\height}{\includegraphics[width=.20 \textwidth]{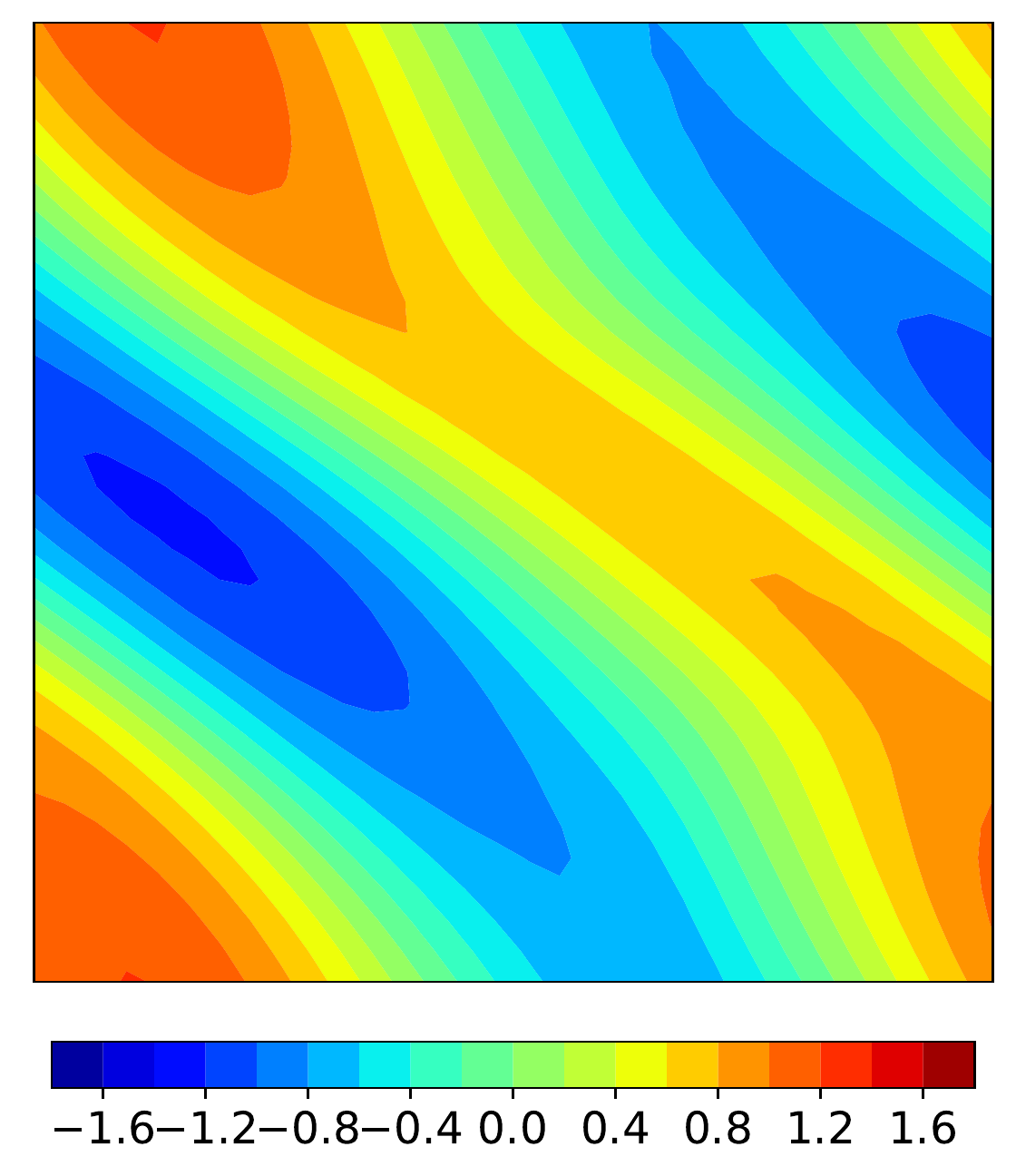}} & 
        \raisebox{-0.5\height}{\includegraphics[width=.20 \textwidth]{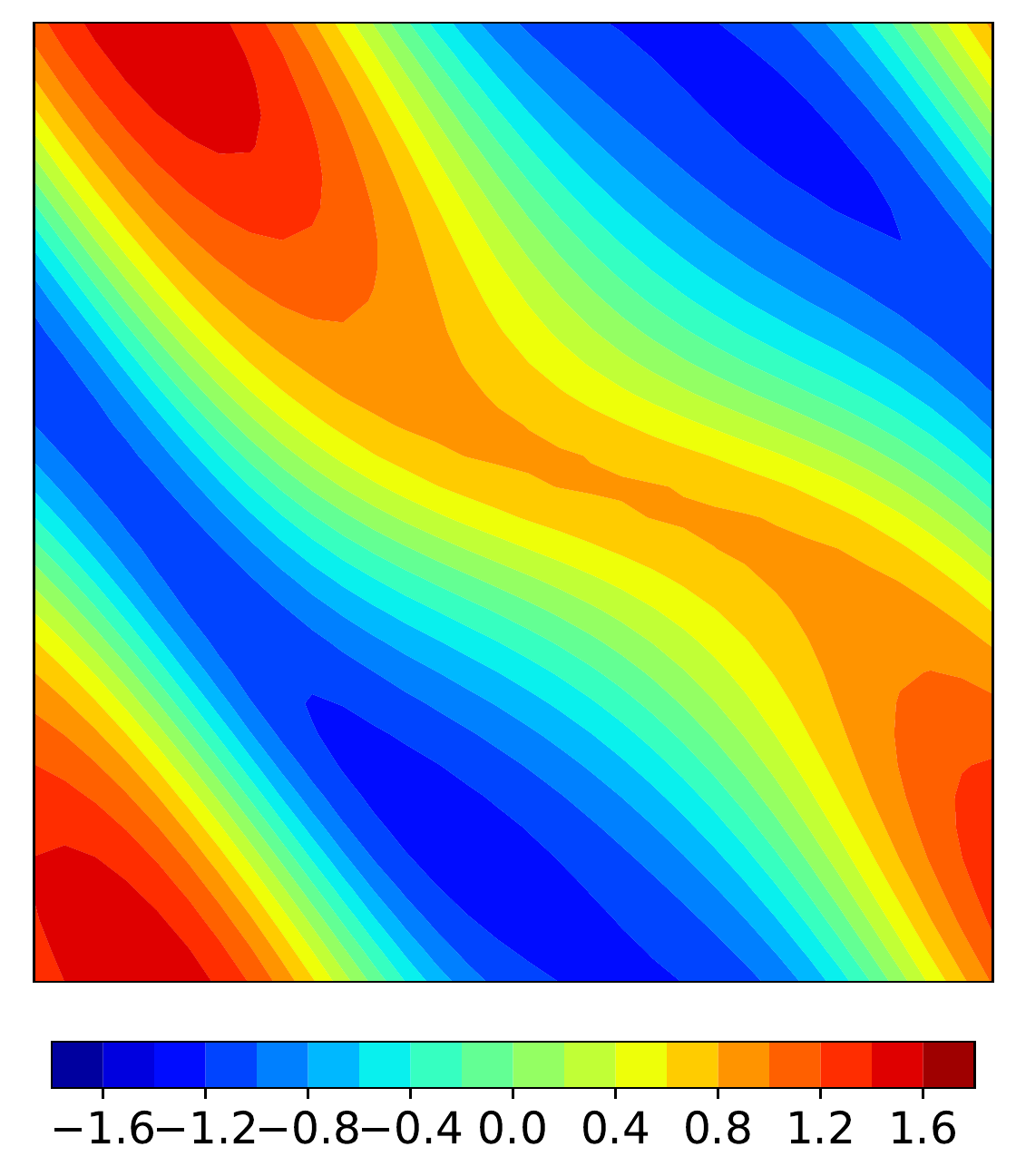}} 
        \\
        \centering
        \rotatebox[origin=c]{90}{FE - Learned $\Psi$} &
        \raisebox{-0.5\height}{\includegraphics[width=.20 \textwidth]{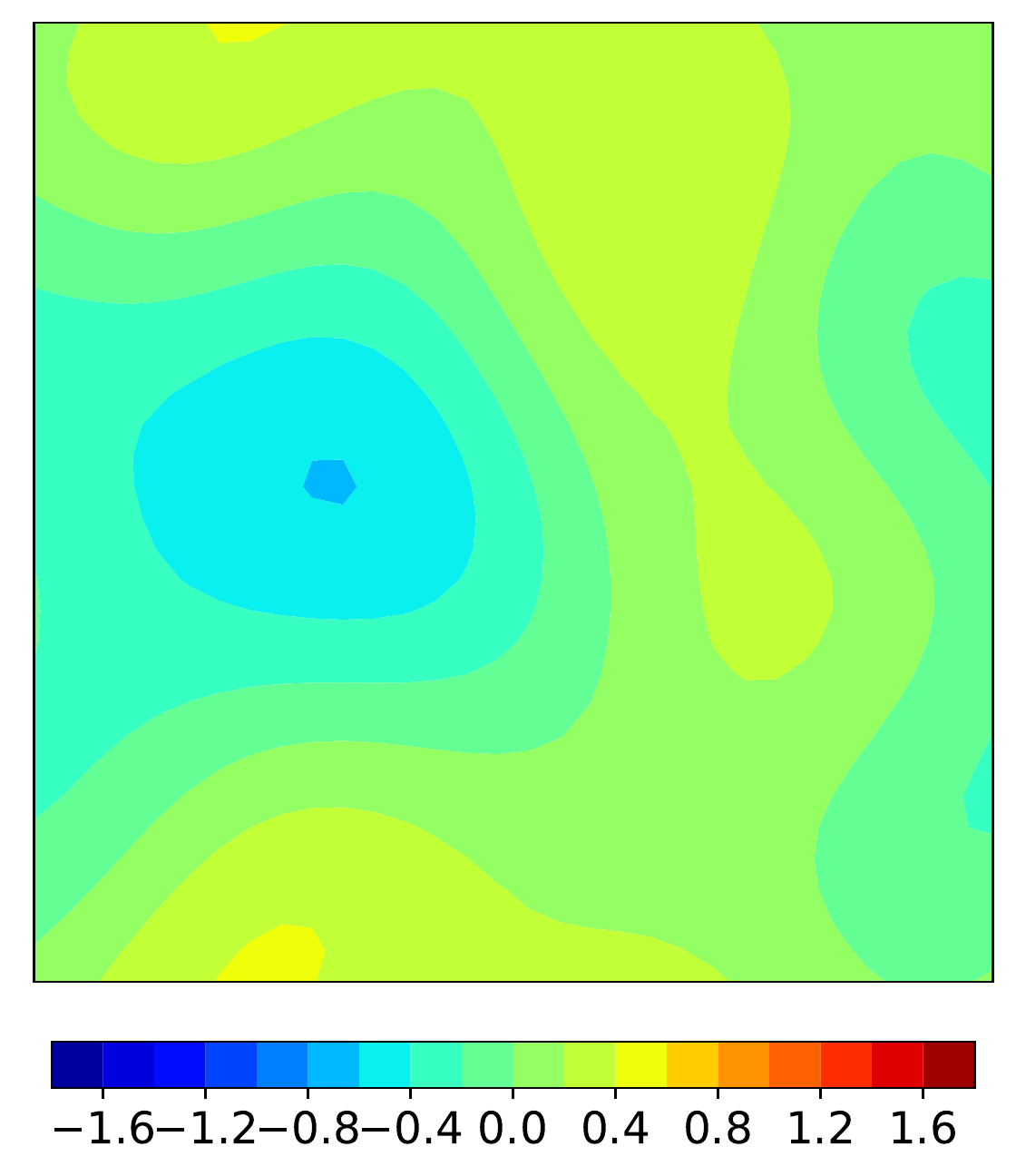}} &
        \raisebox{-0.5\height}{\includegraphics[width=.20 \textwidth]{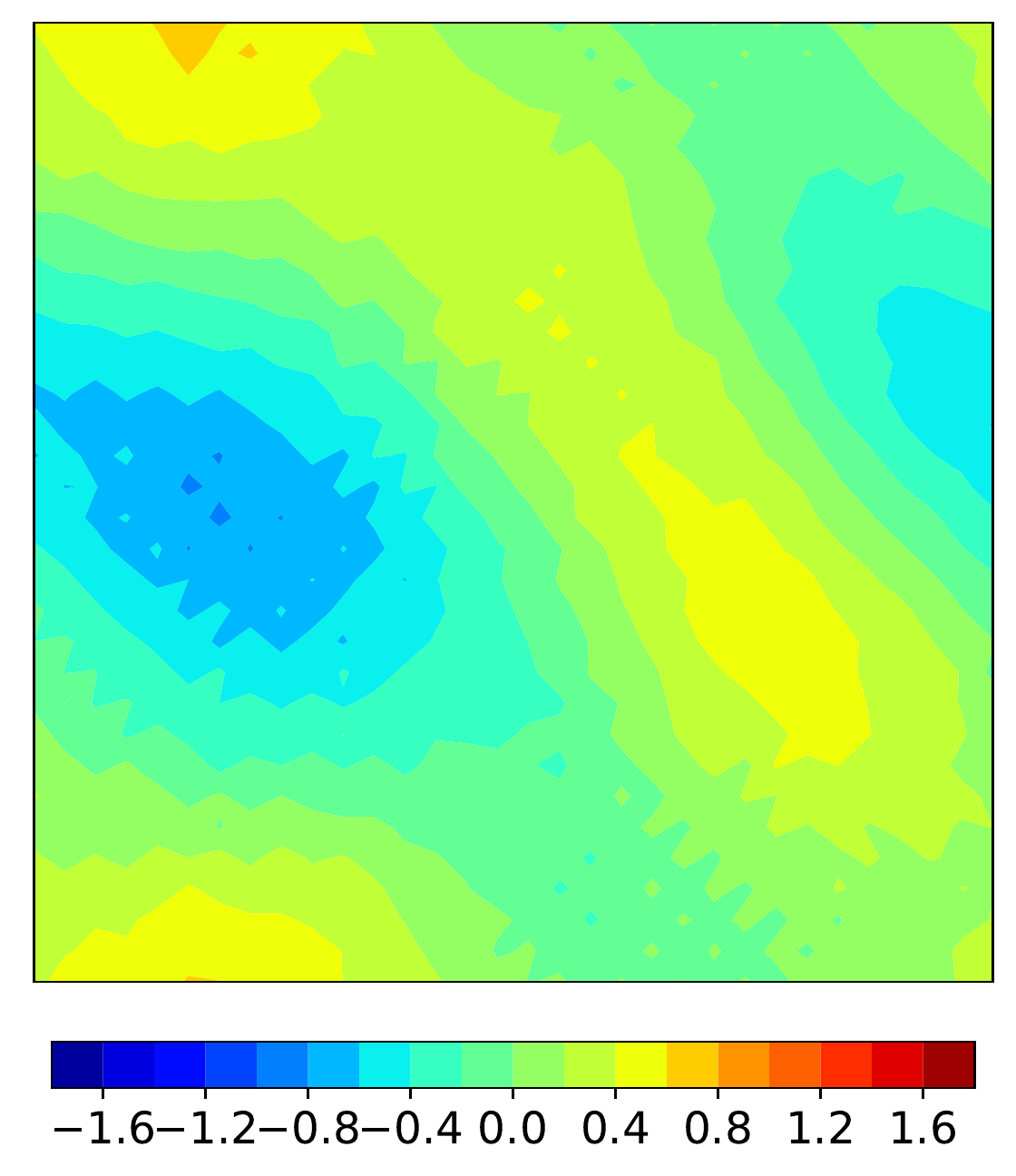}} &
        \raisebox{-0.5\height}{\includegraphics[width=.20 \textwidth]{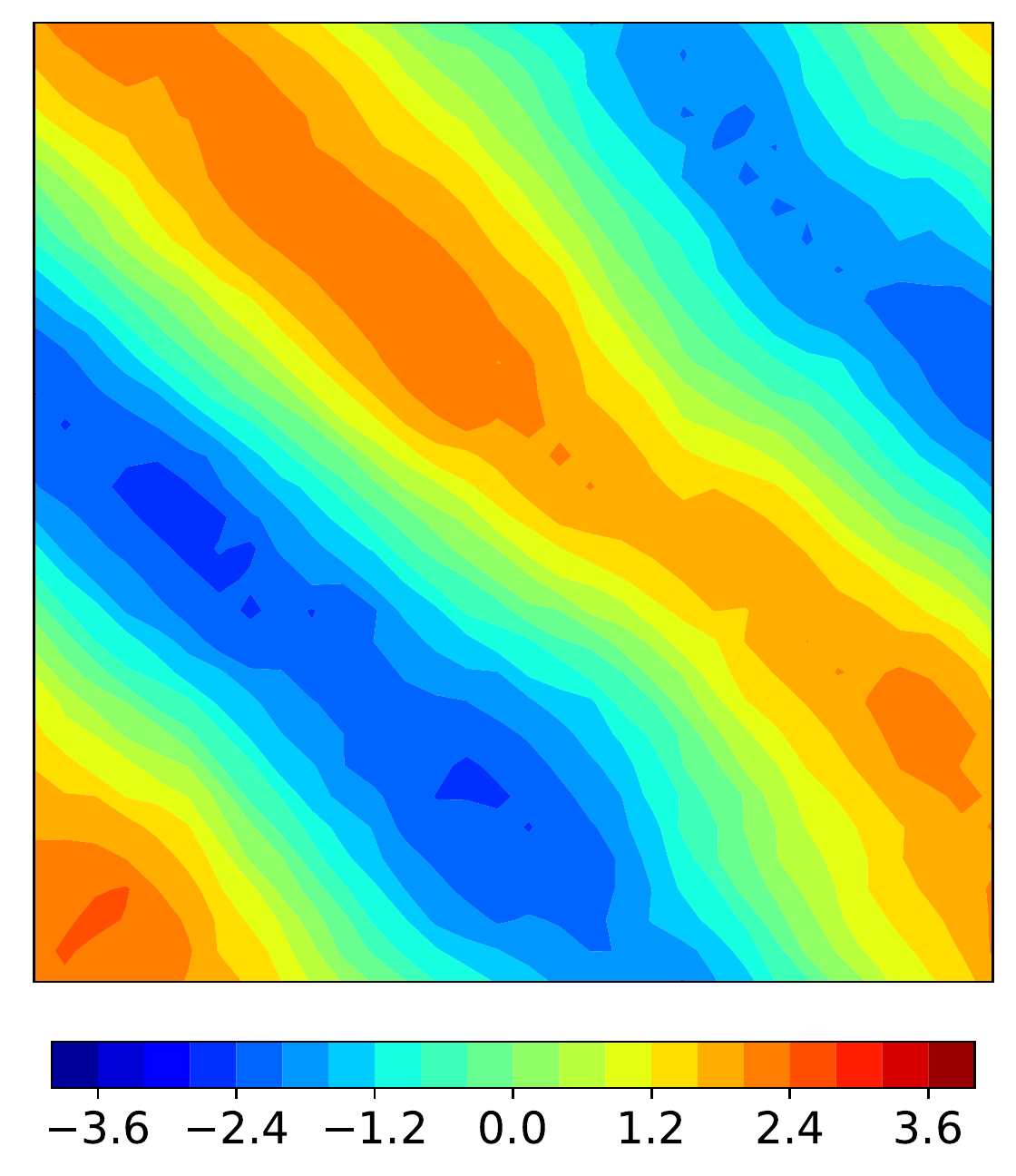}} & 
        \raisebox{-0.5\height}{\includegraphics[width=.20 \textwidth]{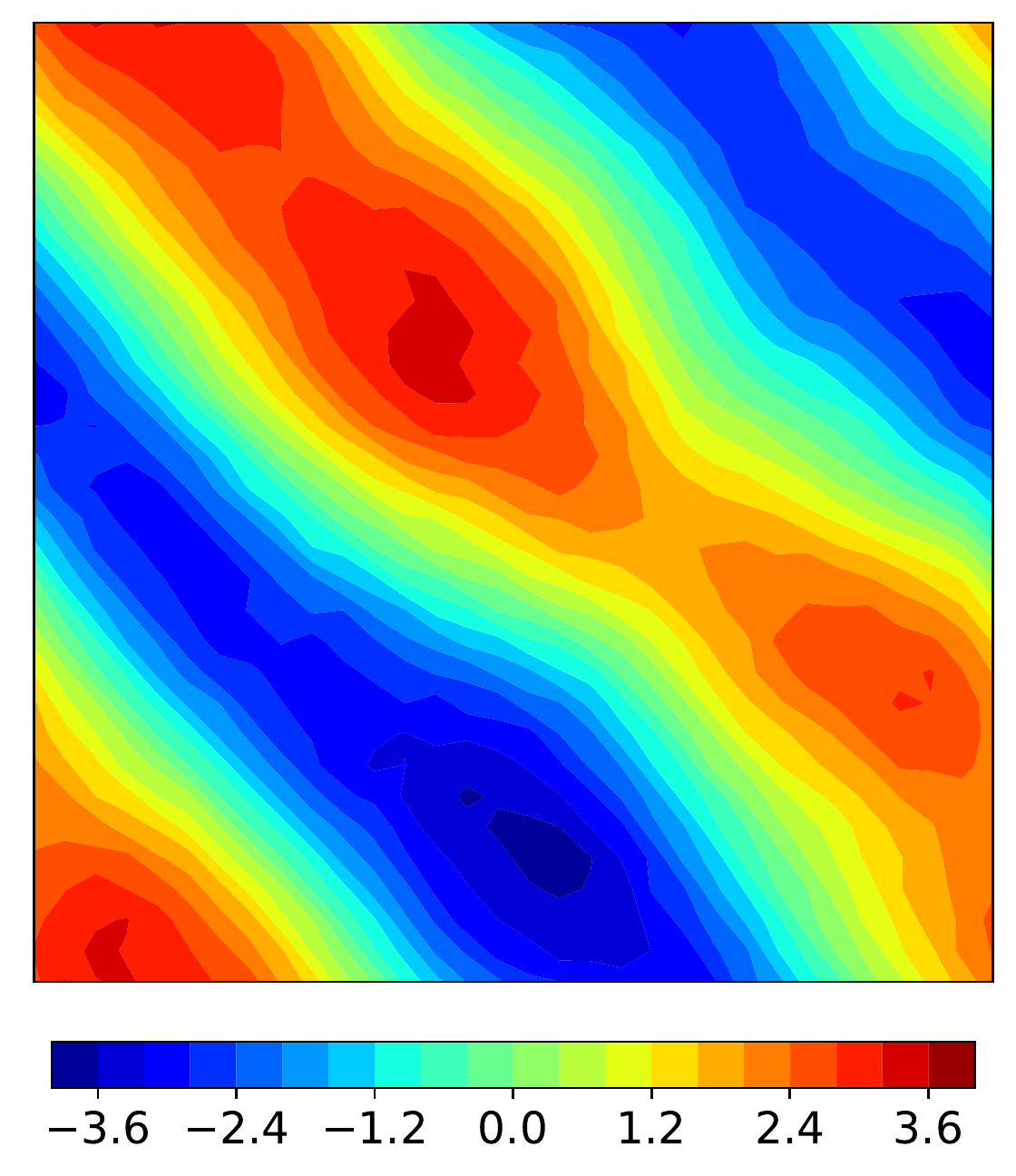}} 
        \\
        \centering 
        \rotatebox[origin=c]{90}{\small BE - Learned $\Psi$} &
        \raisebox{-0.5\height}{\includegraphics[width=.20 \textwidth]{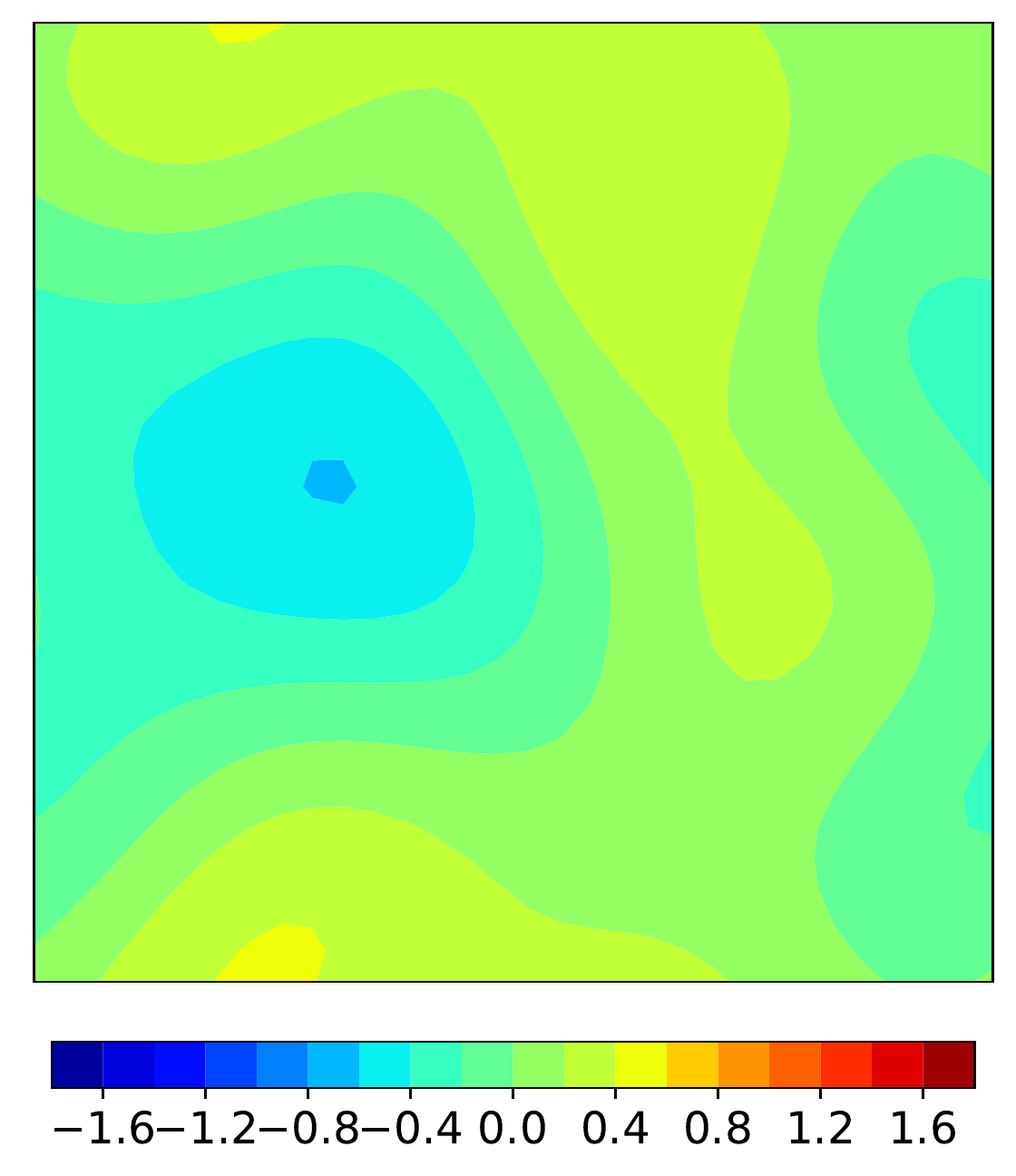}} &
        \raisebox{-0.5\height}{\includegraphics[width=.20 \textwidth]{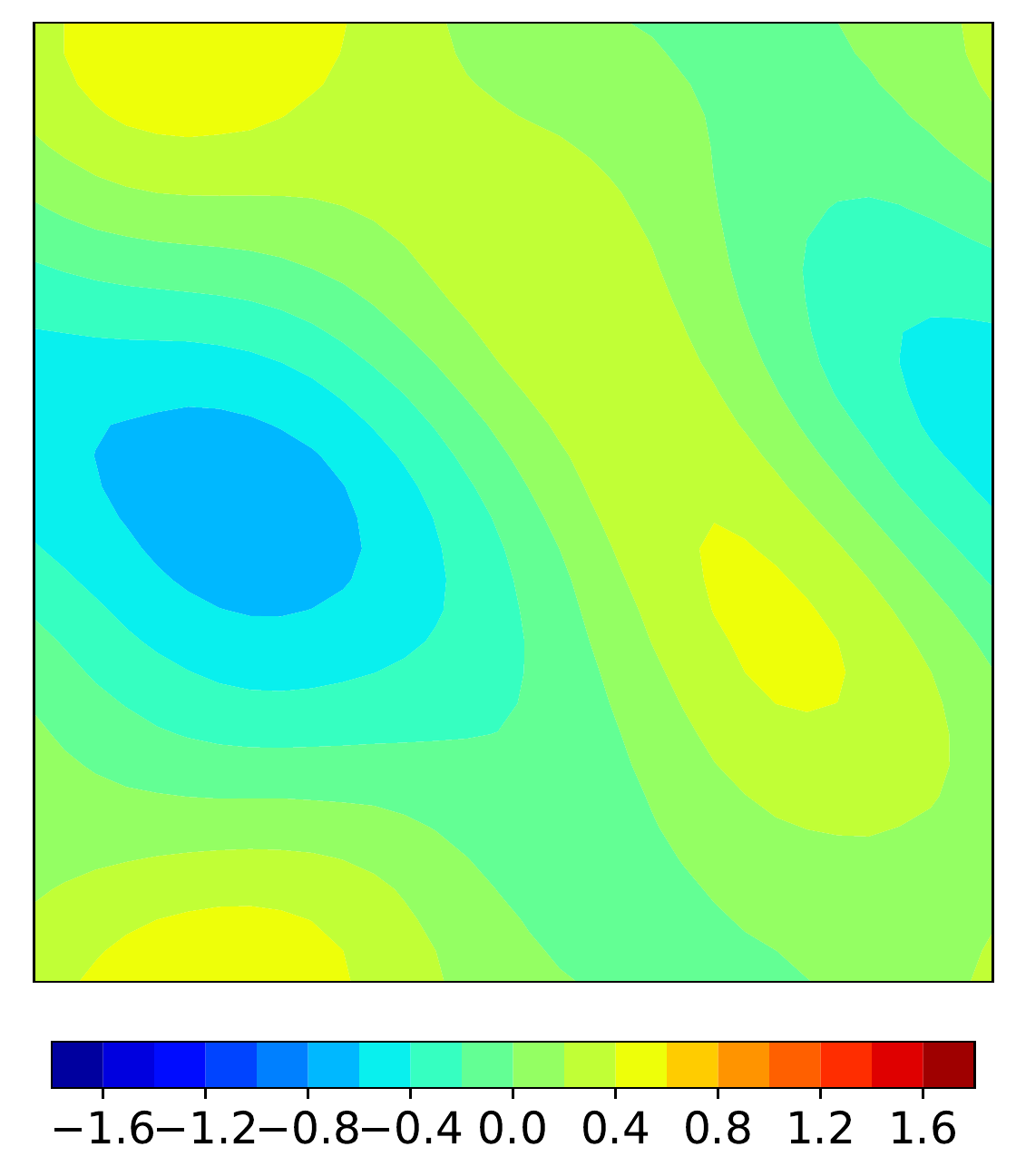}} &
        \raisebox{-0.5\height}{\includegraphics[width=.20 \textwidth]{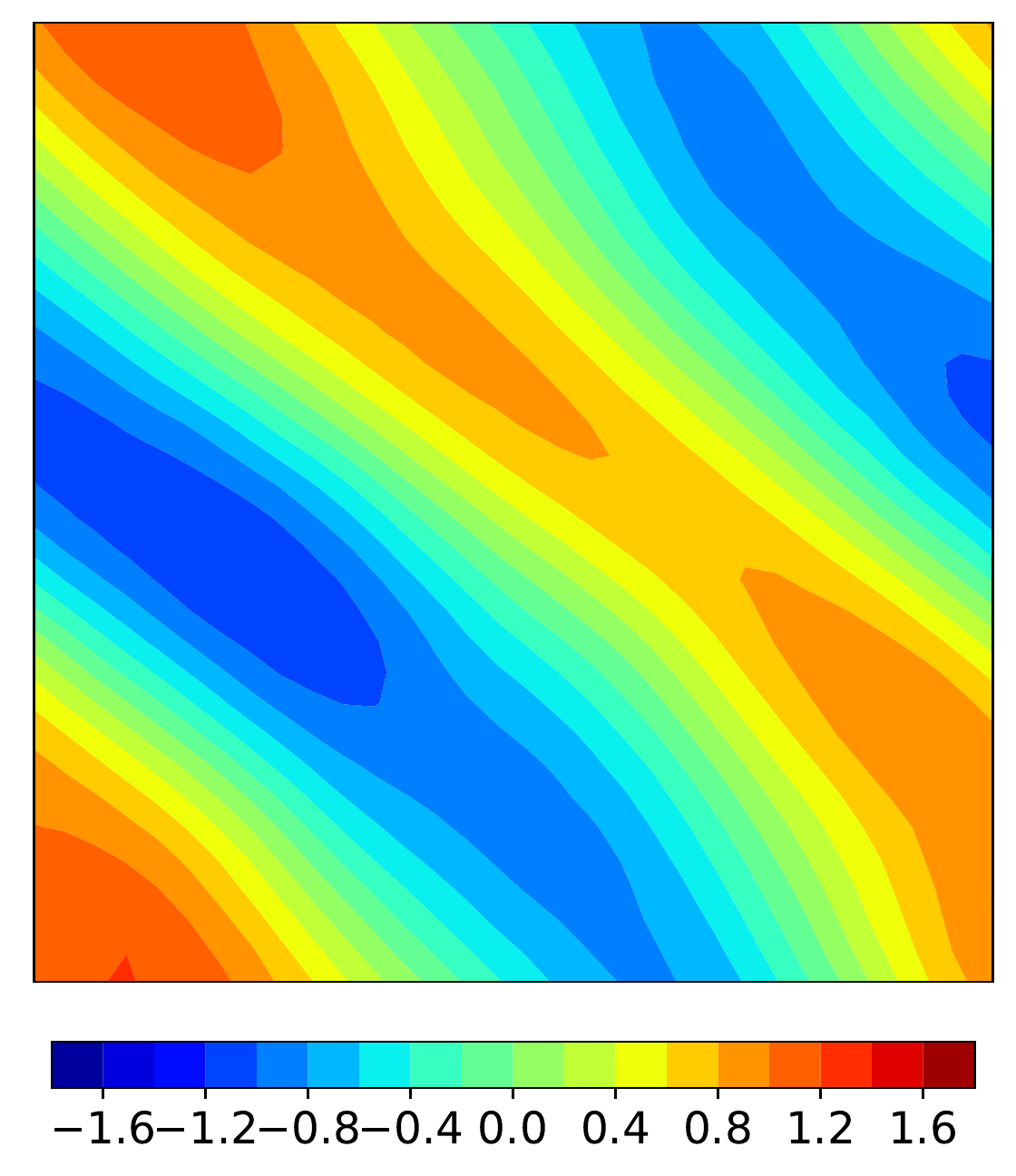}} & 
        \raisebox{-0.5\height}{\includegraphics[width=.20 \textwidth]{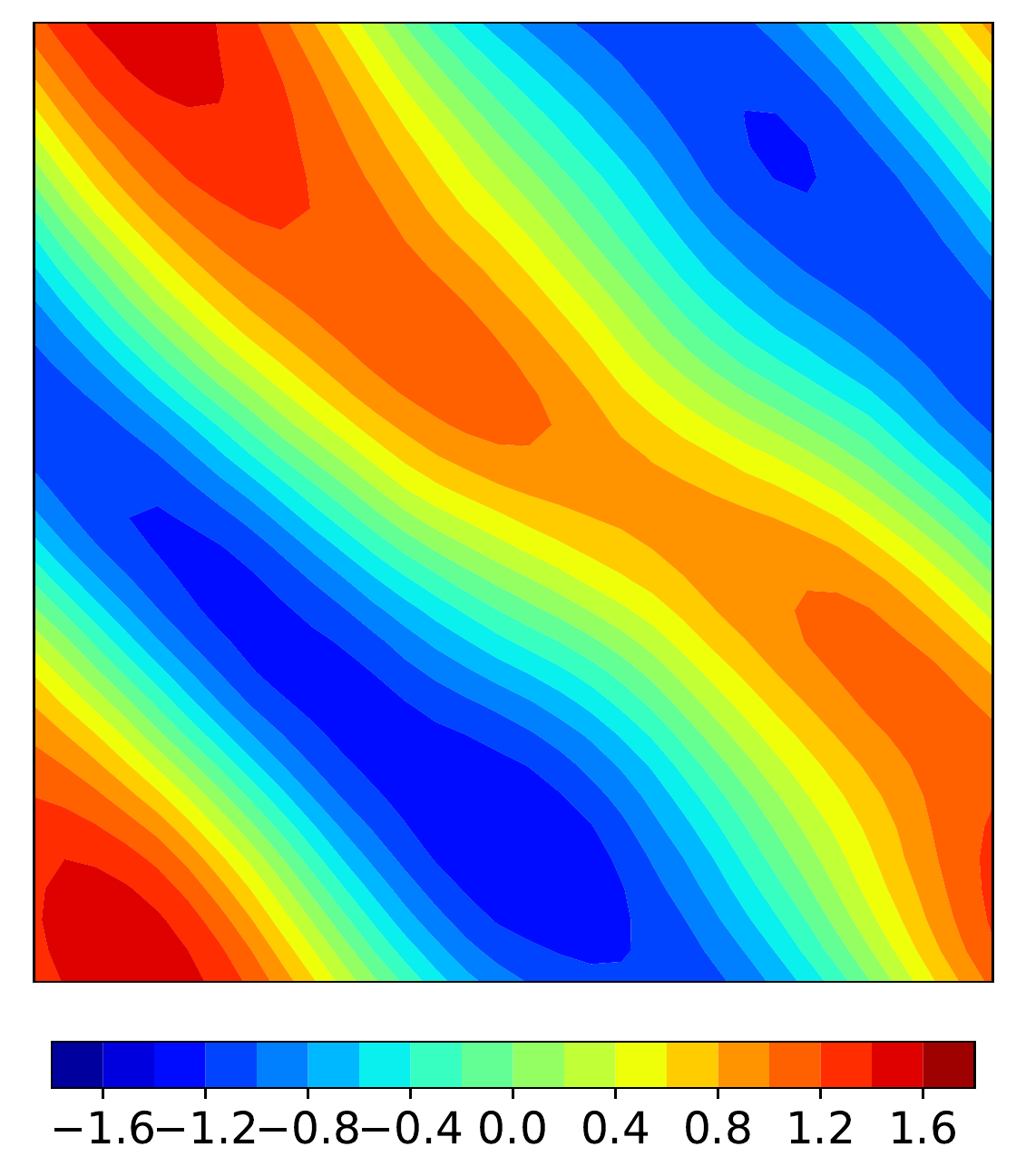}} 
    \end{tabular*}
    \caption{\textbf{Navier-Stokes equation}. Comparison of various neural network solutions. \textit{Top row}: True solutions with spectral method (Crank–Nicolson time integration scheme) with the true tangent slope; \textit{Second row}: Forward Euler (FE) scheme (a different scale bar in the third and fourth columns); \textit{Third row}: Backward Euler (BE) scheme with the learned neural network for 20 times larger time stepsize $\dt' = 20 \dt = 0.2$.} 
    \figlab{2D_NS_implicit_samples}
\end{figure}
However, the noisy data network $\LRp{d600,2\%, 1,5,10^5}$ outperforms the noise-free one $\LRp{d600,0\%, 1,5,10^5}$ in the long-time predictions. 
In summary,  model-constrained network with data randomization outperforms all other networks. 
Given a test initial vorticity, the plots of predicted solutions  obtained by different learned networks are shown in \cref{fig:2D_NS_samples}. As can be seen, the  model-constrained network with the setting $\LRp{d600,2\%, 1,1,10^5}$ provides the most accurate solutions as opposed to others trained from the same data set.

{\bf Implicit time integration with learned network.}
We used the learned network for  backward Euler scheme with 20 times larger time stepsize, $\dt' = 20 \dt = 0.2$, compared to training stepsize $\dt = 0.01$. As shown in  \cref{fig:2D_NS_implicit_samples},
forward Euler scheme with the learned network shows severe instability, while the backward Euler scheme with the learned network solutions is in good agreement with the spectral solution with Crank–Nicolson scheme with a much smaller time stepsize.

\subsection{Information on parameter tuning, randomness, and cost}
\seclab{Comp_cost}

\subsubsection{Parameter tuning}
\seclab{tuning}
The purpose of this section is to determine a good set of hyperparameters including the learning rate, batch size, the number of layers, and the number of neurons on each layer. To that end, we set the random seed to 0 in order to have a fair initialization for all networks. For initialization, weights are drawn randomly from zero-mean Gaussian distribution with a variance of 0.01, while biases are set to zero. For all cases,  we take $S = 1, R = 1$, and noise-free data set with $600$ samples. We carry out the tuning process manually for only Burgers and Navier-Stokes examples as the transport example admits an analytical solution.  We pick the learning rate in  $\LRc{10^{-4},2 \times 10^{-4}, 5 \times 10^{-4}, 10^{-3}}$, batch size in $\LRc{2, 10, 40, 100}$, the number of layers in $\LRc{1,2,3}$, the number of neurons per layers in $\LRc{50, 200, 1000, 5000, 10000}$, and the model-constrained regularization parameter $\alpha$ in $\LRc{10, 10^3, 10^5, 10^7}$. We pick the combination of parameters that provides the best testing accuracy (see also  \cref{sect:Burger_eq} and \cref{sect:2D_NS} for the discussion on testing accuracy) in each numerical problem. The chosen parameter set is then used for training different values of $S, R$ and noise level $\delta$.

    
\subsubsection{Robustness with random initializations and data randomization}
In this section, we study the effect of weights/biases random initialization and data randomization on the performance of  the chosen neural network architectures in \cref{sect:tuning}. We provide the study for Burger's equations in \cref{sect:Burger_eq} since the result for the Navier-Stokes equation in \cref{sect:2D_NS} would be similar. 
{For random initialization of weights/biases, we initialize the neural network with 32 different random seeds ranging from 0 to 31. For each random seed, we use the same set of  hyper-parameters found in \cref{sect:tuning}. As shall be shown, our model-constrained approach is robust in random initialization, that is, all random seeds work equally well.
Thanks to this robustness,  we simply initialize weights/biases with random seed 0 and study the effect of  32 different noise random seeds ranging from 0 to 31 for data randomization.}
As an example, we compare the mean and variance of the mean square error between the pure data-driven machine learning case  $\LRs{d600, 2\%, 0, 1, 0}$ and the corresponding model-constrained case $\LRs{d600, 2\%, 10^5, 1, 1}$. The mean and variance results in \cref{fig:Burger_randomization} and \cref{fig:Burger_noise}  show that  \texttt{mcTangent} networks are not only  accurate but also more reliable with a smaller variance compared to the pure data-driven counterparts. Consequently\textemdash again thanks to the model-constrained term\textemdash the performance of \texttt{mcTangent} networks are robust to both
weights/biases random initialization  and data randomization.

\begin{figure}[htb!]
    \centering
    \includegraphics[width=\textwidth]{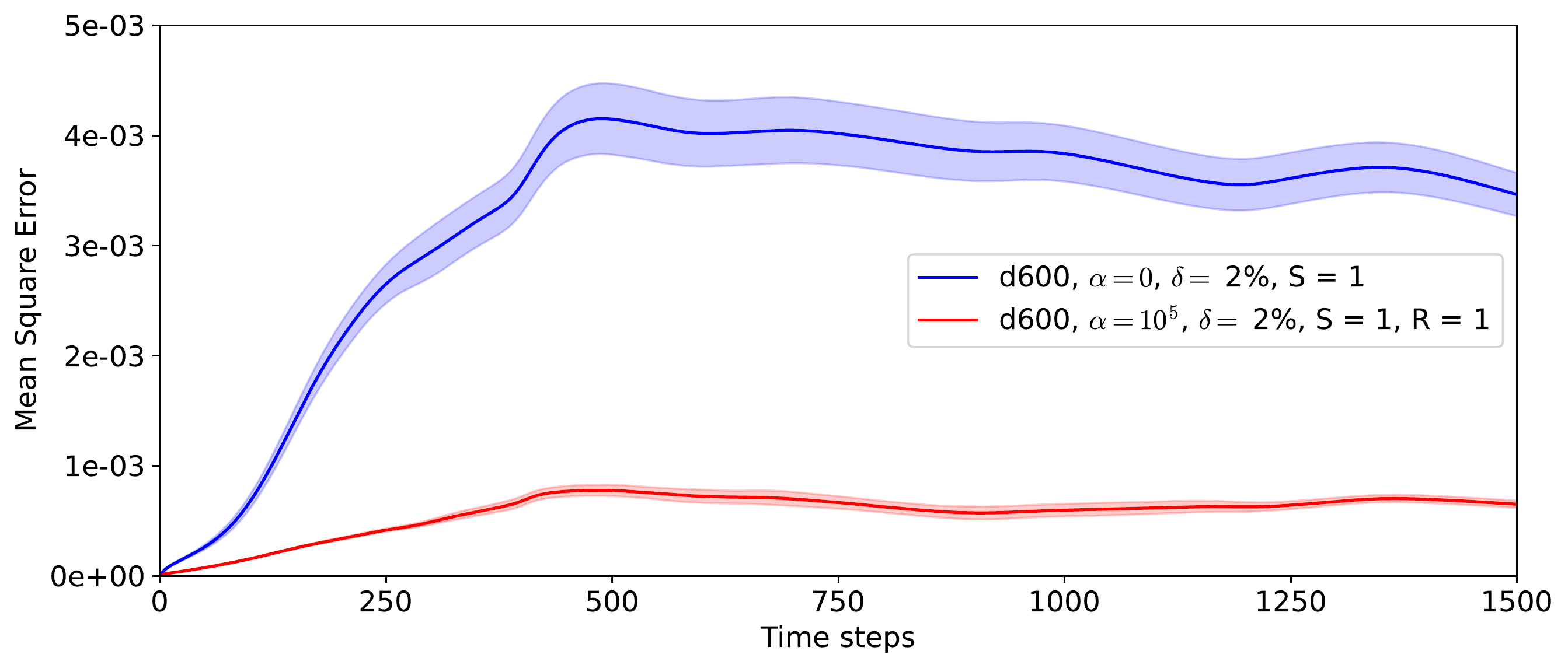}
    \caption{The mean and variance of mean square error of predictions for \texttt{mcTangent} and pure data-driven machine learning  approaches, obtained by 32 different neural networks corresponding to 32 different weights/biases random initializations.} 
    \figlab{Burger_randomization}
\end{figure}

\begin{figure}[htb!]
    \centering
    \includegraphics[width=\textwidth]{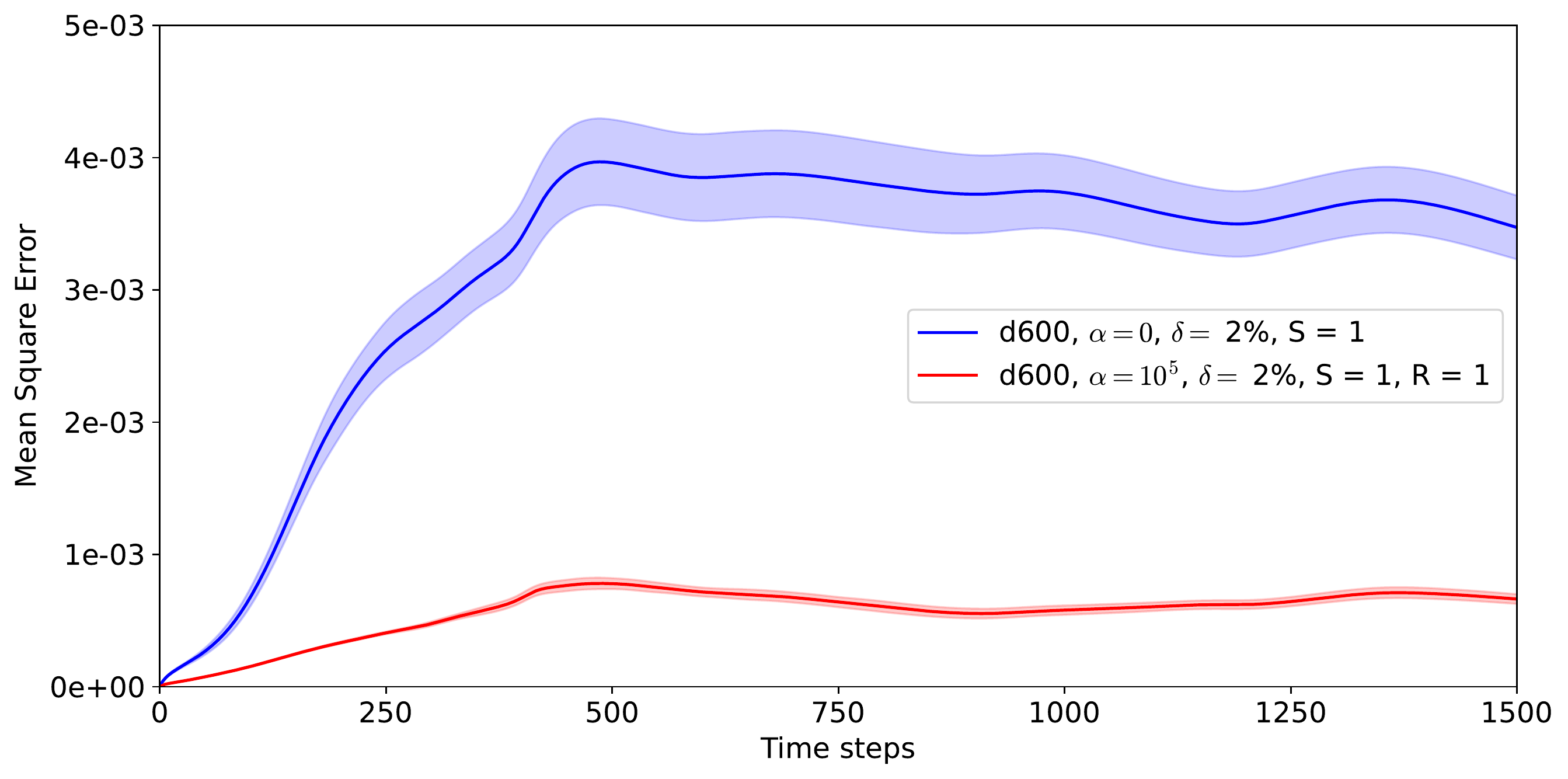}
    \caption{The mean and variance of mean square error of predictions for \texttt{mcTangent} and pure data-driven machine learning approaches, obtained by training a neural network (with weights/biases being initialized with random seed 0) with 32 random realizations of data  corresponding to 32 different random seeds.}
    \figlab{Burger_noise}
\end{figure}

\subsubsection{Training and testing cost}
\cref{tab:train_cost} presents the training computational cost for Burgers' problem using different values of $S$ and $R$. The \texttt{mcTangent} neural network is learned with 200 training samples with $2\%$ additive noise, $\alpha = 10^5$, learning rate $10^{-4}$, and batch size $40$. It can be seen that in the purely data-driven approach, i.e. $S= 1, R= 0$, the computational time per epoch is small, but the number of required epochs for convergence is larger. It is not surprising that adding larger $R$ and $S$ leads to a significant increase in computational cost per epoch. However, in this problem, since the overall convergence rate (measured in the terms of the number of epochs) is faster, the total amount of time for training model-constrained neural networks are at most three times larger than the pure machine learning method. To be more specific, $S = 1, R = 0$ network requires  $11.67$ hours compared to $22.22$ hours for $S = 1, R = 1$ network. On the other hand, $35$ hours and $34$ hours are needed to train the cases $S = 10, R = 1$ and $S = 1, R = 5$, respectively. We note that all model-constrained networks corresponding to $S=1, R = 1$, $S =10, R =1$ and $S = 1, R =5$ provide comparable accuracy levels which are significantly better than that obtained by the pure data-driven machine learning approach corresponding to $S = 1, R = 0$, and this  is shown in \cref{fig:2D_Bur_compare_all}.

To verify the computational benefits in the prediction stage, we compare the computational time between the ground truth solution using the truth tangent slope $\F\LRp{\ut{}}$ and  \texttt{mcTangent} tangent slope  $\NN{\ut{}}$ in \cref{tab:test_cost}. It can be seen that the \texttt{mcTangent} tangent slope is much faster  (more than $10$ times faster) than the truth tangent slope for the 2D Navier-Stokes problem. For the 2D Burgers' problem, the \texttt{mcTangent} tangent slope evaluation is negligibly faster than the truth.
That is, even with small-scale 2D problems with fast finite difference evaluations, the neural network is still faster. It is important to point out that the computational cost for \texttt{mcTangent} neural network remains unchanged, $2\times 10^{-4}$ seconds, for either Burgers or Navier-Stokes equations. We expect the computational gain is much more notable for 3D complex problems where the evaluation of the truth tangent slope is much more demanding. The gain is even more significant for implicit methods as in these cases not only the evaluation of the tangent slope but also the evaluation of its Jacobian is needed. This poses great challenges for traditional numerical methods, but for the  \texttt{mcTangent} approach, the evaluations of a feed-forward network and its Jacobian are trivial and fast.


\begin{table}[htb!]
\caption{Training cost for Burgers' equations using different values of $S$ and $R$. \texttt{mcTangent} neural network is learned with 200 samples with $2\%$ additive noise, $\alpha = 10^5$, learning rate $10^{-4}$, and batch size $40$.}
\tablab{train_cost}
\centering
\begin{tabular}{|c|c|c|c|}
\hline
                & 1 Epoch (seconds)  & Number of Epoch & Training time (hours) \\ \hline
$S=1, R = 0$    & 0.07                & $6.0 \times 10^{5}$          & 11.67                  \\ \hline
$S = 1, R = 1$  & 0.20                & $4.0 \times 10^{5}$           & 22.22                 \\ \hline
$S = 10, R = 1$ & 0.86                & $1.5 \times 10^{5}$           & 35.83                 \\ \hline
$S = 1, R = 5$  & 0.62                & $2.5 \times 10^{5}$          & 34.44                 \\ \hline
\end{tabular}
\end{table}

\begin{table}[htb!]
\caption{Computational cost of ground truth tangent slope $\F(\ut{i})$ (mesh grid: $32 \times 32$), and trained neural network $\Psi\LRp{\ut{i}}$}
\tablab{test_cost}
\centering
\begin{tabular}{|c|c|c|}
\hline
                       & $\G (\ui{})$ (seconds)   & $\Psi(\ui{})$ (seconds) \\ \hline
Burgers' equation      & $2.1 \times 10^{-4}$ & $2 \times 10^{-4}$  \\ \hline
Navier-Stokes equation & $7.0 \times 10^{-3}$ & $2 \times 10^{-4}$  \\ \hline
\end{tabular}
\end{table}

\section{Conclusions}
\seclab{conclusions}
We have presented a model-constrained tangent slope learning (\texttt{mcTangent}) approach to simulate dynamical systems in real-time. At the heart of \texttt{mcTangent} is a careful craft  synergizing several desirable strategies: i) a tangent slope learning to take advantage of the neural network speed and time-accurate nature of the method of lines;
ii) a model-constrained approach to encode the neural network tangent slope with the underlying physics; iii) sequential learning strategies to promote long-time stability and accuracy; and iv) data randomization approach to implicitly regularize the smoothness of the neural network tangent and its likeliness to the truth tangent up second order derivatives in order to further enhance the stability  and accuracy of \texttt{mcTangent} solutions. Rigorous results are provided to analyze and justify the proposed approach. Several numerical results  for  transport equation, viscous Burgers equation, and Navier-Stokes equation are presented to study and demonstrate the capability of the proposed \texttt{mcTangent} learning approach. Further theoretical analysis of \texttt{mcTangent} with both sequential learning strategies is ongoing to provide a deeper understanding of the approach.  Strategies to improve the accuracy and to strongly encode the underlying governing equations are also part of future work.

\section*{Disclosure statement}
No potential conflict of interest was reported by the author(s).


\bibliographystyle{plain}
\bibliography{references, referencesNew}

\end{document}